\newcommand{\blind}{1}
\newtheorem{theorem}{Theorem}[section]
\newtheorem{corollary}{Corollary}[theorem]
\newtheorem{lemma}[theorem]{Lemma}
\newtheorem{proposition}[theorem]{Proposition}
\newtheorem{remark}{Remark}
\newcommand{\bs}{\boldsymbol{s}}
\newcommand{\bg}{\boldsymbol{g}}
\newcommand{\bh}{\boldsymbol{h}}
\newcommand{\bL}{\boldsymbol{L}}
\newcommand{\bw}{\boldsymbol{w}}
\newcommand{\bm}{\boldsymbol{m}}
\newcommand{\be}{\boldsymbol{e}}
\newcommand{\bE}{\boldsymbol{\epsilon}}
\newcommand{\bW}{\boldsymbol{\Omega}}
\newcommand{\bv}{\boldsymbol{v}}
\newcommand{\bX}{\boldsymbol{X}}
\newcommand{\bZ}{\boldsymbol{Z}}
\newcommand{\bx}{\boldsymbol{x}}
\newcommand{\by}{\boldsymbol{y}}
\newcommand{\bz}{\boldsymbol{z}}
\newcommand{\bU}{\boldsymbol{U}}
\newcommand{\bphi}{\boldsymbol{\phi}}
\newcommand{\bmu}{\boldsymbol{\mu}}
\newcommand{\bzeta}{\boldsymbol{\zeta}}
\newcommand{\sumK}{\sum_{k=1}^K}
\newcommand{\prodK}{\prod_{k=1}^K}
\newcommand{\sumN}{\sum_{j=1}^{n_s}}
\newcommand{\prodN}{\prod_{j=1}^{n_s}}
\newcommand{\btheta}{\boldsymbol{\theta}}
\newcommand{\bgamma}{\boldsymbol{\gamma}}
\definecolor{mycyan}{RGB}{15, 85, 204}
\definecolor{mygreen}{RGB}{56, 118, 28}
\definecolor{myred}{RGB}{204, 0, 0}
\newcommand{\LZadd}[1]{\textcolor{black}{#1}}
\newcommand{\RH}[1]{}
\begin{document}
\def\spacingset#1{\renewcommand{\baselinestretch}%
{#1}\small\normalsize} \spacingset{1}

\hypersetup{citecolor=blue}
\if1\blind
{
  \title{\bf Flexible and efficient emulation of spatial extremes processes via variational autoencoders}
  \author{Likun Zhang, Xiaoyu Ma, Christopher K. Wikle\hspace{.2cm}\\\vspace{-0.4cm}
    Department of Statistics, University of Missouri\\
    and \\
    Rapha\"{e}l Huser \\
    Statistics Program, Computer, Electrical and Mathematical Sciences and Engineering (CEMSE) Division, King Abdullah University of Science and Technology (KAUST)}
  \maketitle
}\fi

\begin{abstract}
Many real-world processes have complex tail dependence structures that cannot be characterized using classical Gaussian processes. More flexible spatial extremes models exhibit appealing extremal dependence properties but are often exceedingly prohibitive to fit and simulate from in high dimensions. In this paper, we aim to push the boundaries on computation and modeling of high-dimensional spatial extremes via integrating a new spatial extremes model that has flexible and non-stationary dependence properties in the encoding-decoding structure of a variational autoencoder called the XVAE. The XVAE can emulate spatial observations and produce outputs that have the same statistical properties as the inputs, especially in the tail. Our approach also provides a novel way of making fast inference with complex extreme-value processes. Through extensive simulation studies, we show that our XVAE is substantially more time-efficient than traditional Bayesian inference while outperforming many spatial extremes models with a stationary dependence structure. Lastly, we analyze a high-resolution satellite-derived dataset of sea surface temperature in the Red Sea, which includes 30 years of daily measurements at 16703 grid cells. We demonstrate how to use XVAE to identify regions susceptible to marine heatwaves under climate change and examine the spatial and temporal variability of the extremal dependence structure.
\end{abstract}

\noindent%
{\it Keywords:}  
Variational Bayes, 
Deep learning, 
Spatial extremes, 
Tail dependence, 
Climate emulation
\vfill

\newpage
\spacingset{1.73} 

\section{Introduction}
\LZadd{Statistical emulators, pioneered by \citet{sacks1989designs} and \citet{kennedy2001bayesian}, have been mostly used to accurately approximate patterns and relationships in deterministic model outputs (e.g., from climate models, fluid dynamics or other physical systems), which are computationally prohibitive to obtain at high spatio-temporal resolution. Statistical emulators, used as surrogate models, have thus been beneficial for model calibration, where one estimates unknown parameters of a deterministic model by aligning model outputs with observed data \citep[e.g.,][]{higdon2004combining, bayarri2007computer, chang2016calibrating, gopalan2022higher}.}

\LZadd{Another related key application of statistical emulators is to use them with real or model-output data to quickly generate large ensembles of realistic simulations of complex (random or deterministic) spatio-temporal processes. This is especially advantageous to improve uncertainty quantification (UQ) for various inference targets \citep[see, e.g.,][]{gramacy2020surrogates}, particularly when assessing risks related to rare events, e.g., defined as joint exceedances over high thresholds. For instance, current marine heatwave (MHW) detection methods often involve calculating percentile thresholds empirically from a quite limited number of daily sea surface temperature (SST) observations, averaged spatially over relatively coarse regions \citep{hughes2017global}. Emulating SST data over space and time can thus enhance the estimation of, and UQ for, extreme hotspots defined as regions experiencing high temperatures simultaneously. We come back to such an application in Section~\ref{sec:data_analysis}.}

The efficacy of an emulator hinges greatly on its ability to capture complex spatial variability, which is particularly true when interest lies in the tail dependence structure. However, traditional emulation methods---such as those based on Gaussian processes \citep[e.g.,][]{gu2018robust}, polynomial chaos expansions \citep[e.g.,][]{sargsyan2017surrogate} and more recently, deep neural networks such as generative adversarial networks \citep[][GANs]{goodfellow2014generative} and variational autoencoders \citep[][VAEs]{kingma2013auto}---do not naturally accommodate nor realistically reproduce extreme values, and certainly not dependent extremes. By contrast, classical spatial models justified by extreme-value theory are often overly computationally costly to fit with large datasets \citep{huser2022advances}.

The main methodological contributions of this work are threefold. First, we introduce a novel max-infinitely divisible (max-id) model for spatial extremes with nonstationary dependence structure that varies over both space and time, and formally prove that it flexibly accommodates concurrent and locally dependent extremes. Second, we propose embedding this complex spatial extremes model within a VAE engine, referred to as the XVAE, to facilitate fast inference and simulation in high dimensions. Third, we develop a general validation framework to assess an emulator's quality across low, moderate, and high values. A novel metric with theoretical guarantees is proposed, specifically tailored to evaluate the skill of a spatial model in reproducing dependent extremes. \LZadd{Given that most validation approaches either lack emphasis on the joint tail behavior \citep[e.g.,][]{gneiting2007strictly} or rely on simple bivariate summaries, our proposed framework is a valuable additional tool that complements standard model validation techniques.}

The paper is organized as follows: In Section~\ref{sec:background}, we concisely review classical spatial extremes models and VAEs. In Section~\ref{sec:methodology}, we detail our novel max-id process, derive its flexible extremal dependence properties, and demonstrate how to integrate it within a VAE. We also present our general model validation framework to evaluate spatial process emulators with an emphasis on dependent extremes. In Section~\ref{sec:simulation}, we validate the emulating power of our XVAE through simulations, and compare it to a Gaussian process emulator. In Section~\ref{sec:data_analysis}, we apply the XVAE to high-resolution Red Sea SST, and use it to efficiently enhance UQ of extreme sea temperature hotspot estimates. Finally, in Section~\ref{sec:discussion}, we conclude with some discussion on future research directions.

\section{Background}\label{sec:background}
This section provides background on spatial extremes models and VAEs. Random variables are denoted with capital letters and fixed or observed quantities with lowercase letters.

\subsection{Spatial extremes modeling}\label{sec:comps}
In the spatial extremes literature, extremal dependence is commonly measured by 
\begin{equation}\label{eqn:chi}
  \chi_{ij}(u) = \Pr\{F_j(X_j) > u \mid F_i(X_i) > u\} = \frac{\Pr\{F_j(X_j) > u, F_i(X_i) > u\}}{1-u}\in [0,1],
\end{equation}
for some threshold $u\in(0,1)$ and where $F_i$ and $F_j$ are continuous marginal distribution functions for the random variables $X_i$ and $X_j$, respectively. When $u\approx1$, $\chi_{ij}(u)$ quantifies the probability that one variable is extreme given that another variable is similarly extreme.  If $\chi_{ij}=\lim_{u \rightarrow 1}\chi_{ij}(u) = 0$, $X_i$ and $X_j$ are said to be \emph{asymptotically independent} (AI), and if $\chi_{ij}=\lim_{u \rightarrow 1}\chi_{ij}(u) > 0$, $X_i$ and $X_j$ are \emph{asymptotically dependent} (AD).

Classical asymptotic models such as max-stable \citep{davison2015statistics,davison2012statistical, davison2019spatial} or generalized Pareto \citep{ferreira2014generalized, thibaud2015efficient,de2018high} processes always have $\chi_{ij}>0$, unless $X_i$ and $X_j$ are exactly independent. Conversely, Gaussian processes---or marginal transformations thereof---always have $\chi_{ij}=0$, unless $X_i$ and $X_j$ are perfectly dependent. 

In practice, extremal dependence (i.e., $\chi_{ij}(u)$) estimated from environmental processes is often observed to decay as events get more extreme (i.e., $u\to1$) and to become spatially more localized as their intensity increases \citep{huser2024time}. This phenomenon was observed in numerous studies, e.g., on Dutch wind gust maxima \citep{huser2021max}, threshold exceedances of the daily Fosberg fire index \citep{zhang-2022a}, and winter maximum precipitation data over the Midwest of the U.S. \citep{zhang2022accounting}, just to name a few examples. This implies that the stability property of max-stable and generalized Pareto models is often physically inappropriate. However, a weakening $\chi_{ij}(u)$ as $u$ increases does not necessarily lead to AI, and Gaussian processes have a quite restrictive tail behavior. Therefore, we seek to develop models that exhibit much more flexible tail characteristics and that do not assume an extremal dependence class \textit{a priori}. This is especially important for risk assessment when extrapolating beyond the observed data, as misspecifying the extremal dependence regime can lead to grossly inaccurate joint tail probability estimates.

Recent spatial extremes models have addressed some of these limitations and offer more realistic tail properties. Examples of such models include random scale mixtures \citep[e.g.,][]{Opitz2016, huser2017bridging, Huser2019}, usually applied in the peaks-over-threshold framework, and max-id models \citep[e.g.,][]{reich2012hierarchical, padoan2013extreme, huser2021max, bopp2021hierarchical1,zhong2022modeling}, mostly applied in the block-maxima framework; see \citet{huser2022advances} for an overview. However, these 
models often assume a stationary dependence structure  (in particular, the same dependence class at fixed space-time lag) across space and time, and do not always represent long-range dependence realistically over large geographical domains \citep{hazra2024}. Moreover, the computational demands for fitting such models using standard inference techniques are significant even for moderately-sized datasets \citep[see, e.g.,][who apply such a model on 93 sites]{zhang-2022a}, hampering their applicability to high-resolution climate datasets.

More recently, several attempts have been made to exploit advances in deep learning to facilitate the modeling, inference, and simulation of multivariate and spatial extremes. \citet{richards2022unifying} and \citet{pasche2022neural} use deep extreme quantile regression models to improve the modeling of marginal extremes in spatial and temporal settings, respectively. \LZadd{\citet{boulaguiem2022modeling} use a deep convolutional GAN (called extGAN) to learn the dependence structure of spatial extremes; their approach, however, does not impose any parametric constraint on the extremal dependence structure, which leads to AI. By contrast, \citet{lafon2023vae} develop a VAE tailored to multivariate regularly varying (i.e., jointly heavy-tailed) data; their approach thus only applies to AD data, and it has so far only been validated in small dimensions (specifically, 5 sites in their application).} \citet{lenzi2023neural}, \citet{sainsbury-dale2022, sainsbury2023neural} and \citet{richards2023likelihood} use deep learning methods for fast likelihood-free inference with parametric spatial extreme-value models. These inference methods are amortized \citep{andrew2025amortized} in the sense that they are very fast after an initial upfront computational cost has been incurred to train a neural network. Such methods are simulation-based and cannot, however, easily handle highly-parameterized models such as nonstationary processes (but see \citealp{zammit2020deep}); further, they are meant to provide parameter inferences, not to simultaneously generate realistic data simulations. In the same vein, \citet{majumder2024modeling} use deep learning to speed up updates in a Markov chain Monte Carlo (MCMC) algorithm, in order to fit a complex, but stationary, spatial dependence model. 

In this work, we aim to develop the first VAE able to emulate high-resolution, non-stationary, spatio-temporal extremes data, and that can provide fast parameter inferences and UQ, along with realistic data simulations accounting for the possibility of AI and AD.


\subsection{Variational autoencoder background}\label{sec:VAE_intro}
Bayesian hierarchical models with a lower-dimensional latent process can leverage VAEs for inference and statistical emulation. These models typically assume the joint distribution
\begin{equation*}
   p_{\btheta}(\bx,\bz) = p_{\btheta}(\bx\mid \bz)p_{\btheta}(\bz),
\end{equation*}
where $\bx$ represents observations of a (e.g., physical) process $\bX\in \mathbb{R}^{n_s}$ and $\bz$ denotes realizations of a latent process $\bZ\in \mathbb{R}^{K}$. In the case of spatial data, the vector $\bX$ may be observations of a spatial process $\{X(\bs):\bs\in \mathcal{S}\}$ at $n_s$ locations, and $\bZ$ may be random coefficients from a low-rank basis expansion representation of $\bX$. 

An ideal probabilistic framework for emulating an observed $\bx$ (a number of times, $L$, say) would be to: (1)  estimate parameters $\hat{\btheta}$ given the input $\bx$ and sample latent variables  $\bZ^1,\ldots, \bZ^L$ from the posterior $p_{\hat{\btheta}}(\bz\mid \bx)$; (2) generate $\bX^{l}$ from the posterior predictive distributions $p_{\hat{\btheta}}(\bx\mid \bZ^{l})$, $l=1,\ldots, L$. If the characterization of the distributions is reasonable, the new realizations $\{\bX^{1},\ldots, \bX^{L}\}$ should resemble the input $\bx$, with meaningful variations among the replicates. However, the posterior $p_{\btheta}(\bz\mid \bx)$ is often intractable when the marginal likelihood $p_{\btheta}(\bx)=\int p_{\btheta}(\bx, \bz){\rm d}\bz$ does not have an analytical form, complicating parameter estimation for high-dimensional data using methods like MCMC.

Under the variational Bayes framework, the VAE proposed by \citet{kingma2013auto} approximates the posterior $p_{\btheta}(\bz\mid \bx)$ using a so-called probabilistic \textit{encoder}. Formulated via a multilayer perceptron (MLP) neural network, the encoder maps the input $\bx$ to a variational distribution in the latent space denoted by $q_{\bphi_e}(\bz\mid\bx)$, in which $\bphi_e$ are the weights and biases of the encoder network. Then, a sample $\bZ$ from ${q_{\bphi_e}(\bz\mid\bx)}$ is generated and a \textit{decoder} network acts as an estimator for the model parameters: $\hat{\btheta}_{\text{NN}}=\mathrm{DecoderNeuralNet}_{\bphi_d}(\bZ)$. Finally new realizations of $\bX$ can be generated from $p_{\hat{\btheta}_{\text{NN}}}(\bx\mid \bZ)$. 

We denote through an abuse of notation $p_{\bphi_d}(\bx,\bz) \equiv p_{\hat{\btheta}_{\text{NN}}}(\bx\mid \bz)p_{\hat{\btheta}_{\text{NN}}}(\bz)$,  
$p_{\bphi_d}(\bx) = \int p_{\bphi_d}(\bx,\bz)\text{d}\bz$ and $p_{\bphi_d}(\bz\mid\bx) = p_{\bphi_d}(\bx,\bz)/p_{\bphi_d}(\bx)$. The VAE is typically trained by maximizing the evidence lower bound (ELBO), which balances the log-likelihood and the Kullback--Leibler (KL) divergence between the approximate and true posteriors:
\begin{equation}\label{eqn:ELBO_def}
    \mathcal{L}_{\bphi_e,\bphi_d}(\bx) = \log p_{\bphi_d}(\bx)- D_{KL}\left\{q_{\bphi_e}(\bz\mid\bx)\;||\;p_{\bphi_d}(\bz\mid \bx)\right\}.
\end{equation}
Here, $\log p_{\bphi_d}(\bx)$ is called the \textit{evidence} for $\bx$, and the KL divergence is non-negative. 

In traditional VAEs \citep[e.g.,][]{kingma2019introduction, cartwright2023emulation}, Gaussianity is assumed for both the data model $p_{\bphi_d}(\bx\mid\bz)$ and the encoder $q_{\phi_e}(\bz\mid \bx)$, with the prior $p_{\bphi_d}(\bz)$ often set as a multivariate normal distribution. However, such Gaussian assumptions limit the VAE's ability to capture heavy-tailed distributions \citep{lafon2023vae}.

\section{Methodology}\label{sec:methodology}
To better emulate spatial data with extremes, we define $p_{\btheta}(\bx\mid \bz)$ indirectly through the construction of a novel flexible nonstationary spatial extremes model, introduced in Section~\ref{sec:FlexMaxID}. A detailed description on how the model is integrated into the XVAE is given in Section~\ref{sec:XVAE}. 
In Section~\ref{sec:validation}, we propose a new validation framework that is tailored to assess skill in fitting both the
full range and the joint tail behavior of model outputs. 

\subsection{Flexible nonstationary max-id spatial extremes model}\label{sec:FlexMaxID}

Our model builds upon the max-id process proposed by \citet{reich2012hierarchical} and extended by \citet{bopp2021hierarchical1}. Importantly, a novel extension of our model is its ability to realistically capture the change of asymptotic dependence class as a function of distance, as explained in more detail in Section~\ref{sec:ext_dep}, and to accommodate nonstationarity in space and time. Similar to these earlier works, we define the spatial observation model as
\begin{equation}\label{eqn:model}
    X(\bs)=\epsilon(\bs)Y(\bs),\;\bs\in\mathcal{S},
\end{equation}
where $\mathcal{S}\in \mathbb{R}^2$ is the domain of interest and $\epsilon(\bs)$ is a noise process with independent Fr\'{e}chet$(0,\tau,1/\alpha_0)$ marginal distributions; that is, $\Pr\{\epsilon(\bs)\leq x\}=\LZadd{\exp\{-(x/\tau)^{-1/\alpha_0}\}}$, where $x>0$, $\tau>0$ and $\alpha_0>0$. Then, $Y(\bs)$ is constructed using a low-rank representation: 
\begin{equation}\label{eqn:low_rank_representation}
    Y(\bs)=\left\{\sum_{k=1}^K \omega_k(\bs)^{1/\alpha}Z_{k}\right\}^{\alpha_0},
\end{equation}
where $\alpha\in (0,1)$, $\{\omega_k(\bs): k=1,\ldots,K\}$ are fixed compactly-supported radial basis functions centered at $K$ pre-specified knots such that $\sumK \omega_k(\bs)=1$ for any $\bs\in \mathcal{S}$, and $\{Z_k:k=1,\ldots,K\}$ are independently distributed as exponentially-tilted positive-stable (expPS) random variables, \LZadd{whose densities are of the form
\begin{equation}\label{eqn:expPS_den}
    h(x;\alpha, \gamma_k)=\frac{f_\alpha(x)\exp(-\gamma_k x)}{\exp(-\gamma_k^\alpha)},\; x>0, \; k=1,\ldots,K;
\end{equation}
here, $f_\alpha$ is the density function of a positive-stable variable defined through its Laplace transform $\int_{\mathbb{R}} \exp(-sx) f_\alpha(x){\rm d}x=\exp(-s^\alpha), s\geq 0$ \citep{hougaard1986survival}, $\alpha\in (0,1)$ determines the rate at which the power-law tail of $f_\alpha$ tapers off, and the tilting parameters $\gamma_k\geq 0$ determine the extent of tilting, with larger values of $\gamma_k$ leading to lighter-tailed $Z_k$; see Section~\ref{sec:PS_properties} of the Supplementary Material for details. We write $Z_{k}\stackrel{\text{ind}}{\sim} \mathrm{expPS}(\alpha,\gamma_k)$.} 


Our spatial extremes model, while inspired from \citet{reich2012hierarchical} and \citet{bopp2021hierarchical1}, 
has several key novelties. In both \citet{reich2012hierarchical} and \citet{bopp2021hierarchical1}, the basis functions lack compact support and all tilting parameters are fixed at either $\gamma_k\equiv 0$ or $\gamma_k\equiv\gamma> 0$, resulting in only AD or AI for all pairs of locations, respectively. By contrast, in our model, the use of compactly-supported basis functions and spatially-varying tilting parameters creates a spatial-scale aware extremal dependence model, which enables us to capture local AD or AI for nearby locations while ensuring long-range AI for distant locations---a significant advancement in the spatial extremes literature. Moreover, while both previous works use a noise process with Fr\'{e}chet$(0,1,1/\alpha)$ marginals (i.e., setting $\alpha_0=\alpha$), our approach decouples the noise variance from the tail heaviness, providing better noise control for each time point, while keeping the appealing property of max-infinite divisibility as shown in Section~\ref{sec:ext_dep}. Finally, when temporal replicates are available, we shall allow the concentration parameter $\alpha$ and tilting parameters $\bgamma=\{\gamma_k:k=1,\ldots,K\}$ to take different values across time points (i.e., allowing such parameters, denoted by $\alpha_t$ and $\bgamma_t=\{\gamma_{kt}:k=1,\ldots,K\}$, respectively, to change over time $t$), thus making our spatial extremes model nonstationary over both space and time. To the best of our knowledge, this is the first attempt to capture both spatially and temporally varying extremal dependence structures simultaneously in one model, without sub-domain partitioning, at the spatio-temporal scale that we consider here. \citet{zhong2022modeling} achieved it at a much smaller scale and using quite a rigid covariate-based approach to capture nonstationarity.

\subsubsection{Marginal and dependence properties}\label{sec:ext_dep}
In this section, we 
examine the marginal and joint tail behavior of the spatial model~\eqref{eqn:model}. When temporal replicates are available, one can readily replace the parameters $\alpha$ and $\gamma_k$ with temporally-varying parameters, $\alpha_t$ and $\gamma_{kt}$, respectively. 
For notational simplicity, we write $X_j=X(\bs_j)$, $\omega_{kj}=\omega_k(\bs_j)$, $k=1,\ldots,K$, $j=1,\ldots, n_s$, with $n_s$ the number of observed locations, 
and define $\mathcal{C}_j=\{k:\omega_{kj}\neq 0,k=1,\ldots,K\}$. We require that any location $\bs\in \mathcal{S}$ be covered by at least one basis function, so $\mathcal{C}_j$ cannot be empty for any $j$. 

We first study the marginal distributions of the process~\eqref{eqn:model}.
\begin{proposition}\label{prop:marg_distr}
Let $\mathcal{D} = \{k:\gamma_k=0,\; k=1,\ldots, K\}$ and $\bar{\mathcal{D}}$ be the complement of $\mathcal{D}$. For the process~\eqref{eqn:model}, the marginal distribution function of $X_j=X(\bs_j)$ can be written as
\begin{equation}\label{eqn:marg_cdf}
   F_j(x) = \exp\left\{\sum_{k\in \bar{\mathcal{D}}}\gamma_k^\alpha - \sumK\left(\gamma_k+{\tau}^{{1/\alpha_0}}\omega_{kj}^{{1/\alpha}}x^{-{1/\alpha_0}}\right)^\alpha\right\}.
\end{equation}
As $x\rightarrow\infty$, the survival function $\bar{F}_j(x) = 1-F_j(x)\sim c_j(x/\tau)^{-{1/\alpha_0}}$ if $\mathcal{C}_j\cap \mathcal{D}= \emptyset$, and $\bar{F}_j(x) \sim c_j'(x/\tau)^{-{\alpha/\alpha_0}}$ if $\mathcal{C}_j\cap \mathcal{D}\neq \emptyset$, where $c_j =\alpha\sum_{k\in \bar{\mathcal{D}}} \gamma_k^{\alpha-1}\omega_{kj}^{1/\alpha}$, $c'_j = \sum_{k\in \mathcal{D}}\omega_{kj}$.     
\end{proposition}
The proof of this result can be found in Section~\ref{proof:marg_distr} of the Supplementary Material. It indicates that the process \eqref{eqn:model} has Pareto-like marginal tails at any location in the domain $\mathcal{S}$. If $\mathcal{C}_j\cap \mathcal{D}\neq \emptyset$, that is, if the $j$th location is impacted by an ``un-tilted knot'' (i.e., a knot with $\gamma_k=0$ in the $\mathrm{expPS}(\alpha,\gamma_k)$ distribution of the corresponding latent variable $Z_k$), then $\bar{F}_j(x) =O(x^{-{\alpha/\alpha_0}})$ as $x\rightarrow\infty$ since $\alpha\in (0,1)$. If, however, the location is not within the reach of an un-tilted knot, then the marginal distribution is less heavy-tailed. 

To derive the extremal dependence structure, we first calculate the joint distribution function of a $n_s$-variate random vector $(X_1,\ldots, X_{n_s})^{\rm T}$ drawn from the process \eqref{eqn:model}.
\begin{proposition}\label{prop:joint_distr}
Under the definitions and notation as established in Proposition~\ref{prop:marg_distr}, for locations $\bs_1, \ldots, \bs_{n_s}\in \mathcal{S}$, the exact form of the joint distribution function of the random vector $(X_1,\ldots, X_{n_s})^{\rm T}$ can be written as
\begin{equation}\label{eqn:joint_cdf}
   F(x_1,\ldots, x_{n_s}) = \exp\left\{\sum_{k\in \bar{\mathcal{D}}}\gamma_k^\alpha - \sumK\left(\gamma_k+\tau^{{1/\alpha_0}}\sumN\omega_{kj}^{{1/\alpha}}x_j^{-{1/\alpha_0}}\right)^\alpha\right\}.
\end{equation}
\end{proposition}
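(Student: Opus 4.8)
The plan is to exploit the conditional independence built into the model: conditionally on the latent exponentially tilted positive-stable variables $Z_1,\ldots,Z_K$, the values $X_1,\ldots,X_{n_s}$ are driven only by the mutually independent white-noise multipliers $\epsilon(\bs_1),\ldots,\epsilon(\bs_{n_s})$. First I would condition on $(Z_1,\ldots,Z_K)$ and factorize
\begin{equation*}
\Pr\{X_1\le x_1,\ldots,X_{n_s}\le x_{n_s}\mid Z_1,\ldots,Z_K\}=\prod_{j=1}^{n_s}\Pr\{\epsilon(\bs_j)\le x_j/Y_j\mid Z_1,\ldots,Z_K\},
\end{equation*}
which is valid because the $\epsilon(\bs_j)$ are independent of each other and of the $Z_k$, and because each $Y_j=Y(\bs_j)$ is a deterministic function of $(Z_1,\ldots,Z_K)$.

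The next step is to evaluate each conditional factor. Since $Y_j^{1/\alpha_0}=\sumK\omega_{kj}^{1/\alpha}Z_k$, the Fr\'{e}chet law of $\epsilon(\bs_j)$ gives
\begin{equation*}
\Pr\{\epsilon(\bs_j)\le x_j/Y_j\mid Z_1,\ldots,Z_K\}=\exp\left\{-\left(\frac{\tau}{x_j}\right)^{1/\alpha_0}\sumK\omega_{kj}^{1/\alpha}Z_k\right\},
\end{equation*}
exactly as in the marginal computation of Proposition~\ref{prop:marg_distr}. Taking the product over $j$ and collecting the coefficient of each $Z_k$, I would rewrite the conditional joint distribution as $\exp\{-\sumK S_k Z_k\}$ with $S_k=\tau^{1/\alpha_0}\sumN \omega_{kj}^{1/\alpha}/x_j^{1/\alpha_0}$; this reorganization of the double sum over $(j,k)$ is the one place that requires care, since it is what keeps each latent variable $Z_k$ entering linearly.

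Finally I would integrate out $(Z_1,\ldots,Z_K)$. By independence of the $Z_k$ the expectation factorizes into a product of Laplace transforms, and applying the exponentially tilted positive-stable transform $\text{E}\{\exp(-sZ_k)\}=\exp\{\theta_k^\alpha-(\theta_k+s)^\alpha\}$ (see Appendix~\ref{sec:PS_properties}) with $s=S_k$ gives
\begin{equation*}
F(x_1,\ldots,x_{n_s})=\prodK\exp\{\theta_k^\alpha-(\theta_k+S_k)^\alpha\}=\exp\left\{\sumK\theta_k^\alpha-\sumK(\theta_k+S_k)^\alpha\right\},
\end{equation*}
and the stated form follows on noting $\sumK\theta_k^\alpha=\sum_{k\in\bar{\mathcal{D}}}\theta_k^\alpha$, since $\theta_k=0$ for $k\in\mathcal{D}$. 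In effect this is the marginal proof of Proposition~\ref{prop:marg_distr} carried out with a vector of thresholds: setting $n_s=1$ recovers \eqref{eqn:marg_cdf}. I expect no genuine analytic obstacle here---the substance is entirely in the conditional-independence factorization and in verifying that summing over locations leaves each $Z_k$ linear, so that the positive-stable Laplace transform applies verbatim.
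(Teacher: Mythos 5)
Your proposal is correct and follows essentially the same route as the paper's own proof: condition on $(Z_1,\ldots,Z_K)$, factorize over locations by the conditional independence of the Fr\'{e}chet noise, reorganize the double sum so that each $Z_k$ enters linearly with coefficient $S_k=\tau^{1/\alpha_0}\sumN \omega_{kj}^{1/\alpha}/x_j^{1/\alpha_0}$, and apply the exponentially tilted positive-stable Laplace transform $E\{\exp(-sZ_k)\}=\exp\{\theta_k^\alpha-(\theta_k+s)^\alpha\}$ knot by knot. The final observation that $\sumK\theta_k^\alpha=\sum_{k\in\bar{\mathcal{D}}}\theta_k^\alpha$ matches the paper's last step, so there is no gap.
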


The proof of Proposition~\ref{prop:joint_distr} is given in Section~\ref{proof:joint_distr} of the Supplementary Material. Eq.~\eqref{eqn:joint_cdf} ensures that $F^{1/r}(x_1,\ldots, x_{n_s})$ 
is a valid distribution function on $\mathbb{R}^{n_s}$ for any real $r>0$, of the same form as \eqref{eqn:joint_cdf} but with tilting indices $\{\gamma_1r^{-1/\alpha}, \ldots, \gamma_Kr^{-1/\alpha}\}$ and scale parameter $\tau\,r^{-\alpha_0/\alpha}$. By definition, the process $\{X_t(\bs): \bs \in\mathcal{D}\}$ is thus max-infinitely divisible (max-id). It becomes max-stable only when it remains within the same location-scale family, i.e., when $\gamma_1=\cdots=\gamma_K=0$.

We now characterize the tail dependence structure of model~\eqref{eqn:model} using both $\chi_{ij}$ defined in Eq.~\eqref{eqn:chi} and the complementary measure $\eta_{ij}$ defined by $\Pr\{X_i>F_i^{-1}(u),X_j>F_j^{-1}(u)\} = \mathcal{L}\{(1-u)^{-1}\}(1-u)^{1/\eta_{ij}}$, where $\mathcal{L}$ is slowly varying at infinity, i.e., $\mathcal{L}(tx)/\mathcal{L}(t)\rightarrow 1$ as $t\rightarrow\infty$ for all $x>0$. The value of $\eta_{ij}\in (0,1]$ is used to differentiate between the different levels of dependence exhibited by a pair $(X_i,X_j)^{\rm T}$. When $\eta_{ij} = 1$ and $\mathcal{L}(t)\not\rightarrow 0$ as $t\rightarrow \infty$, $(X_i,X_j)^{\rm T}$ is AD ($\chi_{ij}>0$), and the remaining cases are all AI \citep[$\chi_{ij}=0$; see][]{ledford1996statistics}, with stronger tail dependence for larger values of $\eta_{ij}$.

\begin{theorem}\label{thm:dependence_properties}
Under the assumptions of Propositions~\ref{prop:marg_distr} and \ref{prop:joint_distr}, the process $\{X(\bs)\}$ defined in \eqref{eqn:model} has a tail dependence structure characterized as follows: 
\vspace*{-0.4cm}\begin{enumerate}[(a)]
\setlength\itemsep{-0.6em}
    \item\label{item:local_AI} If $\mathcal{C}_i\cap \mathcal{D}= \emptyset$ and $\mathcal{C}_j\cap \mathcal{D}= \emptyset$, we have $\chi_{ij}=0$ with $\eta_{ij}=1/2$. 
    \item\label{prop:case2} If $\mathcal{C}_i\cap \mathcal{D}=\emptyset$ and $\mathcal{C}_j\cap \mathcal{D}\neq \emptyset$, we have $\chi_{ij}=0$ with $\eta_{ij}=\frac{\alpha}{\alpha+1}$ when $\mathcal{C}_i\cap\mathcal{C}_j\neq \emptyset$ and $\eta_{ij}=1/2$ when $\mathcal{C}_i\cap\mathcal{C}_j= \emptyset$.
    
    \item\label{thm:AI_case} If $\mathcal{C}_i\cap \mathcal{D}\neq \emptyset$ and $\mathcal{C}_j\cap \mathcal{D}\neq \emptyset$, we have $\chi_{ij}=2-d_{ij}$ with $\eta_{ij}=1$ when $\mathcal{C}_i\cap \mathcal{C}_j\cap \mathcal{D}\neq \emptyset$, where $d_{ij}=\sum_{k\in \mathcal{D}}\{(\omega_{ki}/c'_i)^{{1/\alpha}}+(\omega_{kj}/c'_j)^{{1/\alpha}}\}^\alpha\in (1,2)$, and $\chi_{ij}=0$ with $\eta_{ij}=1/2$ when $\mathcal{C}_i\cap \mathcal{C}_j\cap \mathcal{D}=\emptyset$.
\end{enumerate}
\end{theorem}
The proof of this result is given in Section~\ref{proof:thm} of the Supplementary Material. The local dependence strength is proportional to the tail-heaviness of the latent variable of the closest knot. There is local AD if $\gamma_k=0$, and local AI if $\gamma_k>0$, as expected. 
\LZadd{The sets $\mathcal{C}_j\cap \mathcal{D}$, $j=1,\ldots, n_s$, are crucial to the behavior of the so-called exponent function in the limiting distribution for normalized maxima. This causes both the asymptotic and sub-asymptotic dependence strength to rely on the tail-heaviness of the local expPS variables and the basis function weights; 
see Remark~\ref{remark:exponent_function} in Section~\ref{proof:thm} of the Supplementary Material for specifics}.

The compactness of the basis functions' support yields long-range exact independence (thus, also AI) for two far-apart sites that are impacted by disjoint sets of basis functions; this is similar in spirit to the Cauchy convolution process of \citet{krupskii2022modeling}, though their model construction is different and less computationally tractable than ours.

\subsection{XVAE: A VAE incorporating the proposed max-id  model}\label{sec:XVAE}
Hereafter, we denote by $\bX_t=\{X_t(\bs_j): j=1,\ldots, n_s\}$ the realizations of process~\eqref{eqn:model} at time $t=1,\ldots, n_t$, and by $\bZ_t=\{Z_{kt}:k=1,\ldots,K\}$ the corresponding latent variables. 

Inference for our flexible extremes model on large spatial datasets poses challenges. A streamlined Metropolis--Hastings MCMC algorithm would be time-consuming and hard to monitor when confronted with the scale of our spatial data in Section~\ref{sec:data_analysis}, where a considerable number of local basis functions $K$ is necessary to capture local extremes. Additionally, when there are many time replicates, inferring time-varying parameters $(\alpha_t,\bgamma_t^\top)^\top$ and latent variables $\bZ_t$ at all time points becomes extremely challenging. 
To overcome these challenges, we modify the encoding-decoding VAE paradigm described in Section~\ref{sec:VAE_intro} to account for our extremes framework. For $t=1,\ldots, n_t$, our encoder $q_{\bphi_e}(\bz_t\mid \bx_t)$ maps each observed replicate $\bx_t$ to the latent space and allows fast random sampling of $\{\bZ_t^1,\ldots, \bZ_t^L\}$ that will be approximately distributed according to the true posterior $p_{\btheta_t}(\cdot\mid\bx_t)$ because of the ELBO regularization, in which $\btheta_t=(\alpha_0,\tau,\alpha_t, \bgamma_t^{\rm T})^{\rm T}$; see Eq.~\eqref{eqn:ELBO_def}. The details of this approach are provided below (see also the illustration in Figure~\ref{fig:VAE-diagram}).

\vspace*{-0.6cm}\paragraph{Approximate Posterior/Encoder ($q_{\bphi_e}(\bz_t\mid \bx_t)$):}
The encoder is defined through
\begin{equation}\label{eqn:encoder_form}
 \begin{split}
 \bz_t&=\bmu_t + \bzeta_t \odot \boldsymbol{\eta}_t, \\[-1ex]
    \eta_{kt}&\stackrel{\text{i.i.d.}}{\sim} \mathrm{Normal}(0,1),\\[-1ex]
    (\bmu_t^{\rm T},\log \bzeta_t^{\rm T})^{\rm T} &= \mathrm{EncoderNeuralNet}_{\bphi_e}(\bx_t),
 \end{split}
\end{equation}
where $\odot$ is the elementwise product, and a standard reparameterization trick with an auxiliary variable $\boldsymbol{\eta}_t=\{\eta_{kt}:k=1,\ldots, K\}$ is used to enable fast computation of Monte Carlo estimates of the gradient $\nabla_{\bphi_e}\mathcal{L}_{\bphi_e, \bphi_d}$. Also, by controlling the mean $\bmu_t$ and variance $\bzeta^2_t$, the distributions $q_{\phi_e}(\bz_t\mid \bx_t)$ are enforced to be close to $p_{\bphi_d}(\bz_t\mid \bx_t)$ for each $t$. This is the primary role of the deep neural network in (\ref{eqn:encoder_form})---to learn the complex relationship between the inputs $\bx_t$ and the latent process $\bz_t$. The specific neural network architecture and implementation details are given in Section~\ref{sec:extVAE_details} of the Supplementary Material.

\vspace*{-0.6cm}\paragraph{Prior on Latent Process ($p_{\bphi_d}({\bz_t})$):}
This is determined by our model construction. Specifically, 
the prior on $\bz_t$ can be written as
\begin{equation}\label{eqn:prior_form}
    p_{\bphi_d}(\bz_t) = \prodK h(z_{kt};\alpha_t,\gamma_{kt}),
\end{equation}
in which $h(\cdot;\alpha_t,\gamma_{kt})$ is the density function  of $\mathrm{expPS}(\alpha_t,\gamma_{kt})$, as defined in \eqref{eqn:expPS_den}. 

\vspace*{-0.6cm}\paragraph{Data Model/Decoder ($p_{\phi_d}(\bx_t\mid \bz_t)$):}
Our decoder is based on the flexible max-id spatial extremes model described in Section~\ref{sec:FlexMaxID}.  Specifically, recall from Eq.~\eqref{eqn:low_rank_representation} that $\Pr(\bX_t\leq \bx_t\mid \bZ_t=\bz_t)= \exp\{-\sumN \left({\tau}/{x_{jt}}\right)^{{1}/{\alpha_0}}\sumK\omega_{kj}^{{1}/{{\alpha}_t}}z_{kt}\}$.
Differentiating this conditional distribution function gives the exact form of the decoder:
\begin{equation}\label{eqn:lik_form}
     p_{\bphi_d}(\bx_t\mid\bz_t) = \left({1/\alpha_0}\right)^{n_s} \left\{\prod_{j=1}^{n_s}  \frac{1}{x_{jt}}\left(\frac{x_{jt}}{\tau y_{jt}}\right)^{-1/\alpha_0}\right\} \exp \left\{ -\sumN \left(\frac{x_{jt}}{\tau y_{jt}}\right)^{-1/\alpha_0}\right\},
\end{equation}
where $y_{jt}=\LZadd{(\sumK\omega_{kj}^{1/{\alpha}_t}z_{kt})^{\alpha_0}}$. This distribution depends on the Fr\'{e}chet parameters $(\alpha_0,\tau)^{\rm T}$ and the dependence parameters $(\alpha_t,\bgamma_t^{\rm T})^{\rm T}$ inherited from the prior distribution of $\bz_t$. The decoder neural network estimates these dependence parameters as
\begin{equation}\label{eqn:decoder_form}
    (\hat{\alpha}_t, \hat{\bgamma}_t^{\rm T})^{\rm T} = \mathrm{DecoderNeuralNet}_{\bphi_{d,0}}(\bZ_t),
\end{equation}
where $\bphi_{d,0}$ are the bias and weight parameters of this neural network (see Eqs.~\eqref{eqn:encoder_weights} and \eqref{eqn:decoder} of the Supplementary Material for more details).  Combining $\bphi_{d,0}$ with the Fr\'{e}chet parameters $(\alpha_0,\tau)^{\rm T}$, we write $\bphi_{d}=(\alpha_0,\tau, \bphi_{d,0}^{\rm T})^{\rm T}$. We use the variational procedure to find estimates of parameters $\bphi_d$ and the encoder neural network parameters $\bphi_e$.

\begin{figure}
    \centering
    \includegraphics[width=0.65\linewidth]{VAE_diagram.png}
    \vskip -0.3cm
    \caption{Diagram of a variational autoencoder (VAE) with the reparameterization trick.}
    \label{fig:VAE-diagram}
\end{figure}

\vspace*{-0.5cm}\paragraph{Encoder/Decoder Estimation:} By drawing $L$ independent samples ${\bZ_t^1,\ldots, \bZ_t^L}$ using Eq.~\eqref{eqn:encoder_form}, we can derive the Monte Carlo estimate of the ELBO,  $\mathcal{L}_{\bphi_e,\bphi_d}(\boldsymbol{x}_t)$, and then find the parameters $\bphi_e$ and $\bphi_d$ that maximize $\sum_{t=1}^{n_t}\mathcal{L}_{\bphi_e,\bphi_d}(\boldsymbol{x}_t)$ via stochastic gradient search, as detailed in Section~\ref{sec:extVAE_details} of the Supplementary Material. We stress again that our XVAE is a ``semi-amortized'' inference approach \citep{andrew2025amortized}: there is a substantial training cost up front, but once the XVAE is trained, posterior simulation of new latent variables $\bZ_t$ can be performed very efficiently following Eq.~\eqref{eqn:encoder_form} and synthetic data can be generated extremely quickly by passing them through the decoder~\eqref{eqn:decoder_form} and sampling from the model $p_{\hat{\btheta}_t}(\bx\mid \bZ_t)$ specified by Eqs.~\eqref{eqn:model} and \eqref{eqn:low_rank_representation}, in which $\hat{\btheta}_t=(\hat{\alpha}_0,\hat{\tau},\hat{\alpha}_t, \hat{\bgamma}_t^{\rm T})^{\rm T}$. 
The XVAE would, however, have to be retrained with new observations $\bX_t$, $t=1,\ldots,n_t$.

The data reconstruction process relies on compactly supported local basis functions at pre-determined knot points, which are not updated with $\bphi_d$ of the decoder.  Although one could choose the knots using a certain space-filling design, we propose a data-driven way to determine the number of knots, their locations, and the radius of basis functions as described in Section~\ref{sec:data_driven_knots} of the Supplementary Material. We show by simulation that this compares favorably to the XVAE initialized with the true knots/radii. Our XVAE implementation in \texttt{R} is publicly accessible on GitHub at \href{https://github.com/likun-stat/XVAE}{https://github.com/likun-stat/XVAE}.

\vspace*{-0.5cm}\paragraph{Uncertainty quantification:}
The decoder~\eqref{eqn:decoder_form} functions as a neural estimator for $(\alpha_t,\bgamma_t^{\rm T})^{\rm T}$. Examining its inferential power is crucial, as accurate emulation heavily relies on precise characterization of spatial inputs. 
Drawing a substantial number of samples from the variational distribution $q_{\bphi_e}(\cdot\mid\bx_t)$ (which is close to ${p_{\bphi_d}(\cdot\mid \bx_t)}$; recall Section~\ref{sec:VAE_intro}) allows us to obtain Monte Carlo estimates of the dependence parameters $(\alpha_t,\bgamma_t^{\rm T})^{\rm T}$ using the decoder~\eqref{eqn:decoder_form}. Combining these estimates yields an approximate sample from the posterior, $(\alpha_t,\bgamma_t^{\rm T})^{\rm T}\mid \{\bx_t:t=1,\ldots, n_t\}$, which enables the calculation of point estimates (posterior mean or maximum \textit{a posteriori}) and approximate confidence regions for UQ.


\subsection{Validation framework for extremes emulation}\label{sec:validation}
We propose a validation framework tailored to assess both the full data range and the joint tail behavior in outputs from any generative spatial extremes model.


First, we predict at held-out locations and calculate the mean squared prediction error (MSPE) and the continuous ranked probability score \citep[CRPS;][]{matheson1976scoring, gneiting2007strictly}. Second, we estimate $\chi_{ij}(u)$, as defined in Eq.~\eqref{eqn:chi}, using two methods: (1) To summarize the average decay of dependence with distance even if the process is non-stationary, we treat $\{X(\bs)\}$ as having a stationary, isotropic dependence structure, where $\chi_{ij}(u)\equiv\chi_h(u)$, with $h=||\bs_i-\bs_j||$ as the distance between locations. For a fixed $h$, we compute empirical conditional probabilities $\widehat{\chi}_h(u)$ across a grid of $u$ values; (2) To avoid the restrictive stationary working assumption, we select a reference point $\bs_0$ and estimate the pairwise $\chi_{0j}(u)$ between $\bs_0$ and other locations $\bs_j$ in the spatial domain $\mathcal{S}$, which can be visualized using raster or heat plots. Third, we examine QQ-plots by pooling spatial data to compare the ranges and quantiles of the input and emulated field. Further details of these diagnostics are provided in Section~\ref{sec:diagnotics} of the Supplementary Material.

Additionally, we propose using a novel joint tail dependence coefficient that formally summarizes the overall dependence strength over the entire spatial domain. This metric characterizes the spatial extent of extreme events conditional on an arbitrary reference point in the domain 
exceeding a particular quantile $u$. \citet{zhang2022accounting} formulated the metric on an empirical basis and named it the averaged radius of exceedances (ARE). 

Given a large number of independent replicates (say $n_r$) from $\{X(\bs)\}$ on a dense regular grid $\mathcal{G}=\{\bg_i\in \mathcal{S}:i=1,\ldots,n_g\}$ over the domain $\mathcal{S}$ with side length $\psi>0$, denote the replicates by $\bX_{r}=\{X_r(\bg_i):i=1,\ldots,n_g\}$, $r=1,\ldots, n_r$. The empirical marginal distribution functions at $\bg_i$ can then be obtained as $\hat{F}_{i}(x)=n_r^{-1}\sum_{r=1}^{n_r}\mathbbm{1}(X_{ir}\leq x)$, where $X_{ir}=X_r(\bg_i)$ and $\mathbbm{1}\{\cdot\}$ is the indicator function. We then transform $(X_{i1},\ldots, X_{in_r})^{\rm T}$ to the uniform scale via $U_{ir} = \hat{F}_{i}(X_{ir})$, $r=1,\ldots, n_r$. Let $\bU_r=\{U_{ir}:i=1,\ldots,n_g\}$ and $U_{0r}=\hat{F}_{0}\{X_r(\bs_0)\}$. The ARE metric at the threshold $u$ is defined by
\begin{equation}\label{eqn:ARE_monte_carlo}
    \widehat{\mathrm{ARE}}_\psi(u) = \left\{\frac{\psi^2\sum_{r=1}^{n_r}\sum_{i=1}^{n_g}\mathbbm{1}(U_{ir}>u, U_{0r}>u)}{\pi\sum_{r=1}^{n_r}\mathbbm{1}(U_{0r}>u)}\right\}^{1/2}.
\end{equation}
The summation $\psi^2\sum_{i=1}^{n_g}\mathbbm{1}(U_{ir}>u, U_{0r}>u)$ in Eq.~\eqref{eqn:ARE_monte_carlo} calculates the area of all grid cells exceeding the extremeness level $u$ jointly with the reference location $\bs_0$, for the same replicate $r$; dividing it by $\pi$ and taking the square root thus yields the ``radius'' of a circular exceedance region that has the same spatial extent. Additionally, Eq.~\eqref{eqn:ARE_monte_carlo} averages over all replicates with the reference location exceeding the extremeness level $u$. Therefore, $\widehat{\mathrm{ARE}}_\psi(u)$ has the same units as $\psi$, or the distance metric used on the domain $\mathcal{S}$, which makes it an interpretable metric for domain scientists because it reflects the average length scale of extreme events (e.g., warm pool size in SST data).

The following result ensures that $\widehat{\mathrm{ARE}}_\psi(u)$, which does not require stationarity or isotropy, converges to the square root of the spatial average of $\chi_{0i}(u)$ as $n_r\rightarrow \infty$.
\begin{theorem}\label{thm:consistency}
    For a fixed regular grid $\mathcal{G}$ with side length $\psi$, a reference location $\bs_0$ and $u\in(0,1)$, we have that, almost surely,  
    \begin{equation}\label{eqn:ARE_psi}
    \widehat{\mathrm{ARE}}_\psi(u)\rightarrow\mathrm{ARE}_{\psi}(u)=\left(\psi^2\sum_{i=1}^{n_g}\chi_{0i}(u)/\pi\right)^{1/2},
\end{equation}
    as $n_r\rightarrow\infty$, where $\chi_{0i}(u)$ is the $\chi$-measure between locations $\bs_0$ and $\bg_i$ defined in Eq.~\eqref{eqn:chi}.
\end{theorem}

Due to the presence of the white noise $\{\epsilon(\bs)\}$, there is no version of the process $\{X(\bs)\}$ that has measurable paths, which means that $X(\bs)\not\rightarrow X(\bs_0)$ (in probability) as $\bs\rightarrow\bs_0$. Nevertheless, we know from Theorem~\ref{thm:dependence_properties} that there is continuity in the dependence measure $\chi_{0i}$ because $\{\epsilon(\bs)\}$ barely impacts the dependence structure of $\{Y(\bs)\}$. That is, $\chi_{\bs_0,\bs}$, denoting the $\chi$-measure between location $\bs_0$ and $\bs$, is a continuous function of $\bs\in\mathcal{S}$ when fixing the reference location $\bs_0$; we define this property as \textit{tail-continuity}. The following result further confirms that under the tail-continuity, $\widehat{\mathrm{ARE}}_\psi(u)$ also converges to the square root of the spatial integral of $\chi_{\bs_0,\bs}$ as $u\rightarrow 1$ and as $\mathcal{G}$ becomes infinitely dense.
\begin{theorem}\label{prop:ARE_psi}
Let the domain $\mathcal{S}$ be bounded (i.e., its area $|\mathcal{S}|<\infty$) and process $\{X(\bs):\bs\in \mathcal{S}\}$ be tail-continuous for $\bs_0$ (i.e., $\chi_{\bs_0,\bs}$ is a continuous function of $\bs$ in $\mathcal{S}$). Then,
\begin{equation}\label{eqn:are_res_limit}
    \lim_{\psi\rightarrow 0, u\rightarrow 1}\psi\left(\sum_{i=1}^{n_g} \chi_{0i}(u)\right)^{1/2}=\left\{\int_{\mathcal{S}} \chi_{\bs_0, \bs} \mathrm{d}\bs\right\}^{1/2}.
\end{equation}
\end{theorem}
\begin{remark}
    Tail-continuity is met by many spatial extremes models, like max-stable, inverted-max-stable, and others \citep[e.g.,][]{Opitz2016,Huser2019, krupskii2022modeling}. Our model~\eqref{eqn:model} also adheres to tail-continuity, as indicated by Theorem~\ref{thm:dependence_properties}.
\end{remark}
\begin{remark}
    Together, Theorems~\ref{thm:consistency} and \ref{prop:ARE_psi} ensure that $\widehat{\mathrm{ARE}}_\psi(u)\approx \left\{\int_{\mathcal{S}} \chi_{\bs_0, \bs} \mathrm{d}\bs\right\}^{1/2}/\pi^{1/2}$ if there are a large number of replicates from the process $\{X(\bs)\}$ on a very dense grid $\mathcal{G}$. 
\end{remark}



Similarly, we can estimate $\mathrm{ARE}_\psi(u)$ for the emulator by running the decoder repeatedly to obtain emulated replicates of $\{X(\bs)\}$ on the same grid. By comparing the $\mathrm{ARE}_\psi(u)$ estimates at a series of $u$ levels, we can evaluate whether spatially-aggregated exceedances are consistent between the spatial data inputs and their XVAE emulation counterparts.

\section{Simulation study}\label{sec:simulation}
In this section, we simulate data from five different parametric models that have varying levels of extremal dependence across space. By examining the diagnostics introduced in Section~\ref{sec:validation}, we validate the efficacy of our XVAE to analyze and emulate data from both model \eqref{eqn:model} and misspecified models.

\subsection{General setting}\label{sec:sim_setting}
We conduct a simulation study in which data are generated at $n_s=2,000$ random locations uniformly sampled over the square $[0,10]\times [0,10]$. We simulate $n_t=100$ replicates of the process from each of the following different models:
\begin{enumerate}[label=\Roman*., ref=\Roman*]
\setlength\itemsep{-0.1em}
    \item \label{modelGP} Gaussian process with zero mean, unit variance, and Mat\'{e}rn correlation $C(\bs_j,\bs_j;\phi,\nu)$, in which $\phi=3$ and $\nu=5/2$ are range and smoothness parameters;
    \item\label{modelAI} Max-id process \eqref{eqn:model} with $K=25$ basis functions and $|\mathcal{D}|=0$ un-tilted knots;
    \item\label{modelFlex} Max-id process \eqref{eqn:model} with $K=25$ basis functions and $0<|\mathcal{D}|<K$ un-tilted knots;
    \item\label{modelAD} Max-id process \eqref{eqn:model} with $K=25$ basis functions and $|\mathcal{D}|=K$ un-tilted knots;
    \item\label{modelMaxStable} Max-stable \citet{reich2012hierarchical} model with $K=25$ basis functions.
\end{enumerate}
When simulating from Models~\ref{modelAI}--\ref{modelAD}, we first consider time-invariant dependence parameters $\alpha_t\equiv 1/2$ and $\bgamma_t\equiv \bgamma$, and attempt to recover the spatial dependence structure; see Figure~\ref{fig:sim_knots} for the knot locations and $\bgamma$ values. Recall that $K$ is the number of basis functions and $\mathcal{D} = \{k:\gamma_k=0\}$. We sample the latent variables $\bZ_{t}$ from the expPS distribution independently for each time replicate. The white noise process $\{\epsilon_t(\bs)\}$ follows the same independent Fr\'{e}chet($0,\tau,1/\alpha_0$) distribution with $\tau=1$ and $\alpha_0=1/4$.
\begin{figure}[!t]
    \centering
    \includegraphics[height=0.3\linewidth]{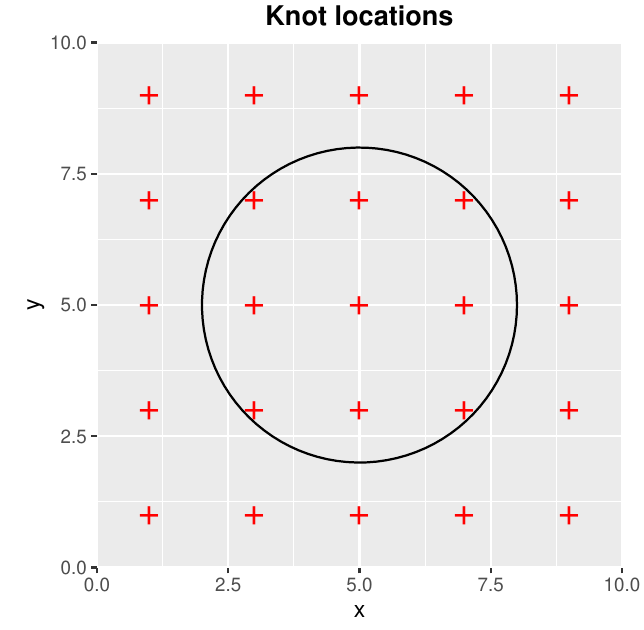}
    \includegraphics[height=0.3\linewidth]{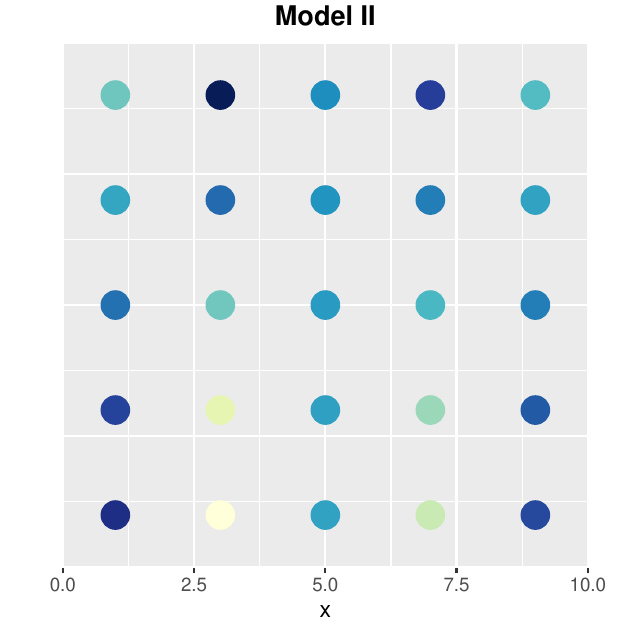}
    \includegraphics[height=0.3\linewidth]{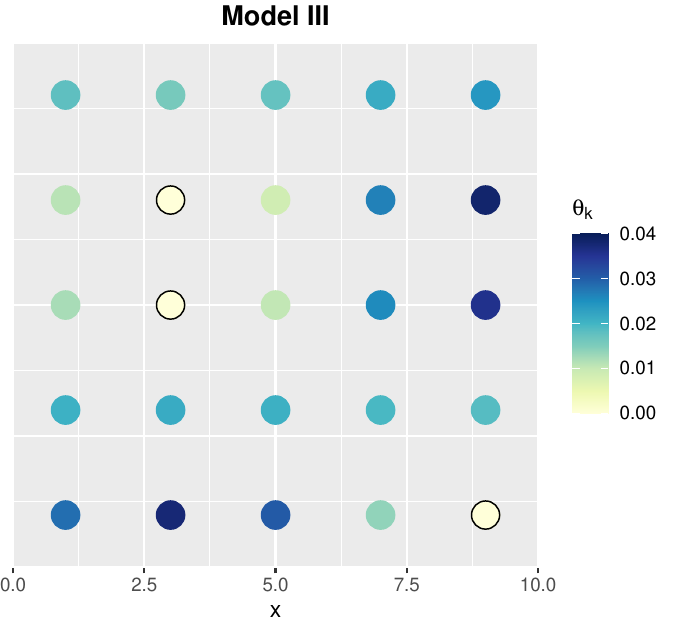}
    \vskip -0.3cm
    \caption{The left panel presents knot locations used for Models~\ref{modelAI}--\ref{modelAD}, and we only show the support of the one Wendland basis function centered at knot in the middle of the domain. Model~\ref{modelMaxStable} uses the same set of knots but the basis functions are not compactly supported. The middle and right panels display the $\gamma_k$ values, $k=1,\ldots, K$, used in the expPS variables for Models~\ref{modelAI} and \ref{modelFlex} respectively. The circled knots signify $\gamma_k=0$, which induces local AD.}
    \label{fig:sim_knots}
\end{figure}

Model~\ref{modelGP} is a stationary and isotropic Gaussian process with a Mat\'{e}rn covariance function. It is known that the joint distribution of the Gaussian process at any two locations $\bs_i$ and $\bs_j$ is light-tailed and leads to AI unless the correlation equals one. 
For Models~\ref{modelAI}--\ref{modelAD}, we simulate data from the max-id model \eqref{eqn:model} with $K=25$ evenly-spread knots across the grid, denoted by $\{\Tilde{\bs}_1, \ldots, \Tilde{\bs}_K\}$. Setting the range parameter to $r=3$, we use compactly supported Wendland basis functions $\omega_k(\bs, r)\propto\{1-d(\bs, \Tilde{\bs}_k)/r\}^2_+$ centered at each knot \citep{wendland1995piecewise}, $k=1,\ldots, K$; see Figure~\ref{fig:sim_knots}. The basis function values are standardized so that for each $\bs$, $\sumK\omega_k(\bs, r)=1$. The main difference between Models~\ref{modelAI}, \ref{modelFlex} and \ref{modelAD} lies in the $\gamma_k$ values: Model~\ref{modelAI} has no zero $\gamma_k$'s (i.e., $|\mathcal{D}|=0$), whereas Model~\ref{modelFlex} has a mix of positive and zero $\gamma_k$'s, and Model~\ref{modelAD} has only zero $\gamma_k$'s (i.e., $|\mathcal{D}|=K$). By Theorem~\ref{thm:dependence_properties}, we know Model~\ref{modelAI} gives only local AI and Model~\ref{modelAD} gives only local AD. In contrast, Model~\ref{modelFlex} gives both local AD and local AI. By contrast, Model~\ref{modelMaxStable} adopts the same set of knots but it uses Gaussian radial basis functions which are not compactly supported. Therefore, Model~\ref{modelMaxStable} is the \citet{reich2012hierarchical} max-stable model. 

Models~\ref{modelGP}--\ref{modelMaxStable} gradually exhibit increasingly stronger extremal dependence, and they can help us test whether the XVAE can capture spatially-varying dependence structures that exhibit local AD and/or local AI. Since the proposed process \eqref{eqn:model} allows $\gamma_k$ to change across the different knots ($k=1,\ldots, K$), a well-trained XVAE should be able to differentiate between local AD ($\gamma_k=0$) and local AI ($\gamma_k>0$). 

Additionally, for each space-time simulated dataset, we randomly set aside 100 locations as a validation set. Subsequently, we analyze the dependence structure of the remaining 1,900 locations using both the proposed XVAE (initialized with data-driven knots unless specified otherwise) and a Gaussian process regression with heteroskedastic noise implemented in the \texttt{R} package \texttt{hetGP} \citep{hetGP}. We then perform predictions at the 100 holdout locations (see Section~\ref{sec:predict} of the Supplementary Material).
In the following, we show that both emulators perform well when emulating datasets from Models~\ref{modelGP} and \ref{modelAI}, but only XVAE 
appropriately captures heavy tails and AD in Models~\ref{modelFlex}--\ref{modelMaxStable}. 

\LZadd{In Section~\ref{sec:nonstat_sim} of the Supplementary Material, we further examine the XVAE's ability to capture $\bgamma_t$ when there is both spatial and temporal nonstationarity. Moreover, in Section~\ref{sec:extGAN_sim} of the Supplementary Material, we simulate data on a regular grid and compare the emulation performance between XVAE and extGAN proposed by \citet{boulaguiem2022modeling}; we see that extGAN has limitations in capturing the extremal dependence appropriately.}

\subsection{Emulation results}
Figure~\ref{fig:comps_across_models} and Figure~\ref{fig:comps_across_models1} of the Supplementary Material compare emulated replicates from XVAE and \texttt{hetGP} with data replicates from Models~\ref{modelGP}--\ref{modelMaxStable}, while Figure~\ref{fig:qqplots_across_models} displays QQ-plots that align well with the 1-1 line in all cases for XVAE but not for \texttt{hetGP}. 
Since the Gaussian process has much weaker extremal dependence, the resulting $\bgamma_t$ estimated in \eqref{eqn:decoder_form} after convergence is consistently far greater than $0.1$, indicating light tails in the expPS variables and thus, local AI at all knots. In contrast, Model~\ref{modelAI} exhibits AI across the domain with much smaller $\bgamma_t$ values (see Figure~\ref{fig:sim_knots}), producing heavier-tailed expPS variables than Model~\ref{modelGP}. As a result, \texttt{hetGP} struggles to capture extremal dependence and the QQ-plot shows its underestimation of large tail values, though Model~\ref{modelAI} still shows only AI.
\begin{figure}[!t]
    \centering
    \includegraphics[height=0.33\linewidth]{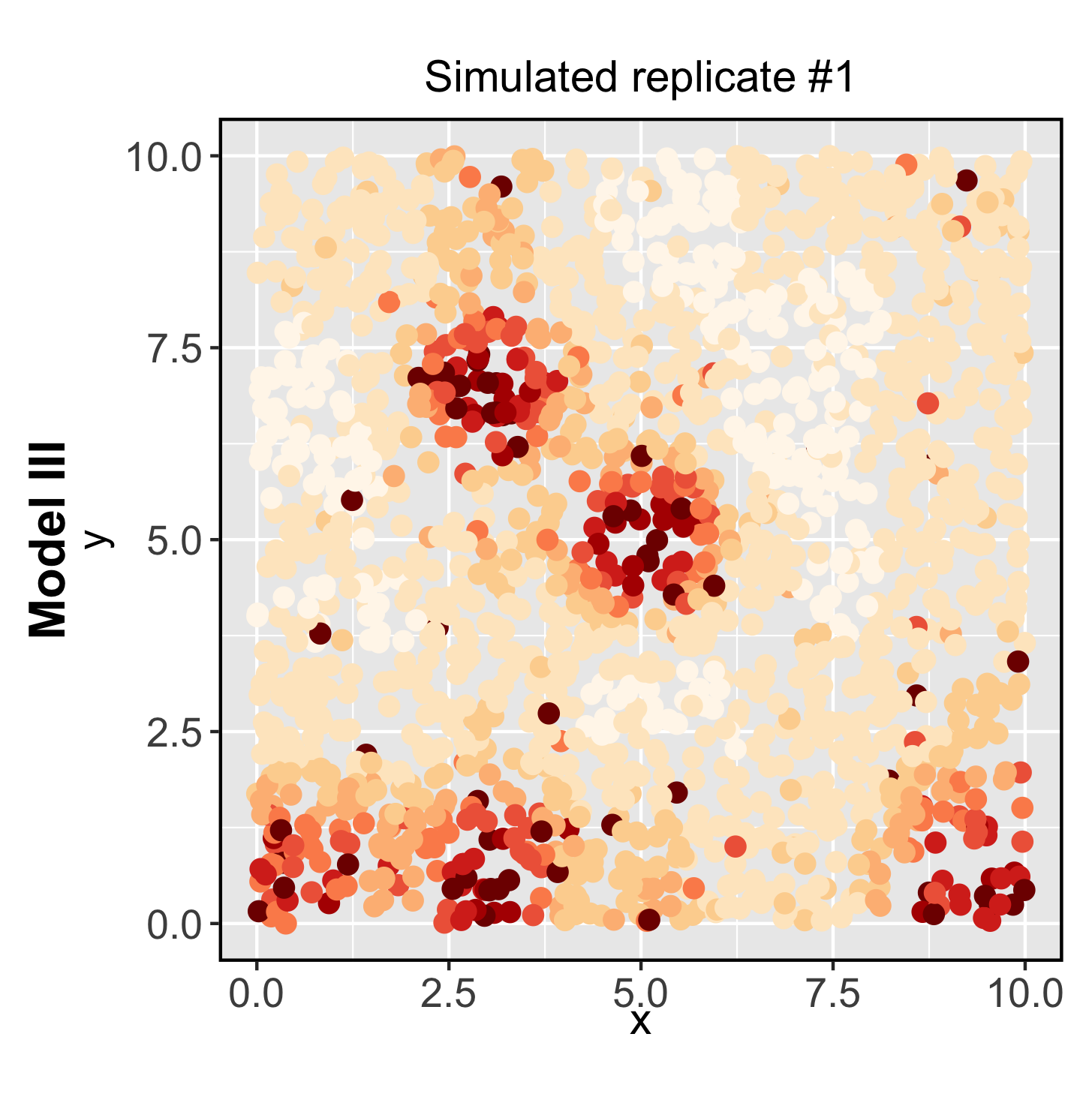}
    \includegraphics[height=0.33\linewidth]{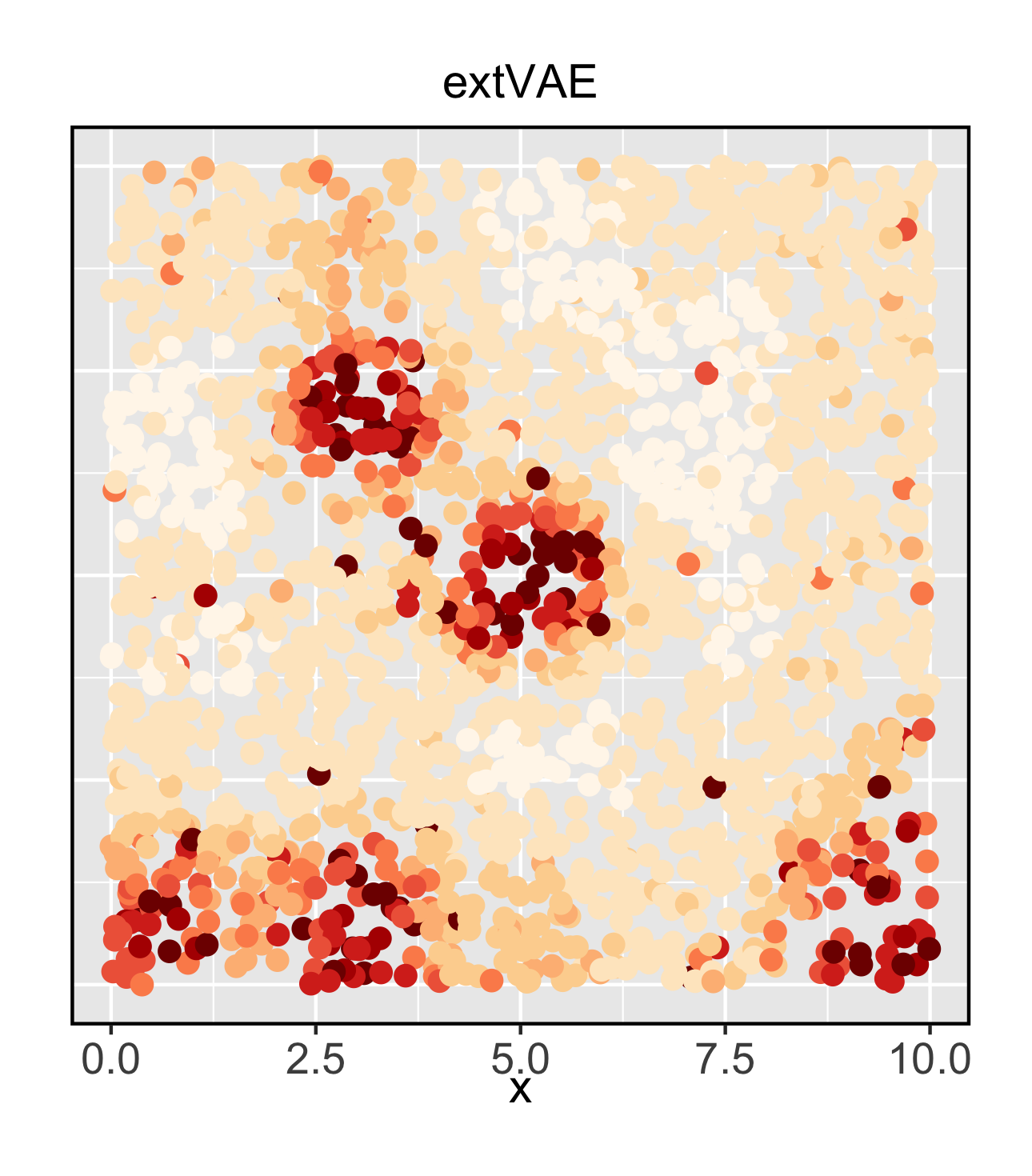}
    \includegraphics[height=0.33\linewidth]{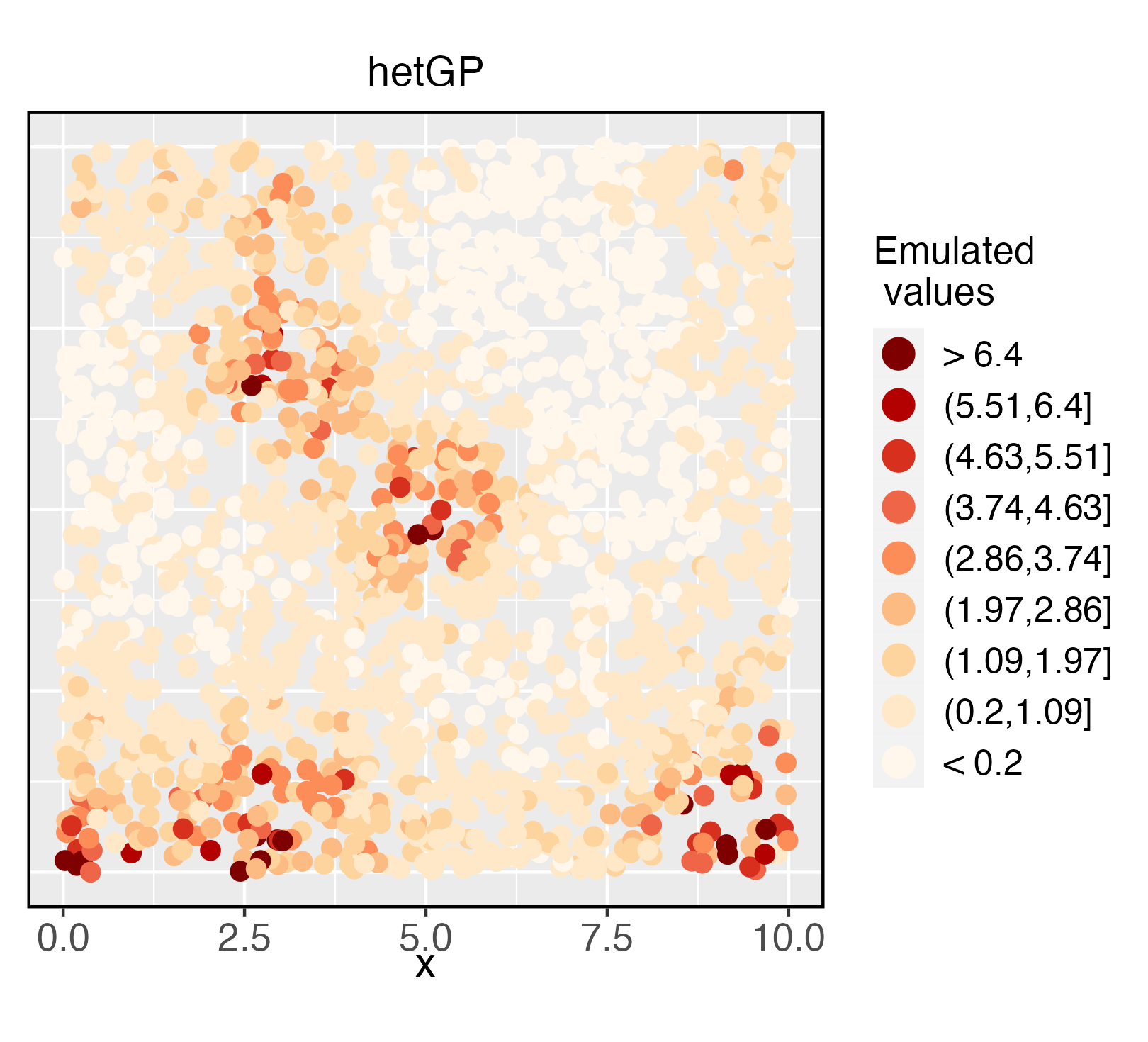}
    \vskip -0.4cm
    \caption{Data replicate (left) and its corresponding emulated fields (XVAE, middle; \texttt{hetGP}, right) from Model~\ref{modelFlex}. See Figure~\ref{fig:comps_across_models1} of the Supplementary Material for comparisons for the other models. In all cases, we use data-driven knots for emulation using XVAE.}
    \label{fig:comps_across_models}
\end{figure}

For Models~\ref{modelFlex}--\ref{modelMaxStable}, there is local AD, and we see that \texttt{hetGP} completely fails at emulating the co-occurrence of extreme values. Because \texttt{hetGP} focuses on the bulk of the distribution, it ignores spatial extremal dependence. This validates the need to incorporate a flexible spatial model in the emulator to capture tail dependence accurately.    

\begin{figure}[!t]
    \centering
    \includegraphics[height=0.257\linewidth]{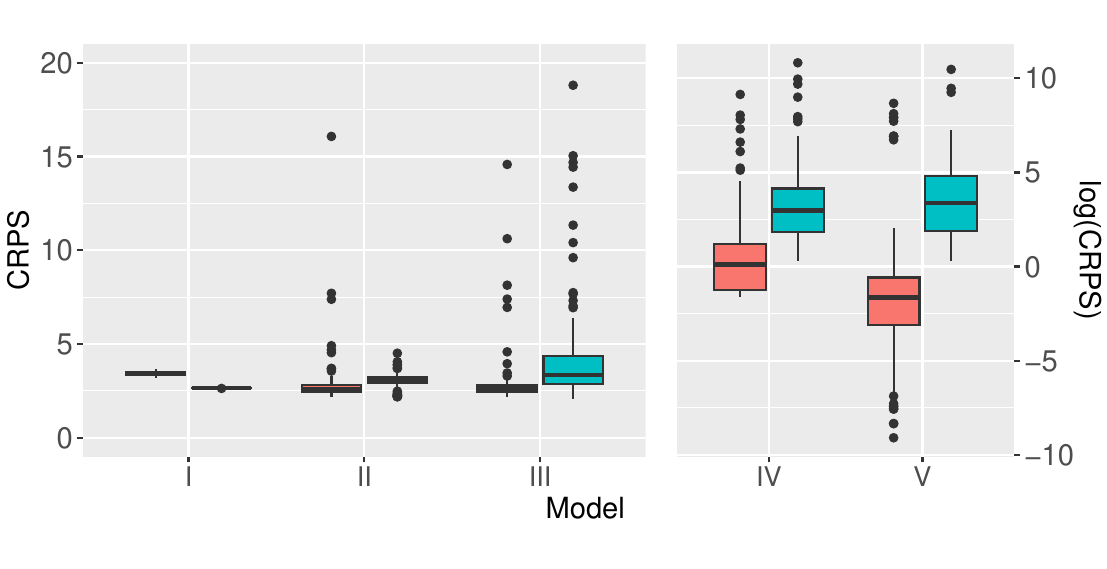}
    \hspace*{1em}\includegraphics[height=0.2458\linewidth]{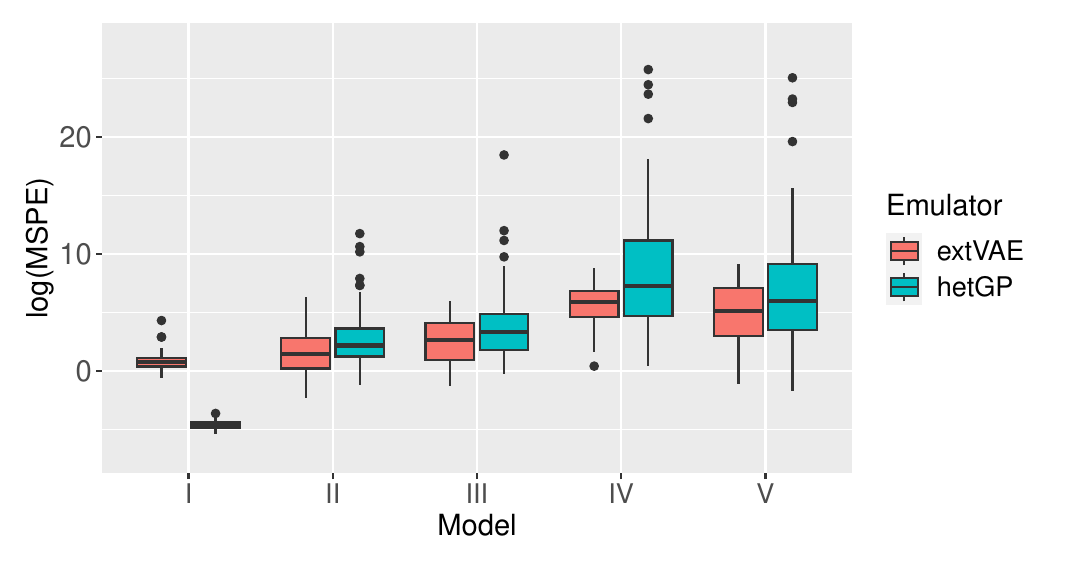}
    \vskip -0.4cm
    \caption{The CRPS (left) and MSPE (right) values from two emulation approaches on the datasets simulated from Models~\ref{modelGP}--\ref{modelMaxStable}. For both metrics, lower values indicate better emulation results. Also, for Models~\ref{modelAD} and \ref{modelMaxStable}, we plot the CRPS values on the log scale since the AD in the data generating process causes the margins to be very heavy-tailed.}
    \label{fig:res_summary}
\end{figure}
Figure~\ref{fig:res_summary} compares the performance of spatial predictions at the 100 holdout locations. For Model~\ref{modelGP}, \texttt{hetGP} has lower CRPS and MSPE scores, indicating higher predictive power, as expected since the true process is Gaussian. However, the XVAE model still performs quite well in this (misspecified) case. For Models~\ref{modelAI}--\ref{modelMaxStable}, XVAE uniformly outperforms \texttt{hetGP}. 
Also, the CRPS and MSPE for \texttt{hetGP} are significantly higher for time replicates with extreme events.

The first three panels of Figure~\ref{fig:chi_ests} and Figure~\ref{fig:chi_ests2} of the Supplementary Material compare nonparametric estimates of the upper tail dependence $\chi_h(u)$ from the data replicates and emulations at three different distances $h\in\{0.5,2,5\}$  under the working assumption of stationarity. In general, we see that the dependence strength decays as $h$ and $u$ increase, with varying levels of positive limits as $u\rightarrow 1$ for Models~\ref{modelFlex}--\ref{modelMaxStable}. The results in Figure~\ref{fig:chi_ests} demonstrate that our XVAE manages to accurately emulate the dependence behavior at both low and high quantiles and the empirical confidence envelopes of $\chi_h(u)$ are essentially indistinguishable between the simulated and emulated data.
\begin{figure}[!t]
    \centering
    \includegraphics[height=0.26\linewidth]{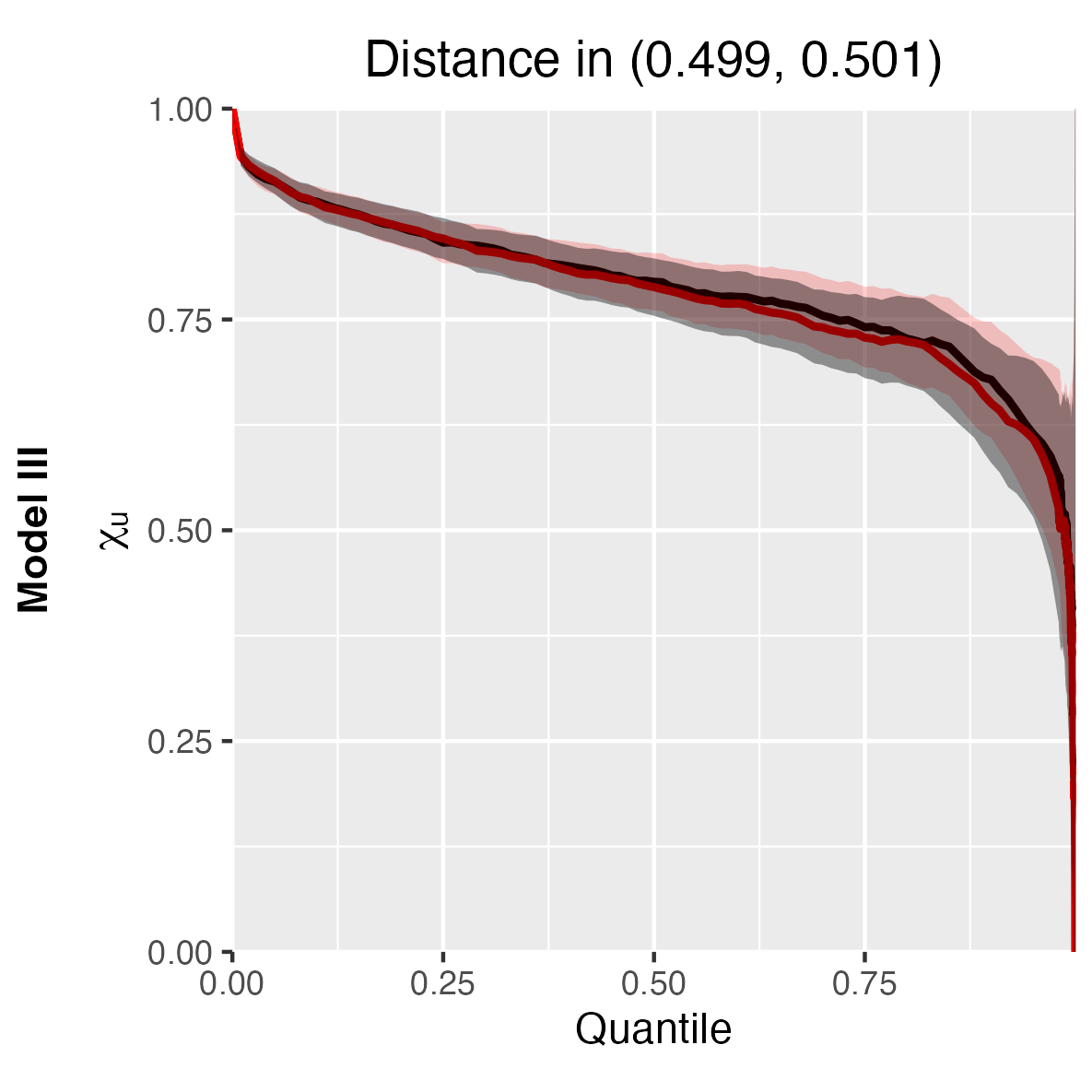}
    \hspace*{-0.2cm}\includegraphics[height=0.26\linewidth, trim={0.25cm 0 0 0}, clip]{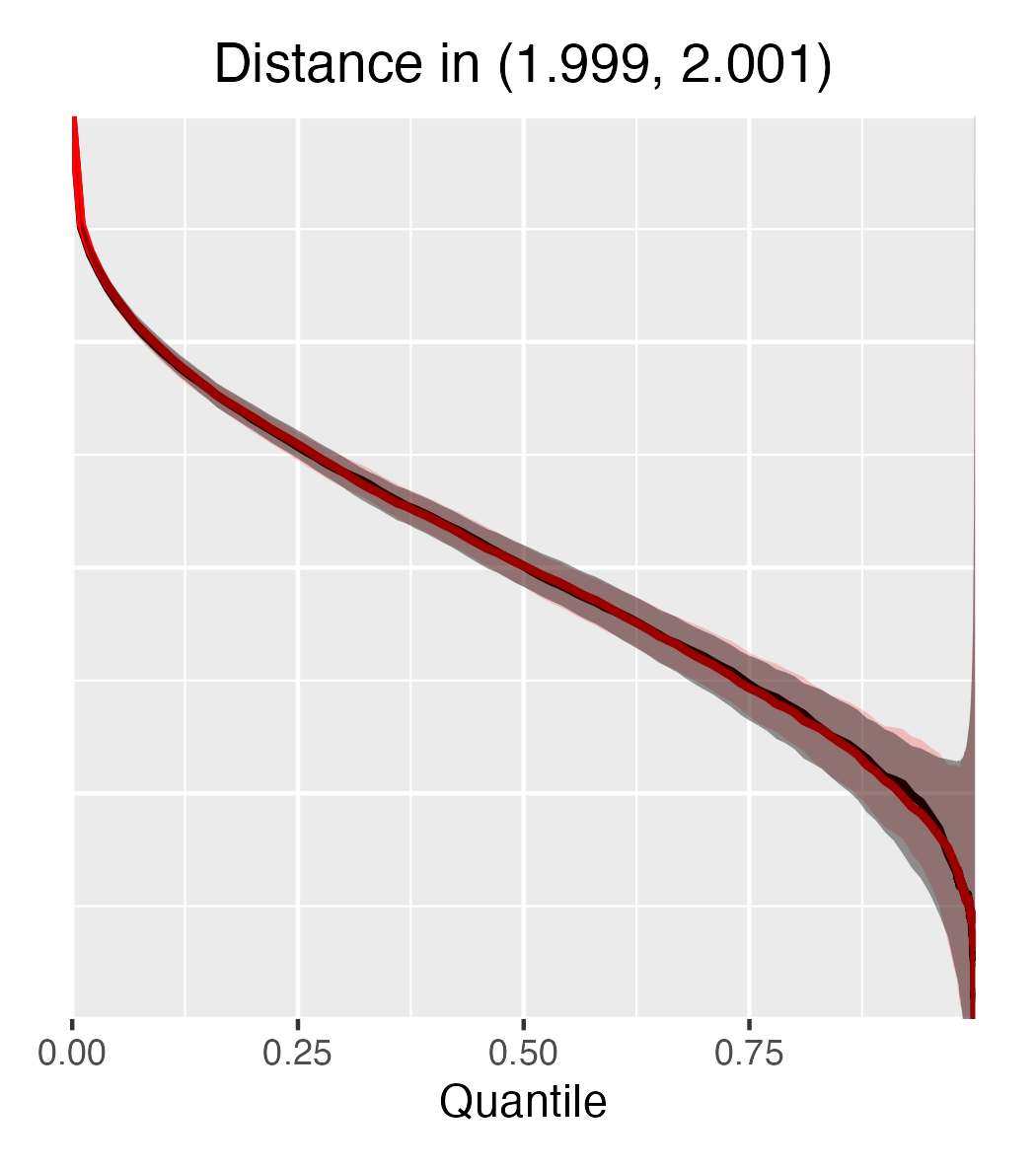}
    \hspace*{-0.3cm}\includegraphics[height=0.26\linewidth, trim={0 0 3cm 0}, clip]{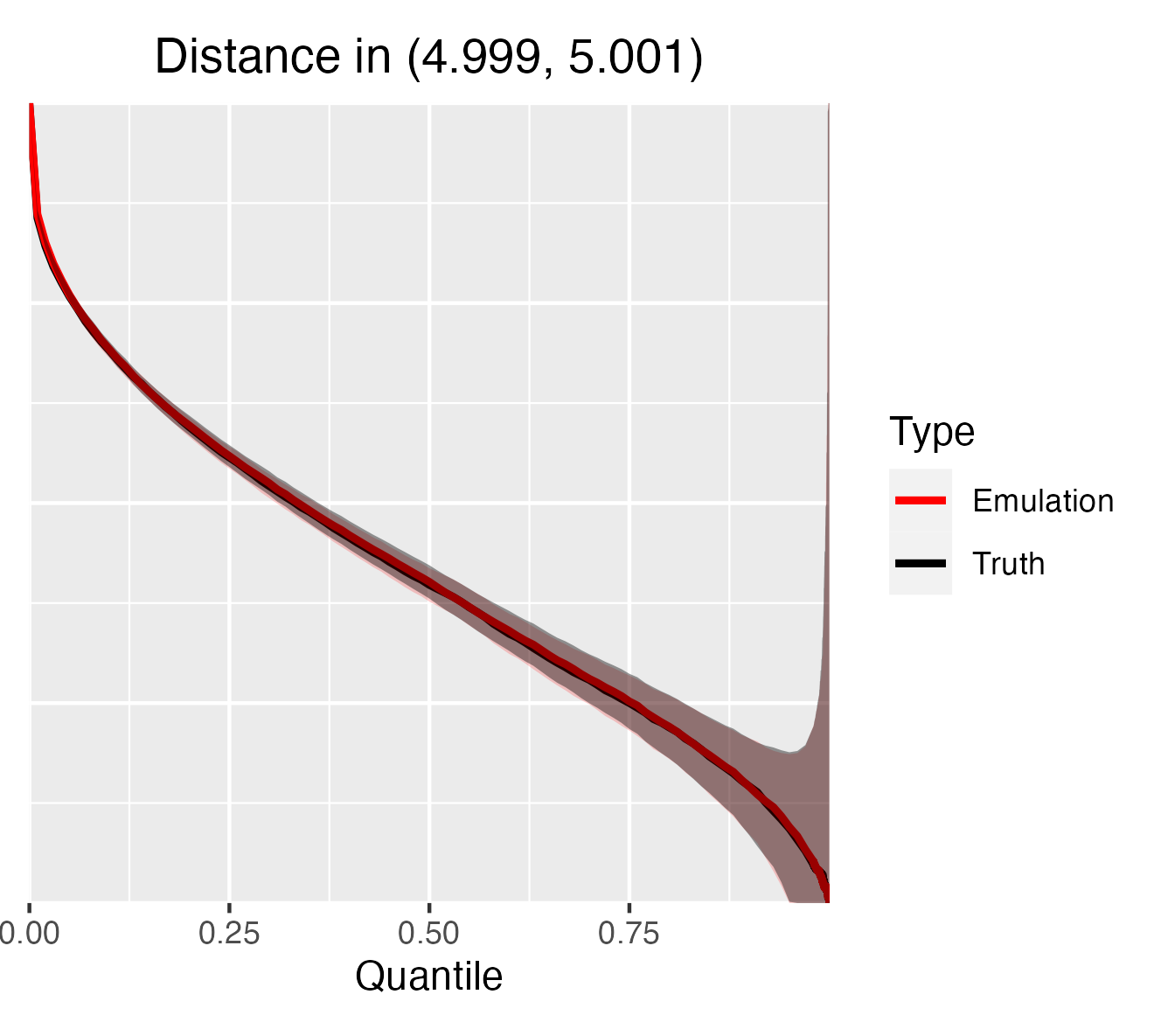}
    \hspace*{0.1cm}\includegraphics[height=0.26\linewidth, trim={0 0 2cm 0}, clip]{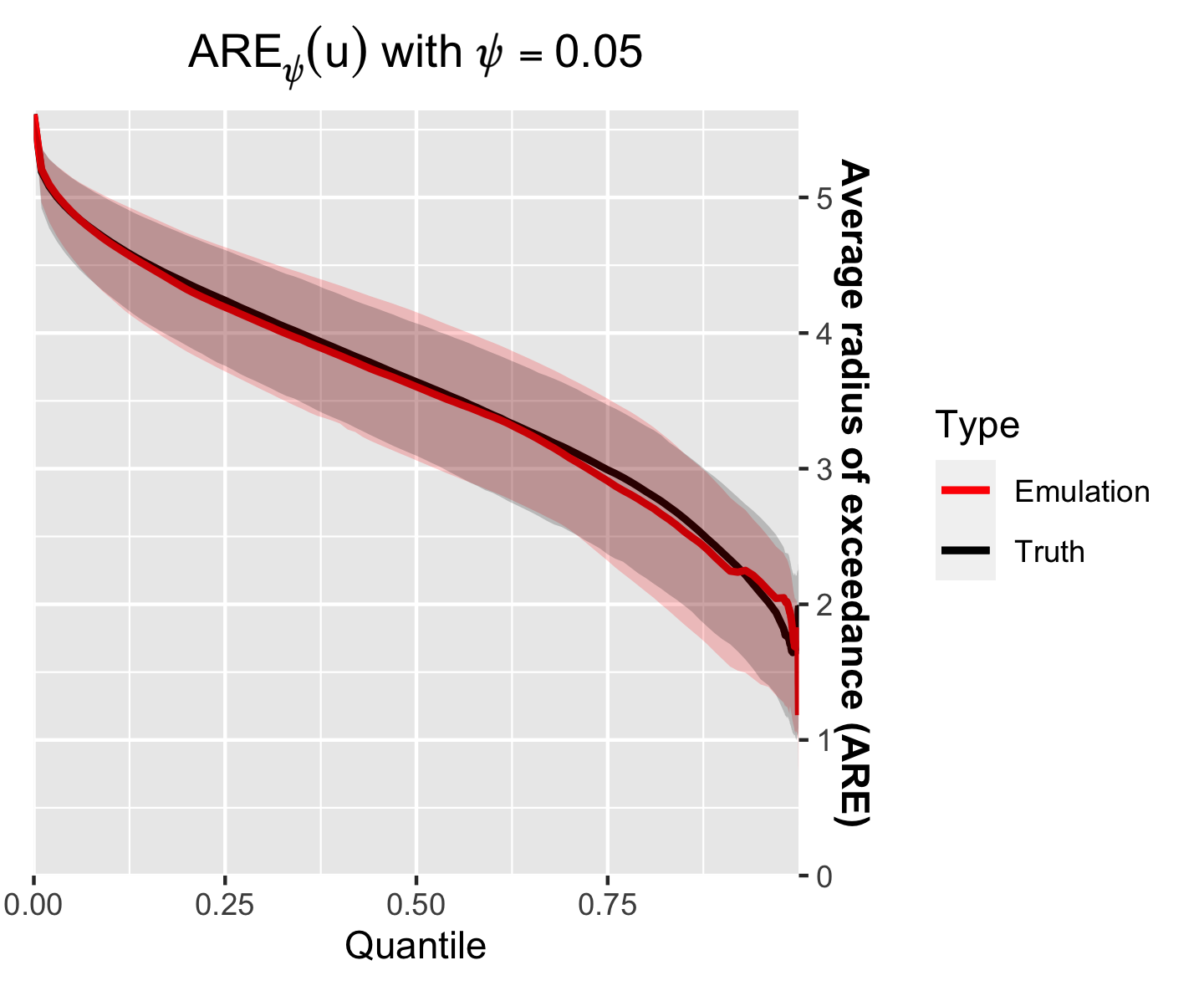}
    \vskip -0.4cm
    \caption{From left to right, we show the empirically-estimated $\chi_h(u)$ at $h=0.5,2,5$, and $\mathrm{ARE}_\psi(u)$ with $\psi=0.05$ for Model~\ref{modelFlex} based on data replicates (black) and XVAE emulated data (red). The $\chi_h(u)$ and $\mathrm{ARE}_\psi(u)$ estimates for the other models are shown in Figures~\ref{fig:chi_ests2} and \ref{fig:ARE_comps2} of the Supplementary Material, respectively.}
    \label{fig:chi_ests}
\end{figure}

Choosing 
$(5,5)$ as the reference point, the rightmost panel of Figure~\ref{fig:chi_ests} displays estimates of ARE$_\psi(u)$, $\psi=0.05$, for both data replicates and emulated data under Model~\ref{modelFlex}; see the Supplementary Material for the other models. 
We see that the empirical AREs from the XVAE are consistent with the ones estimated from the data except for (misspecified) Model~\ref{modelMaxStable}, where ARE$(u)$ is slightly underestimated at low thresholds $u$ but overestimated at high $u$. As expected, the limit of ARE$(u)$ as $u\rightarrow 1$ is non-negative for Models~\ref{modelFlex}--\ref{modelMaxStable} when there is local AD, and the limit increases from Model~\ref{modelFlex} to \ref{modelMaxStable}.

\begin{figure}[!t]
    \centering
    \includegraphics[height = 0.27\linewidth]{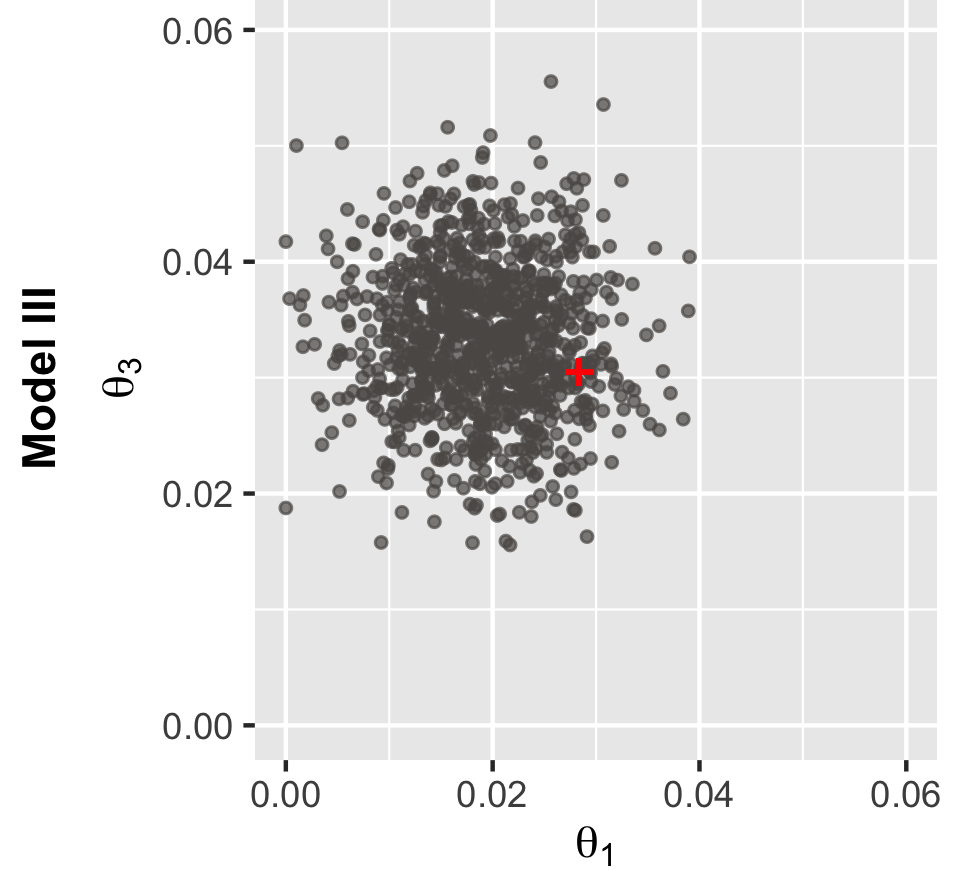}
    \includegraphics[height = 0.27\linewidth]{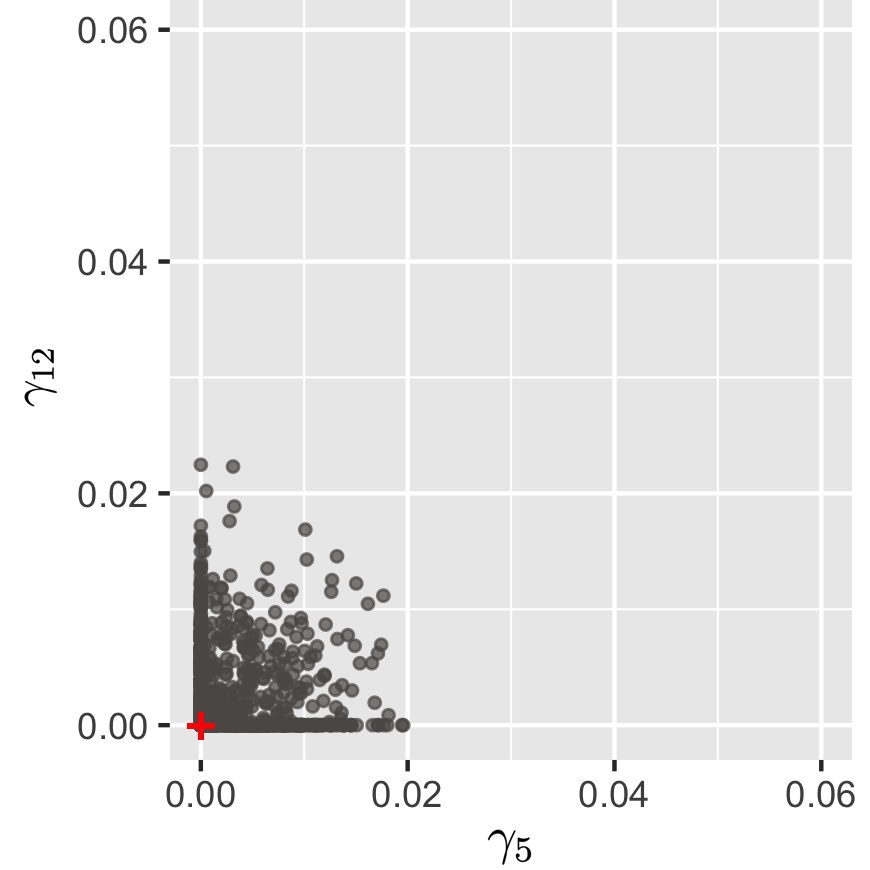}
    \includegraphics[height = 0.27\linewidth]{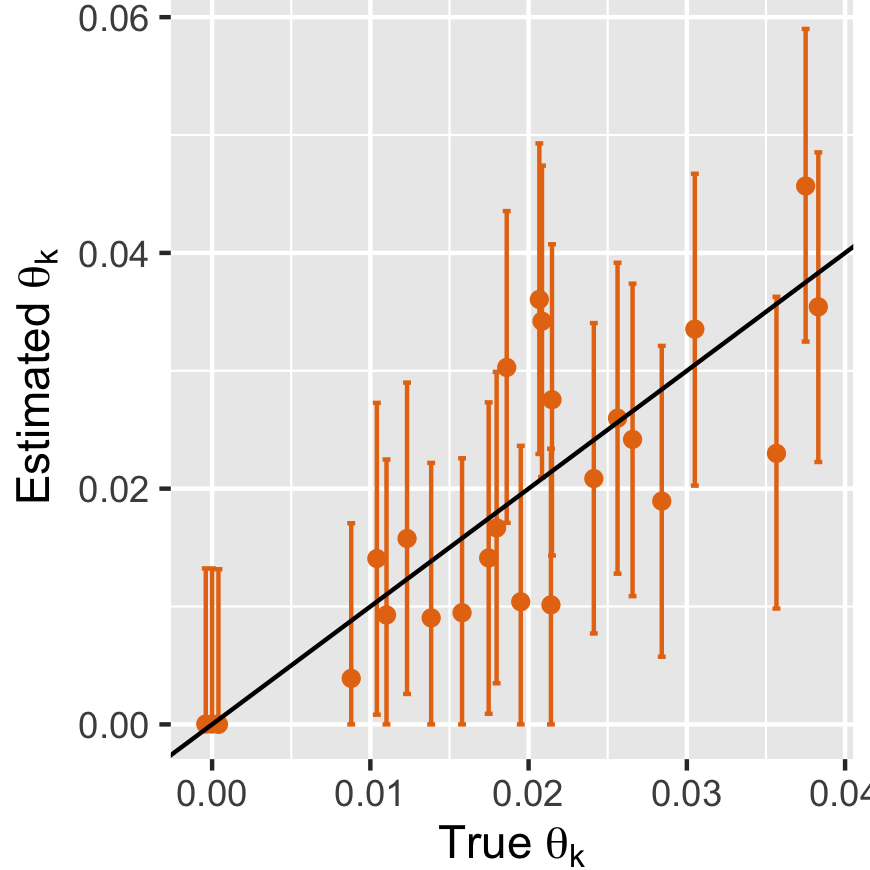}
    \vskip -0.3cm
    \caption{Initializing the XVAE using the true knots from Model~\ref{modelFlex}, 
    we show the estimates of $(\gamma_{1t},\gamma_{3t})^{\rm T}$ (left) and $(\gamma_{5t},\gamma_{12t})^{\rm T}$ (middle) from 1,000 samples generated with the trained decoder $(t=1)$. On the right, we also show the medians, 2.5\% and 97.5\% quantiles of the $n_t$ estimates of $\{\gamma_{kt}: k=1,\ldots, K\}$ for $t=1$, from the decoder \eqref{eqn:decoder_form}, in which the 1-1 line is displayed in black for reference.}
    \label{fig:theta_t_comp}
\end{figure}
To showcase the inferential capabilities of our approach, we initialize the XVAE with true knots and rerun it on datasets simulated from Model~\ref{modelFlex}. 
Figure~\ref{fig:theta_t_comp} displays $\bgamma_t$ estimates obtained by running the decoder (i.e., Eq.~\eqref{eqn:decoder_form}) $1000$ times at $t=1$. The results highlight the XVAE's ability to produce accurate estimates of $\bgamma_t=\{\gamma_{kt}:k=1,\ldots,K\}$, and correctly identify the extremal dependence class, with satisfactory agreement between true and estimated values, accounting for uncertainty. Additionally, we perform a coverage analysis by simulating $99$ more datasets with $n_s=2000$ and $n_t=100$ from Model~\ref{modelFlex}, running the XVAE on each to generate empirical credible intervals for $\bgamma_t$. Figure~\ref{fig:coverage} of the Supplementary Material shows the coverage probabilities of $\bgamma_t$ for $t=1$. Most estimated probabilities align closely with the nominal $95$\% level, except when $\gamma_{kt}=0$, where the coverage is poorer due to the true value residing on the parameter space boundary. Nevertheless, these promising results endorse the XVAE as a fast and robust inference tool for estimating parameters in the max-id process \eqref{eqn:model} and for Bayesian UQ. 

\section{Application to Red Sea surface temperature data}\label{sec:data_analysis}
The Red Sea, a biodiversity hotspot, is susceptible to coral bleaching due to climate change and rising SST anomalies \citep{furby2013susceptibility}. Corals are unlikely to survive once the temperature exceeds a bleaching threshold annually, which in turn causes disruptions in fish migration and slow decline in fish abundance. Here, we analyze and emulate a Red Sea surface temperature dataset, which consists of satellite-derived daily SST estimates at 16,703 locations on a $1/20^\circ$ grid from 1985/01/01 to 2015/12/31 (11,315 days in total); see \citet{donlon2012operational}. This yields about 189 million correlated spatio-temporal data points. 

We extract monthly maxima from renormalized data to ensure temporal independence and modeling accuracy of sitewise marginal distributions. Sections~\ref{appendix:remove_season}--\ref{appendix:margin_trans} of the Supplementary Material detail how we remove the seasonal trends and how we transform the sitewise records to the Pareto scale on which we then apply the XVAE. The third panel of Figure~\ref{fig:chi_raster} displays the data-driven knot locations chosen by our algorithm ($K=243$), and the initial radius shared by the Wendland basis functions is 1.2$^\circ$.

Similar to Figure~\ref{fig:chi_ests}, we estimate $\chi_h(u)$ empirically for the original monthly maxima and emulated fields, under the working assumption of stationarity and isotropy. Figure~\ref{fig:chi_SST} of the Supplementary Material attests once more that our XVAE characterizes the extremal dependence structure accurately from low to high quantiles. 
Furthermore, we examine the pairwise $\chi$-measures between the center of the Red Sea $(38.104^\circ\mathrm{E}$, $21.427^\circ\mathrm{N})$ (denoted by $\bs_0$) and all observed locations $\bs_j\in\mathcal{S}$ in the Red Sea. The left two panels of Figure~\ref{fig:chi_raster} include raster plots of these measures evaluated at the level $u=0.85$, in which the $\chi_{0j}(u)$ values estimated from the observed and emulated data are very similar to each other.
\begin{figure}[!t]
    \centering
    \hspace*{-0.4cm}\includegraphics[height=0.29\linewidth]{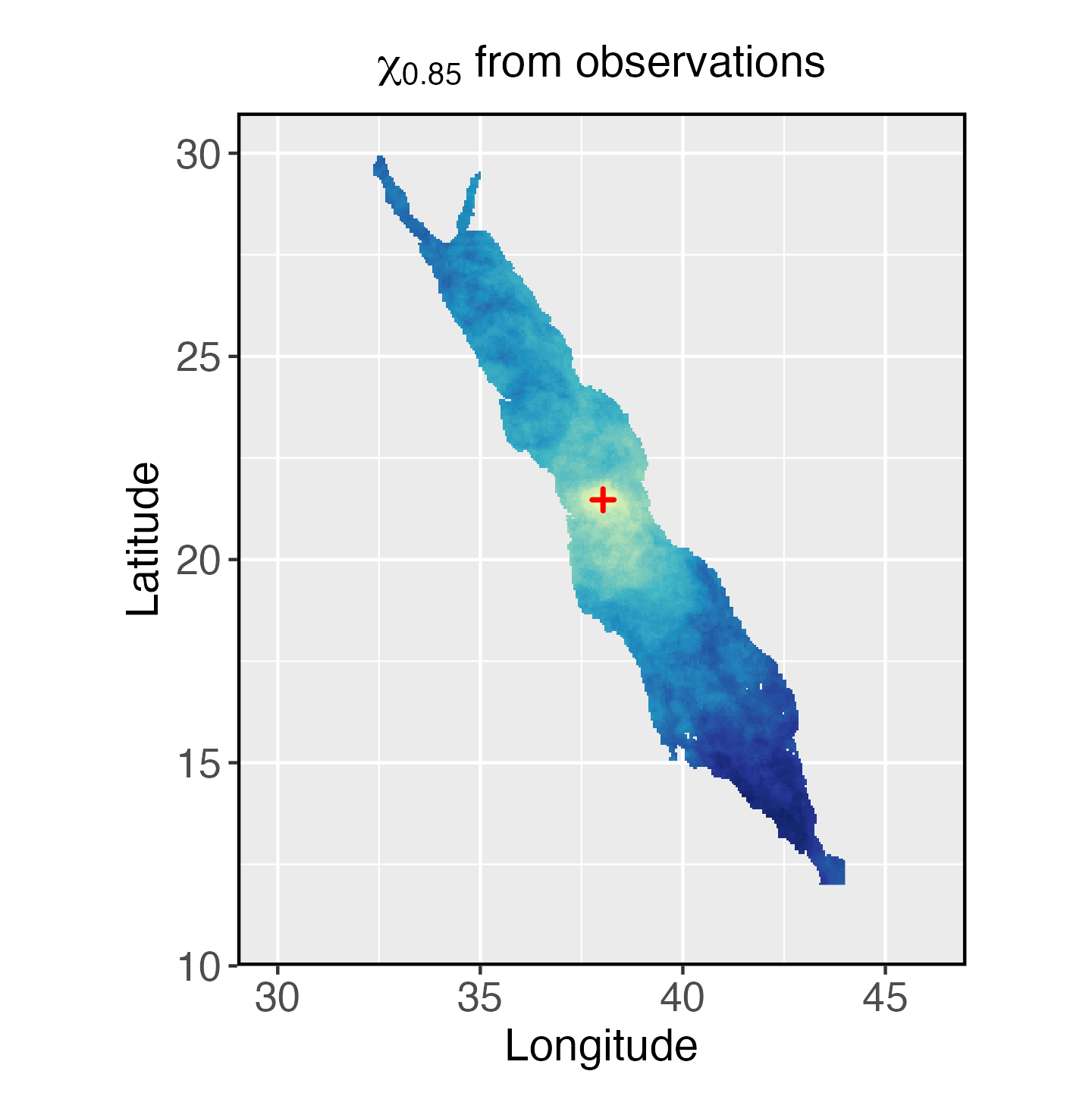}
    \hspace*{-0.05cm}\includegraphics[height=0.29\linewidth]{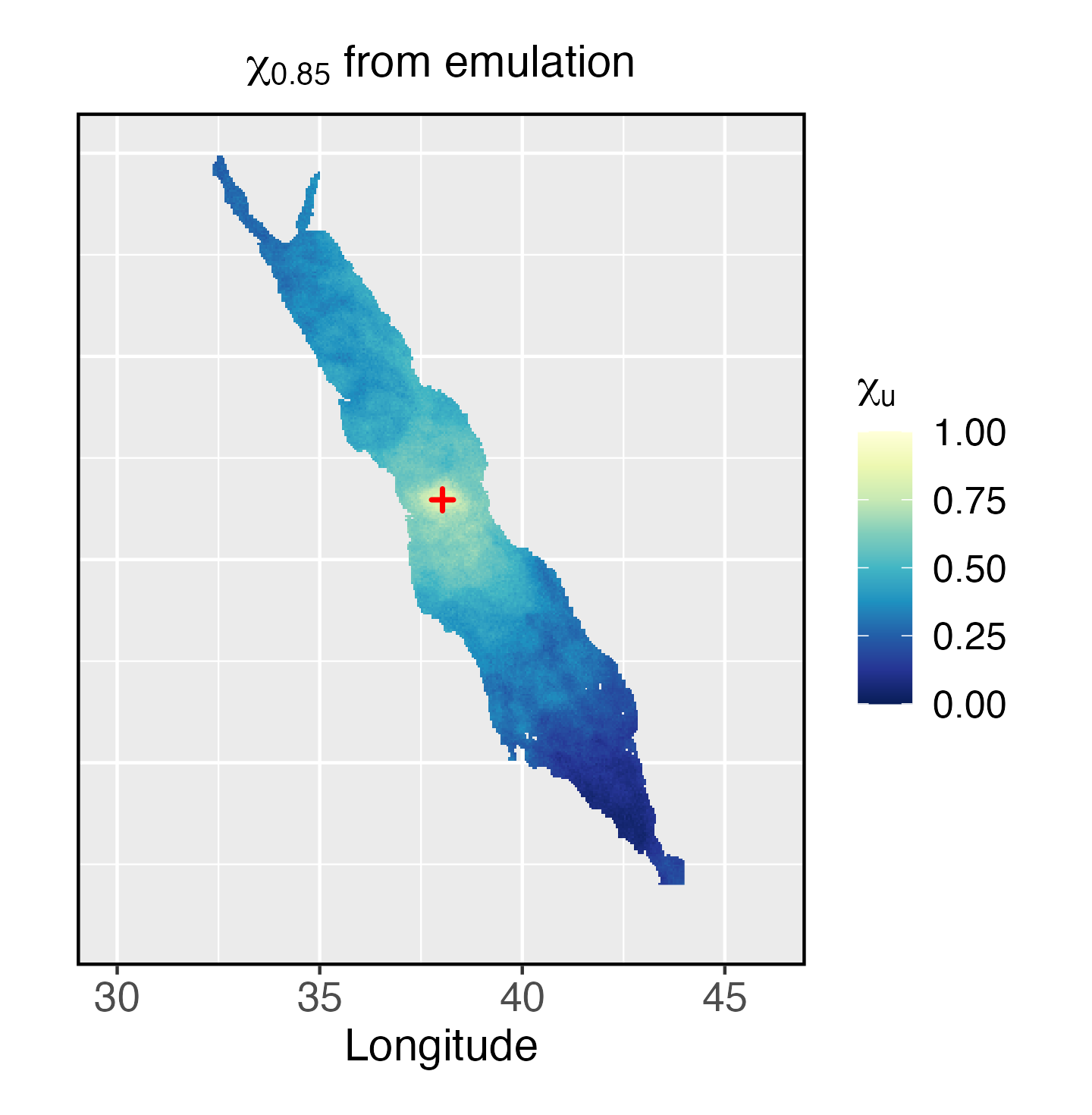} 
    \hspace*{-0.1cm}\includegraphics[height=0.27\linewidth]{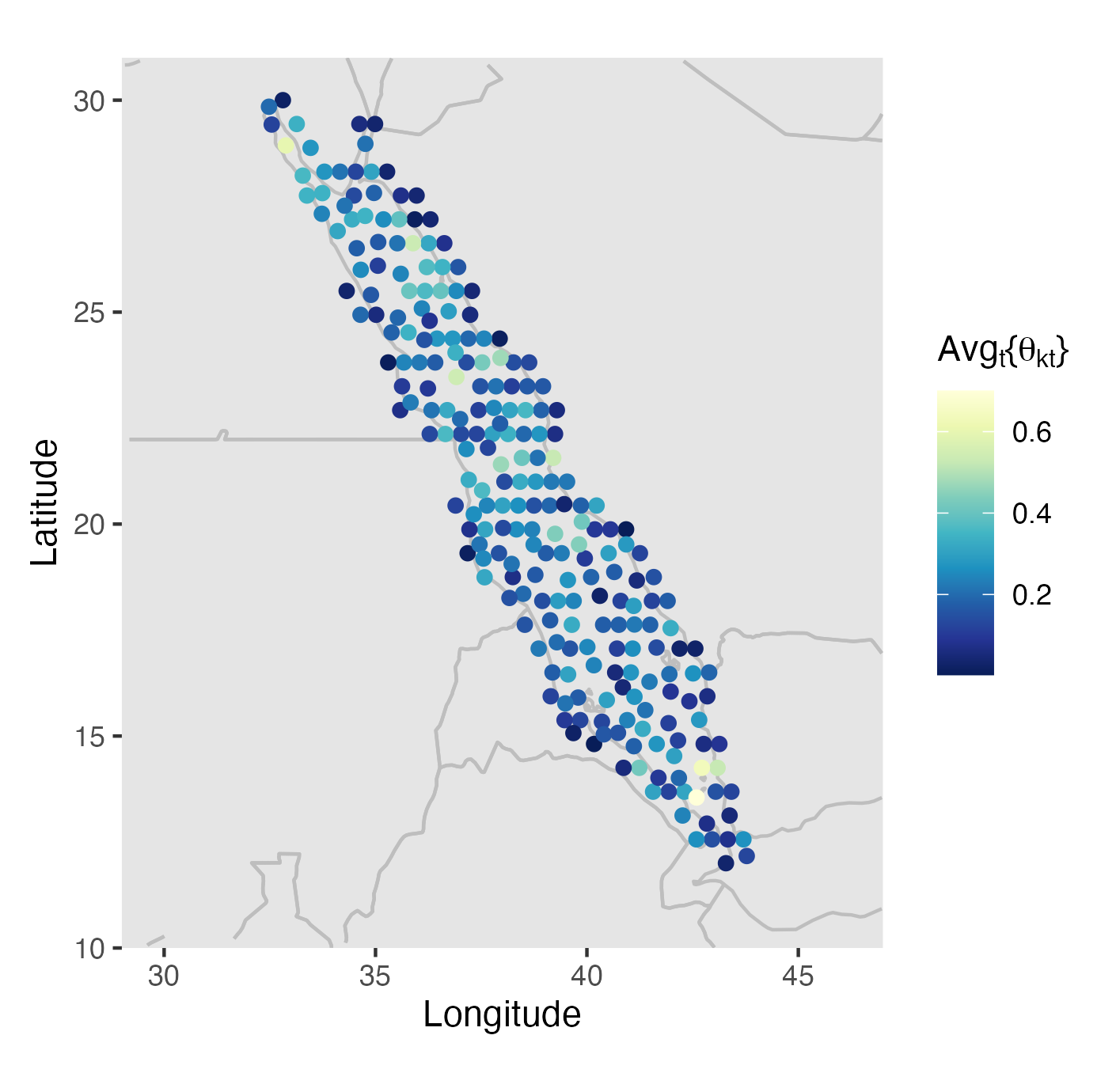} 
    \hspace*{-0.1cm}\includegraphics[height=0.27\linewidth]{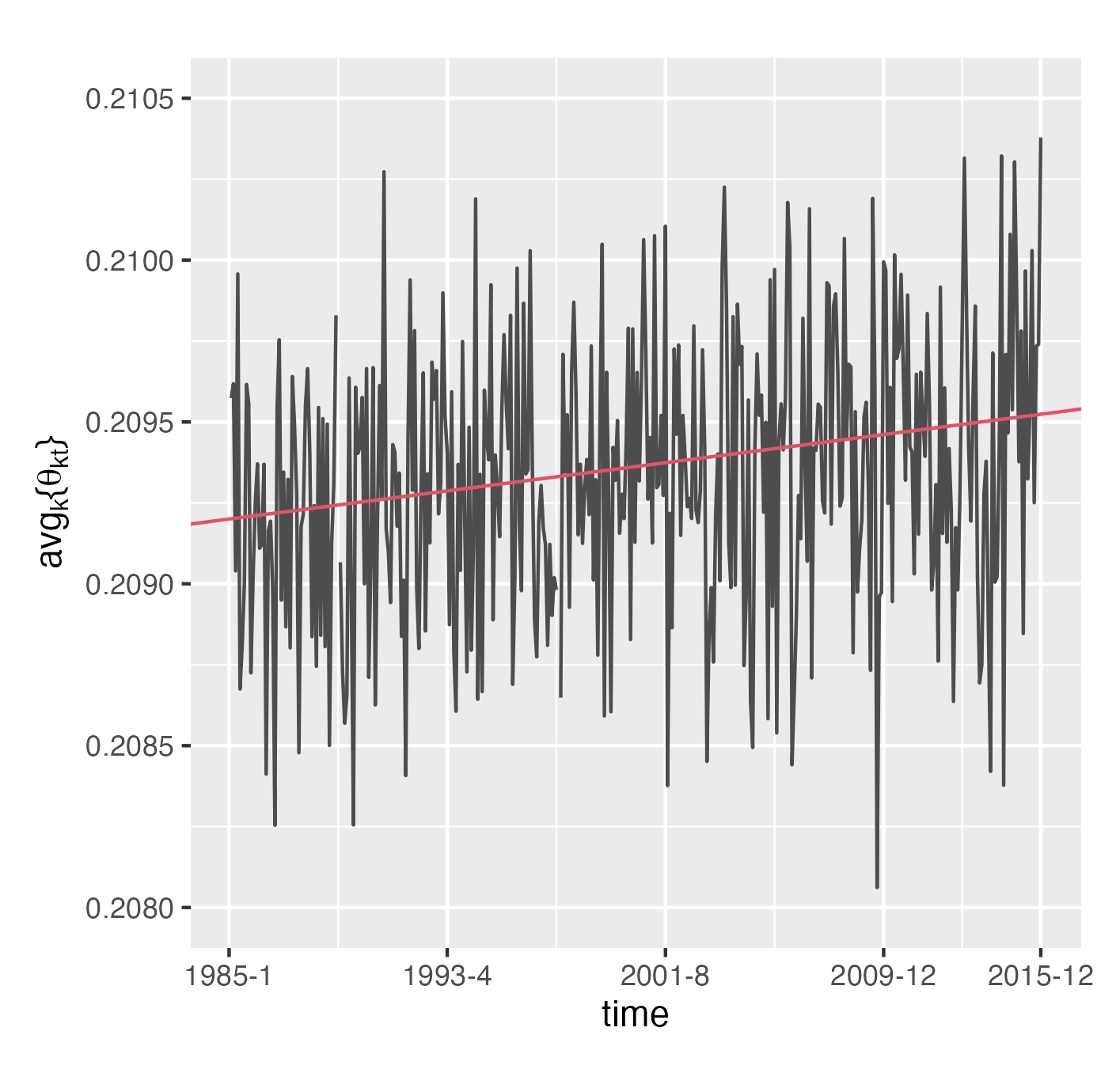}
    \vskip -0.3cm
    \caption{In the left two panels, we show empirically-estimated pairwise tail dependence measure $\chi_{0j}(u)$, $u=0.85$, between $\bs_0 = (38.104,21.427)$, marked using a red cross, and all $\bs_j\in\mathcal{S}$, from observations and emulated data. In the right two panels, we show the estimated tilting parameters at $K=243$ data-driven knots averaged over time (i.e., $n_t^{-1}\sum_{t=1}^{n_t}\hat{\gamma}_{kt}$, $k=1,\ldots, K$), and the estimated tilting parameters averaged over space (i.e., $K^{-1}\sumK\hat{\gamma}_{kt}$, $t=1,\ldots, n_t$) with the best linear regression fit (red line).}
    \label{fig:chi_raster}
\end{figure}

The right two panels of Figure~\ref{fig:chi_raster} show estimates of $\{\gamma_{kt}:k=1,\ldots, K, \;t=1,\ldots, n_t\}$ averaged over time/space. We see that the $\gamma_{kt}$ values are generally lower near the coast compared to the interior of the Red Sea, indicating SST tends to be more heavy-tailed on the coast. We also observe an upward trend in $\gamma_{kt}$ over time, indicating that extreme events are becoming more localized. This is consistent with the findings in \citet{genevier2019marine}. It is worth noting that this probabilistic approach to examining the spatio-temporal variation in the dependence structure would not be possible using either \texttt{hetGP} or extGAN.


To focus on extreme values, we transform emulated $\{X_t(\bs)\}$ fields to the original SST scale using the fitted marginal distributions and censor the simulations with a fixed thermal threshold of 31$^\circ$C. The left panels of Figure~\ref{fig:POT_analysis} display realizations at two time points, 1985/9 and 2015/9, in which threshold exceedances primarily occur in the southern region. This is 
expected: the southern Red Sea experiences higher SSTs compared to the northern area. However, coral reefs in different parts of the Red Sea have developed varying levels of thermal tolerance \citep{hazra2021estimating}. To explore regional variation in marine heatwave (MHW) and coral bleaching risk, we divide the Red Sea into four regions based on \citet{raitsos2013remote} and \citet{genevier2019marine}: North (25.5--30$^\circ$N), North Central (22--25.5$^\circ$N), South Central (17.5--22$^\circ$N) and South (12.5--17.5$^\circ$N).

\begin{figure}[!t]
    \centering
    {\includegraphics[width=0.285\linewidth]{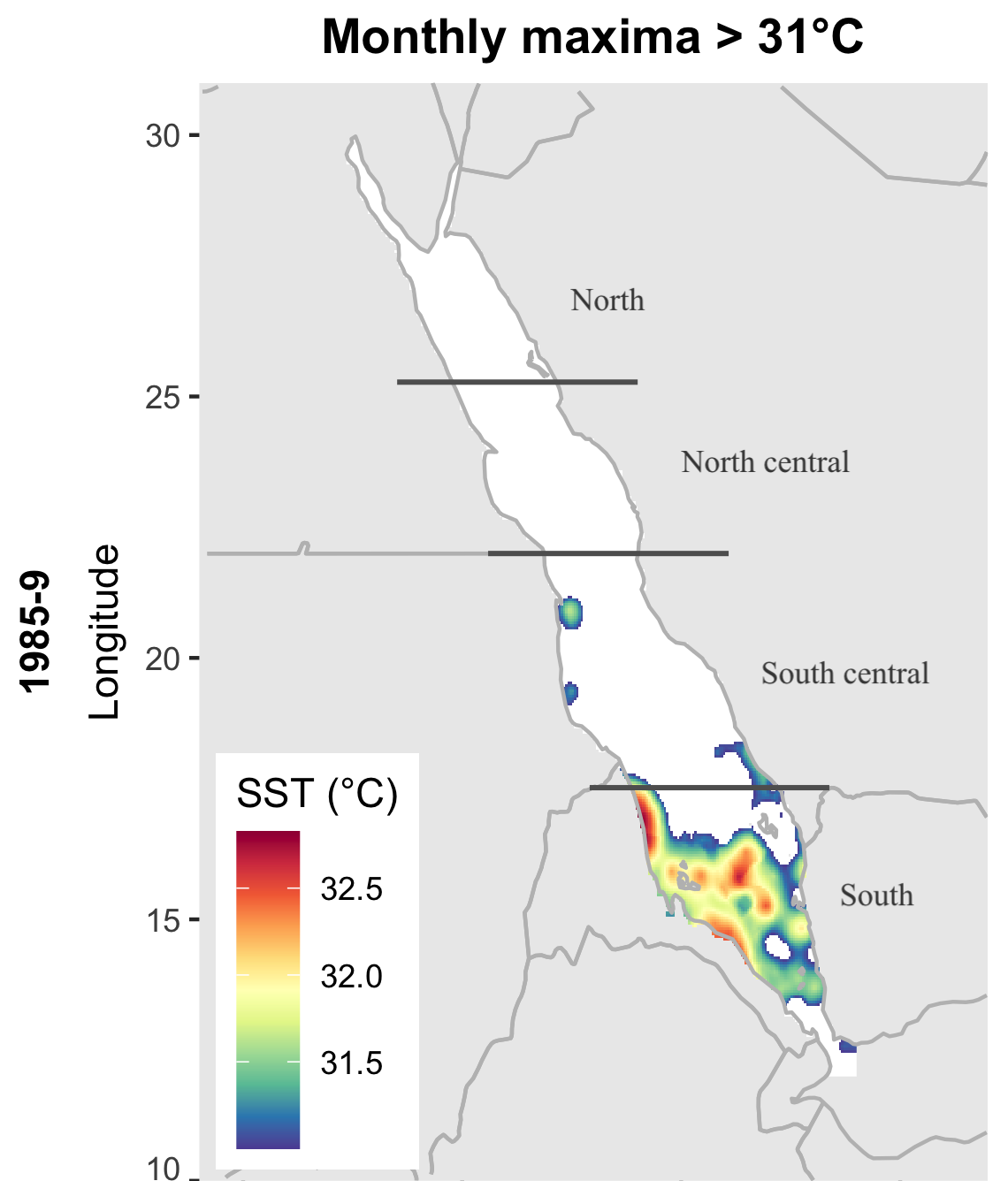}
    \includegraphics[width=0.29\linewidth]{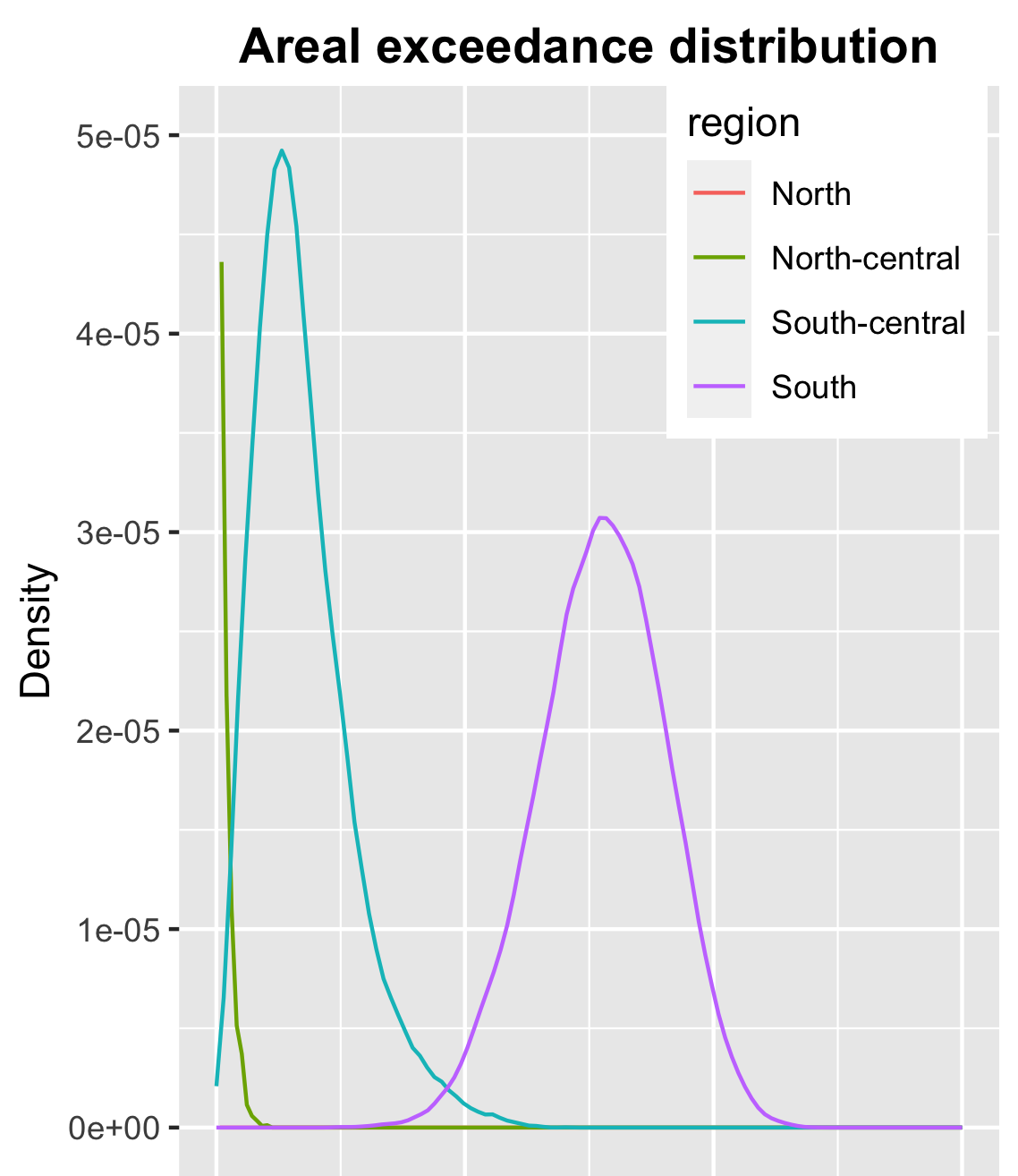}
    \includegraphics[width=0.29\linewidth]{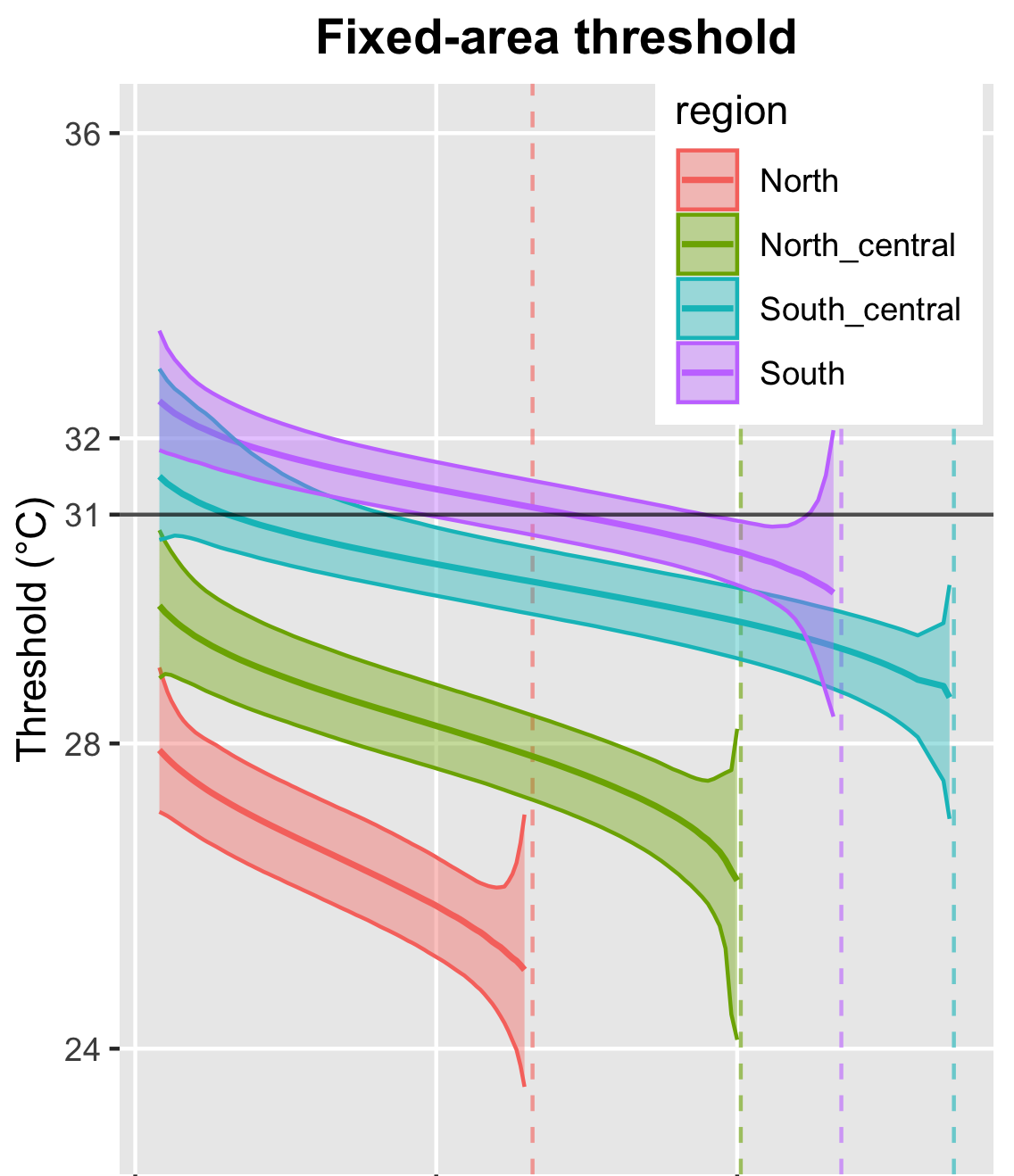}}

    \raisebox{0.3\linewidth}{\includegraphics[width=0.285\linewidth, trim={0 0 0 2.65cm}, clip]{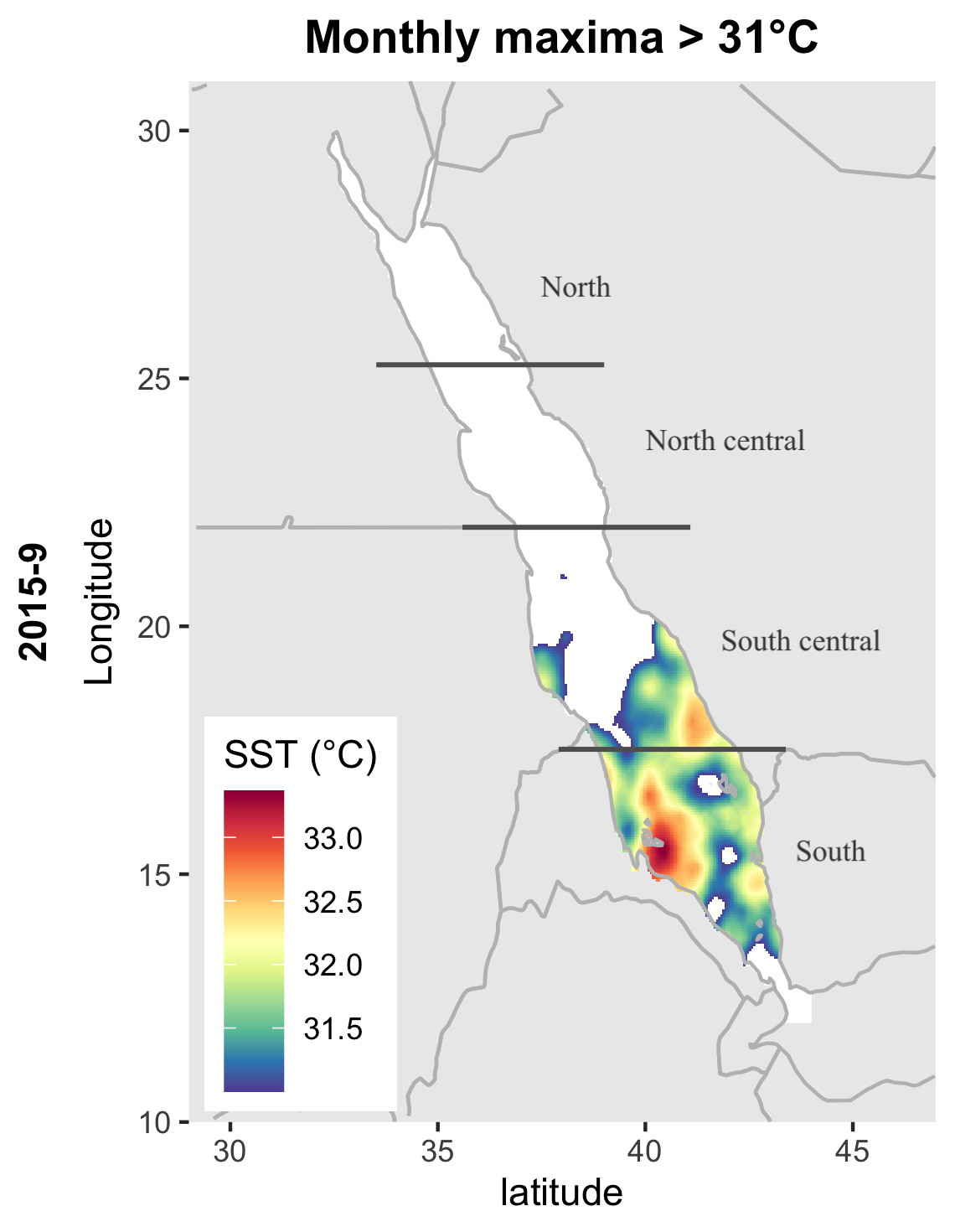}}
    \raisebox{0.294\linewidth}{\includegraphics[width=0.29\linewidth, trim={0 0 0 2.7cm}, clip]{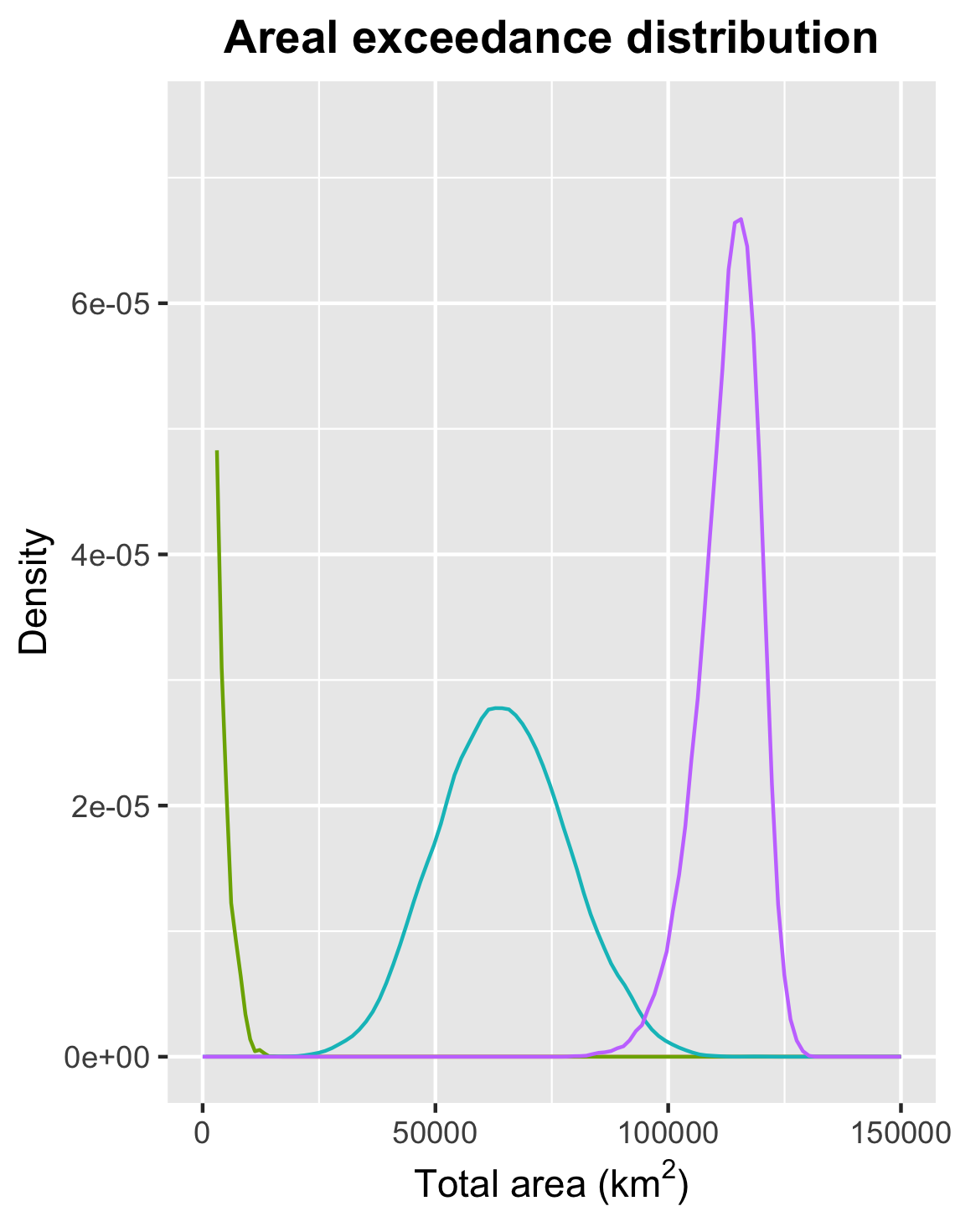}}
    \raisebox{0.294\linewidth}{\includegraphics[width=0.29\linewidth, trim={0 0 0 2.7cm}, clip]{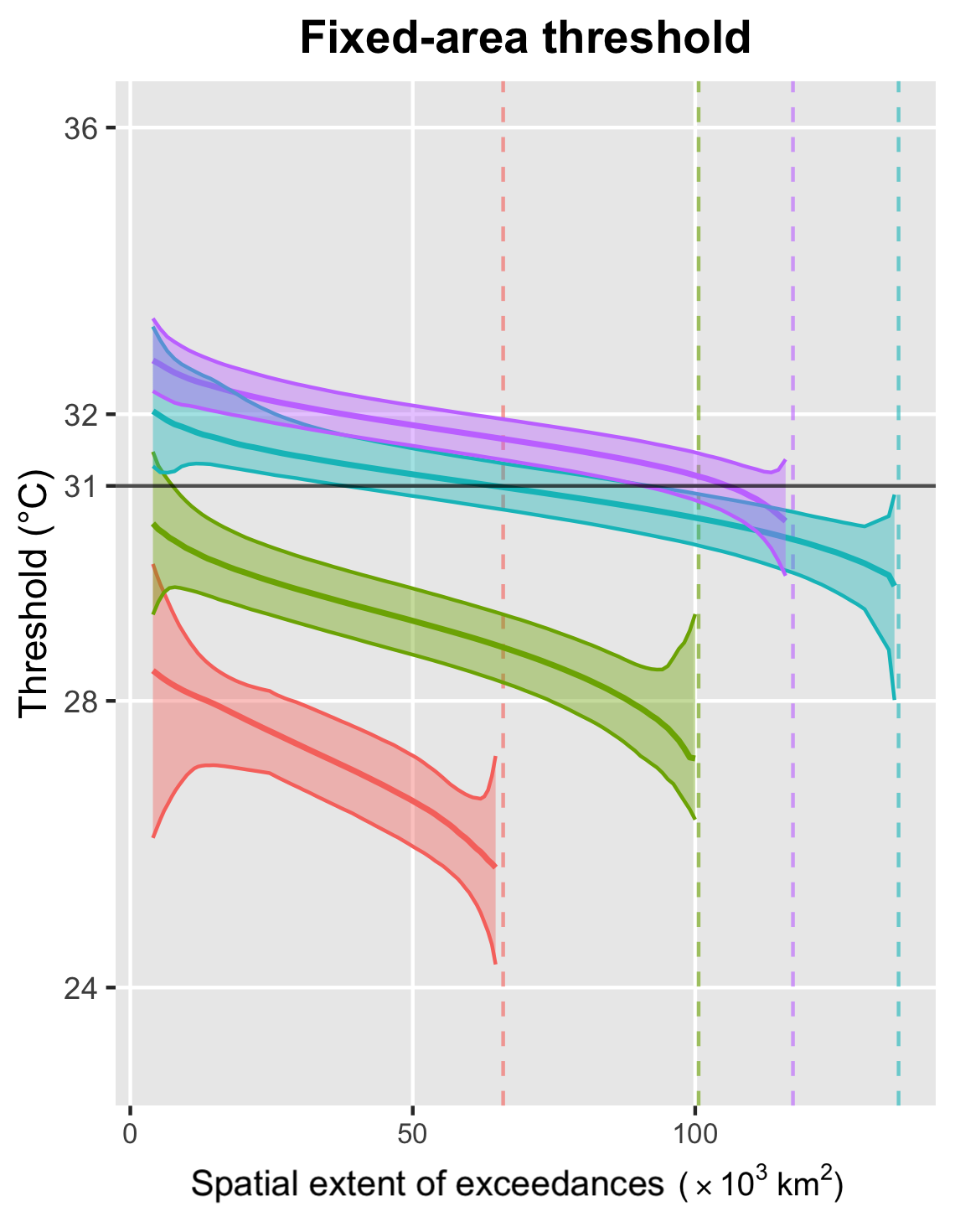}}
    \vskip -0.315\linewidth
    \caption{The left panels show realizations of Red Sea SST monthly maxima emulated with fitted parameters from the XVAE for 1985/9 (top) and 2015/9 (bottom) months. The emulations are censored with a threshold of 31$^\circ$C. 
    From 30,000 such emulations, we estimate the distribution of total area exceeding 31$^\circ$C within each region. On the right, we estimate the threshold it takes to have a fixed area of exceedance. The 95\% confidence intervals are also shown. The vertical dashed lines are total areas of each subregion, and the horizontal slices at 31$^\circ$C yield results that align with the middle panels. }
    \label{fig:POT_analysis}
\end{figure}

A useful metric is the areal exceedance probability, representing the spatial extent  of a region simultaneously at extreme MHW risk. To estimate these joint probabilities and uncertainties, we generate 30,000 independent SST emulations using XVAE for each time point and compute the total area exceeding 31$^\circ$C. Different, potentially spatially-varying thermal thresholds could also be used. The middle panels of Figure~\ref{fig:POT_analysis} show the density of the total area at risk of MHW within each region. The South Central and South regions show larger affected areas, while the North Central and North regions have little to no exceedance, reflecting cooler temperatures with increasing latitude. The results also suggest that under rising SSTs, larger simultaneous exceedances may become more likely over time, except in the North region where 31$^\circ$C remains above the highest possible temperature in 2015.


To further analyze joint exceedances at varying extreme levels, we estimate the SST thresholds required for different fixed spatial extents of exceedances. For each fixed spatial extent, we calculate the minimal threshold needed to reach that area of joint exceedances from each emulated replicate, and we then group all 30,000 estimated thresholds together to derive $95\%$ empirical confidence intervals. We repeat this process for all spatial extents of exceedances in between 100 km$^2$ and $1.4\times 10^5$ km$^2$. Note that this can be computed rapidly thanks to the semi-amortized nature of our XVAE. The right panels of Figure~\ref{fig:POT_analysis} reveal a consistent rise in SST thresholds from 1985 (top panel) to 2015 (bottom panel), confirming the warming trend. Slicing the confidence bands at the 31$^\circ$C threshold aligns with the middle panel results. This also shows that within each subregion, the spatial extent of exceedances decreases as the threshold increases and extreme events becomes more localized as they get more extreme. Furthermore, as the spatial extent approaches zero, the threshold estimates represent the highest possible SST for a specific month in a subregion, a valuable metric for studying phytoplankton bloom.

To directly assess the impact of climate change, Figure~\ref{fig:thresh_vs_time} shows results for specific spatial extent of exceedances (i.e., $5\times 10^4$ km$^2$) across all September months from 1985 to 2015. We see that the fixed-area SST threshold has increased steadily by about 0.7$^\circ$ in all four subregions on average over the studied time period, corroborating the warming trend in the Red Sea and the localized nature of extremes shown in Figures~\ref{fig:chi_raster} and \ref{fig:POT_analysis}.

\begin{figure}
    \centering
    \includegraphics[width=0.54\linewidth]{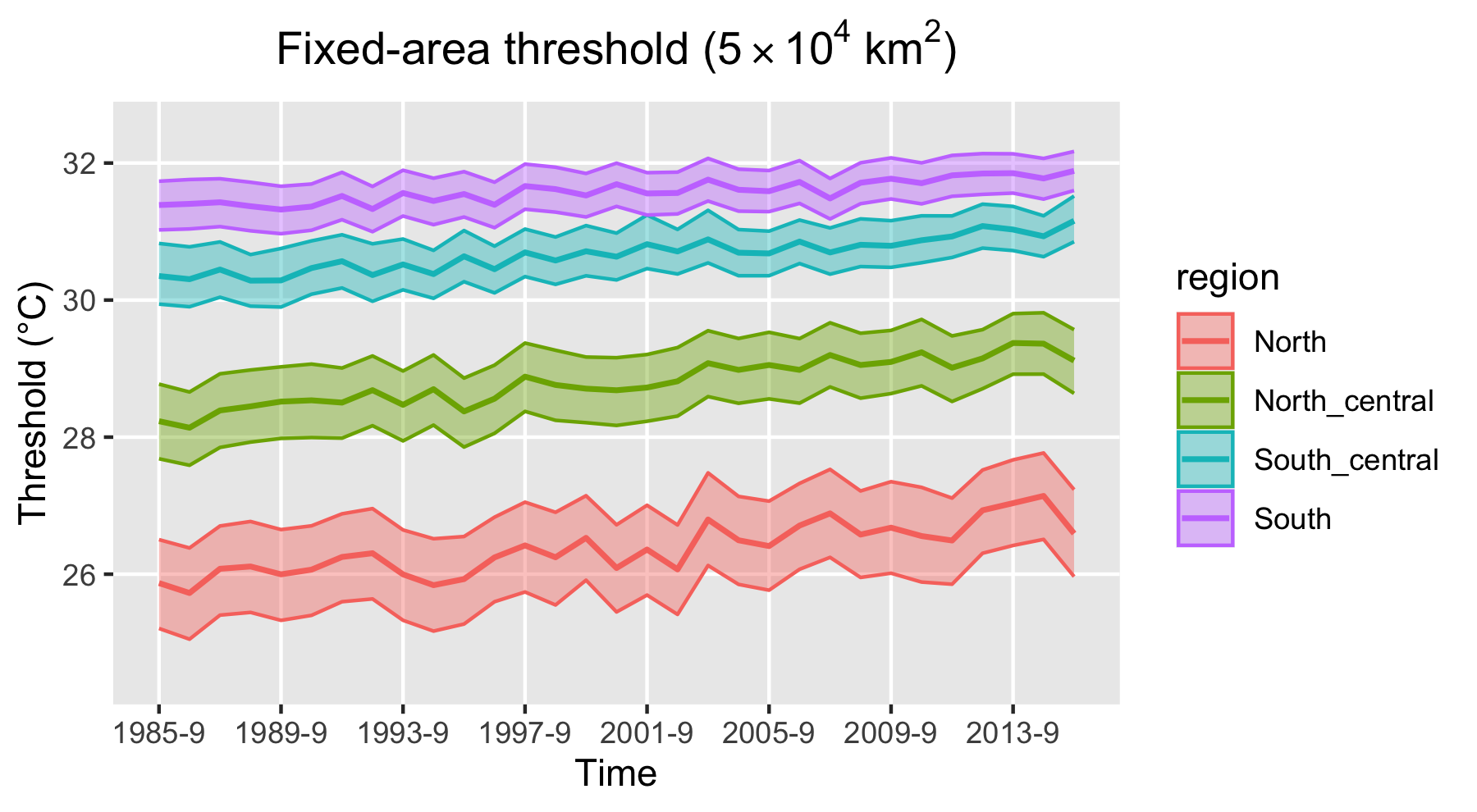}
    \vskip -0.4cm
    \caption{Similar to Figure~\ref{fig:POT_analysis}, we emulate 30,000 independent SST fields for each September from 1985 to 2015. For a fixed areal exceedance of $5\times 10^4$ km$^2$, we estimate its associated required threshold along with the 95\% Monte Carlo confidence intervals.}
    \label{fig:thresh_vs_time}
\end{figure}

\LZadd{Through our analysis, we have demonstrated the ability of our XVAE to quantify risk associated with SSTs exceeding critical bleaching thresholds. Our approach not only captures spatial dependence of extreme temperatures but also provides robust UQ. Our findings could support conservation efforts by identifying regions most susceptible to coral bleaching and predicting the potential impact of rising SST on the broader marine environment.}

\section{Concluding remarks}\label{sec:discussion}
In this paper, we propose XVAE, a new variational autoencoder, which integrates a novel max-id model for spatial extremes that exhibits flexible extremal dependence properties. It greatly advances the ability to model extremes in high-dimensional spatial problems and expands the frontier on computation and modeling of complex extremal processes. The encoder and decoder construction and the trained distributions of the latent variables allow for parameter estimation and uncertainty quantification within a variational Bayesian framework. We also develop a validation framework for evaluating emulator performance 
when applied to spatial data with dependent extremes. 

We note that our emulator extends beyond emulating large datasets for UQ. As highlighted in the introduction, the XVAE can serve as a surrogate model for mechanistic-based computer models. It can also be applied to areas other than climate-related problems. For example, turbulent buoyant plume can be simulated from a system of compressible Euler conservative equations in flux formulation, but the computational cost is prohibitively expensive with increasing Reynolds number \citep{bhaganagar2021MWR}. Our XVAE can provide a promising avenue for efficiently emulating the chaotic and irregular turbulence observations at high resolutions.

One major drawback of XVAE is that the latent expPS variables are independent over space and time, which is unrealistic for physical processes that exhibit dynamics at short-time scales. As a result, it cannot capture temporal dependence appropriately. In future work, we are planning to include a time component with data-driven dynamic learning based on a stochastic dynamic spatio-temporal model. Hence, the latent variables in the encoded space will evolve smoothly over time while retaining heavy tails and thus simultaneously ensuring local extremal dependence. Furthermore, it is possible to improve the XVAE by allowing the basis functions' radii $r_k$, $k=1,\ldots, K$, to be spatially-varying, and estimating them by optimizing the ELBO together with the other parameters.

Another promising direction for future work is to implement a \emph{conditional} VAE \citep[CVAE;][]{sohn2015learning} with a similar underlying max-id model; in such a model, we can allow the parameters of both the encoder and decoder to change conditional on different climate scenarios (e.g., radiative forcings, seasons, soil conditions, etc.). This will allow us to simulate new data under different conditions. We will need to ensure that the CVAE emulates $\bx_t$ differently according to different input states (e.g., tuning parameters and/or forcing variables). In doing so, we will allow changes to the parameters for both the encoder and decoder conditioning on different scenarios (e.g., different climate states).







{
\spacingset{0.9}
\setlength{\bibsep}{5pt}
\addcontentsline{toc}{section}{References}
\bibliography{main}

@article{nolan2020univariate,
  title={Univariate Stable Distributions},
  author={Nolan, John P},
  journal={Springer Series in Operations Research and Financial Engineering, DOI},
  volume={10},
  pages={978--3},
  year={2020},
  publisher={Springer}
}

@book{resnick2008extreme,
  title={Extreme Values, Regular Variation, and Point Processes},
  author={Resnick, Sidney I},
  volume={4},
  year={2008},
  publisher={Springer Science \& Business Media},
  doi = {https://doi.org/10.1007/978-0-387-75953-1}
}

@article{sen1967theory,
  title={On the theory of rank order tests for location in the multivariate one sample problem},
  author={Sen, Pranab Kumar and Puri, Madan Lal},
  journal={The Annals of Mathematical Statistics},
  volume={38},
  number={4},
  pages={1216--1228},
  year={1967},
  publisher={Institute of Mathematical Statistics}
}

@article{ruymgaart1974asymptotic,
  title={Asymptotic normality of nonparametric tests for independence},
  author={Ruymgaart, Frederik Hendrik},
  journal={The Annals of Statistics},
  pages={892--910},
  year={1974},
  publisher={JSTOR}
}

@article{ruymgaart1978asymptotic,
  title={Asymptotic normality of multivariate linear rank statistics in the non-iid case},
  author={Ruymgaart, Frits H and van Zuijlen, MCA},
  journal={The Annals of Statistics},
  volume={6},
  number={3},
  pages={588--602},
  year={1978},
  publisher={Institute of Mathematical Statistics}
}

@article{huser2017bridging,
  title={Bridging asymptotic independence and dependence in spatial extremes using {G}aussian scale mixtures},
  author={Huser, Rapha{\"e}l and Opitz, Thomas and Thibaud, Emeric},
  journal={Spatial Statistics},
  volume={21},
  pages={166--186},
  year={2017},
  publisher={Elsevier}
}

@article{hartigan1979algorithm,
  title={{Algorithm AS 136}: A k-means clustering algorithm},
  author={Hartigan, John A and Wong, Manchek A},
  journal={Journal of the Royal Statistical Society: Series C},
  volume={28},
  number={1},
  pages={100--108},
  year={1979},
  publisher={JSTOR}
}

@article{huser2021eva,
  title={{EVA} 2019 data competition on spatio-temporal prediction of {Red Sea} surface temperature extremes},
  author={Huser, Rapha{\"e}l},
  journal={Extremes},
  volume={24},
  pages={91--104},
  year={2021},
  publisher={Springer}
}

@article{Opitz2016,
  title={Modeling asymptotically independent spatial extremes based on {L}aplace random fields},
  author={Opitz, Thomas},
  journal={Spatial Statistics},
  volume={16},
  pages={1--18},
  year={2016},
  publisher={Elsevier}
}

@article{Huser2019,
  title={Modeling spatial processes with unknown extremal dependence class},
  author={Huser, Rapha{\"e}l and Wadsworth, Jennifer L},
  journal={Journal of the American Statistical Association},
  volume={114},
  number={525},
  pages={434--444},
  year={2019},
  publisher={Taylor \& Francis}
}

@article{padoan2013extreme,
  title={Extreme dependence models based on event magnitude},
  author={Padoan, Simone A},
  journal={Journal of Multivariate Analysis},
  volume={122},
  pages={1--19},
  year={2013},
  publisher={Elsevier}
}

@article{huser2021max,
  title={Max-infinitely divisible models and inference for spatial extremes},
  author={Huser, Rapha{\"e}l and Opitz, Thomas and Thibaud, Emeric},
  journal={Scandinavian Journal of Statistics},
  volume={48},
  number={1},
  pages={321--348},
  year={2021},
  publisher={Wiley Online Library}
}

@article{reich2012hierarchical,
  title={A hierarchical max-stable spatial model for extreme precipitation},
  author={Reich, Brian J and Shaby, Benjamin A},
  journal={The Annals of Applied Statistics},
  volume={6},
  number={4},
  pages={1430--1451},
  year={2012},
  publisher={NIH Public Access}
}

@article{bopp2021hierarchical1,
  title={A hierarchical max-infinitely divisible spatial model for extreme precipitation},
  author={Bopp, Gregory P and Shaby, Benjamin A and Huser, Rapha{\"e}l},
  journal={Journal of the American Statistical Association},
  volume={116},
  number={533},
  pages={93--106},
  year={2021},
  publisher={Taylor \& Francis}
}

@article{bhaganagar2021MWR,
  title={{Implementing a new formulation in WRF-LES for Buoyant Plume Simulations: bPlume-WRF-LES model}},
  author={Bhimireddy, Sudheer R and Bhaganagar, Kiran},
  journal={Monthly Weather Review},
  volume={149},
  number={7},
  pages={2299-2319},
  year={2021},
}

@article{lafon2023vae,
  title={{A VAE approach to sample multivariate extremes}},
  author={Lafon, Nicolas and Naveau, Philippe and Fablet, Ronan},
  journal={arXiv preprint arXiv:2306.10987},
  year={2023}
}

@article{zhang2022accounting,
  title={Accounting for the spatial structure of weather systems in detected changes in precipitation extremes},
  author={Zhang, Likun and Risser, Mark D and Molter, Edward M and Wehner, Michael F and O'Brien, Travis A},
  journal={Weather and Climate Extremes},
  volume={38},
  pages={100499},
  year={2023},
  publisher={Elsevier}
}

@article{cotsakis2022perimeter,
  title={On the perimeter estimation of pixelated excursion sets of {2D} anisotropic random fields},
  author={Cotsakis, Ryan and Di Bernardino, Elena and Opitz, Thomas},
  year={2022},
  journal = {Hal Science preprint: hal-03582844v2}
}

@article{matheson1976scoring,
  title={Scoring rules for continuous probability distributions},
  author={Matheson, James E and Winkler, Robert L},
  journal={Management Science},
  volume={22},
  number={10},
  pages={1087--1096},
  year={1976},
  publisher={INFORMS}
}

@article{gneiting2007strictly,
  title={Strictly proper scoring rules, prediction, and estimation},
  author={Gneiting, Tilmann and Raftery, Adrian E},
  journal={Journal of the American statistical Association},
  volume={102},
  number={477},
  pages={359--378},
  year={2007},
  publisher={Taylor \& Francis}
}

@article{kingma2019introduction,
  title={An introduction to variational autoencoders},
  author={Kingma, Diederik P and Welling, Max and others},
  journal={Foundations and Trends{\textregistered} in Machine Learning},
  volume={12},
  number={4},
  pages={307--392},
  year={2019},
  publisher={Now Publishers, Inc.}
}

@article{cartwright2023emulation,
  title={Emulation of greenhouse-gas sensitivities using variational autoencoders},
  author={Cartwright, Laura and Zammit-Mangion, Andrew and Deutscher, Nicholas M},
  journal={Environmetrics},
  volume={34},
  number={2},
  year={2023},
  publisher={Wiley Online Library}
}

@article{hougaard1986survival,
  title={Survival models for heterogeneous populations derived from stable distributions},
  author={Hougaard, Philip},
  journal={Biometrika},
  volume={73},
  number={2},
  pages={387--396},
  year={1986},
  publisher={Oxford University Press}
}

@article{kingma2013auto,
  title={Auto-encoding variational {B}ayes},
  author={Kingma, Diederik P and Welling, Max},
  journal={arXiv preprint arXiv:1312.6114},
  year={2013}
}

@Manual{rtorch,
  title = {torch: Tensors and Neural Networks with `GPU' Acceleration},
  author = {Daniel Falbel and Javier Luraschi},
  year = {2023},
  note = {\textbf{URL:} https://github.com/mlverse/torch},
}

@book{keydana2023deep,
  title={Deep Learning and Scientific Computing with R torch},
  author={Keydana, Sigrid},
  year={2023},
  publisher={CRC Press}
}

@article{polyak1964momentum,
title = {Some methods of speeding up the convergence of iteration methods},
journal = {USSR Computational Mathematics and Mathematical Physics},
volume = {4},
number = {5},
pages = {1--17},
year = {1964},
issn = {0041-5553},
doi = {https://doi.org/10.1016/0041-5553(64)90137-5},
OPTurl = {https://www.sciencedirect.com/science/article/pii/0041555364901375},
author = {Polyak, Boris T},
publisher={Elsevier}
}

@article{simpson2020high,
  title={High-dimensional modeling of spatial and spatio-temporal conditional extremes using {INLA} and {Gaussian Markov} random fields},
  author={Simpson, Emma S and Opitz, Thomas and Wadsworth, Jennifer L},
  journal={Extremes},
  pages={1--45},
  year={2023},
  publisher={Springer}
}

@article{sainsbury-dale2022,
    author = {Matthew Sainsbury-Dale and Andrew Zammit-Mangion and Raphaël Huser},
    title = {Likelihood-Free Parameter Estimation with Neural {B}ayes Estimators},
    journal = {The American Statistician},
    pages = {1--14},
    year  = {2024},
    volume  = {78},
    publisher = {Taylor & Francis},
    doi = {10.1080/00031305.2023.2249522}
}

@article{zammit2020deep,
    author = {Andrew Zammit-Mangion and Christopher K. Wikle},
    title = {Deep integro-difference equation models for spatio-temporal forecasting},
    journal = {Spatial Statistics},
    pages = {100408},
    year  = {2020},
    volume  = {37}
}

@article{richards2023likelihood,
  title={Likelihood-free neural {B}ayes estimators for censored peaks-over-threshold models},
  author={Richards, Jordan and Sainsbury-Dale, Matthew and Zammit-Mangion, Andrew and Huser, Rapha{\"e}l},
  journal={arXiv preprint arXiv:2306.15642},
  year={2023}
}

@article{sainsbury2023neural,
  title={Neural Bayes estimators for irregular spatial data using graph neural networks},
  author={Sainsbury-Dale, Matthew and Zammit-Mangion, Andrew and Richards, Jordan and Huser, Rapha{\"e}l},
  journal={arXiv preprint arXiv:2310.02600},
  year={2023}
}

@article{andrew2025amortized,
  title={Neural Methods for Amortised Parameter Inference},
  author={Zammit-Mangion, Andrew and Sainsbury-Dale, Matthew and Huser, Rapha{\"e}l},
  journal={Annual Reviews of Statistics and Its Application},
  year={2025},
  note={to appear}
}

@article{gopalan2022higher,
  title={A higher-order singular value decomposition tensor emulator for spatiotemporal simulators},
  author={Gopalan, Giri and Wikle, Christopher K},
  journal={Journal of Agricultural, Biological and Environmental Statistics},
  volume={27},
  number={1},
  pages={22--45},
  year={2022},
  publisher={Springer}
}

@article{krupskii2022modeling,
  title={Modeling spatial tail dependence with {C}auchy convolution processes},
  author={Krupskii, Pavel and Huser, Rapha{\"e}l},
  journal={Electronic Journal of Statistics},
  volume={16},
  number={2},
  pages={6135--6174},
  year={2022},
  publisher={The Institute of Mathematical Statistics and the Bernoulli Society}
}

@article{ledford1996statistics,
  title={Statistics for near independence in multivariate extreme values},
  author={Ledford, Anthony W and Tawn, Jonathan A},
  journal={Biometrika},
  volume={83},
  number={1},
  pages={169--187},
  year={1996},
  publisher={Oxford University Press}
}

@book{gramacy2020surrogates,
  title={Surrogates: Gaussian Process Modeling, Design, and Optimization for the Applied Sciences},
  author={Gramacy, Robert B},
  year={2020},
  publisher={Chapman and Hall/CRC}
}

@article{gu2018robust,
  title={Robust {Gaussian} stochastic process emulation},
  author={Gu, Mengyang and Wang, Xiaojing and Berger, James O},
  journal={The Annals of Statistics},
  volume={46},
  number={6A},
  pages={3038--3066},
  year={2018},
  publisher={JSTOR}
}

@incollection{sargsyan2017surrogate,
  title={Surrogate models for uncertainty propagation and sensitivity analysis},
  author={Sargsyan, Khachik},
  booktitle={Handbook of uncertainty quantification},
  pages={673--698},
  year={2017},
  publisher={Springer}
}

@article{goodfellow2014generative,
  title={Generative adversarial nets},
  author={Goodfellow, Ian and Pouget-Abadie, Jean and Mirza, Mehdi and Xu, Bing and Warde-Farley, David and Ozair, Sherjil and Courville, Aaron and Bengio, Yoshua},
  journal={Advances in Neural Information Processing Systems},
  volume={27},
  year={2014}
}

@article{davison2015statistics,
  title={Statistics of extremes},
  author={Davison, Anthony C. and Huser, Rapha\"el},
  journal={Annual Review of Statistics and its Application},
  volume={2},
  year={2015},
  pages = {203--235}
}

@article{donlon2012operational,
  title={The operational sea surface temperature and sea ice analysis ({OSTIA}) system},
  author={Donlon, Craig J and Martin, Matthew and Stark, John and Roberts-Jones, Jonah and Fiedler, Emma and Wimmer, Werenfrid},
  journal={Remote Sensing of Environment},
  volume={116},
  pages={140--158},
  year={2012},
  publisher={Elsevier}
}

@article{hazra2021estimating,
  title={Estimating high-resolution {Red Sea} surface temperature hotspots, using a low-rank semiparametric spatial model},
  author={Hazra, Arnab and Huser, Rapha{\"e}l},
  journal={The Annals of Applied Statistics},
  volume={15},
  number={2},
  pages={572--596},
  year={2021},
  publisher={Institute of Mathematical Statistics}
}

@article{simpson2021conditional,
  title={Conditional modelling of spatio-temporal extremes for {Red Sea} surface temperatures},
  author={Simpson, Emma S and Wadsworth, Jennifer L},
  journal={Spatial Statistics},
  volume={41},
  pages={100482},
  year={2021},
  publisher={Elsevier}
}

@article{oesting2022patterns,
  title={Patterns in spatio-temporal extremes},
  author={Oesting, Marco and Huser, Rapha\"el},
  journal={arXiv preprint arXiv:2212.11001},
  year={2022}
}

@article{wendland1995piecewise,
  title={Piecewise polynomial, positive definite and compactly supported radial functions of minimal degree},
  author={Wendland, Holger},
  journal={Advances in Computational Mathematics},
  volume={4},
  pages={389},
  year={1995},
  publisher={Springer}
}

@Article{hetGP,
    title = {{hetGP}: Heteroskedastic {G}aussian Process Modeling and Sequential Design in {R}},
    author = {Micka\"el Binois and Robert B. Gramacy},
    journal = {Journal of Statistical Software},
    year = {2021},
    volume = {98},
    number = {13},
    pages = {1--44},
    doi = {10.18637/jss.v098.i13},
}

@article{richards2022unifying,
  title={A unifying partially-interpretable framework for neural network-based extreme quantile regression},
  author={Richards, Jordan and Huser, Rapha{\"e}l},
  journal={arXiv preprint arXiv:2208.07581},
  year={2022}
}

@article{sohn2015learning,
  title={Learning structured output representation using deep conditional generative models},
  author={Sohn, Kihyuk and Lee, Honglak and Yan, Xinchen},
  journal={Advances in Neural Information Processing Systems},
  volume={28},
  year={2015}
}

@article{davison2012statistical,
author = {A. C. Davison and S. A. Padoan and M. Ribatet},
title = {{Statistical Modeling of Spatial Extremes}},
volume = {27},
journal = {Statistical Science},
number = {2},
publisher = {Institute of Mathematical Statistics},
pages = {161--186},
year = {2012}
}

@incollection{davison2019spatial,
  title={Spatial extremes},
  author={Davison, Anthony C and Huser, Rapha{\"e}l and Thibaud, Emeric},
  year={2019},
  booktitle = {Handbook of Environmental and Ecological Statistics},
  publisher={editors A. E. Gelfand, M. Fuentes, J. A. Hoeting and R. L. Smith, CRC Press}, 
  pages={711--744},
}

@article{ferreira2014generalized,
author = {Ana Ferreira and Laurens de Haan},
title = {{The generalized Pareto process; with a view towards application and simulation}},
volume = {20},
journal = {Bernoulli},
number = {4},
publisher = {Bernoulli Society for Mathematical Statistics and Probability},
pages = {1717--1737},
keywords = {domain of attraction, Extreme value theory, functional regular variation, generalized Pareto process, max-stable processes, peaks-over-threshold},
year = {2014},
doi = {10.3150/13-BEJ538},
OPTurl = {https://doi.org/10.3150/13-BEJ538}
}

@article{thibaud2015efficient,
  title={Efficient inference and simulation for elliptical {P}areto processes},
  author={Thibaud, Emeric and Opitz, Thomas},
  journal={Biometrika},
  volume={102},
  number={4},
  pages={855--870},
  year={2015},
  publisher={Oxford University Press}
}

@article{huser2022advances,
  title={Advances in statistical modeling of spatial extremes},
  author={Huser, Rapha{\"e}l and Wadsworth, Jennifer L},
  journal={Wiley Interdisciplinary Reviews: Computational Statistics},
  volume={14},
  number={1},
  pages={e1537},
  year={2022},
  publisher={Wiley Online Library}
}

@article{huser2024time,
  title={Modeling of spatial extremes in environmental data science: Time to move away from max-stable processes},
  author={Huser, Rapha{\"e}l and Opitz, Thomas and Wadsworth, Jennifer L},
  journal={arXiv preprint arXiv:2401.17430},
  year={2024},
}

@article{zhang-2022a,
    author = {Zhang, Likun and Shaby, Benjamin A and Wadsworth, Jennifer L},
    title = {Hierarchical transformed scale mixtures for flexible modeling of spatial extremes on datasets with many locations},
    journal = {Journal of the American Statistical Association},
    volume = {117},
    number = {539},
    pages = {1357-1369},
    year  = {2022},
    publisher = {Taylor & Francis},
    doi = {10.1080/01621459.2020.1858838},
    OPTurl = {https://doi.org/10.1080/01621459.2020.1858838},
    eprint = {https://doi.org/10.1080/01621459.2020.1858838}
}

@article{hughes2017global,
  title={Global warming and recurrent mass bleaching of corals},
  author={Hughes, Terry P and Kerry, James T and {\'A}lvarez-Noriega, Mariana and {\'A}lvarez-Romero, Jorge G and Anderson, Kristen D and Baird, Andrew H and Babcock, Russell C and Beger, Maria and Bellwood, David R and Berkelmans, Ray and others},
  journal={Nature},
  volume={543},
  number={7645},
  pages={373--377},
  year={2017},
  publisher={Nature Publishing Group UK London}
}

@article{genevier2019marine,
  title={Marine heatwaves reveal coral reef zones susceptible to bleaching in the {Red Sea}},
  author={Genevier, Lily GC and Jamil, Tahira and Raitsos, Dionysios E and Krokos, George and Hoteit, Ibrahim},
  journal={Global Change Biology},
  volume={25},
  number={7},
  pages={2338--2351},
  year={2019},
  publisher={Wiley Online Library}
}

@article{furby2013susceptibility,
  title={Susceptibility of central {Red Sea} corals during a major bleaching event},
  author={Furby, Kathryn A and Bouwmeester, Jessica and Berumen, Michael L},
  journal={Coral Reefs},
  volume={32},
  pages={505--513},
  year={2013},
  publisher={Springer}
}

@article{raitsos2013remote,
  title={Remote sensing the phytoplankton seasonal succession of the {Red Sea}},
  author={Raitsos, Dionysios E and Pradhan, Yaswant and Brewin, Robert JW and Stenchikov, Georgiy and Hoteit, Ibrahim},
  journal={PLoS One},
  volume={8},
  number={6},
  pages={e64909},
  year={2013},
  publisher={Public Library of Science San Francisco, USA}
}

@article{de2018high,
  title={High-dimensional peaks-over-threshold inference},
  author={de Fondeville, Rapha{\"e}l and Davison, Anthony C},
  journal={Biometrika},
  volume={105},
  number={3},
  pages={575--592},
  year={2018},
  publisher={Oxford University Press}
}

@article{zhong2022modeling,
  title={Modeling nonstationary temperature maxima based on extremal dependence changing with event magnitude},
  author={Zhong, Peng and Huser, Rapha\"el and Opitz, Thomas},
  journal={Annals of Applied Statistics},
  pages={272--299},
  year={2022},
  volume={16}
}

@article{boulaguiem2022modeling,
  title={Modeling and simulating spatial extremes by combining extreme value theory with generative adversarial networks},
  author={Boulaguiem, Younes and Zscheischler, Jakob and Vignotto, Edoardo and van der Wiel, Karin and Engelke, Sebastian},
  journal={Environmental Data Science},
  volume={1},
  pages={e5},
  year={2022},
  publisher={Cambridge University Press}
}

@article{kennedy2001bayesian,
  title={Bayesian calibration of computer models},
  author={Kennedy, Marc C and O'Hagan, Anthony},
  journal={Journal of the Royal Statistical Society: Series B (Statistical Methodology)},
  volume={63},
  number={3},
  pages={425--464},
  year={2001},
  publisher={Wiley Online Library}
}

@article{higdon2004combining,
  title={Combining field data and computer simulations for calibration and prediction},
  author={Higdon, Dave and Kennedy, Marc and Cavendish, James C and Cafeo, John A and Ryne, Robert D},
  journal={SIAM Journal on Scientific Computing},
  volume={26},
  number={2},
  pages={448--466},
  year={2004},
  publisher={SIAM}
}

@article{chang2016calibrating,
  title={Calibrating an ice sheet model using high-dimensional binary spatial data},
  author={Chang, Won and Haran, Murali and Applegate, Patrick and Pollard, David},
  journal={Journal of the American Statistical Association},
  volume={111},
  number={513},
  pages={57--72},
  year={2016},
  publisher={Taylor \& Francis}
}

@article{bayarri2007computer,
  title={Computer model validation with functional output},
  author={Bayarri, MJ and Berger, JO and Cafeo, John and Garcia-Donato, G and Liu, F and Palomo, J and Parthasarathy, RJ and Paulo, R and Sacks, Jerry and Walsh, D23639561144},
  journal={The Annals of Statistics},
  pages={1874--1906},
  year={2007},
  publisher={JSTOR}
}

@article{sacks1989designs,
  title={Designs for computer experiments},
  author={Sacks, Jerome and Schiller, Susannah B and Welch, William J},
  journal={Technometrics},
  volume={31},
  number={1},
  pages={41--47},
  year={1989},
  publisher={Taylor \& Francis}
}

@article{hazra2024,
author = {Arnab Hazra and Raphaël Huser and David Bolin},
title = {Efficient Modeling of Spatial Extremes over Large Geographical Domains},
journal = {Journal of Computational and Graphical Statistics},
year = {2024},
note={to appear},
doi = {10.1080/10618600.2024.2409784}
}

@article{pasche2022neural,
  title={Neural networks for extreme quantile regression with an application to forecasting of flood risk},
  author={Pasche, Olivier C and Engelke, Sebastian},
  journal={The Annals of Applied Statistics},
  year={2024},
  note={to appear}
}

@article{lenzi2023neural,
  title={Neural networks for parameter estimation in intractable models},
  author={Lenzi, Amanda and Bessac, Julie and Rudi, Johann and Stein, Michael L},
  journal={Computational Statistics \& Data Analysis},
  volume={185},
  pages={107762},
  year={2023},
  publisher={Elsevier}
}

@article{majumder2024modeling,
  title={Modeling extremal streamflow using deep learning approximations and a flexible spatial process},
  author={Majumder, Reetam and Reich, Brian J and Shaby, Benjamin A},
  journal={The Annals of Applied Statistics},
  volume={18},
  number={2},
  pages={1519--1542},
  year={2024},
  publisher={Institute of Mathematical Statistics}
}

@article{maceda2024variational,
  title={A variational neural Bayes framework for inference on intractable posterior distributions},
  author={Maceda, Elliot and Hector, Emily C and Lenzi, Amanda and Reich, Brian J},
  journal={arXiv preprint arXiv:2404.10899},
  year={2024}
}

@article{bradley2005basic,
  title={Basic properties of strong mixing conditions. A survey and some open questions},
  author={Bradley, Richard C},
  year={2005},
  journal = {Probability Surveys},
  number={2},
  pages={107--144}
}
}
\newpage
\appendix
\numberwithin{equation}{section}
\numberwithin{figure}{section}

\section{Technical details} \label{sec:proofs}
\subsection{Properties of exponentially-tilted positive-stable variables}\label{sec:PS_properties}
Before we proceed to prove Proposition~\ref{prop:marg_distr} from the main paper, we first recall some useful results in \citet{hougaard1986survival} about positive-stable (PS) distributions and their exponentially-tilted variation. If $Z\sim \mathrm{expPS}(\alpha,0)$, we denote the density function by $f_\alpha(z)$, $z>0$. Then for $\alpha\in (0,1]$, it has Laplace transform
\begin{equation*}
    L(s) = \mathbb{E}{e^{-sZ}}=\exp(-s^\alpha),\; s\geq 0.
\end{equation*}
For an exponentially-tilted variable $Z\sim \mathrm{expPS}(\alpha,\gamma)$, the Laplace transform becomes
\begin{equation}\label{eqn:laplace_trans}
    L(s) = \mathbb{E}{e^{-sZ}}=\exp\left[-\{(\gamma+s)^\alpha-\gamma^\alpha\}\right],\; s\geq 0,\;\gamma\geq 0
\end{equation}
and its density is 
\begin{equation*}
    h(x;\alpha, \gamma)=\frac{f_\alpha(x)\exp(-\gamma x)}{\exp(-\gamma^\alpha)},\; x>0.
\end{equation*}

\begin{lemma}
If $Z\sim \mathrm{expPS}(\alpha,0)$ and $\alpha\in (0,1)$, then $Z\sim \text{Stable}\left\{\alpha, 1,\cos^{{1}/{\alpha}} ({\pi\alpha}/{2}),0\right\}$ in the 1-parameterization \citep{nolan2020univariate}.
\end{lemma}

\begin{proof}
From Proposition 3.2 of \citet{nolan2020univariate}, we know that the Laplace transform of the random variable $Z\sim\text{Stable}(\alpha, 1, \xi,0;1)$, $\alpha\in (0,2]$, is
\begin{equation*}
\mathbb{E}{e^{-sZ}} = 
    \begin{cases}
        \exp\{-\xi^\alpha(\sec \frac{\pi\alpha}{2})s^\alpha\}, &\alpha\in (0,1)\cup (1,2],\\
        \exp\{-\xi \frac{2}{\pi}s\log s\}, &\alpha=1.
    \end{cases}
\end{equation*}
When $\xi=|\cos \frac{\pi\alpha}{2}|^{1/\alpha}$, the Laplace transform becomes
\begin{equation*}
\mathbb{E}{e^{-sZ}} = 
    \begin{cases}
        \exp(-s^\alpha), &\alpha\in (0,1),\\
        \exp(s^\alpha), &\alpha\in (1,2].
    \end{cases}
\end{equation*}
That is, $Z\sim \mathrm{expPS}(\alpha,0)$ when $\alpha\in (0,1)$.
\end{proof}
\begin{remark}
If $\alpha=1/2$, then $|\cos \frac{\pi\alpha}{2}|^{1/\alpha}=1/2$ and $Z\sim \text{Stable}(1/2, 1, 1/2,0;1)$, which is equivalent to $Z\sim \text{L\'{e}vy}(0, 1/2)$ or $Z\sim \text{InvGamma}(1/2, 1/4)$.
\end{remark}
\begin{remark}
    To facilitate the computation of the prior in Eq.~\eqref{eqn:prior_form} of the main paper, we follow the Monte Carlo integration steps in Section 4 of the Supplementary Material of \citet{bopp2021hierarchical1} to calculate the density $h(\cdot; \alpha,\gamma)$.
\end{remark}

\subsection{Proof of Proposition~\ref{prop:marg_distr} of the main paper}\label{proof:marg_distr}
\begin{proof}[Proof of Proposition~\ref{prop:marg_distr}]
Since at the location $\bs_j$,
\begin{small}
    \begin{align*}
    \Pr(X(\bs_j)\leq x)&=\mathbb{E}\left\{\Pr\left(\left.\epsilon(\bs_j)\leq \frac{x}{Y(\bs_j)} \right\vert Z_{1},\ldots, Z_{K}\right)\right\}=\mathbb{E} \left[\exp\left\{-\left(\left.\frac{\tau Y(\bs_j)}{x}\right)^{\frac{1}{\alpha_0}}\right\}\right\vert Z_{1},\ldots, Z_{K}\right] \\
    &= \mathbb{E} \exp\left\{-\left(\frac{\tau}{x}\right)^{\frac{1}{\alpha_0}}\sumK \omega_k(\bs_j, r_k)^{\frac{1}{\alpha}}Z_{k}\right\}
    =  \exp\left[\sum_{k\in \bar{\mathcal{D}}}\gamma_k^\alpha - \sumK\left\{\gamma_k+\left(\frac{\tau}{x}\right)^{\frac{1}{\alpha_0}}\omega_{kj}^{\frac{1}{\alpha}}\right\}^\alpha\right].
\end{align*}
\end{small}
We now show that as $x\rightarrow\infty$, the survival function $\bar{F}_j(x) = 1-F_j(x)$ satisfies
\begin{small}
    \begin{equation}\label{eqn:survival_fn}
    \bar{F}_j(x) =c'_j\left(\frac{x}{\tau}\right)^{-\frac{\alpha}{\alpha_0}}+c_j\left(\frac{x}{\tau}\right)^{-\frac{1}{\alpha_0}}+\left(d_j-\frac{c_j^2}{2}\right)\left(\frac{x}{\tau}\right)^{-\frac{2}{\alpha_0}}-\frac{c'_j{}^2}{2}\left(\frac{x}{\tau}\right)^{-\frac{2\alpha}{\alpha_0}}-c'_jc_j\left(\frac{x}{\tau}\right)^{-\frac{\alpha+1}{\alpha_0}} + o\left(x^{-\frac{2}{\alpha_0}}\right),
\end{equation}
\end{small}
\noindent where $c_j =\alpha\sum_{k\in \bar{\mathcal{D}}} \gamma_k^{\alpha-1}\omega_{kj}^{1/\alpha}$, $c'_j = \sum_{k\in \mathcal{D}}\omega_{kj}$, and $d_j = \frac{\alpha(\alpha-1)}{2}\sum_{k\in \bar{\mathcal{D}}} \gamma_k^{\alpha-2}\omega_{kj}^{2/\alpha}$.     

First, we apply Taylor's expansion with the Peano remainder:
\begin{align}\label{eqn:taylor_1_plus_t}
    (1+t)^{\alpha}=1+\alpha t +\frac{\alpha(\alpha-1)}{2}t^2 + o(t^2),\;\text{as }t\rightarrow 0.
\end{align}
Then, as $x\rightarrow\infty$, we have
\begin{equation*}
    \begin{split}
        \sum_{k\in \bar{\mathcal{D}}}&\left\{\gamma_k+\left(\frac{\tau}{x}\right)^{\frac{1}{\alpha_0}}\omega_{kj}^{\frac{1}{\alpha}}\right\}^\alpha = \sum_{k\in \bar{\mathcal{D}}}\gamma_k^\alpha\left\{1+\left(\frac{\tau}{x}\right)^{\frac{1}{\alpha_0}}\frac{\omega_{kj}^{1/\alpha}}{\gamma_k}\right\}^\alpha\\
    &=\sum_{k\in \bar{\mathcal{D}}}\gamma_k^\alpha+\alpha\left(\frac{\tau}{x}\right)^{\frac{1}{\alpha_0}}\sum_{k\in \bar{\mathcal{D}}}\frac{\omega_{kj}^{1/\alpha}}{\gamma_k^{1-\alpha}}+\frac{\alpha(\alpha-1)}{2}\left(\frac{\tau}{x}\right)^{\frac{2}{\alpha_0}}\sum_{k\in \bar{\mathcal{D}}}\frac{\omega_{kj}^{2/\alpha}}{\gamma_k^{2-\alpha}} + o\left(x^{-\frac{2}{\alpha_0}}\right),
    \end{split}
\end{equation*}
which leads to
\begin{small}
    \begin{equation*}
    \begin{split}
        \sum_{k\in \bar{\mathcal{D}}}&\gamma_k^\alpha - \sumK\left\{\gamma_k+\left(\frac{\tau}{x}\right)^{\frac{1}{\alpha_0}}\omega_{kj}^{\frac{1}{\alpha}}\right\}^\alpha=-\left(\frac{\tau}{x}\right)^{\frac{\alpha}{\alpha_0}}\sum_{k\in \mathcal{D}}\omega_{kj}-\alpha\left(\frac{\tau}{x}\right)^{\frac{1}{\alpha_0}}\sum_{k\in \bar{\mathcal{D}}}\frac{\omega_{kj}^{1/\alpha}}{\gamma_k^{1-\alpha}}\\
    &-\frac{\alpha(\alpha-1)}{2}\left(\frac{\tau}{x}\right)^{\frac{2}{\alpha_0}}\sum_{k\in \bar{\mathcal{D}}}\frac{\omega_{kj}^{2/\alpha}}{\gamma_k^{2-\alpha}} + o\left(x^{-\frac{2}{\alpha_0}}\right)=-c_j'\left(\frac{x}{\tau}\right)^{-\frac{\alpha}{\alpha_0}}-c_j \left(\frac{x}{\tau}\right)^{-\frac{1}{\alpha_0}}-d_j\left(\frac{x}{\tau}\right)^{-\frac{2}{\alpha_0}}+o\left(x^{-\frac{2}{\alpha_0}}\right),
    \end{split}
\end{equation*}
\end{small}
where the constants $c'_j$, $c_j$ and $d_j$ are defined in Proposition~\ref{prop:marg_distr} from the main paper.

Next we apply the following Taylor expansion
\begin{equation}\label{eqn:taylor_exp_t}
    1-\exp(-t) = t - \frac{t^2}{2}+o(t^2),\;\text{as }t\rightarrow 0.
\end{equation}
to get
\begin{align*}
    \bar{F}_j&(x) =c_j'\left(\frac{x}{\tau}\right)^{-\frac{\alpha}{\alpha_0}}+c_j \left(\frac{x}{\tau}\right)^{-\frac{1}{\alpha_0}}+d_j\left(\frac{x}{\tau}\right)^{-\frac{2}{\alpha_0}}\\
    &-\frac{1}{2}\left\{c_j'\left(\frac{x}{\tau}\right)^{-\frac{\alpha}{\alpha_0}}+c_j \left(\frac{x}{\tau}\right)^{-\frac{1}{\alpha_0}}+d_j\left(\frac{x}{\tau}\right)^{-\frac{2}{\alpha_0}}\right\}^2+o\left(x^{-\frac{2}{\alpha_0}}\right),
\end{align*}
\vskip -0.3cm
\noindent from which we can expand the squared term and discard the terms with higher decaying rates than $o(x^{-2/\alpha_0})$ to establish \eqref{eqn:survival_fn}.

Lastly, from \eqref{eqn:survival_fn}, it is clear that as $x\rightarrow\infty$, $\bar{F}_j(x) \sim c_j(x/\tau)^{-{1/\alpha_0}}$ if $\mathcal{C}_j\cap \mathcal{D}= \emptyset$, and $\bar{F}_j(x) \sim c_j'(x/\tau)^{-{\alpha/\alpha_0}}$ if $\mathcal{C}_j\cap \mathcal{D}\neq \emptyset$.  



\end{proof}

The following result directly delineates how the quantile level changes as $u\rightarrow 1$. It will be used to derive the tail dependence structure for two arbitrary spatial locations.
\begin{corollary}\label{cor:quantile_func}
As $t\rightarrow\infty$, the marginal quantile function $q_j(t)=F_j^{-1}(1-1/t)$ can be approximated as follows under the assumptions of Proposition~\ref{prop:marg_distr} from the main paper:
\vskip -0.6cm
\begin{equation*}
    q_j(t)=
    \begin{cases}
    \tau c'_j{}^{\alpha_0/\alpha}t^{\alpha_0/\alpha}\left\{1+\frac{\alpha_0 c_jt^{1-1/\alpha}}{\alpha c'_j{}^{1/\alpha}}-\frac{\alpha_0 t^{-1}}{2\alpha}+O\left(t^{-1/\alpha}\right)\right\},& \text{if } \mathcal{C}_j\cap \mathcal{D}\neq \emptyset,\\
    \tau c_j^{\alpha_0}t^{\alpha_0}\{1+\alpha_0(\frac{d_j}{c_j^2}-\frac{1}{2})t^{-1}+o(t^{-1})\},& \text{if } \mathcal{C}_j\cap \mathcal{D}= \emptyset.
    \end{cases}
\end{equation*}
\end{corollary}

\begin{proof}
By definition, $t^{-1}=\bar{F}_j\{q_j(t)\}$. When $\mathcal{C}_j\cap \mathcal{D}\neq \emptyset$, \eqref{eqn:survival_fn} leads to
\begin{equation}\label{eqn:quantile_q_t}
    \begin{split}
         t^{-1}=c'_j\tau^{\frac{\alpha}{\alpha_0}}q_j^{-\frac{\alpha}{\alpha_0}}&(t)
   \left[1+\frac{c_j\tau^{\frac{1-\alpha}{\alpha_0}}}{c'_j}q_j^{-\frac{1-\alpha}{\alpha_0}}(t)+\frac{c_j\tau^{\frac{2-\alpha}{\alpha_0}}}{c'_j}\left(d_j-\frac{c_j^2}{2}\right)q_j^{-\frac{2-\alpha}{\alpha_0}}(t)-\right.\\
   &\left.\frac{c'_j\tau^{\frac{\alpha}{\alpha_0}}}{2}q_j^{-\frac{\alpha}{\alpha_0}}(t)-c_j\tau^{\frac{1}{\alpha_0}}q_j^{-\frac{1}{\alpha_0}}(t) + o\left\{q_j^{-\frac{2-\alpha}{\alpha_0}}(t)\right\}\right]\;
   \text{as }t\rightarrow\infty.
    \end{split}
\end{equation}
Since $q_j(t)\rightarrow\infty$ as $t\rightarrow\infty$, the term in the square bracket of the previous display can simply be approximated by $1+o(1)$. Thus, we have
\begin{equation}\label{eqn:quantile_q_t_1}
    q_j(t) = \tau c'_j{}^{\frac{\alpha_0}{\alpha}}t^{\frac{\alpha_0}{\alpha}}\{1+o(1)\}.
\end{equation}
Since $\alpha\in (0,1)$, we can also re-organize \eqref{eqn:quantile_q_t} to obtain
\begin{small}
    \begin{equation}\label{eqn:quantile_q_t_2}
    \begin{split}
        q_j(t)-\tau c'_j{}^{\frac{\alpha_0}{\alpha}}t^{\frac{\alpha_0}{\alpha}}&=q_j(t)\left(1-\left[1+\frac{c_j\tau^{\frac{1-\alpha}{\alpha_0}}}{c'_j}q_j^{-\frac{1-\alpha}{\alpha_0}}(t)-\frac{c'_j\tau^{\frac{\alpha}{\alpha_0}}}{2}q_j^{-\frac{\alpha}{\alpha_0}}(t)+O\left\{q_j^{-\frac{1}{\alpha_0}}(t)\right\}\right]^{-\frac{\alpha_0}{\alpha}}\right)\\
    &=q_j(t)\left[\frac{\alpha_0 c_j\tau^\frac{1-\alpha}{\alpha_0}}{\alpha c'_j}q_j^{-\frac{1-\alpha}{\alpha_0}}(t)-\frac{\alpha_0 c'_j\tau^\frac{\alpha}{\alpha_0}}{2\alpha}q_j^{-\frac{\alpha}{\alpha_0}}(t)+O\left\{q_j^{-\frac{1}{\alpha_0}}(t)\right\}\right].
    \end{split}
\end{equation}
\end{small}
On the last line, we applied the Taylor expansion in \eqref{eqn:taylor_1_plus_t} again. Then we combine \eqref{eqn:quantile_q_t_1} and \eqref{eqn:quantile_q_t_2} to get
\begin{align*}
    q_j(t)-\tau c'_j{}^{\frac{\alpha_0}{\alpha}}t^{\frac{\alpha_0}{\alpha}}&=\tau c'_j{}^{\frac{\alpha_0}{\alpha}}t^{\frac{\alpha_0}{\alpha}}\{1+o(1)\}\left\{\frac{\alpha_0 c_j}{\alpha c'_j{}^{1/\alpha}}t^{1-\frac{1}{\alpha}}-\frac{\alpha_0}{2\alpha}t^{-1}+O\left(t^{-\frac{1}{\alpha}}\right)\right\}\\
    &=\tau c'_j{}^{\frac{\alpha_0}{\alpha}}t^{\frac{\alpha_0}{\alpha}}\left\{\frac{\alpha_0 c_j}{\alpha c'_j{}^{1/\alpha}}t^{1-\frac{1}{\alpha}}-\frac{\alpha_0}{2\alpha}t^{-1}+O\left(t^{-\frac{1}{\alpha}}\right)\right\},
\end{align*}
which concludes the proof for the first case.

Similarly, when $\mathcal{C}_j\cap \mathcal{D}= \emptyset$, we have
\begin{align*}
    \tau c_j^{\alpha_0}t^{\alpha_0} = q_j(t)\left[1+\left(\frac{d_j}{c_j}-\frac{c_j}{2}\right)\tau^{\frac{1}{\alpha_0}}q_j^{-\frac{1}{\alpha_0}}(t) + o\left\{q_j^{-\frac{1}{\alpha_0}}(t)\right\}\right]^{-\alpha_0}\;
   \text{as }t\rightarrow\infty,
\end{align*}
which ensures $q_j(t) = c_j^{\alpha_0}t^{\alpha_0}\{1+o(1)\}$,
and
\begin{align*}
    q_j(t)-\tau c_j^{\alpha_0}t^{\alpha_0}&=q_j(t)\left(1-\left[1+\left(\frac{d_j}{c_j}-\frac{c_j}{2}\right)\tau^{\frac{1}{\alpha_0}}q_j^{-\frac{1}{\alpha_0}}(t) + o\left\{q_j^{-\frac{1}{\alpha_0}}(t)\right\}\right]^{-\alpha_0}\right)\\
    &=\tau c_j^{\alpha_0}t^{\alpha_0}\{1+o(1)\}\left[\alpha_0\left(\frac{d_j}{c_j}-\frac{c_j}{2}\right)\tau^{\frac{1}{\alpha_0}}q_j^{-\frac{1}{\alpha_0}}(t)+o\left\{q_j^{-\frac{1}{\alpha_0}}(t)\right\}\right]\\
    &=\tau c_j^{\alpha_0}t^{\alpha_0}\left\{\alpha_0\left(\frac{d_j}{c_j^2}-\frac{1}{2}\right)t^{-1}+o(t^{-1})\right\}.
\end{align*}
\end{proof}

\subsection{Proof of Proposition \ref{prop:joint_distr} of the main paper}\label{proof:joint_distr}
\begin{proof}[Proof of Proposition \ref{prop:joint_distr}]
The joint distribution for the discretization of $\{X(\bs)\}$ is
\begin{align*}
    F(x_1,\ldots, x_n) &= \Pr(X(\bs_1)\leq x_1,\ldots, X(\bs_{n_s})\leq x_n)\\
    &=\mathbb{E}\left\{\Pr\left(\left.\epsilon(\bs_1)\leq \frac{x_1}{Y(\bs_1)},\ldots, \epsilon(\bs_{n_s})\leq \frac{x_n}{Y(\bs_{n_s})} \right\vert Z_{1},\ldots, Z_{K}\right)\right\}\\
    &=\mathbb{E} \left[\prodN\exp\left\{-\left(\left.\frac{\tau Y(\bs_j)}{x_j}\right)^{\frac{1}{\alpha_0}}\right\}\right\vert Z_{1},\ldots, Z_{K}\right] \\
    &=\prodK \mathbb{E} \exp\left\{-\sumN \omega_{kj}^{\frac{1}{\alpha}}\left(\frac{\tau}{x_j}\right)^{\frac{1}{\alpha_0}}Z_{k}\right\}
    =  \exp\left[\sum_{k\in \bar{\mathcal{D}}}\gamma_k^\alpha - \sumK\left\{\gamma_k+\tau^{\frac{1}{\alpha_0}}\sumN\frac{\omega_{kj}^{1/\alpha}}{x_j^{1/\alpha_0}}\right\}^\alpha\right],
\end{align*}
in which we utilized the Laplace transform of the exponentially-tilted PS variables displayed in Eq.~\eqref{eqn:laplace_trans}.
\end{proof}

\subsection{Proof of Theorem \ref{thm:dependence_properties} of the main paper}\label{proof:thm}

\begin{proof}[Proof of Theorem \ref{thm:dependence_properties}]
By definitions of the tail dependence measures $\chi_{ij}$ and $\eta_{ij}$, 
\begin{equation}\label{eqn:chi_split}
    \begin{split}
        \chi_{ij}&=\lim_{u\rightarrow 1}\frac{\Pr\{X(\bs_i)>F_i^{-1}(u),X(\bs_j)>F_j^{-1}(u)\}}{1-u}\\
        &=\lim_{t\rightarrow \infty}t\Pr\{X(\bs_i)>q_i(t),X(\bs_j)>q_j(t)\}\\
        &=\lim_{t\rightarrow\infty}t\left[1-2\left(1-\frac{1}{t}\right)+\Pr\{X(\bs_i)\leq q_i(t),X(\bs_j)\leq q_j(t))\}\right]\\
    &=\lim_{t\rightarrow\infty} 2-t\left[1-F_{ij}\{q_i(t),q_j(t)\}\right],
    \end{split}
\end{equation}
and 
\begin{equation*}
    \begin{split}
        \Pr\{X(\bs_i)>q_i(t),X(\bs_j)>q_j(t)\} = \mathcal{L}(t)t^{-1/\eta_{ij}},\;t\rightarrow\infty.
    \end{split}
\end{equation*}
Further, 
\begin{equation}\label{eqn:eta_split}
    \begin{split}
        \lim_{t\rightarrow\infty}\frac{\log\Pr\{X(\bs_i)>q_i(t),X(\bs_j)>q_j(t)\}}{\log t}= -\frac{1}{\eta_{ij}},
    \end{split}
\end{equation}
provided that
\begin{equation*}
    \lim_{t\rightarrow\infty} \frac{\log\mathcal{L}(t)}{\log t} =0
\end{equation*}
for the slowly varying function $\mathcal{L}$. This can be easily shown using the Karamata Representation theorem \citep{resnick2008extreme}.

To facilitate the proofs of each case listed in Theorem \ref{thm:dependence_properties}, we first introduce some constants for simplicity:
\begin{equation}\label{eqn:constants}
    \begin{split}
        c_{ij}&=\alpha(\alpha-1)\sum_{k\in \mathcal{C}_i\cap \mathcal{C}_j}\gamma_k^{\alpha-2}\omega_{ki}^{1/\alpha}\omega_{kj}^{1/\alpha}, \text{ and }d_{ij}=\sum_{k\in \mathcal{D}}\left(\frac{\omega_{ki}^{1/\alpha}}{c'_i{}^{1/\alpha}}+\frac{\omega_{kj}^{1/\alpha}}{c'_j{}^{1/\alpha}}\right)^\alpha.
    \end{split}
\end{equation}
In addition, constants $c_j$, $c_j'$ and $d_j$ are defined in Eq.~\eqref{eqn:survival_fn}.

\begin{enumerate}[(a)] 
\item If $\mathcal{C}_i\cap \mathcal{D}= \emptyset$ and $\mathcal{C}_j\cap \mathcal{D}= \emptyset$, we know from Corollary \ref{cor:quantile_func} that 
\begin{equation*}
    \begin{split}
        q_i(t) &=\tau c_i^{\alpha_0}t^{\alpha_0}\{1+R_i(t)+o(t^{-1})\},\\
        q_j(t) &=\tau c_j^{\alpha_0}t^{\alpha_0}\{1+R_j(t)+o(t^{-1})\},
    \end{split}
\end{equation*}
in which $R_i(t) = \alpha_0({d_i}/{c_i^2}-1/2)t^{-1}$ and $R_j(t) = \alpha_0({d_j}/{c_j^2}-1/2)t^{-1}$. Using the joint distribution in Proposition~\ref{prop:joint_distr} in the main paper, we first deduce
\begin{small}
    \begin{align*}
    \log F_{ij}&\{q_i(t),q_j(t)\}=\sum_{k\in \bar{\mathcal{D}}}\gamma_k^\alpha -  \sumK\left[\gamma_k+\frac{\omega_{ki}^{1/\alpha}}{c_it\{1+R_i(t)+o(t^{-1})\}}+\frac{\omega_{kj}^{1/\alpha}}{c_jt\{1+R_j(t)+o(t^{-1})\}}\right]^\alpha\\
    &=\sum_{k\in \bar{\mathcal{D}}}\gamma_k^\alpha -\sum_{k\in \bar{\mathcal{D}}}\left[\gamma_k+\frac{\omega_{ki}^{1/\alpha}}{c_it}\{1-R_i(t)\}+\frac{\omega_{kj}^{1/\alpha}}{c_jt}\{1-R_j(t)\}+o\left(\frac{1}{t^2}\right)\right]^\alpha\\
     &=\sum_{k\in \bar{\mathcal{D}}}\gamma_k^\alpha -  \sum_{k\in \bar{\mathcal{D}}}\gamma_k^\alpha\left[1+\frac{\alpha\omega_{ki}^{1/\alpha}/\gamma_k}{c_it}\{1-R_i(t)\}+\frac{\alpha\omega_{kj}^{1/\alpha}/\gamma_k}{c_jt}\{1-R_j(t)\}+o\left(\frac{1}{t^2}\right)\right],
\end{align*}
\end{small}
in which the penultimate equality uses the negative binomial expansion and the last euqaltiy is derived from the Taylor expansion in Eq.~\eqref{eqn:taylor_1_plus_t}.  Recall the definitions of $c_i$ and $c_j$ in Proposition~\ref{prop:marg_distr} from the main paper, and we find
\begin{align*}
    \log F_{ij}\{q_i(t),q_j(t)\}&=-\frac{2}{t}+\frac{R_i(t)+R_j(t)}{t}-o\left(\frac{1}{t^2}\right)\;\text{as }t\rightarrow\infty.
\end{align*}
Then it follows from Eq.~\eqref{eqn:taylor_exp_t} that
\begin{align*}
    1- F_{ij}\{q_i(t),q_j(t)\}&=1-\exp\left\{-\frac{2}{t}+\frac{R_i(t)+R_j(t)}{t}-o\left(\frac{1}{t^2}\right)\right\} \\
    &= \frac{2}{t}-\frac{R_i(t)+R_j(t)}{t}+o\left(\frac{1}{t^2}\right).
\end{align*}
Plugging this result into \eqref{eqn:chi_split}, we have $\chi_{ij} = \lim_{t\rightarrow\infty}\{R_i(t)+R_j(t)+o(t^{-1})\}=0$.

In the meantime,
\begin{small}
    \begin{align*}
    \log\Pr\{X(\bs_i)>q_i(t),X(\bs_j)>q_j(t)\}&\sim\log\frac{R_i(t)+R_j(t)}{t}\\
    &=\log\alpha_0+\log\left(\frac{d_i}{c_i^2}+\frac{d_j}{c_j^2}-1\right)-2\log t
\end{align*}
\end{small}
as $t \rightarrow\infty$. By Eq.~\eqref{eqn:eta_split}, $\eta_{ij}=1/2$.

\item If $\mathcal{C}_i\cap \mathcal{D}= \emptyset$ and $\mathcal{C}_j\cap \mathcal{D}\neq \emptyset$, we deduce $c_i\neq 0$, $c_i'=0$, $c_j'\neq 0$ and $\mathcal{C}_i=\bar{\mathcal{D}}$. From Corollary \ref{cor:quantile_func}, 
\begin{equation}\label{eqn:asymp_approx_b}
    \begin{split}
         q_i(t) &\sim  \tau c_i{}^{\alpha_0}t^{\alpha_0}\{1+R_i(t)+o(t^{-1})\},\\
    q_j(t) &\sim  \tau c'_j{}^{\alpha_0/\alpha}t^{\alpha_0/\alpha}\{1+R^*_j(t)+O(t^{-1/\alpha})\},
    \end{split}
\end{equation}
as $t\rightarrow\infty$, where $R_i(t) = \alpha_0({d_i}/{c_i^2}-1/2)t^{-1}$ and $R^*_j(t) =\alpha_0 c_jt^{1-1/\alpha}/(\alpha c'_j{}^{1/\alpha})-\alpha_0 t^{-1}/(2\alpha)$. Again by the joint distribution in Proposition~\ref{prop:joint_distr} of the main paper,
\begin{small}
    \begin{equation*}
       \begin{split}
           \log& F_{ij}\{q_i(t),q_j(t)\}=\sum_{k\in \bar{\mathcal{D}}}\gamma_k^\alpha -  \sum_{k=1}^K\left\{\gamma_k+\frac{\tau^{1/\alpha_0}\omega_{ki}^{1/\alpha}}{q_i^{1/\alpha_0}(t)}+\frac{\tau^{1/\alpha_0}\omega_{kj}^{1/\alpha}}{q_j^{1/\alpha_0}(t)}\right\}^\alpha\\
    =&\sum_{k\in \bar{\mathcal{D}}}\gamma_k^\alpha -  \sum_{k\in \bar{\mathcal{D}}}\left\{\gamma_k+\frac{\tau^{1/\alpha_0}\omega_{ki}^{1/\alpha}}{q_i^{1/\alpha_0}(t)}+\frac{\tau^{1/\alpha_0}\omega_{kj}^{1/\alpha}}{q_j^{1/\alpha_0}(t)}\right\}^\alpha - \sum_{k\in \mathcal{D}} \frac{\tau^{\alpha/\alpha_0}\omega_{kj}}{q_j^{\alpha/\alpha_0}(t)}.
       \end{split}
   \end{equation*}
\end{small}
Here, we split the sum over $k\in {1,\ldots, K}$ to $k\in \mathcal{D}$ and $k\in \bar{\mathcal{D}}$. The third summation can be re-written as $\sum_{k\in \mathcal{D}} \frac{\tau^{\alpha/\alpha_0}\omega_{kj}}{q_j^{\alpha/\alpha_0}(t)}=c_j'\left\{\frac{q_j(t)}{\tau}\right\}^{-\alpha/\alpha_0}$. For the second summation, we apply Eq.~\eqref{eqn:taylor_1_plus_t} again to get
\begin{footnotesize}
    \begin{align}
        \sum_{k\in \bar{\mathcal{D}}}&\left\{\gamma_k+\frac{\tau^{1/\alpha_0}\omega_{ki}^{1/\alpha}}{q_i^{1/\alpha_0}(t)}+\frac{\tau^{1/\alpha_0}\omega_{kj}^{1/\alpha}}{q_j^{1/\alpha_0}(t)}\right\}^\alpha \nonumber\\
        =& \sum_{k\in \bar{\mathcal{D}}}\gamma_k^\alpha\left[1+\frac{\alpha\tau^{1/\alpha_0}\omega_{ki}^{1/\alpha}}{\gamma_k q_i^{1/\alpha_0}(t)}+\frac{\alpha\tau^{1/\alpha_0}\omega_{kj}^{1/\alpha}}{\gamma_k q_j^{1/\alpha_0}(t)}+ \frac{\alpha(\alpha-1)}{2\gamma_k^2}\left\{\frac{\tau^{1/\alpha_0}\omega_{ki}^{1/\alpha}}{q_i^{1/\alpha_0}(t)}+\frac{\tau^{1/\alpha_0}\omega_{kj}^{1/\alpha}}{q_j^{1/\alpha_0}(t)}\right\}^2+o(t^{-1-\frac{1}{\alpha}})\right]\nonumber\\
        =&\sum_{k\in \bar{\mathcal{D}}}\gamma_k^\alpha+c_i\left\{\frac{q_i(t)}{\tau}\right\}^{-\frac{1}{\alpha_0}}+c_j\left\{\frac{q_j(t)}{\tau}\right\}^{-\frac{1}{\alpha_0}}+d_i\left\{\frac{q_i(t)}{\tau}\right\}^{-\frac{2}{\alpha_0}}+d_j\left\{\frac{q_j(t)}{\tau}\right\}^{-\frac{2}{\alpha_0}} \nonumber\\
        &+c_{ij}\left\{\frac{q_i(t)q_j(t)}{\tau^2}\right\}^{-\frac{1}{\alpha_0}}+o(t^{-1-\frac{1}{\alpha}}),\label{eqn:case_c_2}
    \end{align}
\end{footnotesize}
in which the constants $c_i$, $c_j$, $d_i$, $d_j$ and $c_{ij}$ are defined previously in Eq.~\eqref{eqn:constants}, and the residual term $o\left(t^{-\frac{2}{\alpha}}\right)$ is derived using the asymptotic approximation of $q_i(t)$ and $q_j(t)$ in Eq.~\eqref{eqn:asymp_approx_b}. 

Combining this result with Eq.~\eqref{eqn:case_c_2} and feeding them into Eq.~\eqref{eqn:case_c_1}, we have
\begin{footnotesize}
    \begin{align*}
    1&- F_{ij}\{q_i(t),q_j(t)\}=\\
    &=1-\exp\left[c_i\left\{\frac{q_i(t)}{\tau}\right\}^{-\frac{1}{\alpha_0}}+c_j\left\{\frac{q_j(t)}{\tau}\right\}^{-\frac{1}{\alpha_0}}+c_j'\left\{\frac{q_j(t)}{\tau}\right\}^{-\frac{\alpha}{\alpha_0}}\right.\\
    &\left. + d_i\left\{\frac{q_i(t)}{\tau}\right\}^{-\frac{2}{\alpha_0}}+d_j\left\{\frac{q_j(t)}{\tau}\right\}^{-\frac{2}{\alpha_0}}+c_{ij}\left\{\frac{q_i(t)q_j(t)}{\tau^2}\right\}^{-\frac{1}{\alpha_0}}+o\left(t^{-\frac{2}{\alpha}}\right)\right]\\
    =&c_i\left\{\frac{q_i(t)}{\tau}\right\}^{-\frac{1}{\alpha_0}}+c_j\left\{\frac{q_j(t)}{\tau}\right\}^{-\frac{1}{\alpha_0}}+c_j'\left\{\frac{q_j(t)}{\tau}\right\}^{-\frac{\alpha}{\alpha_0}}+d_i\left\{\frac{q_i(t)}{\tau}\right\}^{-\frac{2}{\alpha_0}}+d_j\left\{\frac{q_j(t)}{\tau}\right\}^{-\frac{2}{\alpha_0}}\\
    &+c_{ij}\left\{\frac{q_i(t)q_j(t)}{\tau^2}\right\}^{-\frac{1}{\alpha_0}}-\frac{c_i^2}{2}\left\{\frac{q_i(t)}{\tau}\right\}^{-\frac{2}{\alpha_0}}-\frac{c_j^2}{2}\left\{\frac{q_j(t)}{\tau}\right\}^{-\frac{2}{\alpha_0}}-\frac{c_j'^2}{2}\left\{\frac{q_j(t)}{\tau}\right\}^{-\frac{2\alpha}{\alpha_0}}\\
    &-c_ic_j\left\{\frac{q_i(t)q_j(t)}{\tau^2}\right\}^{-\frac{1}{\alpha_0}}-c_ic_j'\frac{q_i^{-\frac{1}{\alpha_0}}(t)q^{-\frac{\alpha}{\alpha_0}}_j(t)}{\tau^{-\frac{1+\alpha}{\alpha_0}}}-c_jc_j'\frac{q_j^{-\frac{1+\alpha}{\alpha_0}}(t)}{\tau^{-\frac{1+\alpha}{\alpha_0}}}+o(t^{-1-\frac{1}{\alpha}}).
\end{align*}
\end{footnotesize}
Then we utilize the asymptotic approximation of the marginal distribution in Eq.~\eqref{eqn:survival_fn} to get
\begin{equation*}
    \begin{split}
        1- F_{ij}\{q_i(t),&q_j(t)\}=\bar{F}_i\{q_i(t)\}+\bar{F}_j\{q_j(t)\}+c_{ij}\left\{\frac{q_i(t)q_j(t)}{\tau^2}\right\}^{-\frac{1}{\alpha_0}}\\
        &-c_ic_j\left\{\frac{q_i(t)q_j(t)}{\tau^2}\right\}^{-\frac{1}{\alpha_0}}-c_ic_j'\frac{q_i^{-\frac{1}{\alpha_0}}(t)q^{-\frac{\alpha}{\alpha_0}}_j(t)}{\tau^{-\frac{1+\alpha}{\alpha_0}}}-c_jc_j'\frac{q_j^{-\frac{1+\alpha}{\alpha_0}}(t)}{\tau^{-\frac{1+\alpha}{\alpha_0}}}+o(t^{-1-\frac{1}{\alpha}}).
    \end{split}
\end{equation*}
By definition, $t^{-1}=\bar{F}_i\{q_i(t)\}=\bar{F}_j\{q_j(t)\}$. Therefore, $$\chi_{ij}=\lim_{t\rightarrow \infty}2-t\left[1-F_{ij}\{q_i(t),q_j(t)\}\right]=0.$$

\LZadd{If $\mathcal{C}_i\cap\mathcal{C}_j=\emptyset$, $c_{ij}=0$ and $c_j=0$. By Eq.~\eqref{eqn:eta_split} and \eqref{eqn:asymp_approx_b}, 
\begin{equation*}
    \begin{split}
            -\frac{1}{\eta_{ij}} = \lim_{t\rightarrow\infty}\frac{\log
    \left[c_ic_j'\frac{q_i^{-{1}/{\alpha_0}}(t)q^{-{\alpha}/{\alpha_0}}_j(t)}{\tau^{-{1+\alpha}/{\alpha_0}}}+o\left(t^{-1-\frac{1}{\alpha}}\right)\right]}{\log t}=-2,
    \end{split}
\end{equation*}
and $\eta_{ij}=1/2$.}

If $\mathcal{C}_i\cap\mathcal{C}_j\neq\emptyset$, then $\bar{\mathcal{D}}\cap\mathcal{C}_j\neq\emptyset$ and $c_j> 0$, $c_{ij}<0$. Thus, $2c_ic_j-c_{ij}>0$ and by Eq.~\eqref{eqn:asymp_approx_b},
\begin{equation*}
    c_ic_j\left\{\frac{q_i(t)q_j(t)}{\tau^2}\right\}^{-\frac{1}{\alpha_0}} +c_jc_j'\frac{q_j^{-\frac{1+\alpha}{\alpha_0}}(t)}{\tau^{-\frac{1+\alpha}{\alpha_0}}}-c_{ij}\left\{\frac{q_i(t)q_j(t)}{\tau^2}\right\}^{-\frac{1}{\alpha_0}}=\frac{2c_ic_j-c_{ij}}{c_ic_j'{}^{1/\alpha}}t^{-1-\frac{1}{\alpha}} + o(t^{-1-\frac{1}{\alpha}}).
\end{equation*}
Consequently, 
\begin{equation*}
    \begin{split}
            -\frac{1}{\eta_{ij}} = \lim_{t\rightarrow\infty}\frac{\log
    \left[\frac{2c_ic_j-c_{ij}}{c_ic_j'{}^{1/\alpha}}t^{-1-\frac{1}{\alpha}} + o(t^{-1-\frac{1}{\alpha}})\right]}{\log t}=-1-\frac{1}{\alpha},
    \end{split}
\end{equation*}
and $\eta_{ij}=\alpha/(\alpha+1)$.

\item When $\mathcal{C}_i\cap \mathcal{D}\neq \emptyset$ and $\mathcal{C}_j\cap \mathcal{D}\neq \emptyset$, we have $c'_i\neq 0$, $c'_j\neq 0$ and
\begin{equation}\label{eqn:asymp_approx_c}
    \begin{split}
        q_i(t) &\sim  \tau c'_i{}^{\alpha_0/\alpha}t^{\alpha_0/\alpha}\left\{1+R^*_i(t)+O(t^{-1/\alpha})\right\}\\
    q_j(t) &\sim  \tau c'_j{}^{\alpha_0/\alpha}t^{\alpha_0/\alpha}\left\{1+R^*_j(t)+O(t^{-1/\alpha})\right\}
    \end{split}
\end{equation}
as $t\rightarrow\infty$, in which $R^*_i(t)$ and $R^*_j(t)$ have the forms given in Corollary~\ref{cor:quantile_func}. Consequently, we obtain:
\begin{footnotesize}
    \begin{equation}\label{eqn:case_c_1}
        \begin{split}
            \log &F_{ij}\{q_i(t),q_j(t)\}=\sum_{k\in \bar{\mathcal{D}}}\gamma_k^\alpha -  \sum_{k=1}^K\left\{\gamma_k+\frac{\tau^{1/\alpha_0}\omega_{ki}^{1/\alpha}}{q_i^{1/\alpha_0}(t)}+\frac{\tau^{1/\alpha_0}\omega_{kj}^{1/\alpha}}{q_j^{1/\alpha_0}(t)}\right\}^\alpha\\
    &=\sum_{k\in \bar{\mathcal{D}}}\gamma_k^\alpha -  \sum_{k\in \bar{\mathcal{D}}}\left\{\gamma_k+\frac{\tau^{1/\alpha_0}\omega_{ki}^{1/\alpha}}{q_i^{1/\alpha_0}(t)}+\frac{\tau^{1/\alpha_0}\omega_{kj}^{1/\alpha}}{q_j^{1/\alpha_0}(t)}\right\}^\alpha-\sum_{k\in \mathcal{D}}\left\{\frac{\tau^{1/\alpha_0}\omega_{ki}^{1/\alpha}}{q_i^{1/\alpha_0}(t)}+\frac{\tau^{1/\alpha_0}\omega_{kj}^{1/\alpha}}{q_j^{1/\alpha_0}(t)}\right\}^\alpha.
        \end{split}
    \end{equation}
\end{footnotesize}
For the second summation, the approximation in Eq.~\eqref{eqn:case_c_2} still holds, except that the residual term becomes $o\left(t^{-\frac{2}{\alpha}}\right)$ due to the asymptotic approximations in Eq.~\eqref{eqn:asymp_approx_c}. \LZadd{In the following, we examine the third summation by conditioning on whether $\mathcal{C}_i \cap \mathcal{C}_j \cap \mathcal{D} = \emptyset$ or $\mathcal{C}_i \cap \mathcal{C}_j \cap \mathcal{D} \neq \emptyset$.}

\LZadd{If $\mathcal{C}_i \cap \mathcal{C}_j \cap \mathcal{D} = \emptyset$, then locations $i$ and $j$ are not covered by the same compact basis function with $\gamma_k = 0$ even though $\mathcal{C}_i \cap \mathcal{D} \neq \emptyset$ and $\mathcal{C}_j \cap \mathcal{D} \neq \emptyset$ (i.e., they are individually impacted by different heavy-tailed expPS variables). In this case:
\begin{footnotesize}
    \begin{equation*}
    \begin{split}
        \sum_{k\in \mathcal{D}}\left\{\frac{\tau^{1/\alpha_0}\omega_{ki}^{1/\alpha}}{q_i^{1/\alpha_0}(t)}+\frac{\tau^{1/\alpha_0}\omega_{kj}^{1/\alpha}}{q_j^{1/\alpha_0}(t)}\right\}^\alpha &= \sum_{k\in \mathcal{D}}\frac{\tau^{\alpha/\alpha_0}\omega_{ki}}{q_i^{\alpha/\alpha_0}(t)}+\sum_{k\in \mathcal{D}}\frac{\tau^{\alpha/\alpha_0}\omega_{kj}}{q_j^{\alpha/\alpha_0}(t)}= c_i'\left\{\frac{q_i(t)}{\tau}\right\}^{-\frac{\alpha}{\alpha_0}}+c_j'\left\{\frac{q_j(t)}{\tau}\right\}^{-\frac{\alpha}{\alpha_0}}.\\
    \end{split}
\end{equation*}
\end{footnotesize}
Combine this result with Eq.~\eqref{eqn:case_c_2} and feed them in Eq.~\eqref{eqn:case_c_1}, we have
\begin{footnotesize}
    \begin{align*}
    1&- F_{ij}\{q_i(t),q_j(t)\}=\\
    &=1-\exp\left[c_i\left\{\frac{q_i(t)}{\tau}\right\}^{-\frac{1}{\alpha_0}}+c_j\left\{\frac{q_j(t)}{\tau}\right\}^{-\frac{1}{\alpha_0}}+c_i'\left\{\frac{q_i(t)}{\tau}\right\}^{-\frac{\alpha}{\alpha_0}}+c_j'\left\{\frac{q_j(t)}{\tau}\right\}^{-\frac{\alpha}{\alpha_0}}\right.\\
    &\left. + d_i\left\{\frac{q_i(t)}{\tau}\right\}^{-\frac{2}{\alpha_0}}+d_j\left\{\frac{q_j(t)}{\tau}\right\}^{-\frac{2}{\alpha_0}}+c_{ij}\left\{\frac{q_i(t)q_j(t)}{\tau^2}\right\}^{-\frac{1}{\alpha_0}}+o\left(t^{-\frac{2}{\alpha}}\right)\right]\\
    =&c_i\left\{\frac{q_i(t)}{\tau}\right\}^{-\frac{1}{\alpha_0}}+c_j\left\{\frac{q_j(t)}{\tau}\right\}^{-\frac{1}{\alpha_0}}+c_i'\left\{\frac{q_i(t)}{\tau}\right\}^{-\frac{\alpha}{\alpha_0}}+c_j'\left\{\frac{q_j(t)}{\tau}\right\}^{-\frac{\alpha}{\alpha_0}}+d_i\left\{\frac{q_i(t)}{\tau}\right\}^{-\frac{2}{\alpha_0}}+d_j\left\{\frac{q_j(t)}{\tau}\right\}^{-\frac{2}{\alpha_0}}\\
    &+c_{ij}\left\{\frac{q_i(t)q_j(t)}{\tau^2}\right\}^{-\frac{1}{\alpha_0}}-\frac{c_i^2}{2}\left\{\frac{q_i(t)}{\tau}\right\}^{-\frac{2}{\alpha_0}}-\frac{c_j^2}{2}\left\{\frac{q_j(t)}{\tau}\right\}^{-\frac{2}{\alpha_0}}-\frac{c_i'^2}{2}\left\{\frac{q_i(t)}{\tau}\right\}^{-\frac{2\alpha}{\alpha_0}}-\frac{c_j'^2}{2}\left\{\frac{q_j(t)}{\tau}\right\}^{-\frac{2\alpha}{\alpha_0}}\\
    &-c_ic_j\left\{\frac{q_i(t)q_j(t)}{\tau^2}\right\}^{-\frac{1}{\alpha_0}}-c_ic_i'\frac{q_i^{-\frac{1+\alpha}{\alpha_0}}(t)}{\tau^{-\frac{1+\alpha}{\alpha_0}}}-c_ic_j'\frac{q_i^{-\frac{1}{\alpha_0}}(t)q^{-\frac{\alpha}{\alpha_0}}_j(t)}{\tau^{-\frac{1+\alpha}{\alpha_0}}}-c_i'c_j\frac{q_i^{-\frac{\alpha}{\alpha_0}}(t)q^{-\frac{1}{\alpha_0}}_j(t)}{\tau^{-\frac{1+\alpha}{\alpha_0}}}-c_jc_j'\frac{q_j^{-\frac{1+\alpha}{\alpha_0}}(t)}{\tau^{-\frac{1+\alpha}{\alpha_0}}}\\
    &-c_i'c_j'\left\{\frac{q_i(t)q_j(t)}{\tau^2}\right\}^{-\frac{\alpha}{\alpha_0}}+o\left(t^{-\frac{2}{\alpha}}\right).
\end{align*}
\end{footnotesize}
Then we utilize the asymptotic approximation of the marginal distribution in Eq.~\eqref{eqn:survival_fn} to get
\begin{equation*}
    \begin{split}
        1- F_{ij}\{q_i(t),&q_j(t)\}=\bar{F}_i\{q_i(t)\}+\bar{F}_j\{q_j(t)\}+c_{ij}\left\{\frac{q_i(t)q_j(t)}{\tau^2}\right\}^{-\frac{1}{\alpha_0}}\\
        &-c_ic_j\left\{\frac{q_i(t)q_j(t)}{\tau^2}\right\}^{-\frac{1}{\alpha_0}}-c_ic_j'\frac{q_i^{-\frac{1}{\alpha_0}}(t)q^{-\frac{\alpha}{\alpha_0}}_j(t)}{\tau^{-\frac{1+\alpha}{\alpha_0}}}-c_i'c_j\frac{q_i^{-\frac{\alpha}{\alpha_0}}(t)q^{-\frac{1}{\alpha_0}}_j(t)}{\tau^{-\frac{1+\alpha}{\alpha_0}}}\\
    &-c_i'c_j'\left\{\frac{q_i(t)q_j(t)}{\tau^2}\right\}^{-\frac{\alpha}{\alpha_0}}+o\left(t^{-\frac{2}{\alpha}}\right).
    \end{split}
\end{equation*}
By definition, $t^{-1}=\bar{F}_i\{q_i(t)\}=\bar{F}_j\{q_j(t)\}$. Therefore, it straightforwardly follows that $\chi_{ij}=\lim_{t\rightarrow \infty}2-t\left[1-F_{ij}\{q_i(t),q_j(t)\}\right]=0$. By Eq.~\eqref{eqn:eta_split} and \eqref{eqn:asymp_approx_c}, 
\begin{equation*}
    \begin{split}
            -\frac{1}{\eta_{ij}} = \lim_{t\rightarrow\infty}\frac{\log
    \left[c_i'c_j'\left\{\frac{q_i(t)q_j(t)}{\tau^2}\right\}^{-\frac{2\alpha}{\alpha_0}}+o\left(t^{-2}\right)\right]}{\log t}=-2,
    \end{split}
\end{equation*}
and $\eta_{ij}=1/2$.}

If $\mathcal{C}_i\cap \mathcal{C}_j\cap\mathcal{D}\neq \emptyset$,
\begin{footnotesize}
    \begin{align*}
   \sum_{k\in \mathcal{D}}&\left\{\frac{\tau^{1/\alpha_0}\omega_{ki}^{1/\alpha}}{q_i^{1/\alpha_0}(t)}+\frac{\tau^{1/\alpha_0}\omega_{kj}^{1/\alpha}}{q_j^{1/\alpha_0}(t)}\right\}^\alpha\\
   =&  \sum_{k\in \mathcal{D}}\left[\frac{\omega_{ki}^{1/\alpha}t^{-1/\alpha}}{c'_i{}^{1/\alpha}\left\{1+R^*_i(t)+O(t^{-1/\alpha})\right\}^{1/\alpha_0}}+\frac{\omega_{kj}^{1/\alpha}t^{-1/\alpha}}{c'_j{}^{1/\alpha}\left\{1+R^*_j(t)+O(t^{-1/\alpha})\right\}^{1/\alpha_0}}\right]^\alpha\\
    =&\sum_{k\in \mathcal{D}}\left(\frac{\omega_{ki}^{1/\alpha}}{c'_i{}^{1/\alpha}}+\frac{\omega_{kj}^{1/\alpha}}{c'_j{}^{1/\alpha}}\right)^\alpha t^{-1} + O(t^{1-\frac{2}{\alpha}})=d_{ij} t^{-1} + O(t^{1-\frac{2}{\alpha}}),\;\text{as }t\rightarrow\infty,
\end{align*}
\end{footnotesize}
in which we use the asymptotic approximation in Eq.~\eqref{eqn:asymp_approx_c} again and by the subadditivity of power function with $\alpha\in(0,1)$, 
\begin{equation*}
    d_{ij}=\sum_{k\in \mathcal{D}}\left(\frac{\omega_{ki}^{1/\alpha}}{c'_i{}^{1/\alpha}}+\frac{\omega_{kj}^{1/\alpha}}{c'_j{}^{1/\alpha}}\right)^\alpha < \sum_{k\in \mathcal{D}}\frac{\omega_{ki}}{c'_i}+\sum_{k\in \mathcal{D}}\frac{\omega_{kj}}{c'_i} = 2.
\end{equation*}
Here the inequality is strict because $\mathcal{C}_i\cap \mathcal{C}_j\cap\mathcal{D}\neq \emptyset$. Meanwhile $d_{ij}>\sum_{k\in \mathcal{D}}{\omega_{ki}}/{c'_i}=1$. On the other hand, we note that  in Eq.~\eqref{eqn:case_c_2},
\begin{equation*}
    c_i\left\{\frac{q_i(t)}{\tau}\right\}^{-\frac{1}{\alpha_0}}+c_j\left\{\frac{q_j(t)}{\tau}\right\}^{-\frac{1}{\alpha_0}}=\left(\frac{c_i}{c'_i{}^{1/\alpha}}+\frac{c_j}{c'_j{}^{1/\alpha}}\right)t^{-\frac{1}{\alpha}}+O(t^{1-\frac{2}{\alpha}}),
\end{equation*}
which results in
\begin{align*}
    1- F_{ij}\{q_i(t),q_j(t)\}=&1-\exp\left\{-d_{ij} t^{-1}-\left(\frac{c_i}{c'_i{}^{1/\alpha}}+\frac{c_j}{c'_j{}^{1/\alpha}}\right)t^{-\frac{1}{\alpha}} - O(t^{1-\frac{2}{\alpha}})\right\}\\
    =&d_{ij} t^{-1}+\left(\frac{c_i}{c'_i{}^{1/\alpha}}+\frac{c_j}{c'_j{}^{1/\alpha}}\right)t^{-\frac{1}{\alpha}}
    + O(t^{1-\frac{2}{\alpha}}),
\end{align*}
and
\begin{align*}
    t\Pr\{X(\bs_i)>q_i(t),X(\bs_j)>q_j(t)\}=2-d_{ij}-\left(\frac{c_i}{c'_i{}^{1/\alpha}}+\frac{c_j}{c'_j{}^{1/\alpha}}\right)t^{1-\frac{1}{\alpha}}-O(t^{2-\frac{2}{\alpha}}),
\end{align*}
as $t\rightarrow\infty$. Since $d_{ij}\in (1,2]$, we know from \eqref{eqn:chi_split} that $\chi_{ij} = 2-d_{ij}\in (0,1)$ and
\begin{align*}
    \chi_{ij}(u)-\chi_{ij}  = \left(\frac{c_i}{c'_i{}^{1/\alpha}}+\frac{c_j}{c'_j{}^{1/\alpha}}\right)(1-u)^{\frac{1}{\alpha}-1}+O\left\{(1-u)^{\frac{2}{\alpha}-2}\right\}.
\end{align*}
\end{enumerate}
\end{proof}

\begin{remark}\label{remark:exponent_function}
The exponent function, defined by
\begin{equation*}
    V(x_1, \ldots, x_{n_s})=\lim_{t\rightarrow\infty}t(1-F[F_1^{-1}\{1-(tx_1)^{-1}\},\ldots, F_{n_s}^{-1}\{1-(tx_{n_s})^{-1}\}]),
\end{equation*}
is a limiting measure that occurs in the limiting distribution for normalized maxima \citep{Huser2019}. It is used to describe the multivariate extremal dependence of a spatial process, and the $n_s$-dimensional extremal coefficient $V(1, \ldots, 1)$ is of particular interest. This extremal coefficient has a range of $[1,n_s]$, with the lower and upper ends indicating, respectively, perfect dependence and independence. As a polarized case, if $\gamma_k>0$, for all $k=1,\ldots, K$, then $\mathcal{C}_j\cap \mathcal{D}=\emptyset$ for all $j$'s, and thus we have  
\begin{equation*}
    \gamma_k^\alpha - \left\{\gamma_k+\tau^{\frac{1}{\alpha_0}}\sumN\frac{\omega_{kj}^{1/\alpha}}{q_j^{1/\alpha_0}(t)}\right\}^\alpha \sim \alpha \tau^{\frac{1}{\alpha_0}}\gamma_k^{\alpha-1}\sumN\frac{\omega_{kj}^{1/\alpha}}{q_j^{1/\alpha_0}(t)}, \;t\rightarrow \infty.
\end{equation*}
Here, we can approximate $q_j(t)$ using the results from Corollary \ref{cor:quantile_func}. From Proposition~\ref{prop:marg_distr} of the main paper, we can deduce that $V(1,\ldots, 1) = n_s$, which corresponds to joint extremal independence. By contrast, if all $\gamma_k=0$ and one knot covers the entire spatial domain, we have $V(1,\ldots, 1)\in [1, n_s)$, which corresponds to joint extremal dependence.
\end{remark}

\section{Validation framework details}\label{sec:diagnotics}
\subsection{Full range evaluation}\label{sec:predict}
To examine the quality of the emulation from the XVAE, we will predict at $n_h$ locations $\{\bh_i: i=1,\ldots, n_h\}$ held out from the analyses. To perform these predictions, we calculate the basis function values at these locations, with which we can mix the encoded variables from Eq.~\eqref{eqn:encoder_form} in the main paper to get predicted values. For each time $t$ and holdout location $\bh_i$, denote the true observation of $X_t(\bh_i)$ by $x_{it}$ and the emulated prediction by $x^*_{it}$. Then the mean squared prediction error (MSPE) for time $t$ is
\begin{equation*}
    \mathrm{MSPE}_t = \frac{1}{n_h}\sum_{i=1}^{n_h} (x_{it}-x^*_{it})^2,
\end{equation*}
where $t=1,\ldots, n_t$. All MSPEs from different time replicates can be summarized in a boxplot; see Section \ref{sec:simulation} in the main paper for example. Similarly, we can calculate the continuously ranked probability
score \citep[CRPS;][]{matheson1976scoring, gneiting2007strictly} across time for each location, i.e.,
\begin{equation*}
    \mathrm{CRPS}_i=\frac{1}{n_t}\sum_{t=1}^{n_t}\int_{-\infty}^\infty (F_{i}(z)-\mathbbm{1}(x^*_{it}\leq z))^2 \mathrm {d}z,
\end{equation*}
where $F_{i}$ is the marginal distribution estimated using parameters at the holdout location $\bh_i$, $i=1,\ldots,n_h$, and again $x^*_{it}$ is the emulated value. Smaller CRPS indicates that the distribution $F_i$ is concentrated around $x^*_{it}$, and thus can be used to measure how well the distribution fits all emulated values. Section \ref{sec:simulation} in the main paper also shows how we present the CRPS values from all holdout locations for each emulation. In addition, we will examine the quantile-quantile (QQ)-plots obtained by pooling the spatial data into the same plot to check if the spatial input and the emulation have similar ranges and quantiles. 

\subsection{Empirical tail dependence measures}\label{subsec:dependence_res}
To assess the tail dependence structure of the emulated fields, we will estimate $\chi_{ij}(u)$ defined in Eq.~\eqref{eqn:chi} empirically in two ways. First, to examine the overall dependence strength, we treat $\{X(\bs)\}$ as if it had a stationary and isotropic dependence structure so that $\chi_{ij}(u)\equiv\chi_{h}(u)$, with $h=||\bs_i-\bs_j||$ being the distance between locations. Then for a fixed $h$, we find all pairs of locations with similar distances (within a small tolerance, say $\epsilon=0.001$), and compute the empirical conditional probabilities $\widehat{\chi}_h(u)$ at a grid of $u$ values. Confidence envelopes can be calculated by regarding the outcome (i.e., simultaneously exceed $u$ or not) of each pair as a Bernoulli variable and computing pointwise binomial confidence intervals, assuming that all pairs of points are independent from each other. Examples in Section~\ref{sec:simulation} of the main paper demonstrate how this empirical measure can be used to compare the extremal dependence structures between the spatial data input and realizations from the emulator. While this metric does not completely characterize the non-stationarity in the process, it is still well-defined as a summary statistic and carries important information about the average decay of dependence with distance irrespective of the direction. 

Second, to avoid the stationary assumption, we can choose a reference point denoted by $\bs_0$ and estimate the pairwise $\chi_{0j}(u)$ empirically between $\bs_0$ and all observed locations $\bs_j$ in the spatial domain $\mathcal{S}$. These pairwise estimates can then be presented using a raster plot (if gridded) or a heat plot. Section~\ref{sec:data_analysis} in the main paper shows examples of the empirical $\chi_{0j}(u)$, $u=0.85$, estimated from the real and emulated datasets, where $\bs_0$ is the center of $\mathcal{S}$.

\section{Areal radius of exceedance}\label{appendix:ARE_SEC}
\subsection{Monte Carlo estimates of \texorpdfstring{$\mathrm{ARE}_\psi(u)$}{Lg}}\label{appendix:ARE_MC}
\begin{proof}[Proof of Theorem~\ref{thm:consistency} of the main paper]
It suffices to prove that 
    \begin{equation}\label{eqn:as}
        \lim_{n_r\rightarrow\infty}\frac{\sum_{r=1}^{n_r}\mathbbm{1}(U_{ir}>u, U_{0r}>u)}{\sum_{r=1}^{n_r}\mathbbm{1}(U_{0r}>u)}=\chi_{\bs_0, \bg_i}(u), \qquad \text{a.s.}
    \end{equation}
    for all $i=1,\ldots, n_g$.
    
    First, since $U_{0r'} = \hat{F}_{0}(X_{0r'})$, it is clear that
    \begin{equation*}
        n_rU_{0r'} =\sum_{r=1}^{n_r}\mathbbm{1}\{X_{0r}\leq X_{0r'}\}
    \end{equation*}
    is the rank of $X_{0r'}$ in $\bX_0$, $r'=1,\ldots, n_r$. Thus,
    \begin{equation}\label{eqn:denom}
        \frac{1}{n_r}\sum_{r=1}^{n_r}\mathbbm{1}(U_{0r}>u)=\frac{\lfloor n_r (1-u)\rfloor }{n_r}\rightarrow 1-u, \text{ as }n_r\rightarrow\infty,
    \end{equation}
    in which $\lfloor \cdot \rfloor$ is the floor function.

    Second, denote the rank of $X_{ir'}$ in $\bX_i$ by $R_{ir'}$, $r'=1,\ldots, n_r$, $i=1,\ldots, n_g$. Then we know $R_{ir'}=n_rU_{ir'}$ and
    \begin{equation*}
        S_{i0}:=\frac{1}{n_r}\sum_{r=1}^{n_r}\mathbbm{1}(U_{ir}>u, U_{0r}>u)=\frac{1}{n_r}\sum_{r=1}^{n_r}\mathbbm{1}\left\{\frac{R_{ir}}{n_r}>u\right\}\mathbbm{1}\left\{\frac{R_{0r}}{n_r}>u\right\},
    \end{equation*}
    This is thus a bivariate linear rank statistics of $\bX_i$ and $\bX_0$, for which the regression constants as defined in \citet{sen1967theory} all have a value of $1$ and the scores have a product structure with each term being generated by $\phi(x)=\mathbbm{1}\{x>u\}$, $x\in (0,1)$. \citet{sen1967theory} and \citet{ruymgaart1974asymptotic} established the asymptotic normality of the multivariate linear rank statistics under weak restrictions that asymptotically no individual regression constant is much larger than the other constants and that $\phi$ is square integrable on $(0,1)^2$; that is,
    \begin{equation*}
        0<\int_{(0,1)^2} \{\phi(u_1, u_2)-\bar{\phi}\}^2\mathrm{d}u_1\mathrm{d}u_2 <\infty \text{ with }\bar{\phi}=\int_0^1\phi(u)\mathrm{d}u,
    \end{equation*}
    in which $\phi(u_1, u_2)=\phi(u_1)\phi(u_2)$. Since our regression constants are all $1$'s, the restriction on the regression constants is easily satisfied. Also, for $\phi(u_1, u_2)=\mathbbm{1}\{u_1>u, u_2>u\}$, $\int_0^1\int_0^1 \{\phi(u_1, u_2)-\bar{\phi}\}^2\mathrm{d}u_1\mathrm{d}u_2=\bar{\phi}-\bar{\phi}^2$ with $\bar{\phi}=(1-u)^2$. Therefore,
    \begin{equation}\label{eqn:clt}
        n^{1/2}\{S_{i0}-\mu_{i0}\}\rightarrow_d N(0, \sigma_{i0}^2)
    \end{equation}
    as $n_r\rightarrow \infty$, in which $\mu_{i0}$ and $\sigma_{i0}^2$ can be derived using Eq. (1.3) and (3.5) in \citet{ruymgaart1974asymptotic} as
    \begin{small}
        \begin{equation}\label{eqn:mean_and_var}
        \begin{split}
            \mu_{i0} &= \int\int \phi(F_i(x))\phi(F_0(y)) {\rm d}F_{i0}(x,y)=\Pr\{F_i(X_i)>u, F_0(X_0)>u\},\\
            \sigma_{i0}^2&=\mathrm{Var}\big(\mathbbm{1}\{F_i(X_i)>u, F_0(X_0)>u\}+[\mathbbm{1}\{F_i(X_i)\leq u\}-u]\Pr\{F_0(X_0)>u\mid F_i(X_i)=u\}\\
            &\qquad\qquad\qquad\qquad +[\mathbbm{1}\{F_0(X_0)\leq u\}-u]\Pr\{F_i(X_i)>u\mid F_0(X_0)=u\}\big).
        \end{split}
    \end{equation}
    \end{small}
    Since $\mu_{i0}/(1-u)=\chi_{0i}(u)$, we know from Expressions~\eqref{eqn:denom} and \eqref{eqn:clt} that as $n_r\rightarrow\infty$,
    \begin{equation}\label{eqn:CLT2}
        n^{\frac{1}{2}}\left\{\frac{\sum_{r=1}^{n_r}\mathbbm{1}(U_{ir}>u, U_{0r}>u)}{\sum_{r=1}^{n_r}\mathbbm{1}(U_{0r}>u)}-\chi_{\bs_0, \bg_i}(u)\right\}\rightarrow_d N\left\{0, \frac{\sigma_{i0}^2}{(1-u)^2}\right\},
    \end{equation}
    which ensures Expression~\eqref{eqn:as}.
\end{proof}
\begin{remark}
     The asymptotic normality of $n^{1/2}\{\widehat{\mathrm{ARE}}_\psi(u)- \mathrm{ARE}_\psi(u)\}$ is also ensured by Expression~\eqref{eqn:CLT2}. However, the exact expression of its asymptotic variance requires a much more careful examination of the correlations among the ranks of $\bX_i$, $i=0,1,\ldots, n_g$; that is, we need to device a multivariate linear rank statistics of $\bX_i$, $i=0,1,\ldots, n_g$; see \citet{ruymgaart1978asymptotic}.
\end{remark}

\subsection{Convergence of \texorpdfstring{$\mathrm{ARE}_\psi(u)$}{Lg}}\label{appendix:ARE}
\begin{proof}[Proof of Theorem~\ref{prop:ARE_psi} of the main paper]
By the definition of the tail dependence measure in Eq.~\eqref{eqn:chi} of the main paper,
\begin{equation*}
   \lim_{u\rightarrow 1} \sum_{i=1}^{n_g} \chi_{0i}(u)= \sum_{i=1}^{n_g} \chi_{0i}.
\end{equation*}
It is clear that the right-hand side is the Riemann sum of $\chi_{\bs_0, \bs}$ as a function of $\bs$ with respect to the grid. Since $\chi_{\bs_0, \bs}$ is a continuous function of $\bs$ (i.e., Riemann-integrable), we have
\begin{equation*}
    \lim_{\psi\rightarrow 0}\psi^2\sum_{i=1}^{n_g}\chi_{0i}= \int_{\mathcal{S}} \chi_{\bs_0, \bs}\mathrm{d}\bs.
\end{equation*}

Therefore, we have 
\begin{equation*}
    \lim_{\psi\rightarrow 0, u\rightarrow 1}\psi\left(\sum_{i=1}^{n_g} \chi_{0i}(u)\right)^{1/2}=\left\{\int_{\mathcal{S}} \chi_{\bs_0, \bs} \mathrm{d}\bs\right\}^{1/2}.
\end{equation*}
\end{proof}

\begin{remark}
In the spatial extremes literature, many models that have a spatially-invariant set of dependence parameter $\bphi_d$ and they satisfy
\begin{equation*}
    \chi_{\bs_0, \bs}(u)-\chi_{\bs_0, \bs}=c(\bs_0, \bs, \bphi_d)(1-u)^{d(\bphi_d)}\{1+o(1)\},
\end{equation*}
where $c(\bs_0, \bs, \bphi_d)$ is multiplicative constant defined by $\bs$, $\bs_0$ and $\bphi_d$. Also, the rate of decay $d(\bphi_d)$ is independent of $\bs$ and $\bs_0$. Such examples include the models proposed by \citet{huser2017bridging}, \citet{Huser2019} and \citet{bopp2021hierarchical1}. In this case,
\begin{equation*}
    \pi\widehat{\mathrm{ARE}}^2_\psi(u)-\psi^2\sum_{i=1}^{n_g}\chi_{\bs_0, \bg_i} \approx \left\{\psi^2\sum_{i=1}^{n_g}c(\bs_0, \bg_i, \bphi_d)\right\}(1-u)^{d(\bphi_d)}\{1+o(1)\}.
\end{equation*}
That is, $\widehat{\mathrm{ARE}}_\psi(u)$ has similar decaying behaviors as $\chi_{\bs_0, \bs}(u)$, which was observed empirically in Figure 3(b) and 4(b) in \citet{zhang2022accounting}.
\end{remark}

\begin{remark}
We note that \citet{cotsakis2022perimeter} proposed a similar metric which measures the length of the perimeter of excursion sets of anisotropic random fields on $\mathbb{R}^2$ under some smoothness assumptions. This estimator acts on the empirically accessible binary digital images of the excursion regions and computes the length of a piecewise linear approximation of the excursion boundary. In their work, the main focus is to prove strong consistency of the perimeter estimator as the image pixel size tends to zero. In comparison, we show that our estimator of $\mathrm{ARE}_\psi(u)$ is strongly consistent as the number of replicates drawn from the process $\{X(\bs)\}$ approaches infinity. Furthermore, the length scale $\mathrm{ARE}_\psi(u)$ is, in our view, more interpretable than the perimeter of excursion sets. Also, $\mathrm{ARE}_\psi(u)$ is closely tied to the bivariate $\chi$ measure, which further bridges spatial extremes to applications in other fields. 
\end{remark}

\section{XVAE details}\label{sec:extVAE_details}
\subsection{General framework}
In this section, we will illustrate the details of Eqs.~\eqref{eqn:encoder_form} and \eqref{eqn:decoder_form} in the main paper.  Recall the encoder in the XVAE encodes the information in $\boldsymbol{x}_t$, $t=1,\ldots,n_t$, using a three-layer perceptron neural network. The three-layer perceptron neural network has the form of:
\begin{equation}\label{eqn:encoder_weights}
    \begin{split}
         \boldsymbol{h}_{1,t} &= \mathrm{relu} (\boldsymbol{W}_1\boldsymbol{x}_t +\boldsymbol{b}_1),\\
    \boldsymbol{h}_{2,t} &= \mathrm{relu} (\boldsymbol{W}_2\boldsymbol{h}_{1,t} +\boldsymbol{b}_2),\\
    \log\bzeta^2_t &= \boldsymbol{W}_3\boldsymbol{h}_{2,t} +\boldsymbol{b}_3,\\
    \boldsymbol{\mu}_t &= \mathrm{relu}(\boldsymbol{W}_4\boldsymbol{h}_{2,t} +\boldsymbol{b}_4).
    \end{split}
\end{equation}
The weights $\{\boldsymbol{W}_1,\ldots,\boldsymbol{W}_4\}$ and biases $\{\boldsymbol{b}_1,\ldots,\boldsymbol{b}_4\}$ combined are denoted by $\bphi_e$ and are shared across time replicates. Here, $\boldsymbol{W}_1$ is a $K \times n_s$ weight matrix and $\boldsymbol{W}_2,\ldots,\boldsymbol{W}_4$ are all $K \times K$ matrices, and $\boldsymbol{b}_1,\ldots,\boldsymbol{b}_4$ are all $K \times 1$ vectors. Then we use a Gaussian encoder $\bz_t\sim N\{\boldsymbol{\mu}_t, \mathrm{diag}(\bzeta^2_t)\}$ and we have
\begin{align}\label{eqn:q_phi_e_expression}
    q_{\bphi_e}(\boldsymbol{z}_t\mid \boldsymbol{x}_t) =\frac{1}{(2\pi)^{n/2}\prodK \zeta_{kt}}\exp\left\{-\sumK\frac{(z_{kt}-\mu_{kt})^2}{2\zeta^2_{kt}}\right\}.
\end{align}
\LZadd{Here, we opted to not use a heavy-tailed distribution for the variational distribution $q_{\bphi_e}(\bz_t\mid \bx_t)$ for a few reasons: (1) the variational distribution serves to ``approximate" the true posterior distribution $p_{\btheta}(\bz_t\mid\bx_t)$. Many variational Bayesian methods \citep[see Eq. (4) of][for example]{maceda2024variational} use a heteroskedastic Gaussian model $q_{\bphi_e}(\bz_t\mid \bx_t)$ to approximate $p_{\btheta}(\bz\mid \bx)$. This choice is theoretically supported by an adaptation of Bernstein--von Mises Theorem to the spatial context under a form of spatial mixing condition, which basically requires that observations become ``effectively independent" as the distance between them grows---a condition met by our compactly supported Wendland basis functions. Using the terminology outlined in \citet{bradley2005basic}, we can verify that our max-id model satisfies the more stringent $\phi$-mixing conditions. (2) We experimented with Pareto-tailed variational distributions and they underperformed compared to Gaussian distributions. Intuitively, the mean vector $\bmu_t$ in $q_{\bphi_e}(\bz_t\mid \bx_t)$ anchors the encoding's center, while the standard deviation $\bzeta_t$ determines the range of variation around this center. Since the prior distribution $p_{\btheta}(\bz)$ is already heavy-tailed, allowing the variational distribution to diverge too widely from the mean proved counterproductive. (3) The variational distribution is regularized by the evidence lower bound (ELBO), in which we try to minimize the KL distance between $q_{\bphi_e}(\bz_t\mid \bx_t)$ and $p_{\btheta}(\bz_t\mid\bx_t)$. As long as the ELBO converges, we believe using the heteroskedastic Gaussian model as the variational distribution is sufficient, as evidenced by the extensive simulation results presented in the paper.}

For the decoder, we also use a three-layer perceptron neural network:
\begin{equation}\label{eqn:decoder}
    \begin{split}
    \boldsymbol{l}_{1,t} &= \mathrm{relu} (\boldsymbol{W}_5\boldsymbol{z}_t +\boldsymbol{b}_5),\\
    \boldsymbol{l}_{2,t} &= \mathrm{relu} (\boldsymbol{W}_6 \boldsymbol{l}_{1,t} +\boldsymbol{b}_6),\\
    (\alpha_t,\bgamma_t^\top)^\top &=\mathrm{relu}(\boldsymbol{W}_7\boldsymbol{l}_{2,t} +\boldsymbol{b}_7),\\
    \by_t&=(\bW^{1/\alpha_t}\bz_t)^{\alpha_0},
    \end{split}
\end{equation}
in which $\bW=(\bw_1, \cdots, \bw_{n_s})^\top$ is a $n_s\times K$ matrix with its $j$th row being $\bw_j^\top=(\omega_{1j},\ldots, \omega_{Kj})$. The weights $\{\boldsymbol{W}_5,\ldots,\boldsymbol{W}_7\}$ and biases $\{\boldsymbol{b}_5,\ldots,\boldsymbol{b}_7\}$ combined are denoted by $\bphi_d$, in which $\boldsymbol{W}_5$ and $\boldsymbol{W}_6$ are both $K\times K$ matrices while $\boldsymbol{W}_7$ is a $(K+1)\times K$ matrix, and $\boldsymbol{b}_5$ and $\boldsymbol{b}_6$ are $K\times 1$ vectors while $\boldsymbol{b}_7$ is a $(K+1)\times 1$ vector.

\subsection{ELBO empirical estimates}\label{sec:elbo_MC}
Since $p_{\bphi_d}(\bz\mid \bx)$ is unknown, we rewrite the marginal likelihood $p_{\bphi_d}(\bx)$ as follows
\begin{align*}
     \log p_{\bphi_d}(\bx) 
    = \mathbb{E}_{\bZ\sim q_{\bphi_e}(\bz\;\mid\;\bx)}\left\{\log\frac{p_{\bphi_d}(\bx,\bZ)}{q_{\bphi_e}(\bZ\mid\bx)}\right\}+D_{KL}\left\{q_{\bphi_e}(\bz\mid\bx)\;||\;p_{\bphi_d}(\bz\mid \bx)\right\}.
\end{align*}
Therefore, the ELBO can be approximated by Monte Carlo as
\begin{equation}\label{eqn:ELBO_approx}
    \mathcal{L}_{\bphi_e,\bphi_d}(\boldsymbol{x})\approx \frac{1}{L}\sum_{l=1}^L\log\frac{p_{\bphi_d}(\bx,\bZ^{l})}{q_{\bphi_e}(\bZ^{l}\mid\bx)},
\end{equation}
where $\bZ^1,\ldots, \bZ^L$ are independent draws from $q_{\bphi_e}(\cdot\mid\bx)$. 
If there are replicates of the process, $\bx_1,\ldots,\bx_{n_t}$,  then $\sum_{t=1}^{n_t}\mathcal{L}_{\bphi_e,\bphi_d}(\boldsymbol{x}_t)$ 
is considered.

\subsection{Reparameterization trick}\label{sec:reparam_trick}
Recall that the ELBO is defined as
\begin{equation*}
    \mathcal{L}_{\bphi_e,\bphi_d}(\boldsymbol{x}_t)=\mathbb{E}_{q_{\bphi_e}(\bz_t\mid\bx_t)}\left\{\log\frac{p_{\bphi_d}(\bx_t,\bZ_t)}{q_{\bphi_e}(\bZ_t\mid\bx_t)}\right\},
\end{equation*}
which can be approximated using Monte Carlo as shown in Eq.~\eqref{eqn:ELBO_approx}. However, it is not straightforward to approximate the partial derivative of the ELBO with respect to $\bphi_e$ (denoted by $\nabla_{\bphi_e}\mathcal{L}_{\bphi_e,\bphi_d}$), which is needed in the stochastic gradient descent algorithm. Since the expectation in ELBO is taken under the distribution $q_{\bphi_e}(\bz_t\mid\bx_t)$.
\begin{equation*}
    \nabla_{\bphi_e}\mathcal{L}_{\bphi_e,\bphi_d}(\boldsymbol{x}_t)\neq \mathbb{E}_{q_{\bphi_e}(\bZ_t\mid\bx_t)}\left\{\nabla_{\bphi_e}\log\frac{p_{\bphi_d}(\bx_t,\bZ_t)}{q_{\bphi_e}(\bZ_t\mid\bx_t)}\right\},
\end{equation*}

To simplify the gradient of the ELBO with respect to $\bphi_e$, we express $\bZ_t$ in terms of a random vector $\boldsymbol{\eta}_t$ that is independent of $\bx_t$ and $\bphi_e$:
\begin{align*}
     \bZ_t&=\bmu_t + \bzeta_t \odot \boldsymbol{\eta}_t,
\end{align*}
in which $\boldsymbol{\eta}_t=(\eta_{1t},\eta_{2t},\cdots, \eta_{Kt})^\top$ and $\eta_{kt}\stackrel{\text{i.i.d.}}{\sim} N(0,1)$. As a consequence, the Jacobian of the transformation from $\bZ_t$ to $\boldsymbol{\eta}_t$ is
\begin{equation*}
    J(\boldsymbol{\eta}_t)=\frac{\partial \bz_t}{\partial \boldsymbol{\eta}_t}=\mathrm{diag}(\bzeta_t),
\end{equation*}
and we can apply a change-of-variable formula to the multiple integral in the ELBO:
\begin{small}
    \begin{align*}
    \mathcal{L}_{\bphi_e,\bphi_d}(\boldsymbol{x}_t) &= \int \log\frac{p_{\bphi_d}(\bx_t,\bz_t)}{q_{\bphi_e}(\bz_t\mid\bx_t)}q_{\bphi_e}(\bz_t\mid\bx_t)\mathrm{d}\bz_t\\
    &=\int \log\frac{p_{\bphi_d}(\bx_t,\bmu_t + \bzeta_t \odot \boldsymbol{\eta}_t)}{q_{\bphi_e}(\bmu_t + \bzeta_t \odot \boldsymbol{\eta}_t\mid\bx_t)}q_{\bphi_e}(\bmu_t + \bzeta_t \odot \boldsymbol{\eta}_t\mid\bx_t)\left|\mathrm{det}\{J(\boldsymbol{\eta}_t)\}\right|\mathrm{d}\boldsymbol{\eta}_t\\
    &=\int \log\frac{p_{\bphi_d}(\bx_t,\bmu_t + \bzeta_t \odot \boldsymbol{\eta}_t)}{q_{\bphi_e}(\bmu_t + \bzeta_t \odot \boldsymbol{\eta}_t\mid\bx_t)}\prodK\frac{\exp(-\eta_{kt}^2/2)}{2\pi}\mathrm{d}\boldsymbol{\eta}_t=\mathbb{E}_{p(\boldsymbol{\eta}_t)}\left\{\log\frac{p_{\bphi_d}(\bx_t,\bmu_t + \bzeta_t \odot \boldsymbol{\eta}_t)}{q_{\bphi_e}(\bmu_t + \bzeta_t \odot \boldsymbol{\eta}_t\mid\bx_t)}\right\}.
\end{align*}
\end{small}
On the last line, we plugged $\bZ_t=\bmu_t + \bzeta_t \odot \boldsymbol{\eta}_t$ in Eq.~\eqref{eqn:q_phi_e_expression} to obtain the clean form, and $p(\boldsymbol{\eta}_t)$ denotes the joint density of $K$ independent standard normal variables. Therefore, we can now form simple Monte Carlo estimators of $\mathcal{L}_{\bphi_e,\bphi_d}$, $\nabla_{\bphi_e}\mathcal{L}_{\bphi_e,\bphi_d}$, and $\nabla_{\bphi_d}\mathcal{L}_{\bphi_e,\bphi_d}$. More specifically,
\begin{align*}
    \mathcal{L}_{\bphi_e,\bphi_d}(\boldsymbol{x}_t)&\approx \frac{1}{L}\sum_{l=1}^L \log\frac{p_{\bphi_d}(\bx_t,\bmu_t + \bzeta_t \odot \boldsymbol{\eta}^l)}{q_{\bphi_e}(\bmu_t + \bzeta_t \odot \boldsymbol{\eta}^l\mid\bx_t)}\\
    &=\frac{1}{L}\sum_{l=1}^L \log p_{\bphi_d}(\bx_t\mid\bZ^l) + \frac{1}{L}\sum_{l=1}^L \log p_{\bphi_d}(\bZ^l)-\frac{1}{L}\sum_{l=1}^L \log p(\boldsymbol{\eta}^l) +\sumK \log \zeta_{kt},
\end{align*}
where $\boldsymbol{\eta}^l$, $l=1,\ldots, L$, are independent draws from $N(\boldsymbol{0}_K,\boldsymbol{I}_{K\times K})$ and $\bZ^l=\bmu_t + \bzeta_t \odot \boldsymbol{\eta}^l$. Also, $p_{\bphi_d}(\bx_t\mid\bz^l)$ and $p_{\bphi_d}(\bz^l)$ are defined in Eqs.~\eqref{eqn:lik_form} and \eqref{eqn:prior_form} of the main paper. Furthermore,
\begin{equation*}
    \nabla_{\bphi_e}\mathcal{L}_{\bphi_e,\bphi_d}(\boldsymbol{x}_t)\approx \frac{1}{L}\sum_{l=1}^L \nabla_{\bphi_e}\log p_{\bphi_d}(\bx_t\mid\bZ^l) + \frac{1}{L}\sum_{l=1}^L \nabla_{\bphi_e}\log p_{\bphi_d}(\bZ^l) +\sumK \nabla_{\bphi_e}\log \zeta_{kt}
\end{equation*}
and
\begin{equation*}
    \nabla_{\bphi_d}\mathcal{L}_{\bphi_e,\bphi_d}(\boldsymbol{x}_t)\approx \frac{1}{L}\sum_{l=1}^L \nabla_{\bphi_d}\log p_{\bphi_d}(\bx_t\mid\bZ^l) + \frac{1}{L}\sum_{l=1}^L \nabla_{\bphi_d}\log p_{\bphi_d}(\bZ^l).
\end{equation*}


\subsection{Effect of knot locations}\label{sec:data_driven_knots}
\begin{algorithm}
\setstretch{1.21}
\SetKwInOut{Input}{Input}
\caption{Derive data-driven knots}\label{alg:kmeans}
\Input{$\kappa$: number of possible clusters from each time replicate\\
\hspace*{-0.5em} $\{\bx_t:t=1,\ldots, n_t\}$: observed $n_t$ spatial replicates\\
\hspace*{-0.5em} $\{\bs_j:j=1,\ldots, n_s\}$: coordinates of the observed sites in the domain $\mathcal{S8z}$\\
\hspace*{-0.5em} $u$: a high quantile level between $0$ and $1$\\
\hspace*{-0.5em} $\lambda$: minimum distance between knots} 
\KwResult{\\
\Indp $K$: number of data-driven knots\\
\hspace*{-0.5em} $\{\Tilde{\bs}_1, \ldots, \Tilde{\bs}_{K}\}$: the coordinates of data-driven knots\\
\hspace*{-0.5em} $r$: basis function radius shared by all knots}

\vskip 0.2cm
{$x^*\gets$  $u$th quantile of the concatenated vector $(\bx_1^\top,\cdots,\bx_{n_t}^\top)^\top$\tcp*{A high threshold}
$Knots \gets $ list()\tcp*{Empty list for the chosen knot locations}
\For{$t\gets1,n_t$}{
    $\mathcal{E}_t\gets$  where($\bx_t>x^*$)\tcp*{Indices of the locations exceeding the threshold}
    $wss\_vec\gets $ repeat(\texttt{NA}, $\kappa$)\tcp*{\small Vector for the total within-cluster sums of squares}
    \For{nclust $\gets1,\kappa$}{
       $init\_centers \gets$ sample($\{\bs_j:j\in \mathcal{E}_t\},\; nclust$) \tcp*{$nclust$ initial centers}
        $res\_tmp \gets$ kmeans($\{\bs_j:j\in \mathcal{E}_t\}, \; init\_centers$)\tcp*{\small \citet{hartigan1979algorithm}}
        $wss\_vec\;[nclust] \gets res\_tmp$ [\texttt{"tot.withinss"}];
    }
    $best\_nclust\gets $ which.max($wss\_vec$)\tcp*{Determine the best number of clusters}
    $init\_centers \gets$ sample($\{\bs_j:j\in \mathcal{E}_t\},\; best\_nclust$);\\
    \hspace*{-0.5em}$res \gets$ kmeans($\{\bs_j:j\in \mathcal{E}_t\}, \; init\_centers$);\\ 
    \hspace*{-0.5em}$Knots\gets $ append($Knots$, $res$ [\texttt{"centers"}])\tcp*{Cluster centers as knots}
}
$Knots \gets$ remove points from $Knots$ so that all knots are no closer than $\lambda$;\\
$K \gets$ length($Knots$);\\
$\{\Tilde{\bs}_1, \ldots, \Tilde{\bs}_{K}\}\gets Knots$;\\
$r\gets $ the minimum radius such that any $\bs\in \mathcal{S}$ is covered by at least one basis function.
}
\label{algo:knots}
\end{algorithm}

Algorithm \ref{alg:kmeans} outlines how we derive the data-driven knots. First, we perform $k$-means clustering on each time replicate of the data input 
to determine how many clusters of high values ($u > 0.95$) there are, and we then train XVAE with $K$ being the number of clusters combined for all time replicates. Second, the cluster centroids are used as knot locations $\{\Tilde{\bs}_1, \ldots, \Tilde{\bs}_{K}\}$. To initialize $\bW$ (defined in Eq.~\eqref{eqn:decoder}) using the Wendland basis functions $\omega_k(\bs, r)=\{1-d(\bs, \Tilde{\bs}_k)/r\}^2_+$, $k=1,\ldots, K$, 
we pick $r$ by looping over clusters and calculating the Euclidean distance of each point within one cluster from its centroid, and we set the maximum of all distances as the initial $r$. If $r$ is not large enough for all $\omega_k(\bs, r)$ to cover the entire spatial domain, we gradually increase $r$ until the full coverage is met. 
\begin{figure}[!t]
    \centering
    \captionsetup[subfigure]{justification=centering}
    \begin{subfigure}[t]{.328\linewidth}
  \hspace*{-0.15cm}\includegraphics[height=0.9\linewidth]{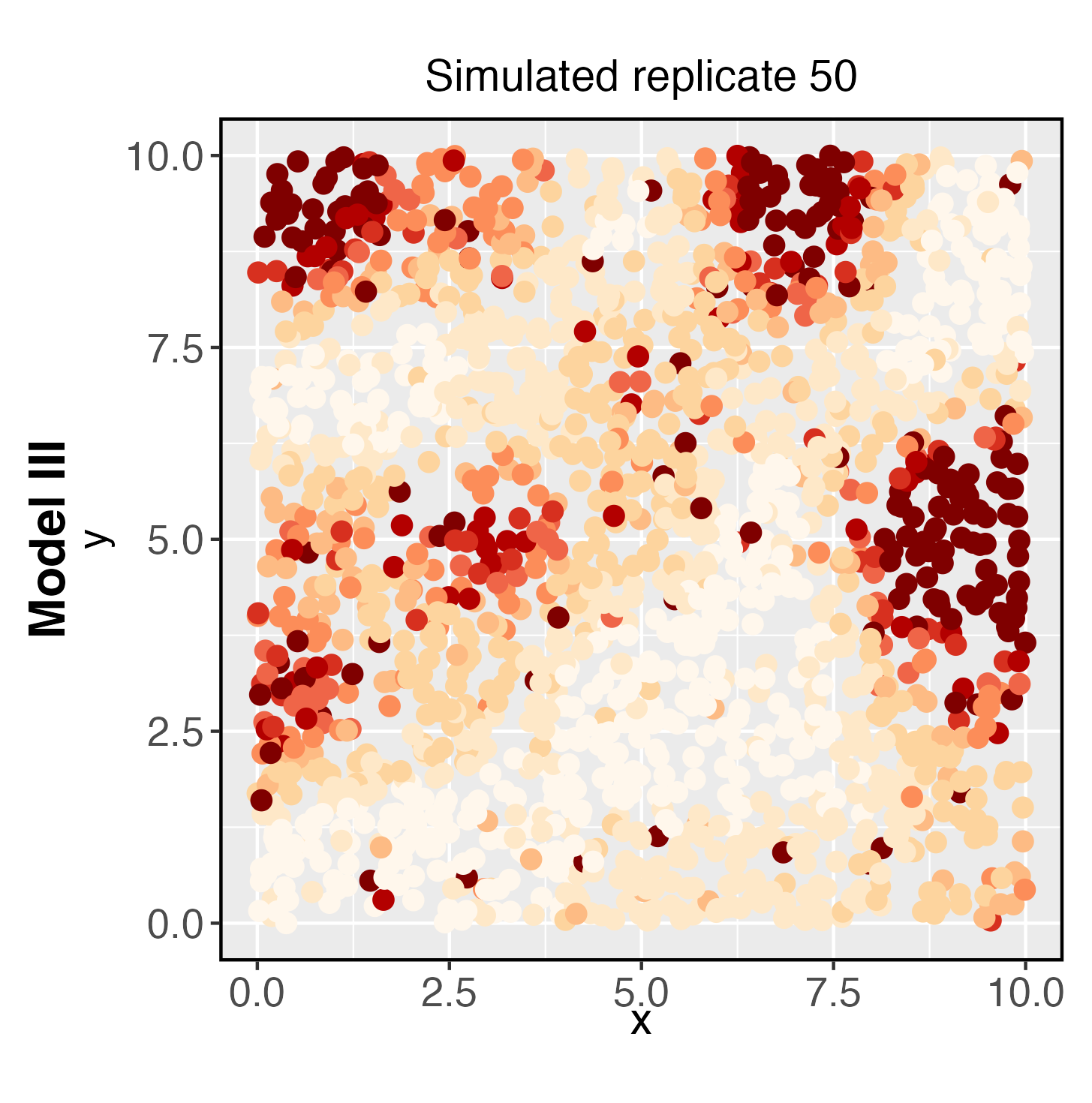}
    \caption{Input replicate at time 50}
    \label{subfig:sim1}
    \end{subfigure}
    \begin{subfigure}[t]{.328\linewidth}
\centering\includegraphics[height=0.9\linewidth]{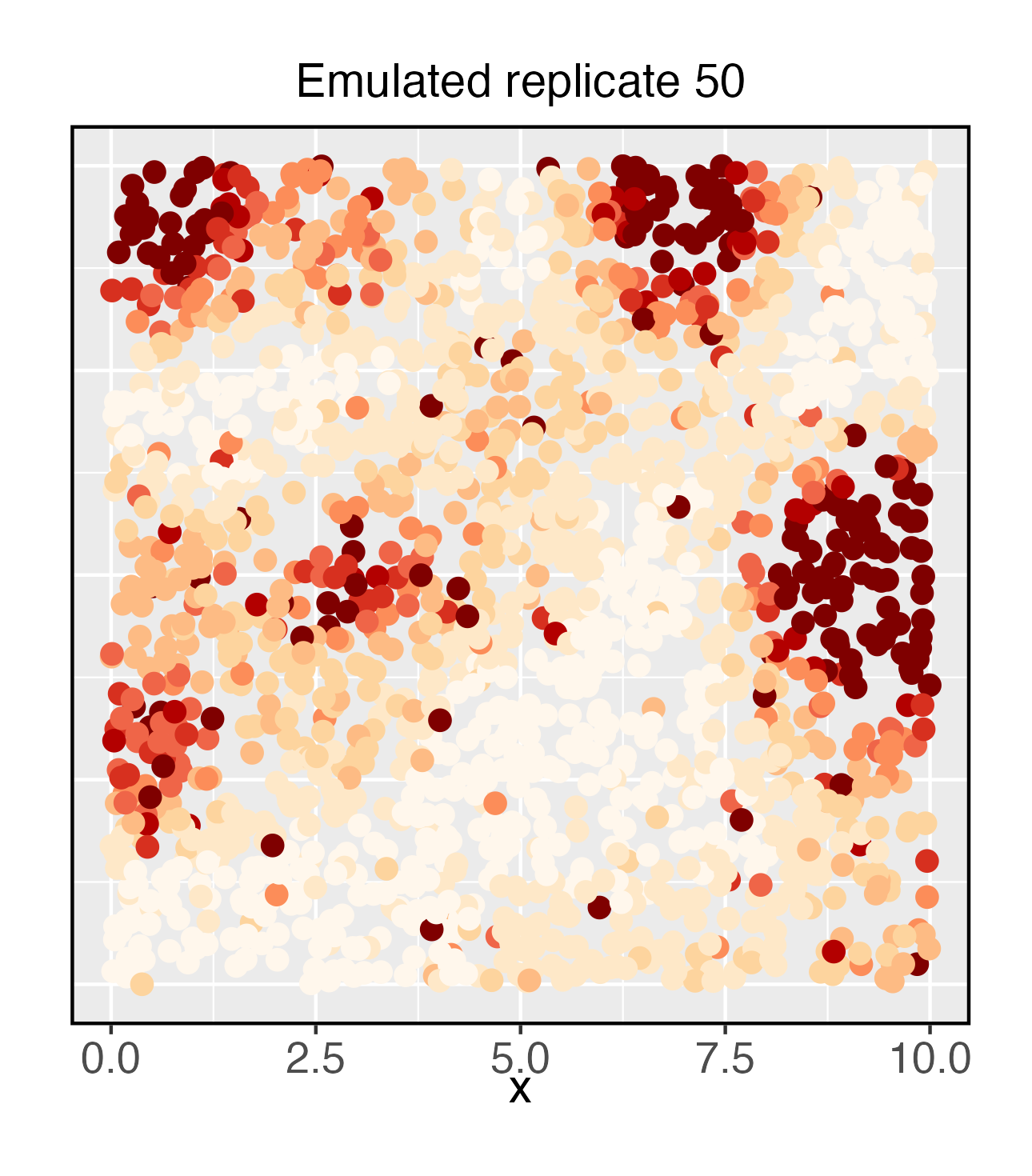}
\caption{Emulation with the true knots}
    \label{subfig:emu_true_knots}
    \end{subfigure}
    \begin{subfigure}[t]{.328\linewidth}
    \centering\includegraphics[height=0.9\linewidth]{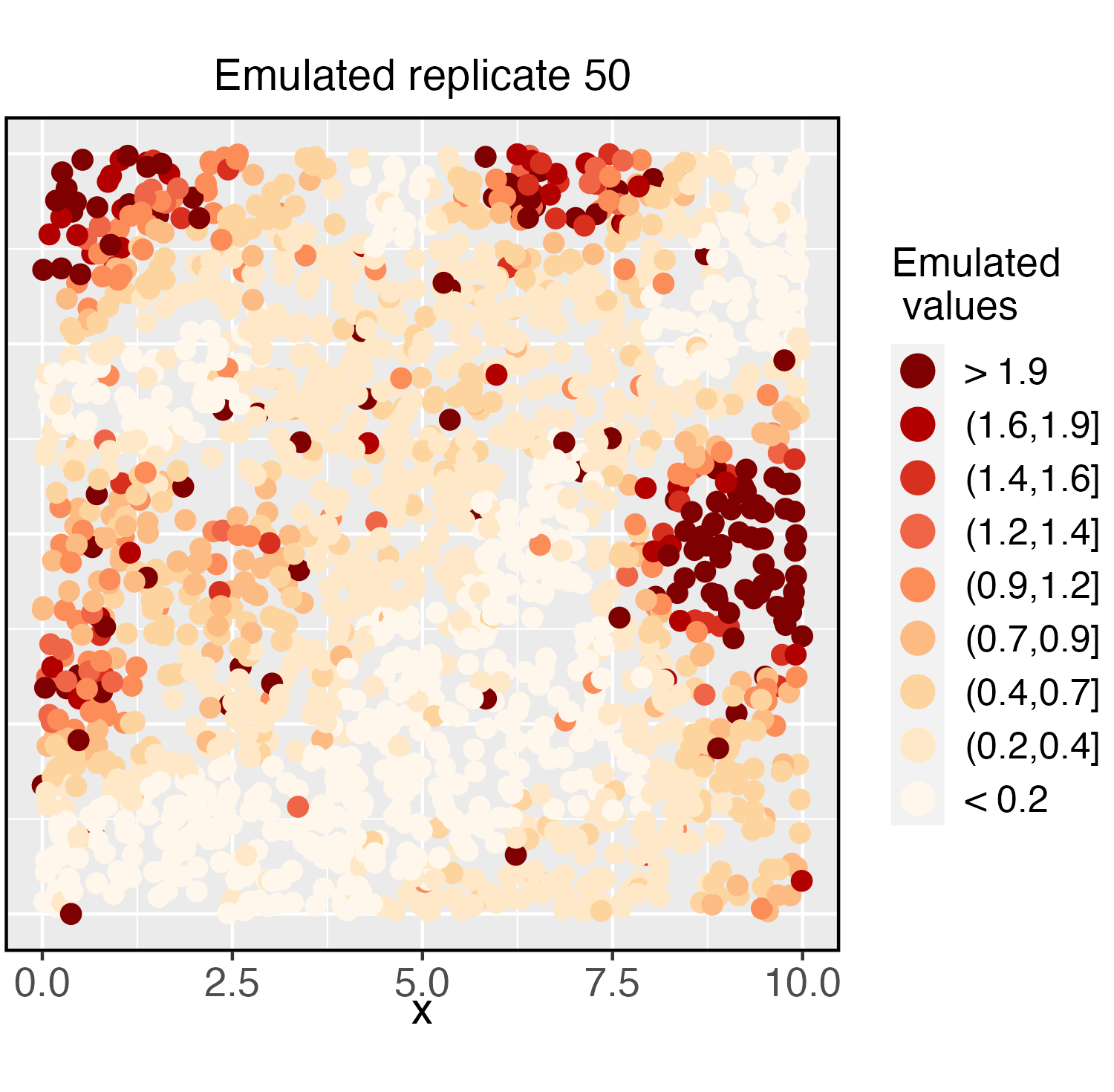}
    \caption{Emulation with the data-driven knots}
    \label{subfig:emu_data_driven}
    \end{subfigure}

    \begin{subfigure}[t]{.328\linewidth}
  \hspace*{0.4cm}\includegraphics[height=0.85\linewidth]{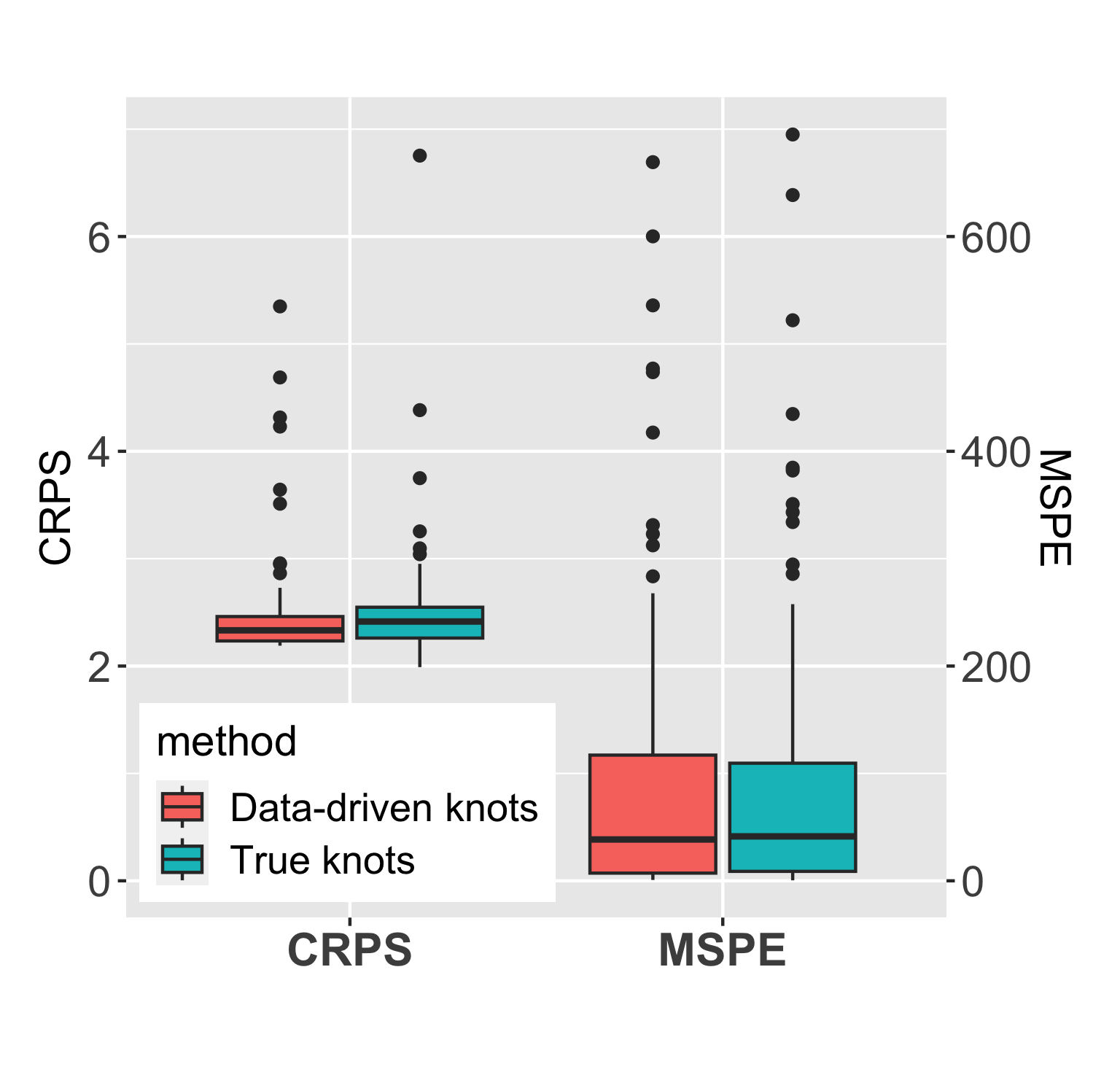}
    \caption{Spatial predictions}
    \label{subfig:pred1}
    \end{subfigure}
    \begin{subfigure}[t]{.328\linewidth}
\hspace*{-0.2cm}\includegraphics[height=0.9\linewidth]{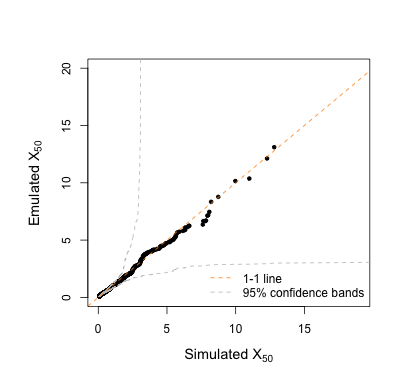}
\caption{QQ-plot with the true knots}
    \label{subfig:qq_true_knots}
    \end{subfigure}
    \begin{subfigure}[t]{.328\linewidth}
\hspace*{-0.6cm}\includegraphics[height=0.9\linewidth]{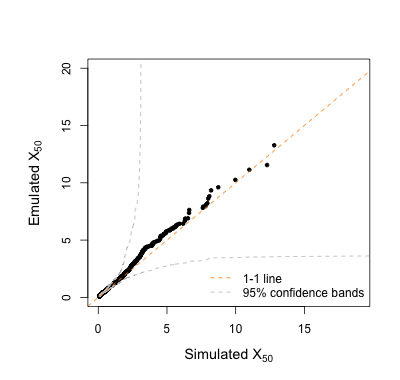}
\caption{QQ-plot with the data-driven knots}
    \label{subfig:qq_data_driven}
    \end{subfigure}
\caption{Comparing the emulation results from initializing the XVAE with the true knots and data-driven knots for data simulated from Model \ref{modelFlex}.}
\label{fig:knots_comp}
\end{figure}

Figure~\ref{fig:knots_comp} displays the results from emulating the data set simulated from Model \ref{modelFlex} while initializing the weights differently using the true knots and the data-driven knots. 
Figures~\ref{subfig:emu_true_knots} and \ref{subfig:emu_data_driven} show one emulation replicate from the decoder for the 50th time replicate. We see that both figures exhibit a striking resemblance to the original simulation, and from visual examination, we can see little difference in the quality of the emulations. Figure~\ref{subfig:pred1} compares the spatial predictions on the 100 holdout locations from the two emulations. The CRPS and MSPE values are again very similar for emulations based on the true knots and data-driven knots.

Figures~\ref{subfig:qq_true_knots} and \ref{subfig:qq_data_driven} compare the simulated and emulated spatial fields of the 50th replicate by plotting their quantiles against each other (when pooling the spatial data into the same plot). We see that both emulations align very well with the simulated data set. Although this might not be the most appropriate way of evaluating the quality of the emulations because there is spatial dependence and non-stationarity within each spatial replicate, QQ-plots still provide value in determining whether the spatial distribution is similar at all quantile levels, which is complementary to the empirical $\chi_{ij}(u)$ described in Section~\ref{subsec:dependence_res}.

Overall, Figure~\ref{fig:knots_comp} demonstrates that emulation based on data-driven knots performs similarly to using the true knots. This justifies applying the XVAE on a data set stemming from a misspecified model (i.e., Models \ref{modelGP} or \ref{modelMaxStable}, for which the data-generating process does not involve any Wendland basis functions). Thus, we will use the XVAE with data-driven knots in all remaining simulation experiments and the real data application.

\subsection{Stochastic gradient descent optimization} \label{sec:SGD}
A major advantage of approximating the ELBO as presented in Eq.~\eqref{eqn:ELBO_approx} lies in the ability to perform joint optimization over all parameters ($\bphi_e$ and $\bphi_d$) using stochastic gradient descent (SGD). This optimization is efficiently implemented using a tape-based automatic differentiation module called \texttt{autograd} within the \texttt{R} package \texttt{torch} \citep{rtorch}. Built on \texttt{PyTorch}, this package offers rapid array computation, leveraging robust GPU acceleration for enhanced computational efficiency. It stores all the data inputs and VAE parameters in the form of \texttt{torch} tensors, which are similar to \texttt{R} multi-dimensional arrays but are designated for fast and scalable matrix calculations and differentiation. 

Algorithm \ref{algo:SGD} outlines the pseudo-code for the ELBO optimization of our XVAE. As the ELBO is constructed within each iteration of the SGD algorithm, the \texttt{autograd} module of \texttt{torch} tracks the computations (i.e., linear operations and ReLU activation on the tensors) in all layers of the encoding/decoding neural networks, and then performs the reverse-mode automatic differentiation via a backward pass through the graph of tensor operations to obtain the partial derivatives or the gradients with respect to each weight and bias parameter \citep{keydana2023deep}. 

The iterative steps of Algorithm \ref{algo:SGD} involve advancing in the direction of the gradients on the ELBO $\sum_{t=1}^{n_t}\mathcal{L}_{\bphi_e,\bphi_d}(\boldsymbol{x}_t)$ (or a minibatch version $\sum_{t\in \mathcal{M}}\mathcal{L}_{\bphi_e,\bphi_d}(\boldsymbol{x}_t)$, $\mathcal{M}\subset \{1,\ldots,n_t\}$). This is guided by a user-defined learning rate $\nu>0$. To enhance stability, a convex combination of the prior update and the current gradient incorporates a momentum parameter $\zeta_m$ into the optimization process \citep{polyak1964momentum}. Notably, our experiments indicate that setting the number of Monte Carlo samples $L$ to 1 suffices, provided the minibatch size $|\mathcal{M}|$ is adequately large, aligning with the recommendation by \citet{kingma2013auto}. Upon successful training of $\bphi_e$ and $\bphi_d$, the encoder and decoder can be efficiently executed as needed. Leveraging the amortized nature of our estimation approach, these processes generate an ensemble of numerous samples, all originating from the same (approximate) distribution as the spatial inputs.

Importantly, our XVAE algorithm can scale efficiently to massive spatial data sets. The existing max-stable, inverted-max-stable, and other spatial extremes models are limited to applications with less than approximately $1,000$ locations using a full likelihood or Bayesian approach; see Section~\ref{sec:comps} of the main paper for more details on these alternative approaches. By contrast, our approach can fit a globally non-stationary spatial extremes process, with parameters evolving over time, to a data set of unprecedented spatial dimension of more than $16,000$ locations, and also facilitates data emulation in such dimensions. See Section~\ref{sec:data_analysis} of the main paper for details.


\begin{algorithm}
\setstretch{1.21}
\SetKwInOut{Input}{Input}
\caption{Stochastic Gradient Descent with momentum to maximize the ELBO defined in Eq.~\eqref{eqn:ELBO_approx}. We set $|\mathcal{M}| = n_t$ and $L = 1 $ in our experiments.}\label{alg:sim}
\Input{Learning rate $\nu>0$, momentum parameter $\zeta_m\in (0,1)$, convergence tolerance $\delta$\\
\hspace*{-0.5em} $\{\bx_t:t=1,\ldots, n_t\}$: observed $n_t$ spatial replicates\\
 \hspace*{-0.5em} $q_{\bphi_e}(\bz_t\mid \bx_t)$: inference model\\
 \hspace*{-0.5em} $p_{\bphi_d}(\bx_t, \bz_t)$: generative data model}
 
\KwResult{Optimized parameters $\bphi_e,\;\bphi_d$}
\vskip 0.2cm
{$j \gets $ 0;\\
\hspace*{-0.5em}$K \gets $ Number of data-driven knots;\\
\hspace*{-0.5em}$\{\Tilde{\bs}_1, \ldots, \Tilde{\bs}_{K}\} \gets $ Specify knot locations\tcp*{See Section~\ref{sec:data_driven_knots} for details}
$r \gets $ Basis function radius shared by all knots;\\
\hspace*{-0.5em}$(\bphi_e^{(j)},\bphi_d^{(j)})^\top \gets$ Initialized parameters\tcp*{See Section \ref{sec:find_ini_vals} for details}
$\bv\gets\boldsymbol{0}$\tcp*{Velocity}
$\bL\gets $ repeat(\texttt{-Inf}, 200)\tcp*{A vector of 200 negative infinite values}
}
\While{$|\mathrm{mean}\{\bL[(j-200):(j-101)]\}-\mathrm{mean}\{\bL[(j-100):j]\}|> \delta$}{
  $\mathcal{M}\sim \{1, \ldots, n_t\}$\tcp*{Indices for the random minibatch}
  $\eta_{kt}\stackrel{\mathrm{i.i.d.}}{\sim} \mathrm{Normal}(0,1)$, $k=1,\ldots,K$, $t\in \mathcal{M}$\tcp*{Reparameterization trick}
  \For{$t\in \mathcal{M}$}{
    $(\bmu_t^\top,\log \bzeta_t^\top)^\top \gets \mathrm{EncoderNeuralNet}_{\bphi_e^{(j)}}(\bx_t)$;\\
  \hspace*{-0.5em}$\bz_t\gets\bmu_t + \bzeta_t \odot \boldsymbol{\eta}_t$;\\
  \hspace*{-0.5em}$(\alpha_t, \bgamma_t^\top)^\top \gets \mathrm{DecoderNeuralNet}_{\bphi_d^{(j)}}(\bz_t)$;\\
  \hspace*{-0.5em}Calculate $q_{\bphi_e^{(j)}}(\bz_t\mid\bx_t)$, $p_{\bphi_d^{(j)}}(\bx_t\mid\bz_t)$ and $p_{\bphi_d^{(j)}}(\bz_t)$\tcp*{See Eq.~\eqref{eqn:encoder_form}-\eqref{eqn:prior_form}}
    }
  Obtain the ELBO $\mathcal{L}_{\bphi_e^{(j)},\bphi_d^{(j)}}(\mathcal{M})=\sum_{t\in\mathcal{M}}\mathcal{L}_{\bphi_e^{(j)},\bphi_d^{(j)}}(\boldsymbol{x}_t)$ and its gradients $\boldsymbol{J}_\mathcal{L} = \{\nabla_{\bphi_e,\bphi_d}\mathcal{L}_{\bphi_e,\bphi_d}(\mathcal{M})\}(\bphi_e^{(j)},\bphi_d^{(j)})$;\\
  \hspace*{-0.5em}Compute velocity update: $\bv\gets\zeta_m \bv +\nu \boldsymbol{J}_\mathcal{L}$;\\
  \hspace*{-0.5em}Apply update: $(\bphi_e^{(j+1)},\bphi_d^{(j+1)})^\top\gets (\bphi_e^{(j)},\bphi_d^{(j)})^\top + \bv$;\\
  \hspace*{-0.5em} $\bL\gets (\bL^\top,\mathcal{L}_{\bphi_e^{(j)},\bphi_d^{(j)}}(\mathcal{M}))^\top$ \tcp*{Add the latest ELBO value to the vector $\bL$}
  $j\gets j+1$;
}
\label{algo:SGD}
\end{algorithm}

\subsection{Finding starting values}\label{sec:find_ini_vals}
In finding a reasonable starting values of parameters in XVAE, we choose $\alpha_0=1/4$ and $\tau=1$ for the white noise process, and $\alpha=1/2$ for the latent exponentially-tilted PS variables. From Eq.~\eqref{eqn:low_rank_representation}, $(\by^{1/\alpha_0}_1, \cdots, \by^{1/\alpha_0}_{n_t})=\bW^{1/\alpha} (\bz_1, \cdots, \bz_{n_t})$, in which $\bW$ is defined in Eq.~\eqref{eqn:decoder}.
Since $\{\epsilon_t(\bs):t=1,\ldots, n_t\}$ are treated as error processes, we have $\bx_t\approx\by_t$ and thus a good approximation for $\boldsymbol{z}_t$ can be obtained via projection:
\begin{align*}
    \hat{\bz}_t \approx \{(\bW^{\frac{1}{\alpha}})^\top \bW^{\frac{1}{\alpha}}\}^{-1} (\bW^{\frac{1}{\alpha}})^\top \bx_t^{\frac{1}{\alpha_0}}, \; t=1,\ldots,n_t.
\end{align*}
We use QR decomposition to solve the following linear system to get the initial value $\boldsymbol{W}_1^{(0)}$:
    $(\hat{\bz}_1,\cdots,\hat{\bz}_{n_t})^\top=(\bx_1,\cdots,\bx_{n_t})^\top\boldsymbol{W}_1^\top$. 
Also, set $\boldsymbol{b}_1^{(0)}=(0,\ldots,0)^\top$. The initial values of $\boldsymbol{h}_{1,t}$ in Eq.~\eqref{eqn:encoder_weights} satisfy $\boldsymbol{h}_{1,t}\approx \hat{\bz}_t$, $t=1,\ldots, n_t$.

Furthermore, we set $\boldsymbol{W}_2^{(0)}$ and $\boldsymbol{W}_4^{(0)}$ to be identity matrices. All remaining parameters, both variational and generative, were initialized by random sampling from $N(0, 0.01)$. 

To optimize the ELBO following the steps outlined in Algorithm \ref{algo:SGD}, we monitor the convergence of the ELBO via calculating the difference in the average ELBO values in the latest 100 iterations (or epochs) and the 100 iterations before that. Once the difference is less than $\delta=10^{-6}$, we stop the stochastic gradient search.

\section{Additional results from the simulation study}\label{appendix:additional_results}
We show additional figures that are complementary to those included in Section~\ref{sec:simulation} of the main paper. Figure~\ref{fig:comps_across_models1} displays the simulated data sets from Models~\ref{modelGP}, \ref{modelAI}, \ref{modelAD} and \ref{modelMaxStable} and their emulated fields using both XVAE and \texttt{hetGP}. See Figure~\ref{fig:comps_across_models} of the main paper for comparison for Model \ref{modelFlex}. Figure~\ref{fig:qqplots_across_models} displays QQ-plots from the spatial data to compare the overall distributions of the simulated and emulated data sets. Figure~\ref{fig:chi_ests2} compares the empirically estimated $\chi_h(u)$ as described in Section~\ref{subsec:dependence_res} from the data replicates simulated from Models~\ref{modelGP}, \ref{modelAI}, \ref{modelAD} and \ref{modelMaxStable} and their emulations at three different distances $h=0.5,2,5$ under the working assumption of stationarity. Figure~\ref{fig:ARE_comps2} shows the estimates of $\mathrm{ARE}_\psi(u)$ defined in Eq.~\eqref{eqn:ARE_monte_carlo} of the main paper, $\psi=0.05$, for both simulations and XVAE emulations under Models~\ref{modelGP}, \ref{modelAI}, \ref{modelAD} and \ref{modelMaxStable}. See Figure~\ref{fig:chi_ests} of the main paper for $\chi_h(u)$ and $\mathrm{ARE}_\psi(u)$ estimates for Model \ref{modelFlex}. Lastly, Figure~\ref{fig:coverage} shows coverage probabilities of $\{\gamma_{kt}: k=1,\ldots, K\}$ for $t=1$ from fitting Model \ref{modelFlex}. Coverage probabilities when $\gamma_k=0$ are poor, though upper bounds of credible intervals are consistently less than $10^{-6}$.
\begin{figure}
    \centering
    \includegraphics[height=0.245\linewidth, trim={0 1.6cm 0 0}, clip]{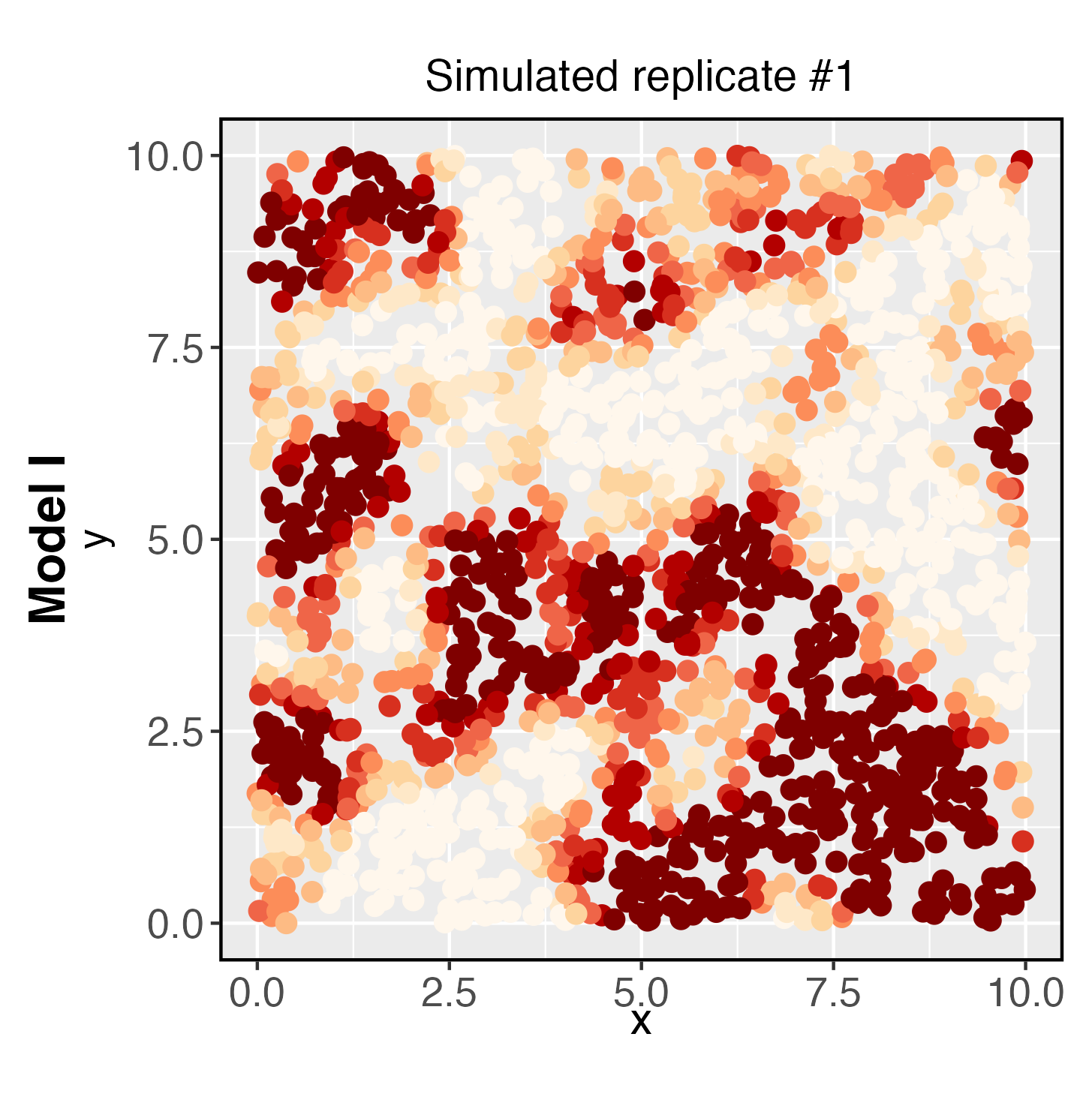}
    \includegraphics[height=0.245\linewidth, trim={0 1.6cm 0 0}, clip]{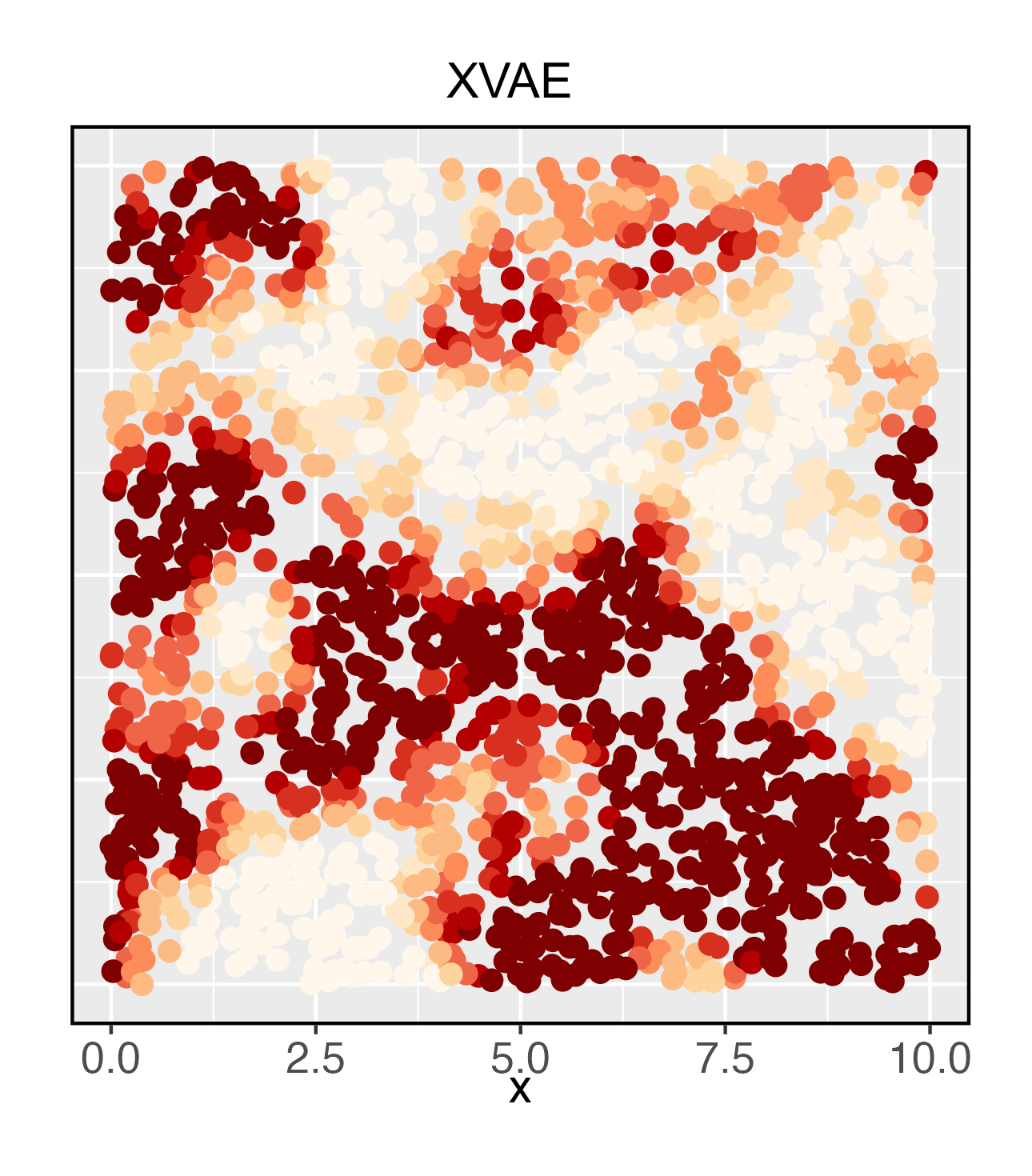}
    \includegraphics[height=0.245\linewidth, trim={0 1.6cm 0 0}, clip]{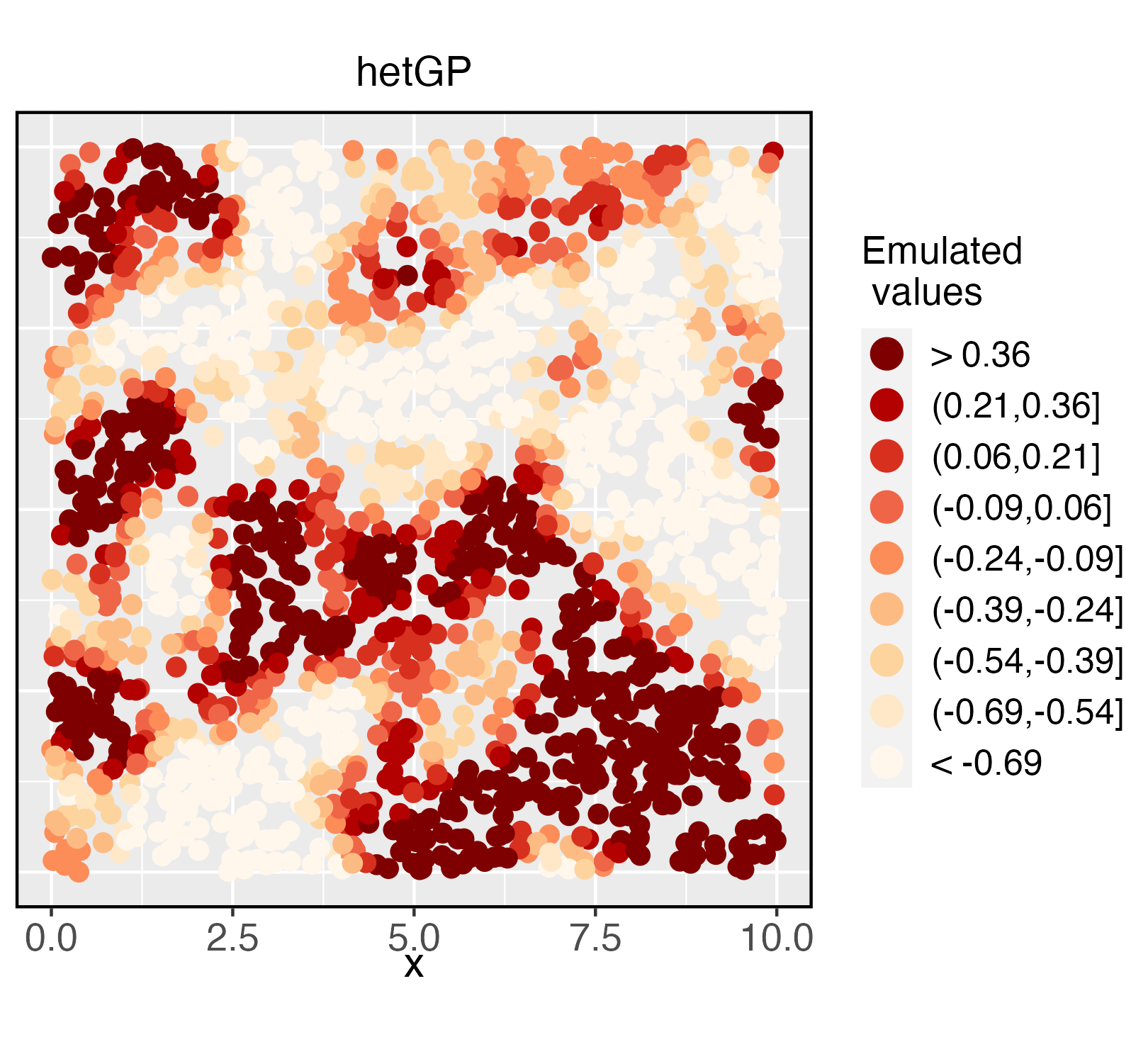}
    \vskip 0.3cm
    
    \includegraphics[height=0.22\linewidth, trim={0 1.6cm 0 1.12cm}, clip]{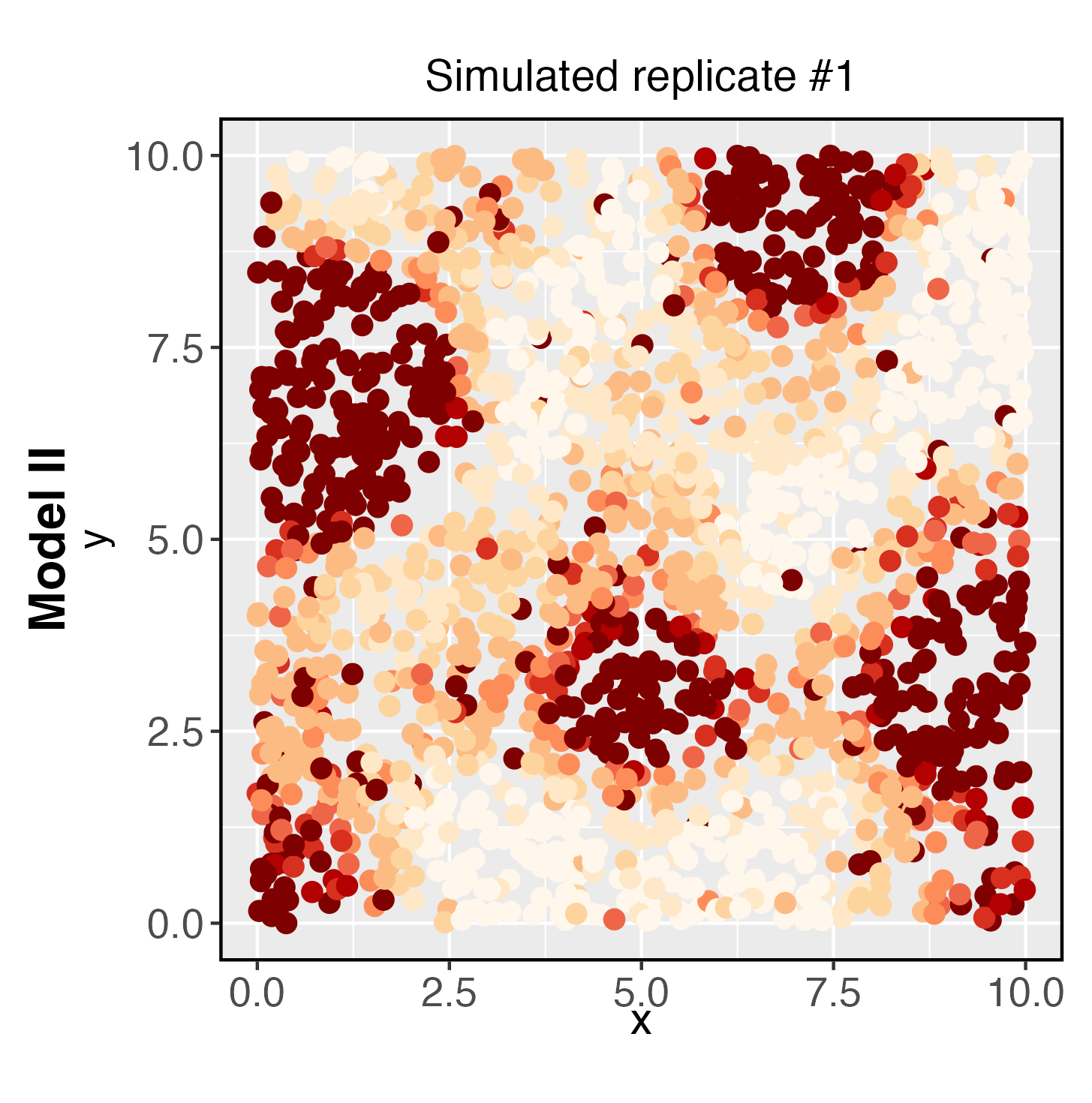}
    \includegraphics[height=0.22\linewidth, trim={0 1.6cm 0 1.12cm}, clip]{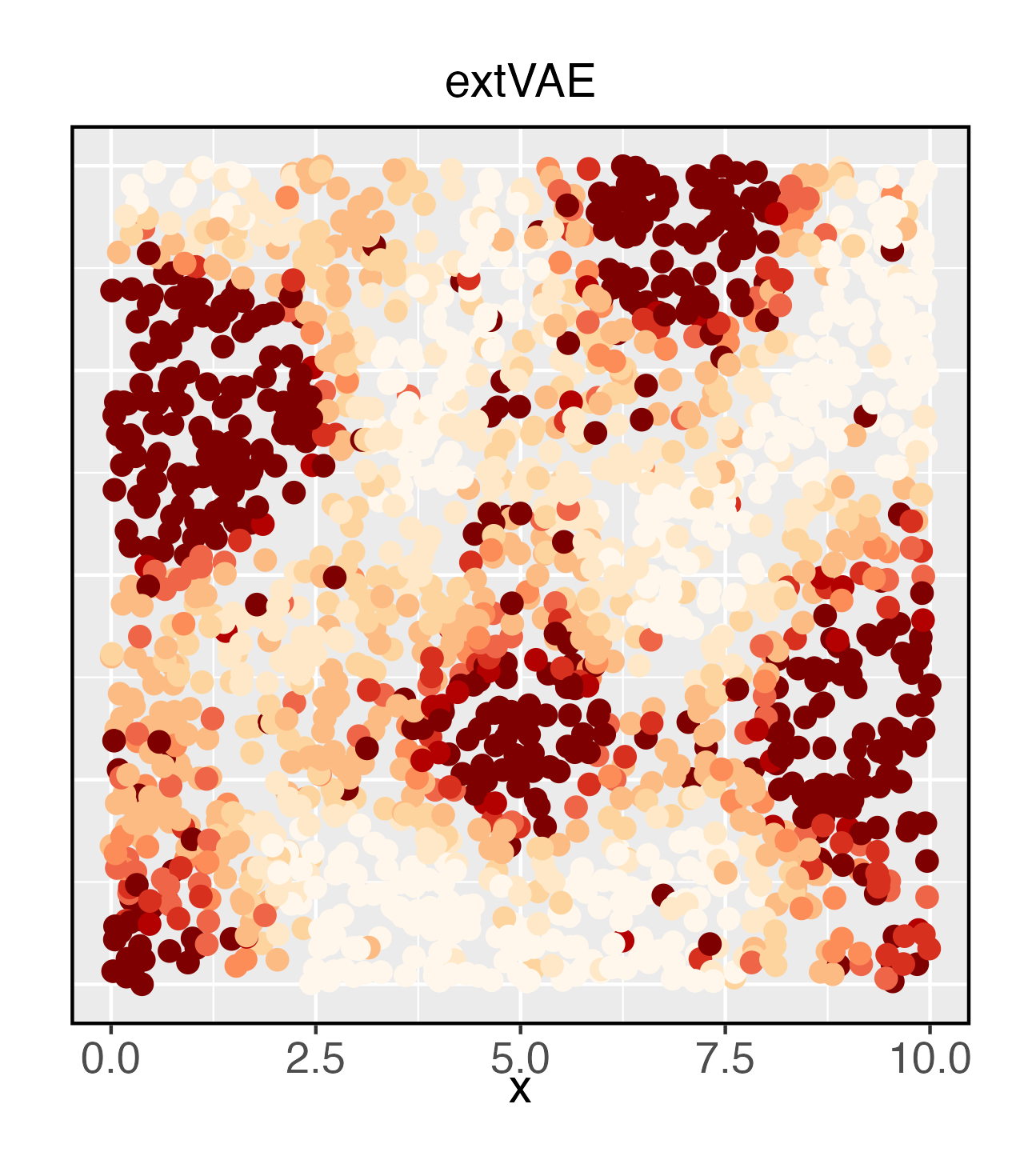}
    \includegraphics[height=0.22\linewidth, trim={0 1.6cm 0 1.12cm}, clip]{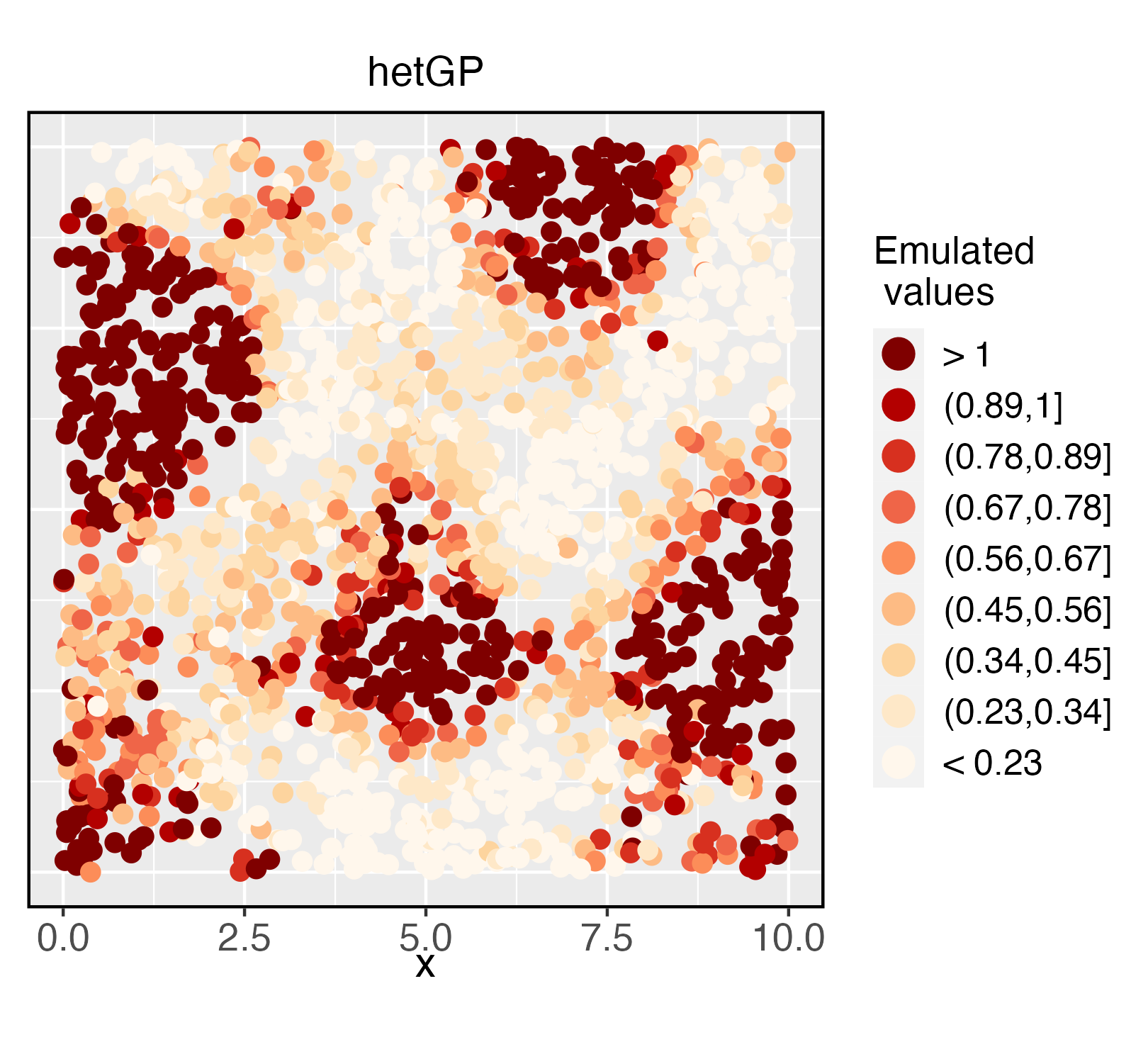}
    \vskip 0.3cm
    
    
    \includegraphics[height=0.22\linewidth, trim={0 1.6cm 0 1.12cm}, clip]{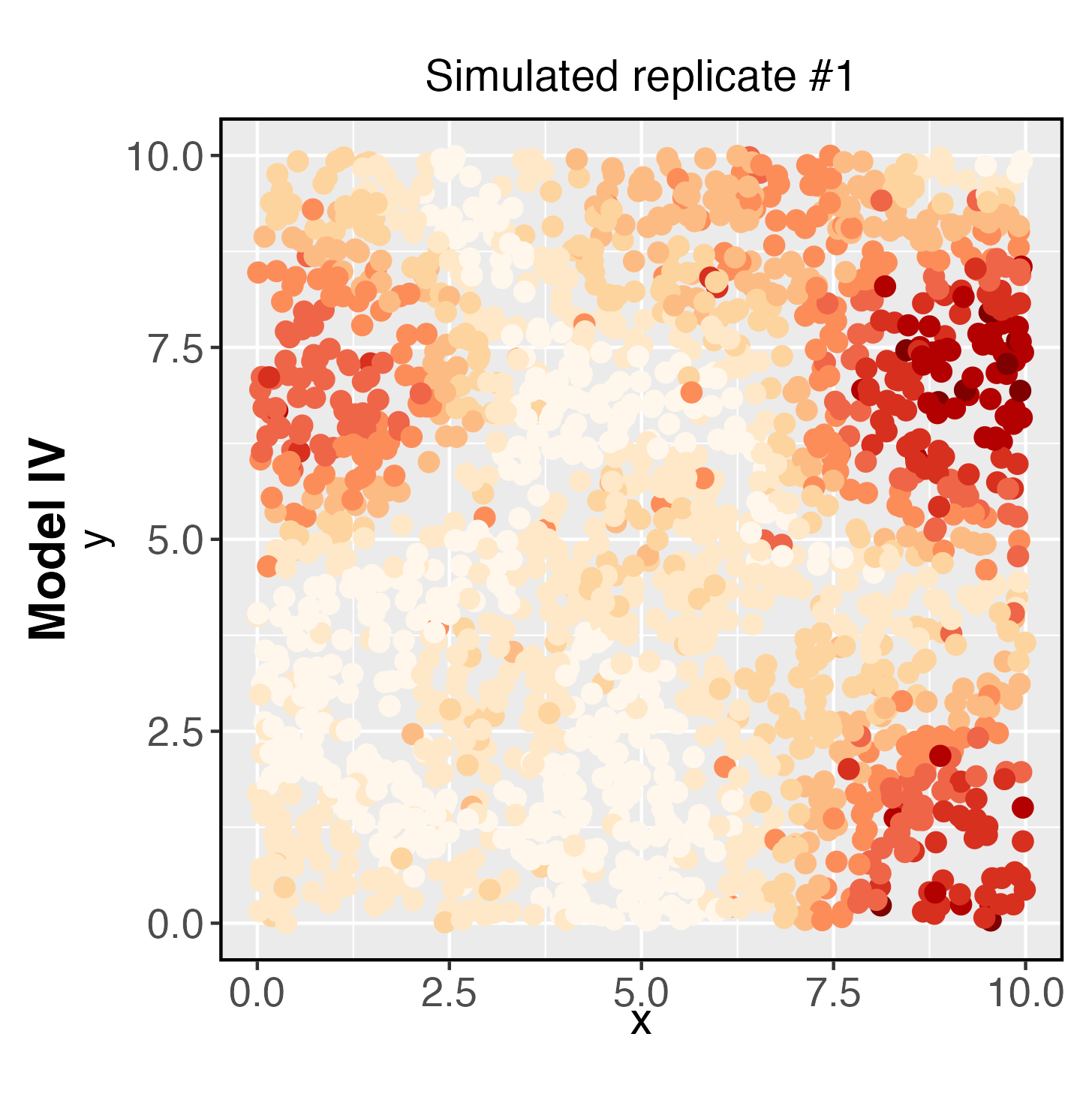}
    \includegraphics[height=0.22\linewidth, trim={0 1.6cm 0 1.12cm}, clip]{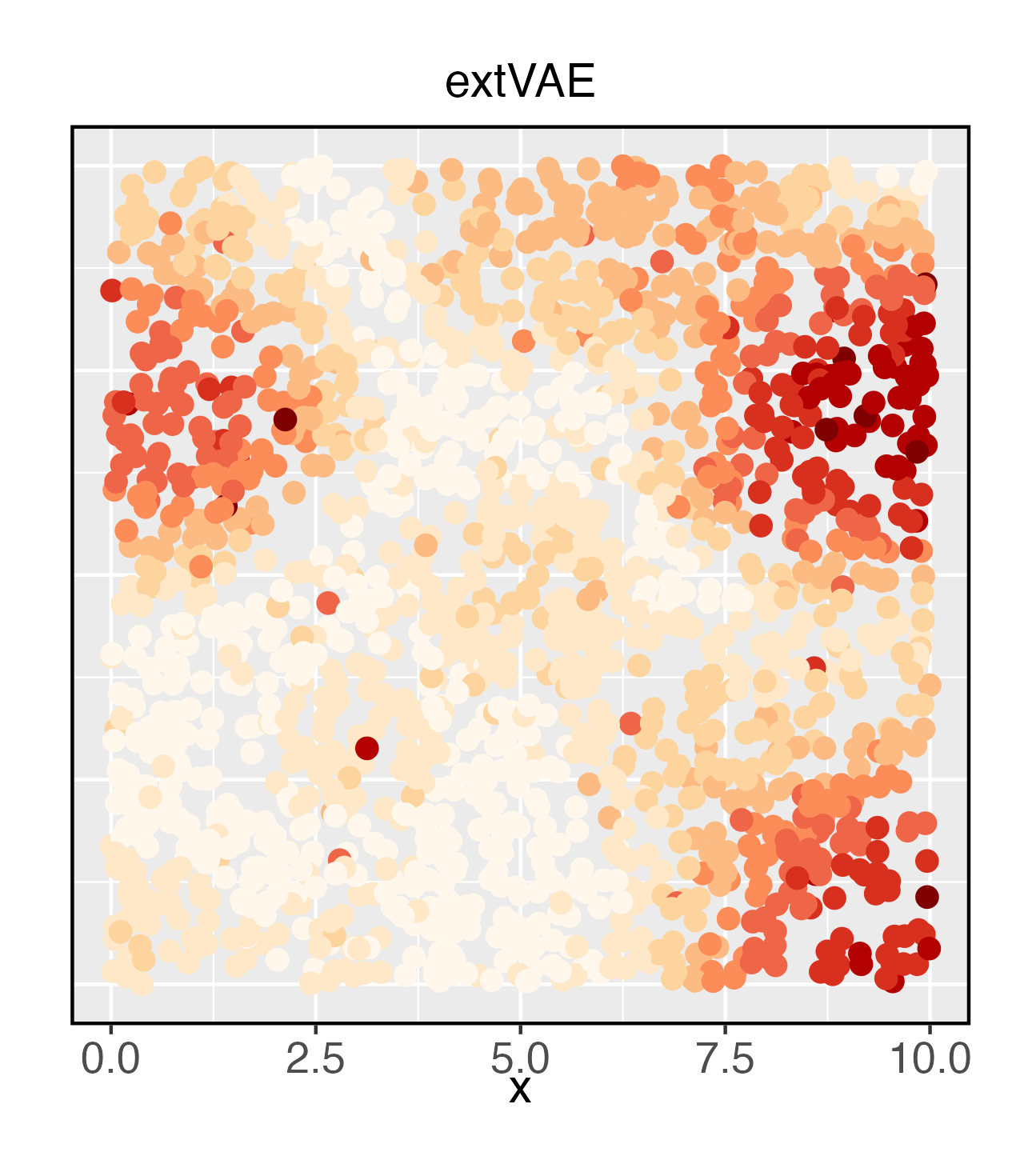}
    \includegraphics[height=0.22\linewidth, trim={0 1.6cm 0 1.12cm}, clip]{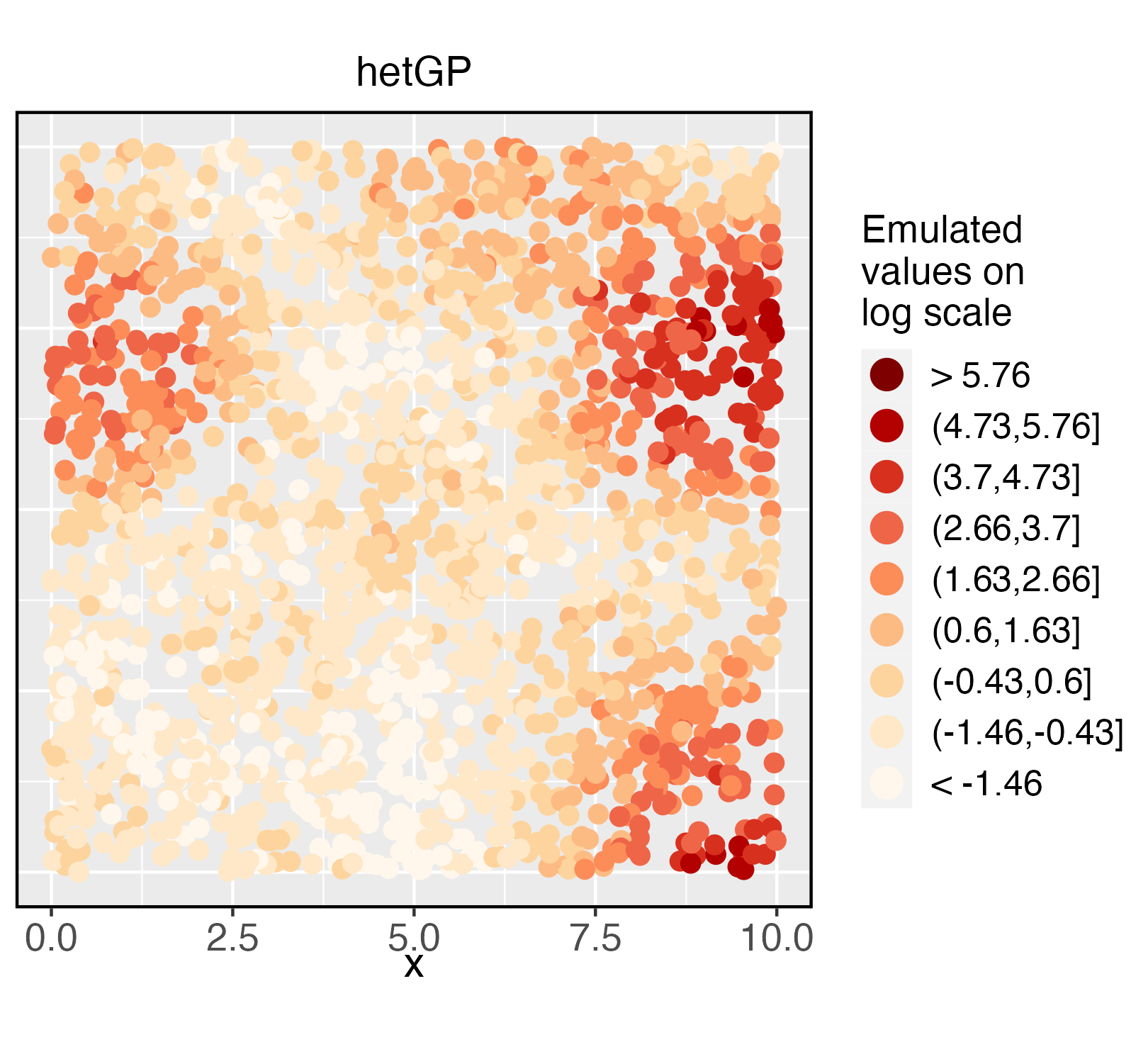}
    \vskip 0.3cm
    
    \includegraphics[height=0.256\linewidth, trim={0 0 0 1.12cm}, clip]{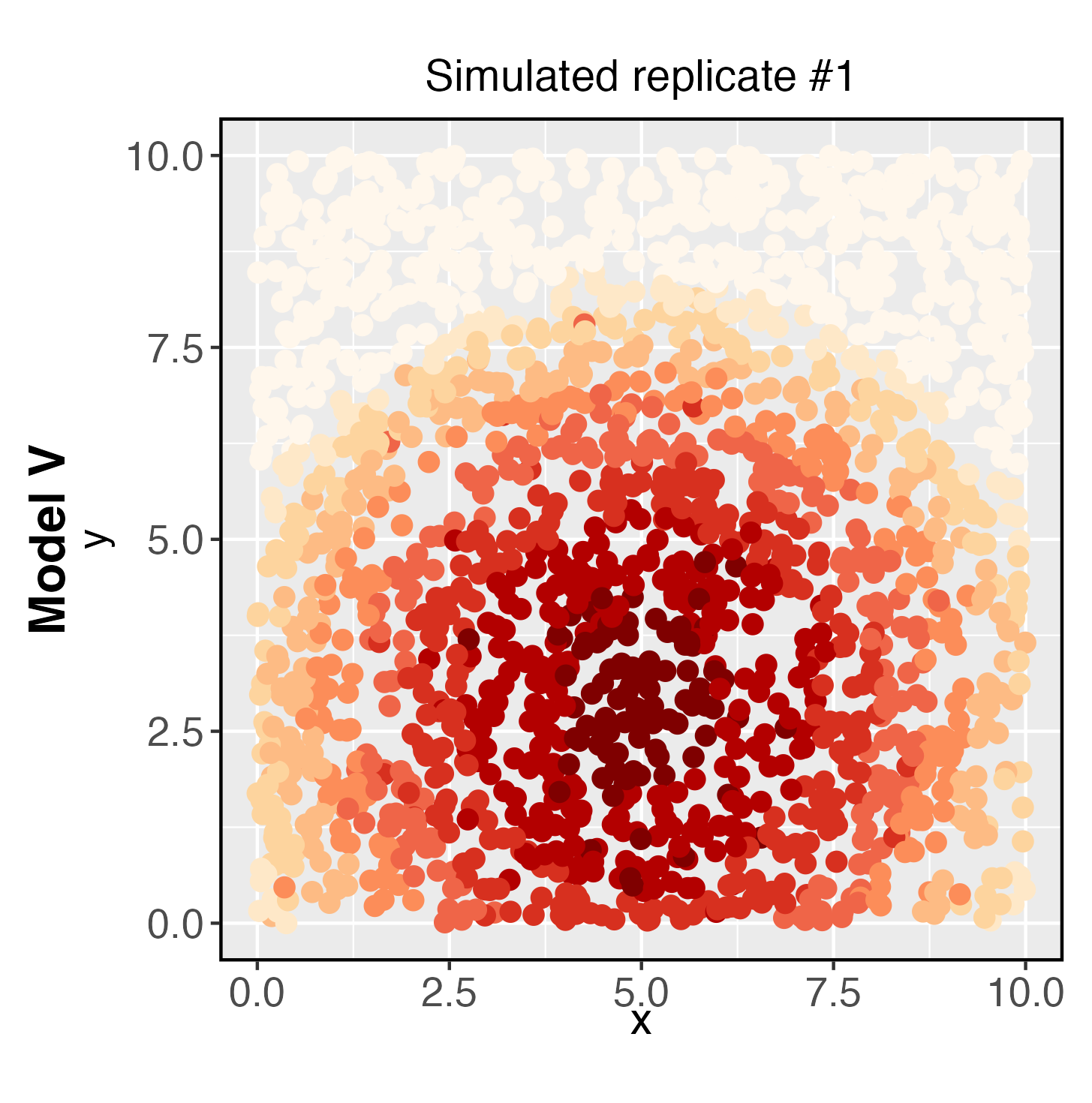}
    \includegraphics[height=0.256\linewidth, trim={0 0 0 1.12cm}, clip]{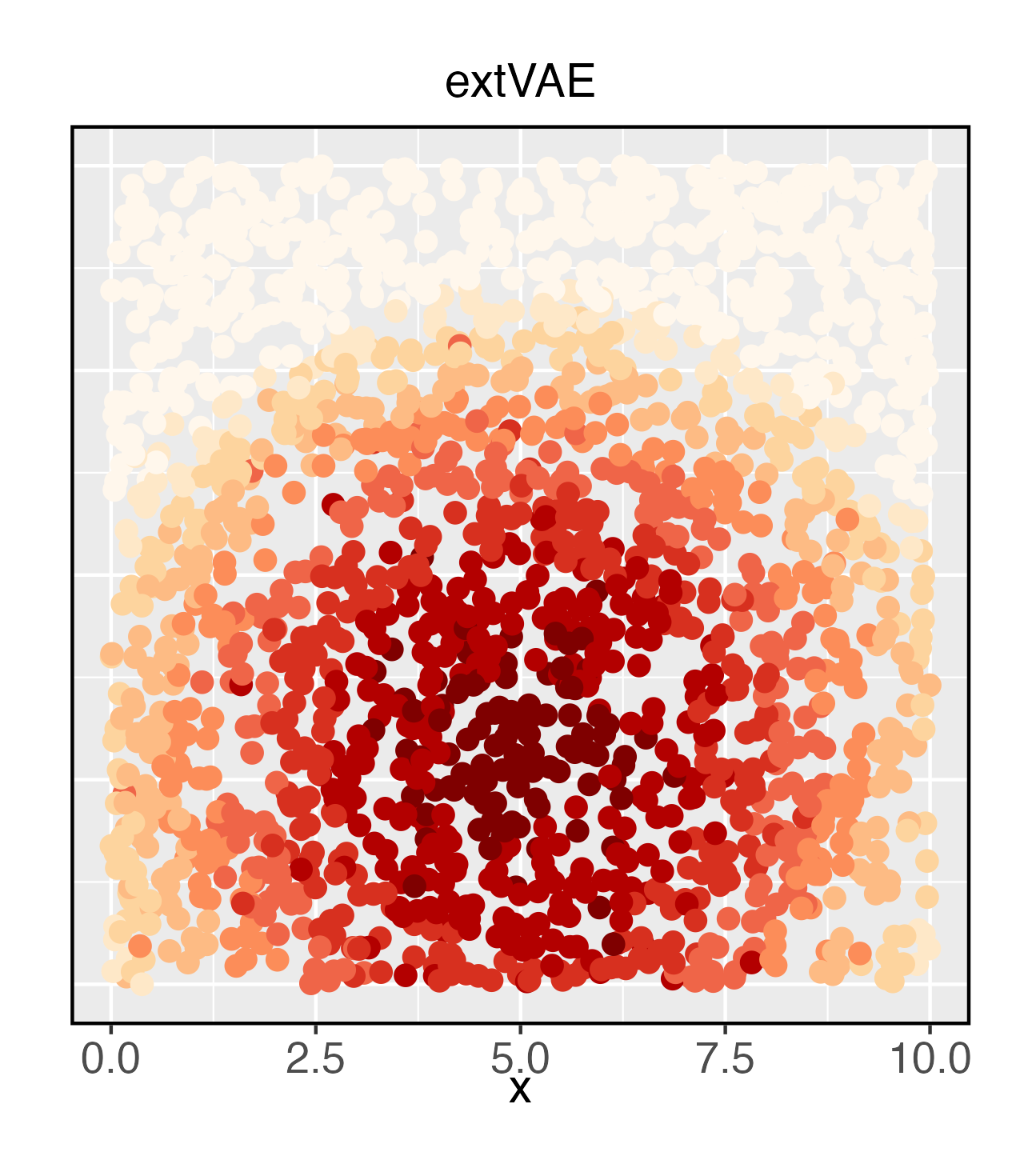}
    \includegraphics[height=0.256\linewidth, trim={0 0 0 1.12cm}, clip]{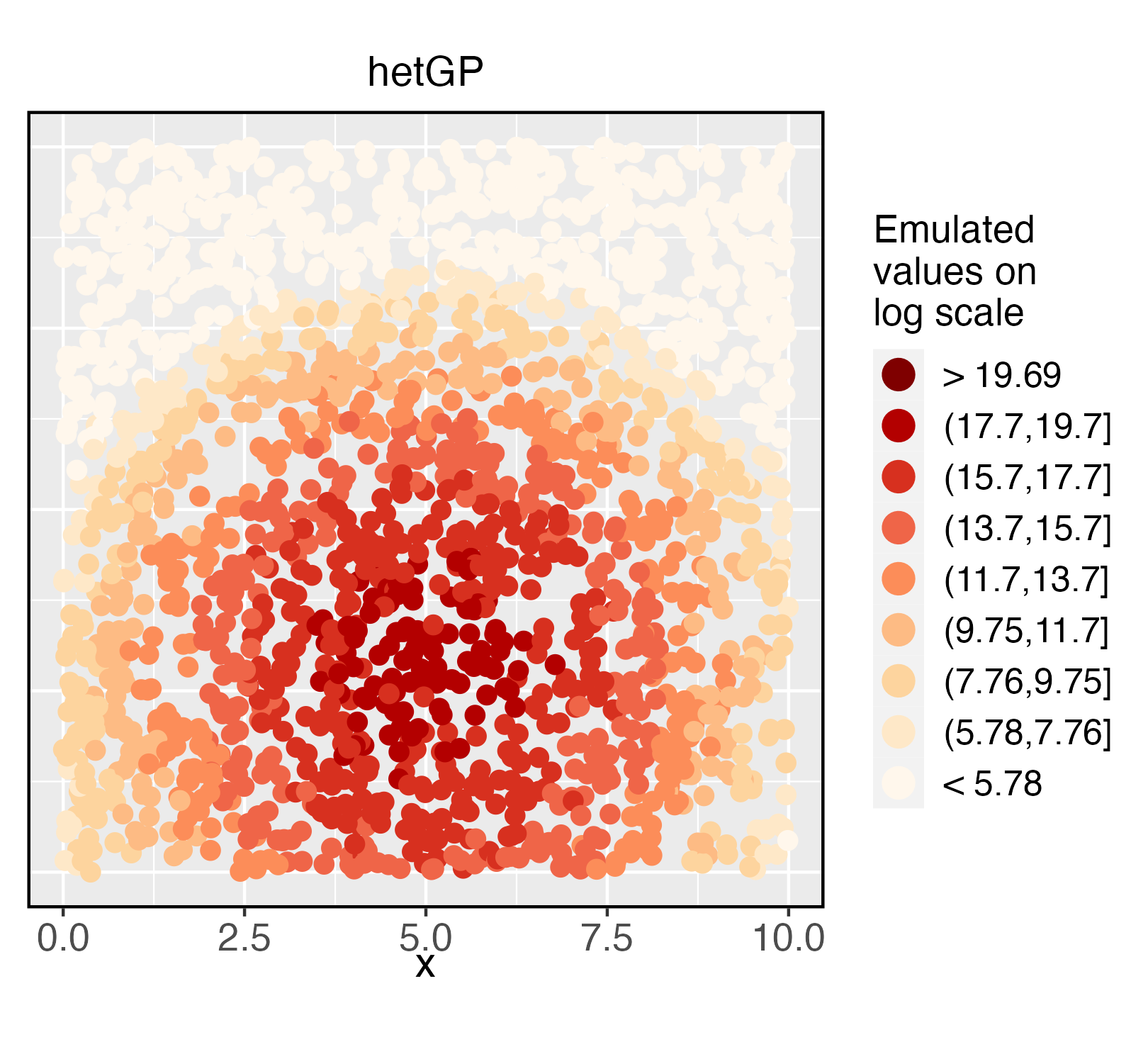}
    \vskip -0.5cm
    \caption{Simulated data sets (left column) and emulated fields (XVAE, middle column; \texttt{hetGP}, right column) from Models~\ref{modelGP}, \ref{modelAI}, \ref{modelAD} and \ref{modelMaxStable} (top to bottom). In all cases, we use data-driven knots for emulation using XVAE. See Figure~\ref{fig:comps_across_models} of the main paper for comparison for Model \ref{modelFlex}.}
    \label{fig:comps_across_models1}
\end{figure}

\begin{figure}
    \centering
    \includegraphics[width=0.27\linewidth]{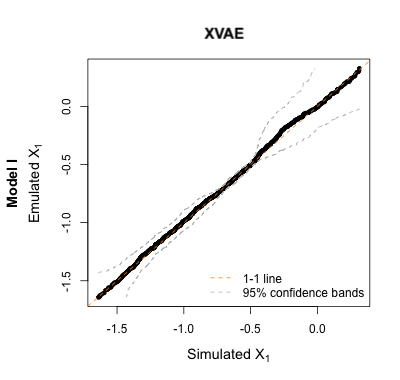}
    \includegraphics[width=0.27\linewidth]{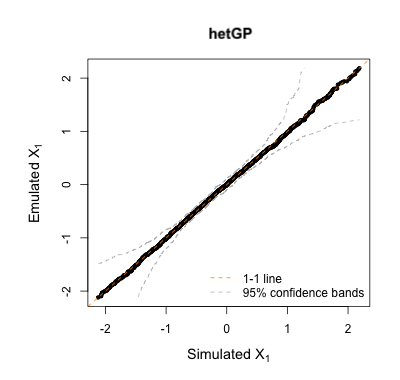}

   \includegraphics[width=0.27\linewidth, trim={0 0 0 2cm}, clip]{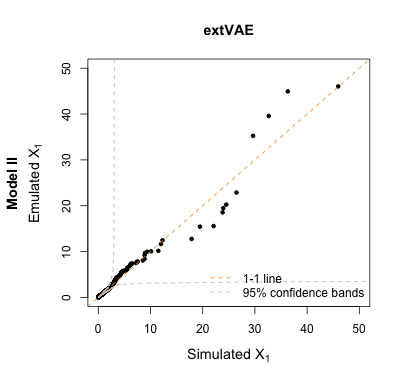}
    \includegraphics[width=0.27\linewidth, trim={0 0 0 2cm}, clip]{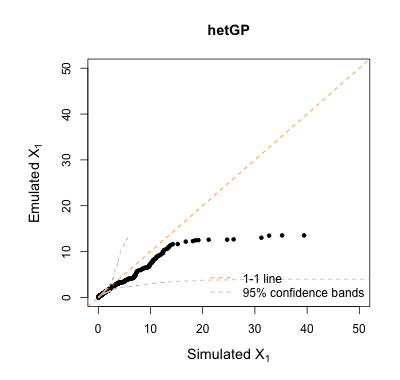}

    \includegraphics[width=0.27\linewidth, trim={0 0 0 2cm}, clip]{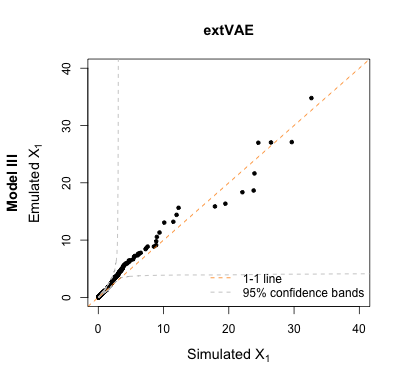}
    \includegraphics[width=0.27\linewidth, trim={0 0 0 2cm}, clip]{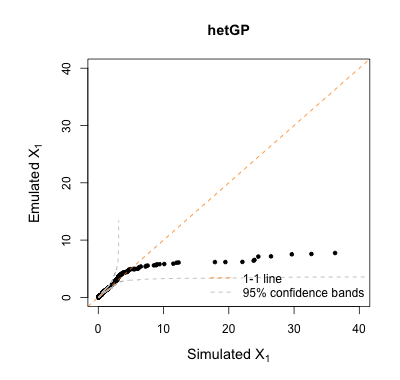}

    \includegraphics[width=0.27\linewidth, trim={0 0 0 2cm}, clip]{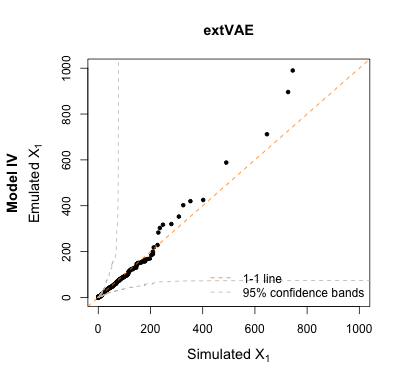}
    \includegraphics[width=0.27\linewidth, trim={0 0 0 2cm}, clip]{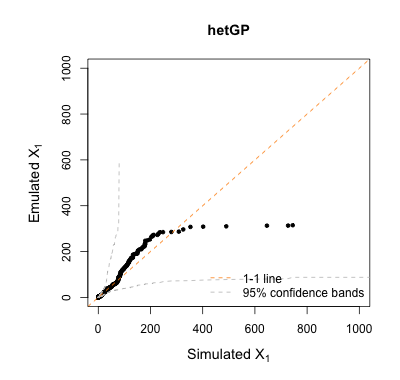}

    \includegraphics[width=0.27\linewidth, trim={0 0 0 2cm}, clip]{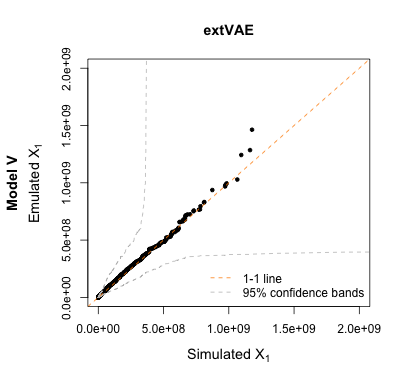}
    \includegraphics[width=0.27\linewidth, trim={0 0 0 2cm}, clip]{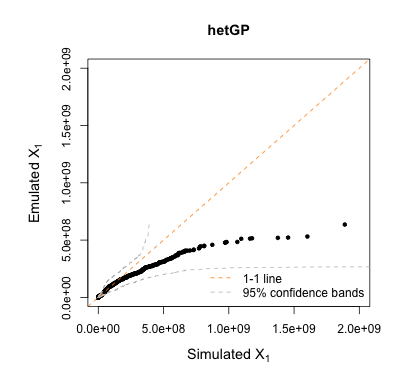}
    \caption{QQ-plots comparing simulated data sets and emulated fields from XVAE (left), and \texttt{hetGP} (right) based on Models~\ref{modelGP}--\ref{modelMaxStable} (top to bottom).}
    \label{fig:qqplots_across_models}
\end{figure}

\begin{figure}
    \centering
    \includegraphics[height=0.251\linewidth, trim={0 0.8cm 0 0}, clip]{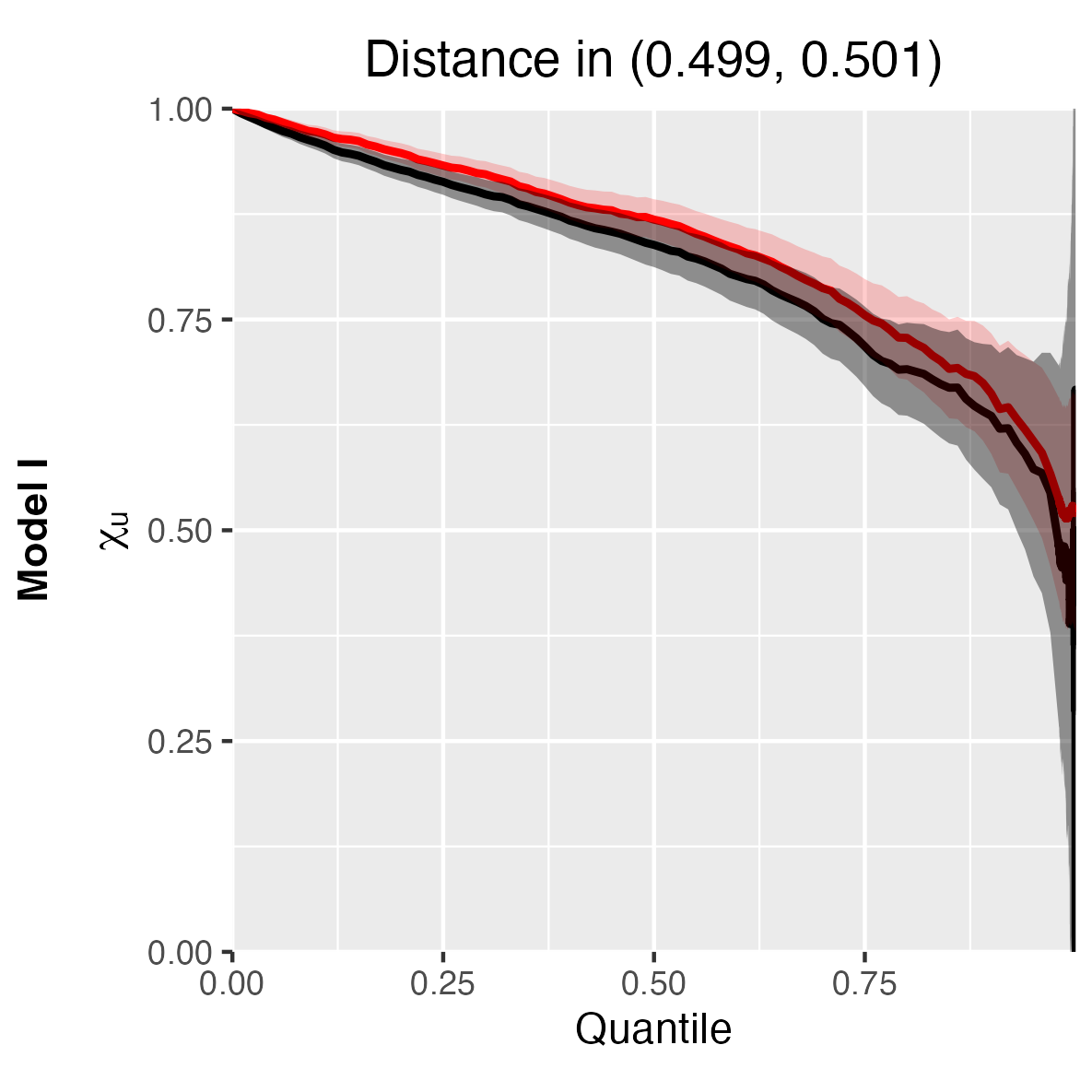}
    \includegraphics[height=0.251\linewidth, trim={0 0.8cm 0 0}, clip]{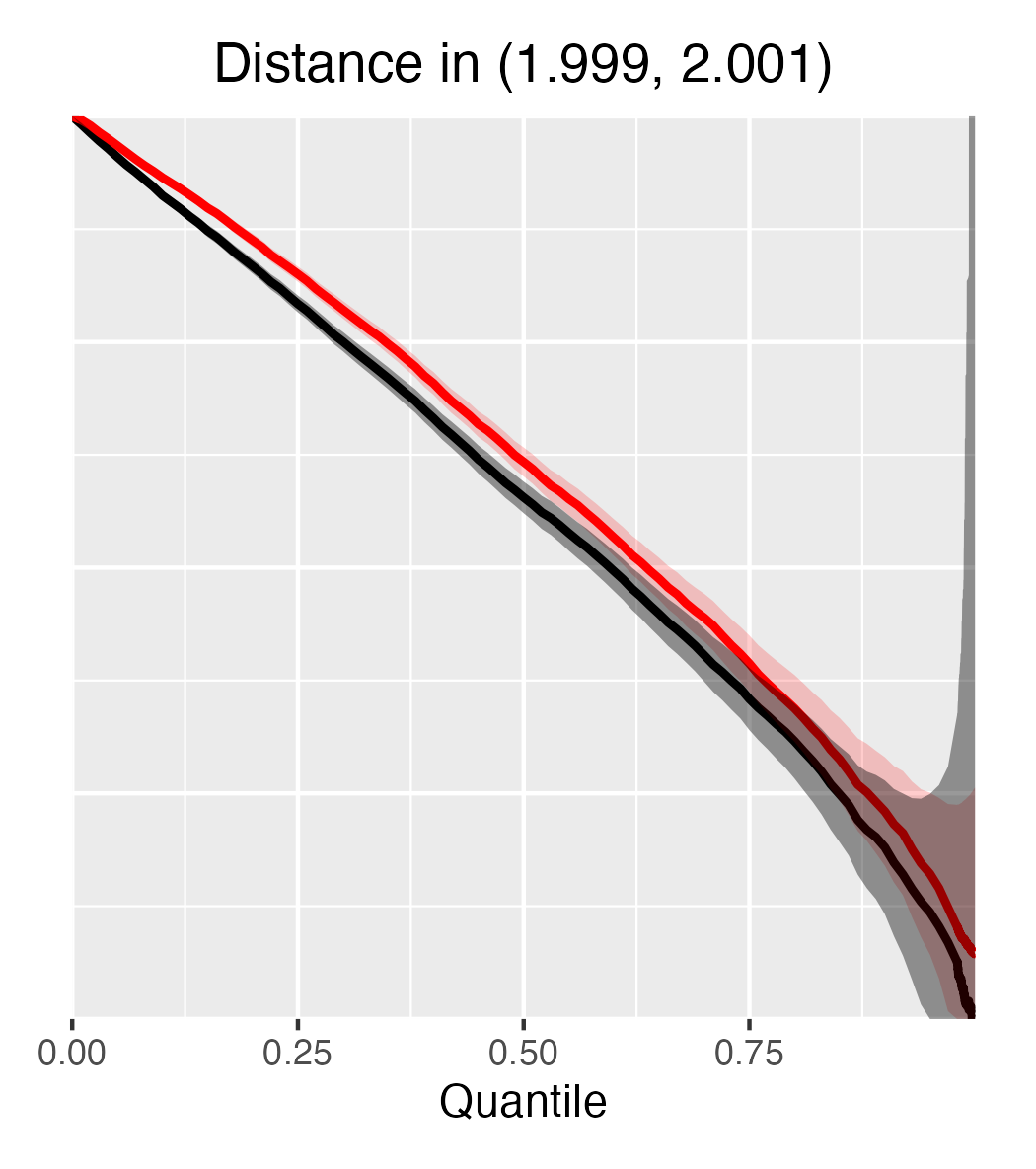}
    \includegraphics[height=0.251\linewidth, trim={0 0.8cm 0 0}, clip]{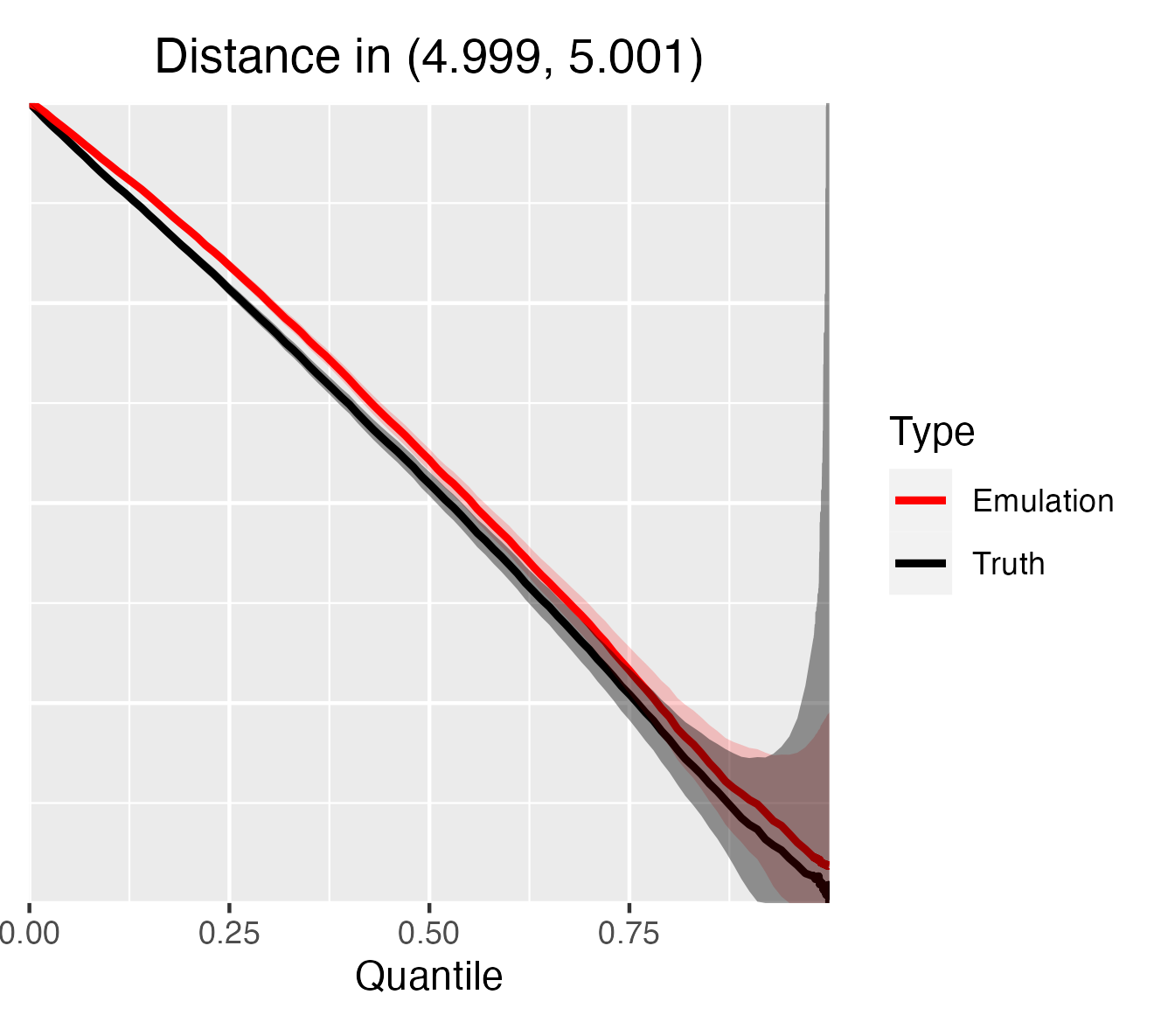}
    \vskip 0.15cm
    
    \includegraphics[height=0.23\linewidth, trim={0 0.8cm 0 0.8cm}, clip]{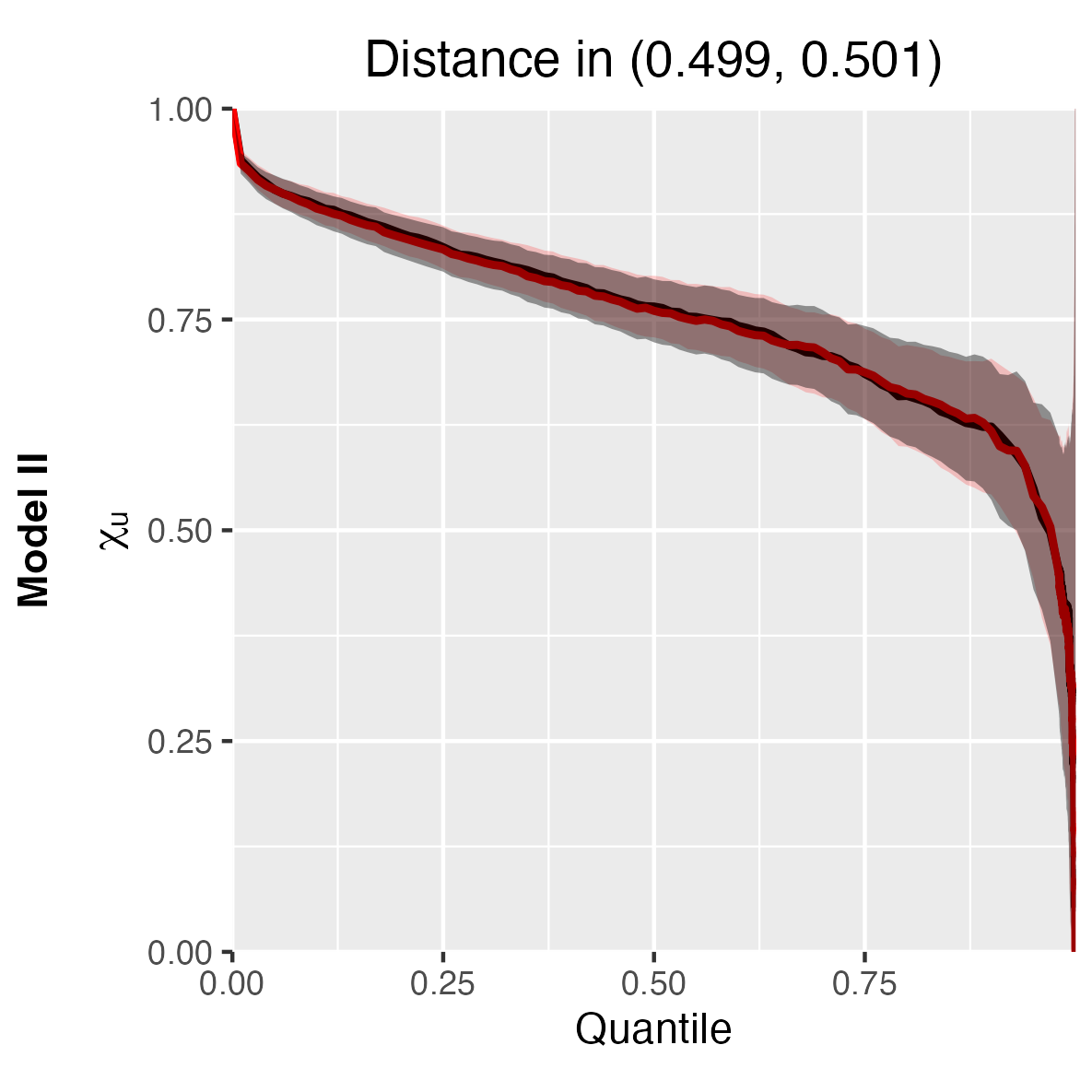}
    \includegraphics[height=0.23\linewidth, trim={0 0.8cm 0 0.8cm}, clip]{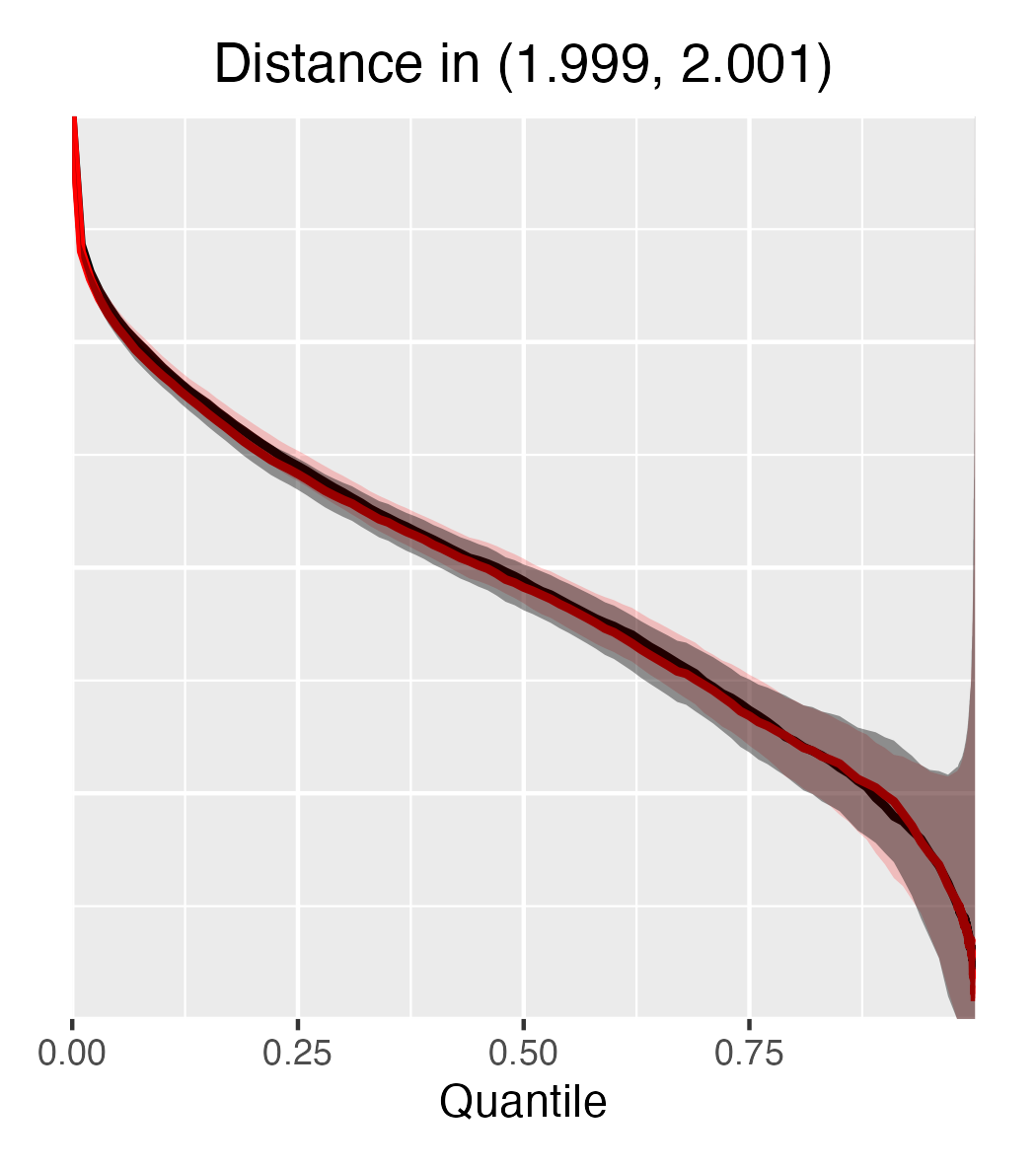}
    \includegraphics[height=0.23\linewidth, trim={0 0.8cm 0 0.8cm}, clip]{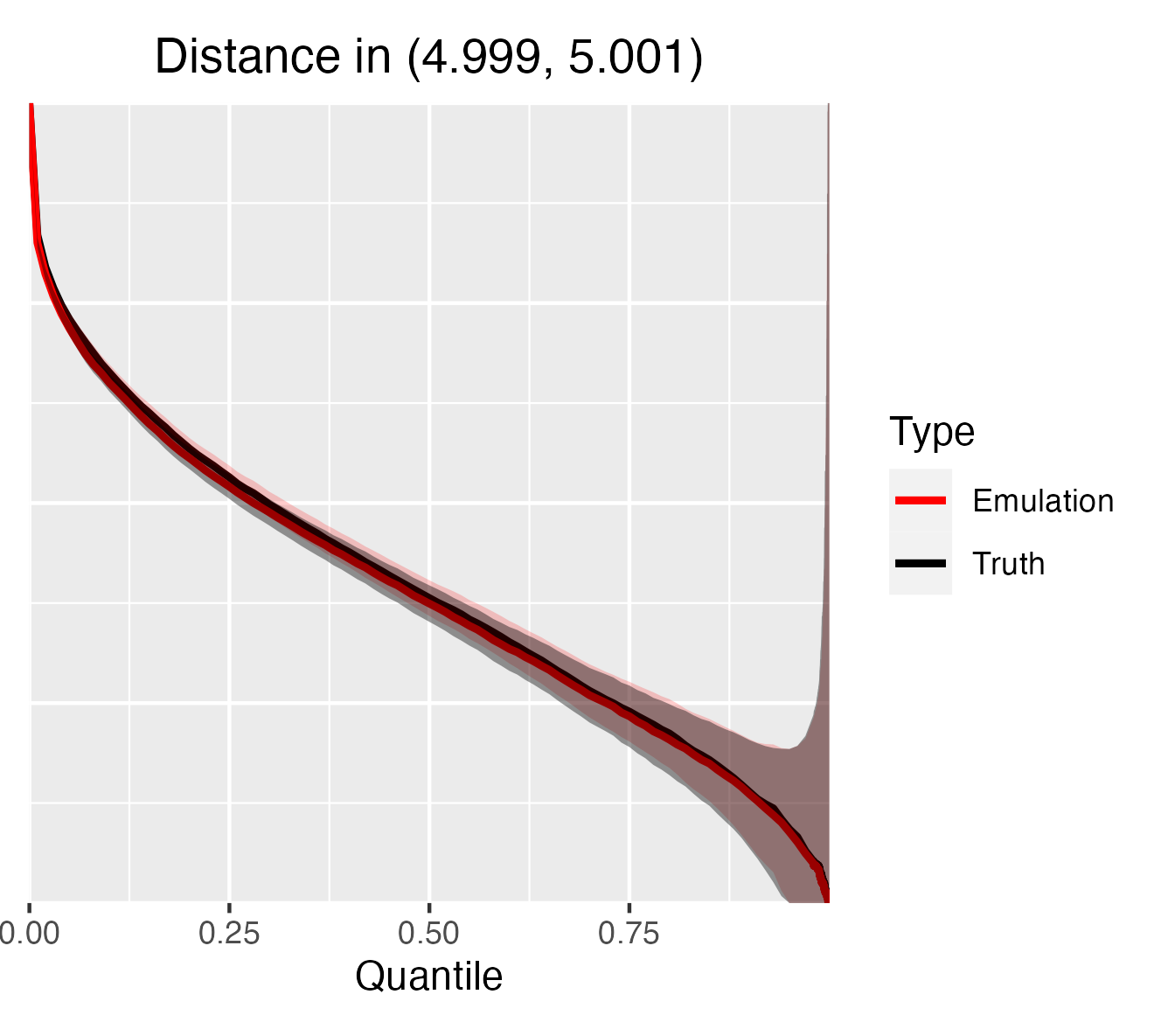}
    \vskip 0.15cm

    \includegraphics[height=0.23\linewidth, trim={0 0.8cm 0 0.8cm}, clip]{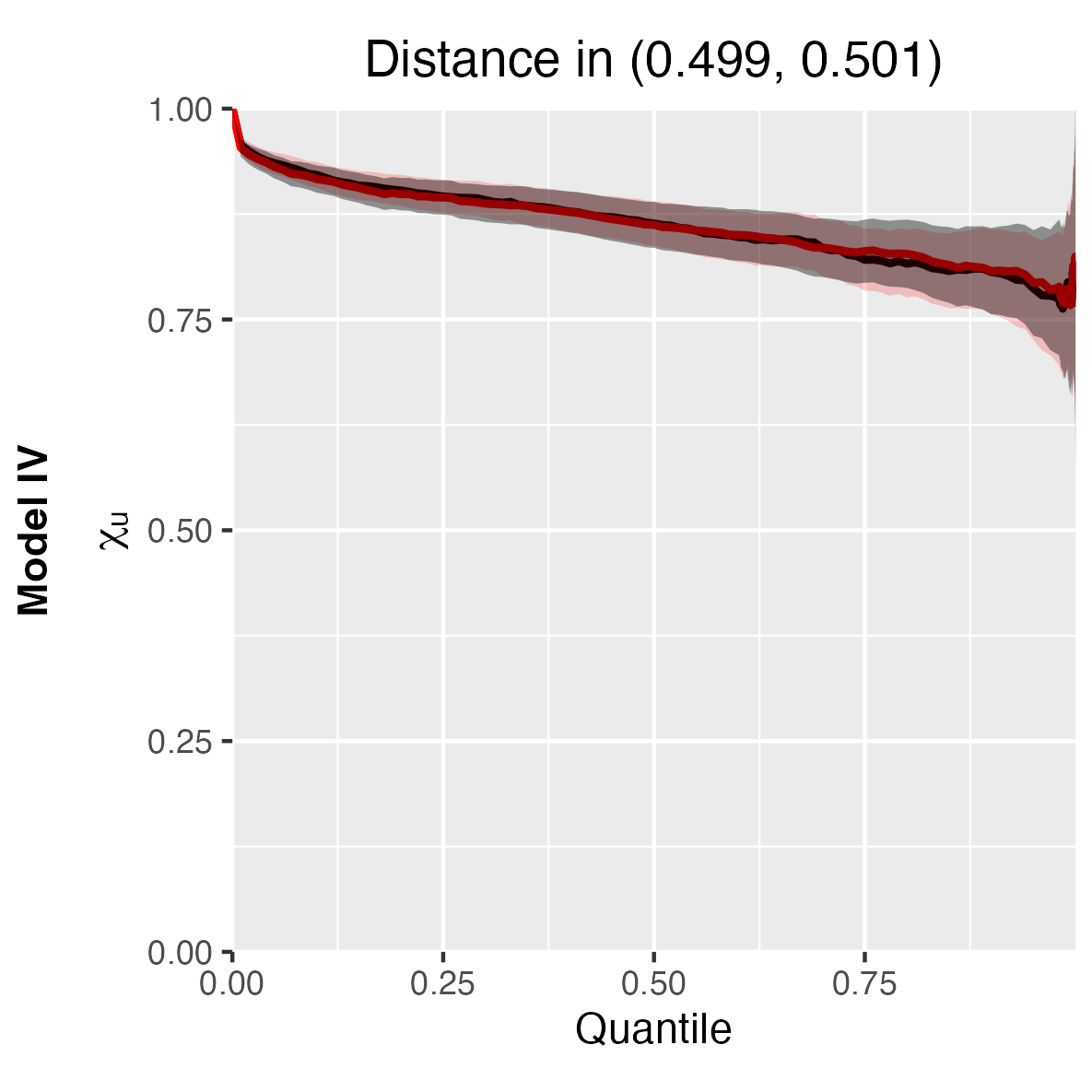}
    \includegraphics[height=0.23\linewidth, trim={0 0.8cm 0 0.8cm}, clip]{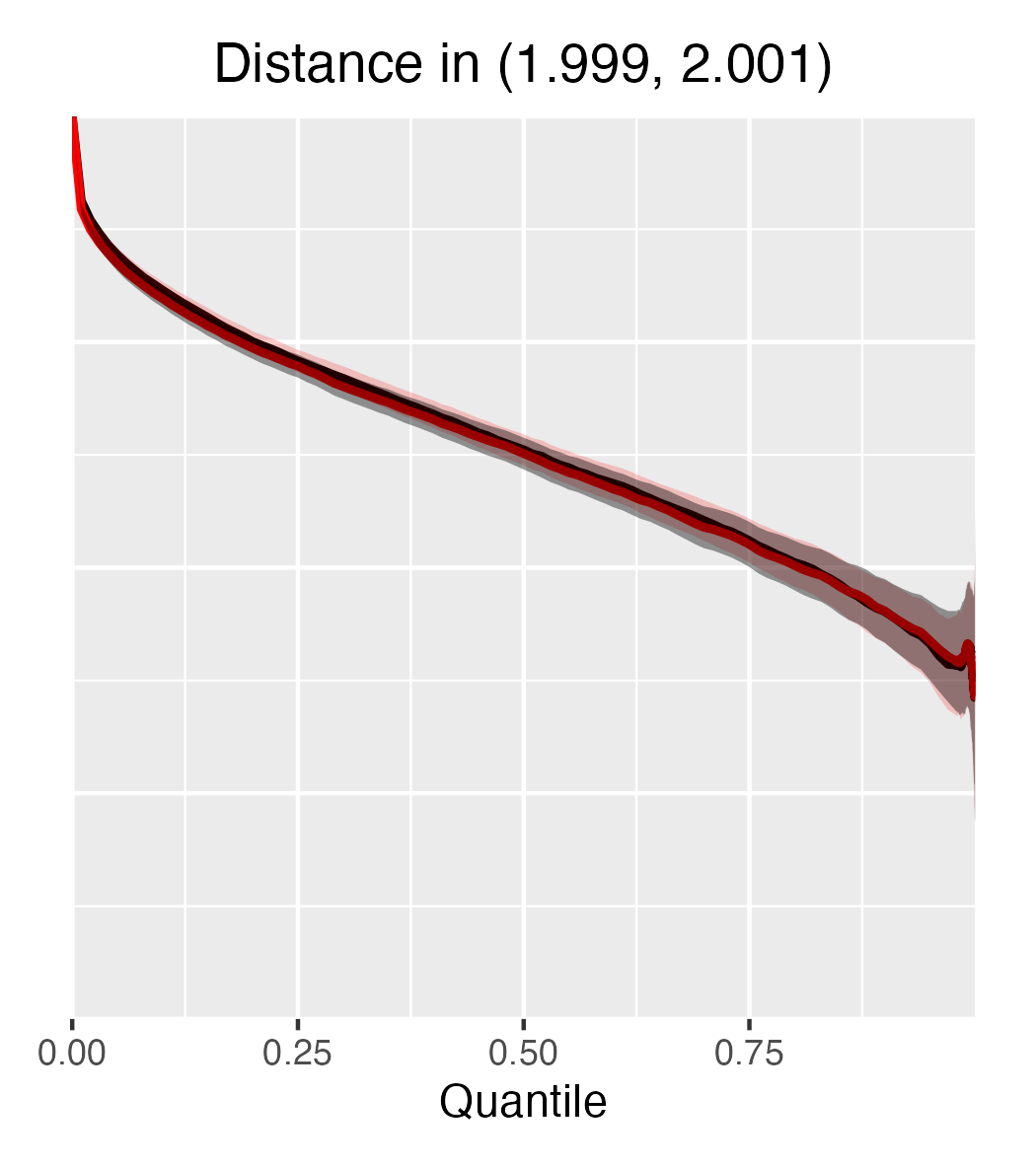}
    \includegraphics[height=0.23\linewidth, trim={0 0.8cm 0 0.8cm}, clip]{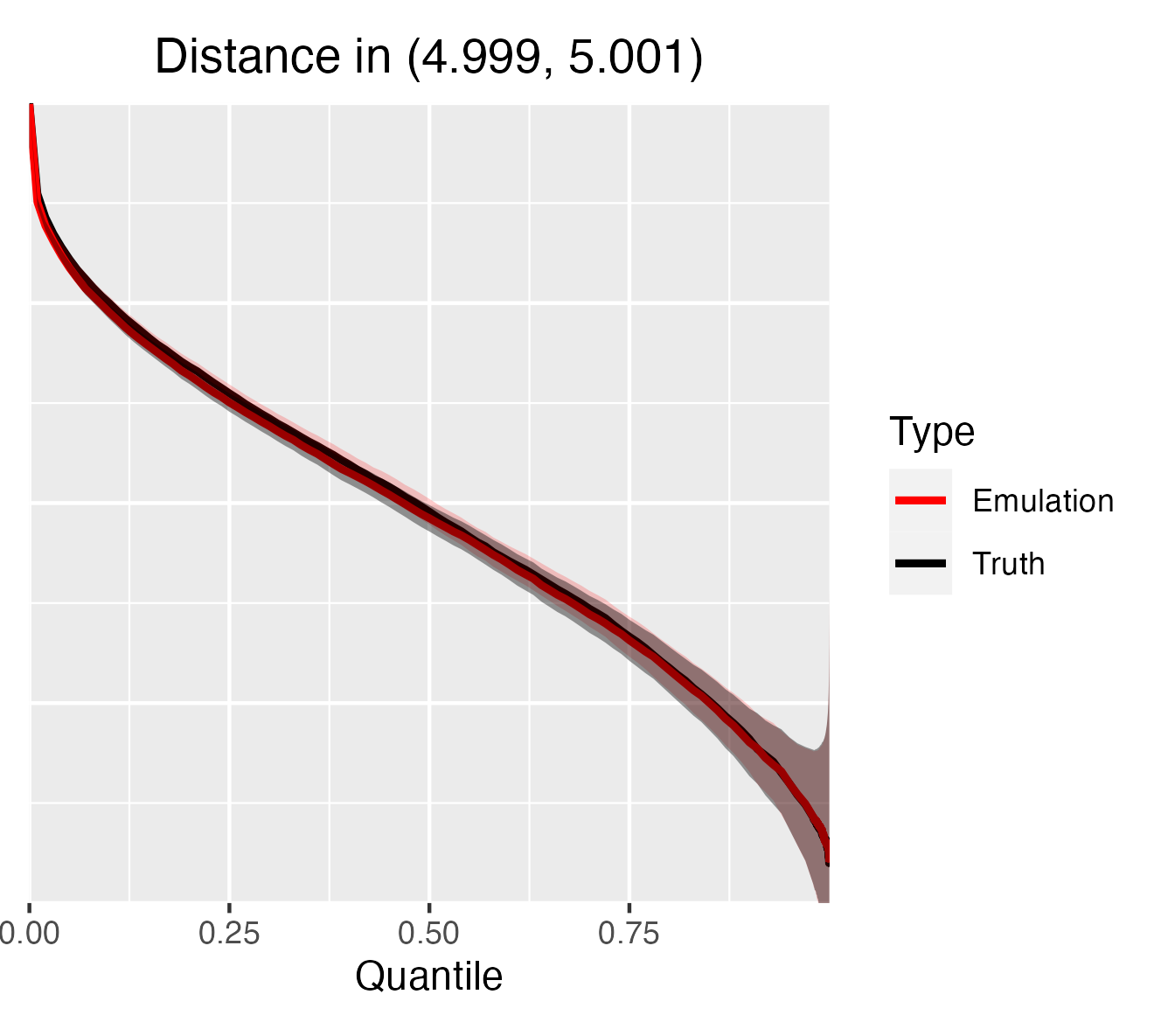}
    \vskip 0.15cm
    
    \includegraphics[height=0.253\linewidth, trim={0 0 0 0.8cm}, clip]{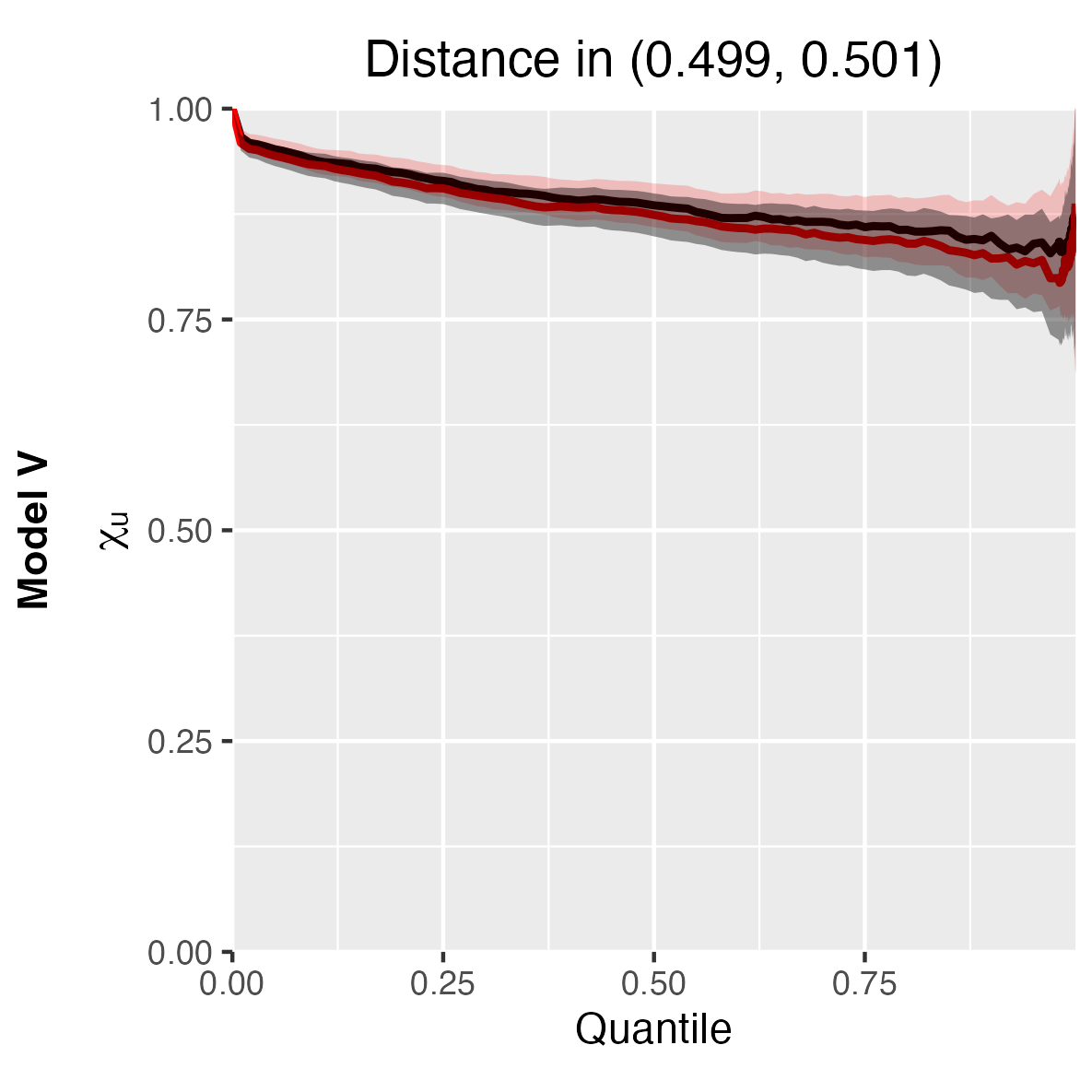}
    \includegraphics[height=0.253\linewidth, trim={0 0 0 0.8cm}, clip]{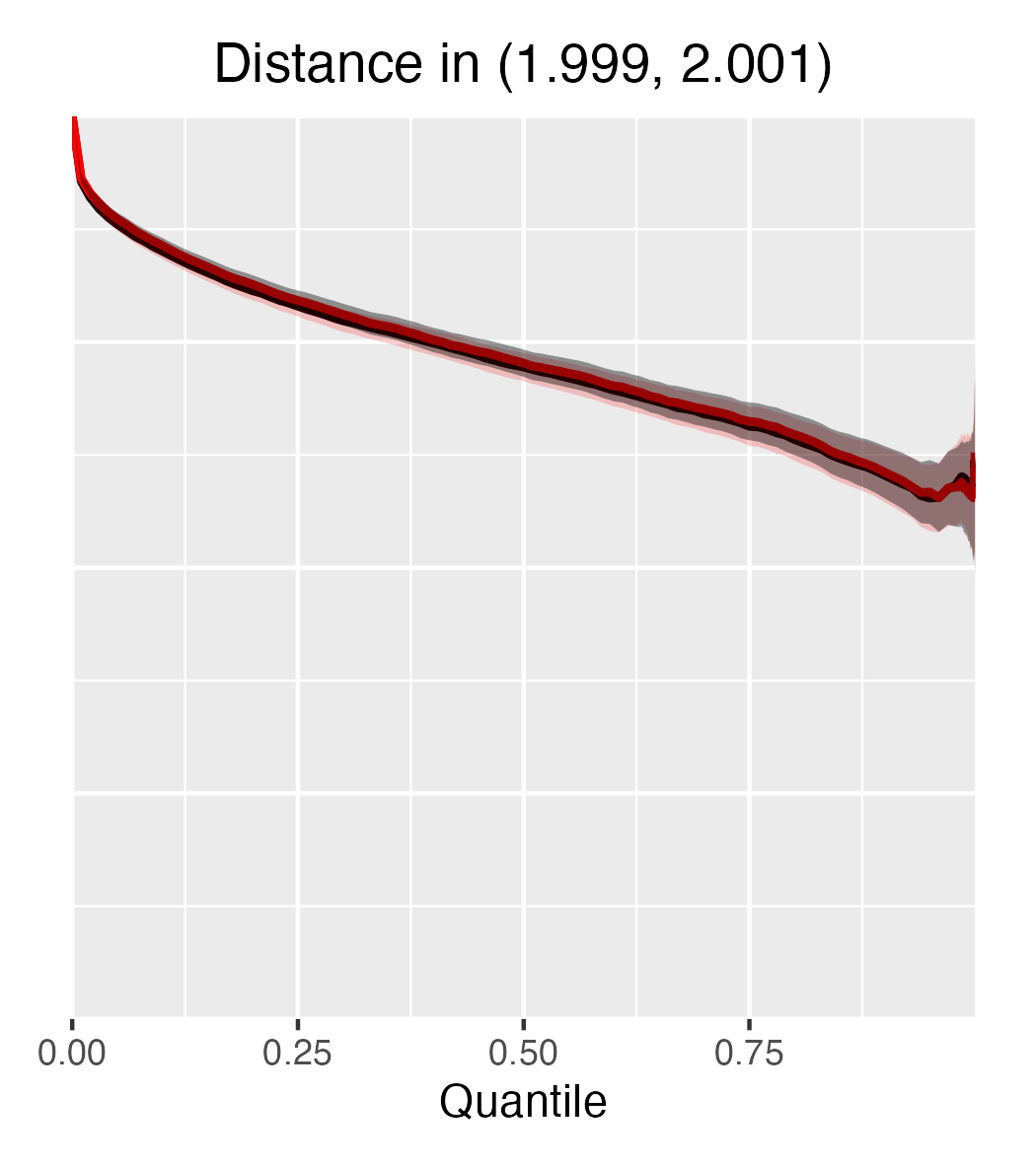}
    \includegraphics[height=0.253\linewidth, trim={0 0 0 0.8cm}, clip]{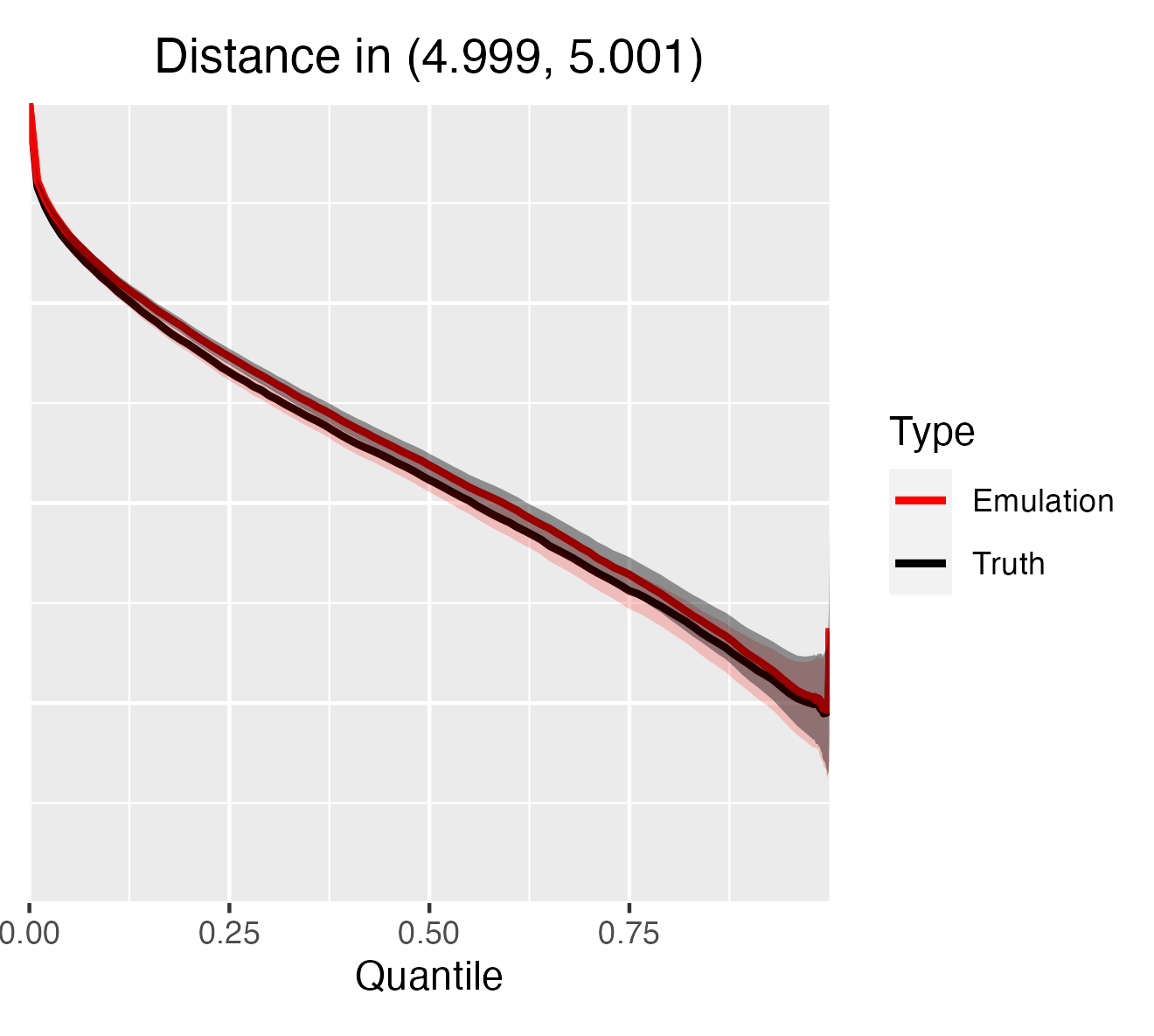}
    \vskip -0.3cm
    \caption{The empirically-estimated tail dependence measure $\chi_h(u)$ at $h=0.5$ (left),\;2 (middle),\;5 (right) for Models~\ref{modelGP}, \ref{modelAI}, \ref{modelAD} and \ref{modelMaxStable} (top to bottom), based on simulated (black) and XVAE emulated (red) data. See Figure~\ref{fig:chi_ests} of the main paper for $\chi_h(u)$ estimates for Model \ref{modelFlex}.}
    \label{fig:chi_ests2}
\end{figure}

\begin{figure}
    \centering
    \hspace*{-0.1cm}\includegraphics[height=0.28\linewidth]{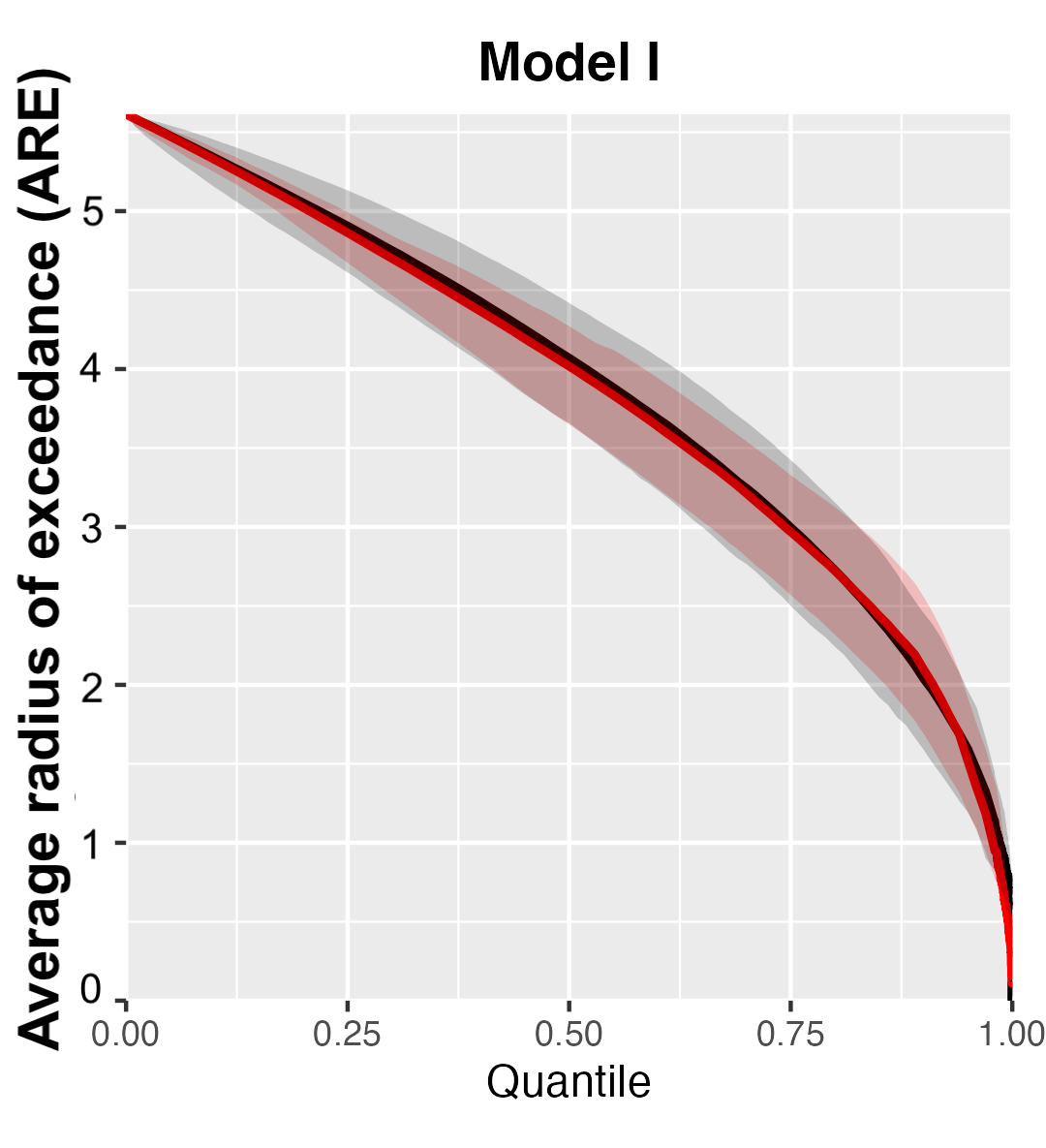}
    \hspace*{-0.35cm}\includegraphics[height=0.28\linewidth]{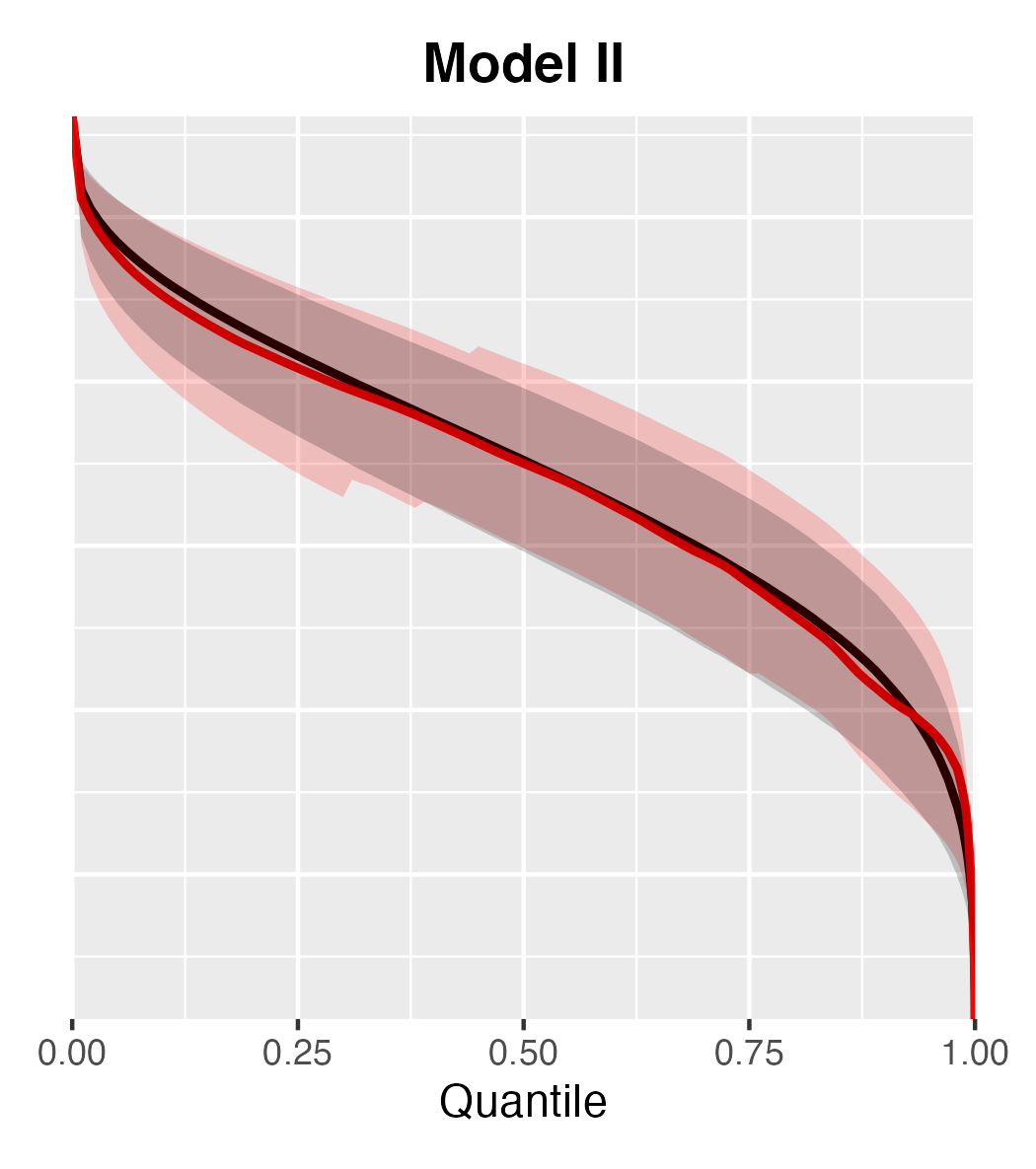}
    \hspace*{-0.15cm}\includegraphics[height=0.28\linewidth, trim={1.1cm 0 0 0}, clip]{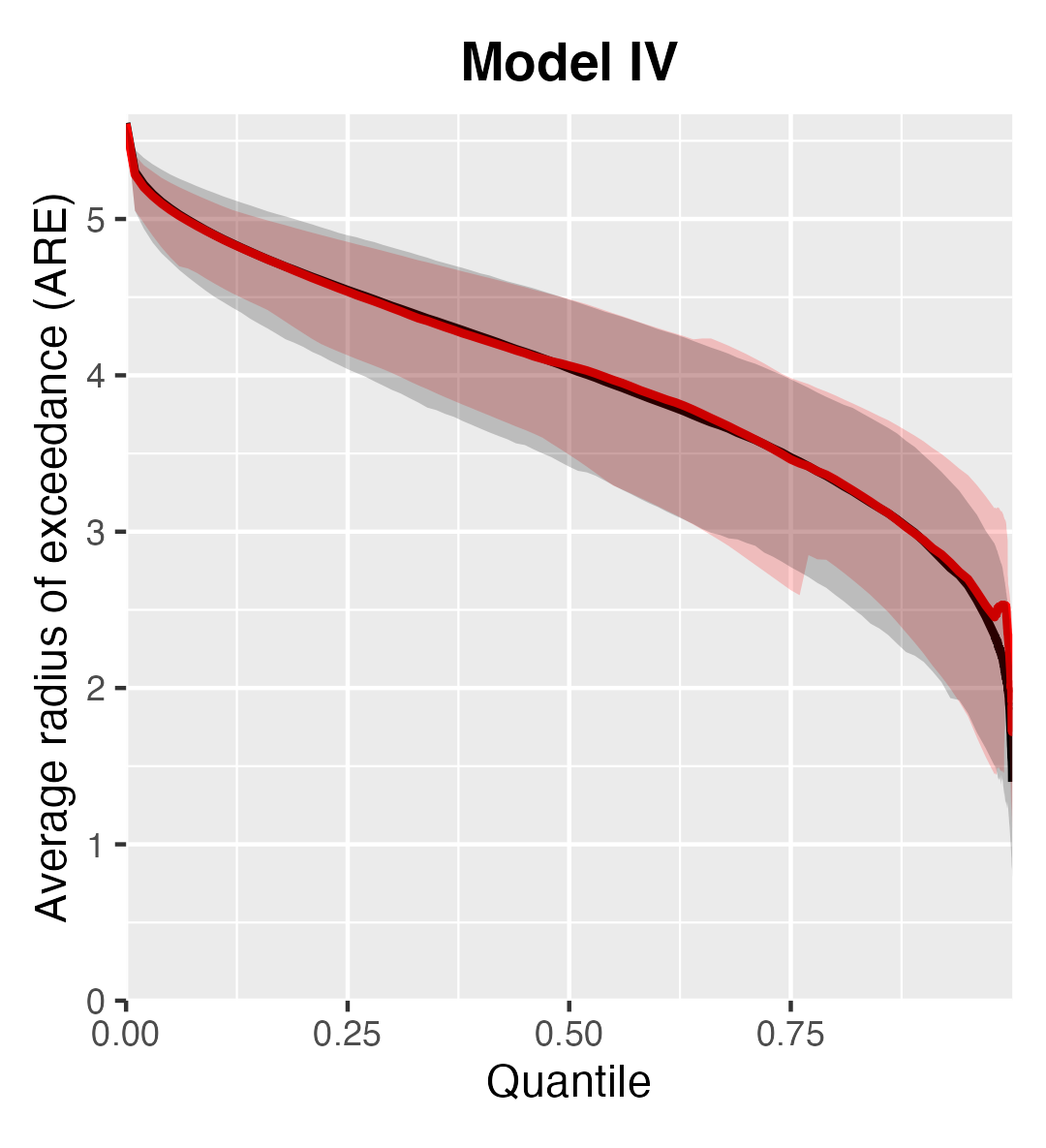}
    \hspace*{-0.2cm}\includegraphics[height=0.28\linewidth, trim={0 0 0.5cm 0}, clip]{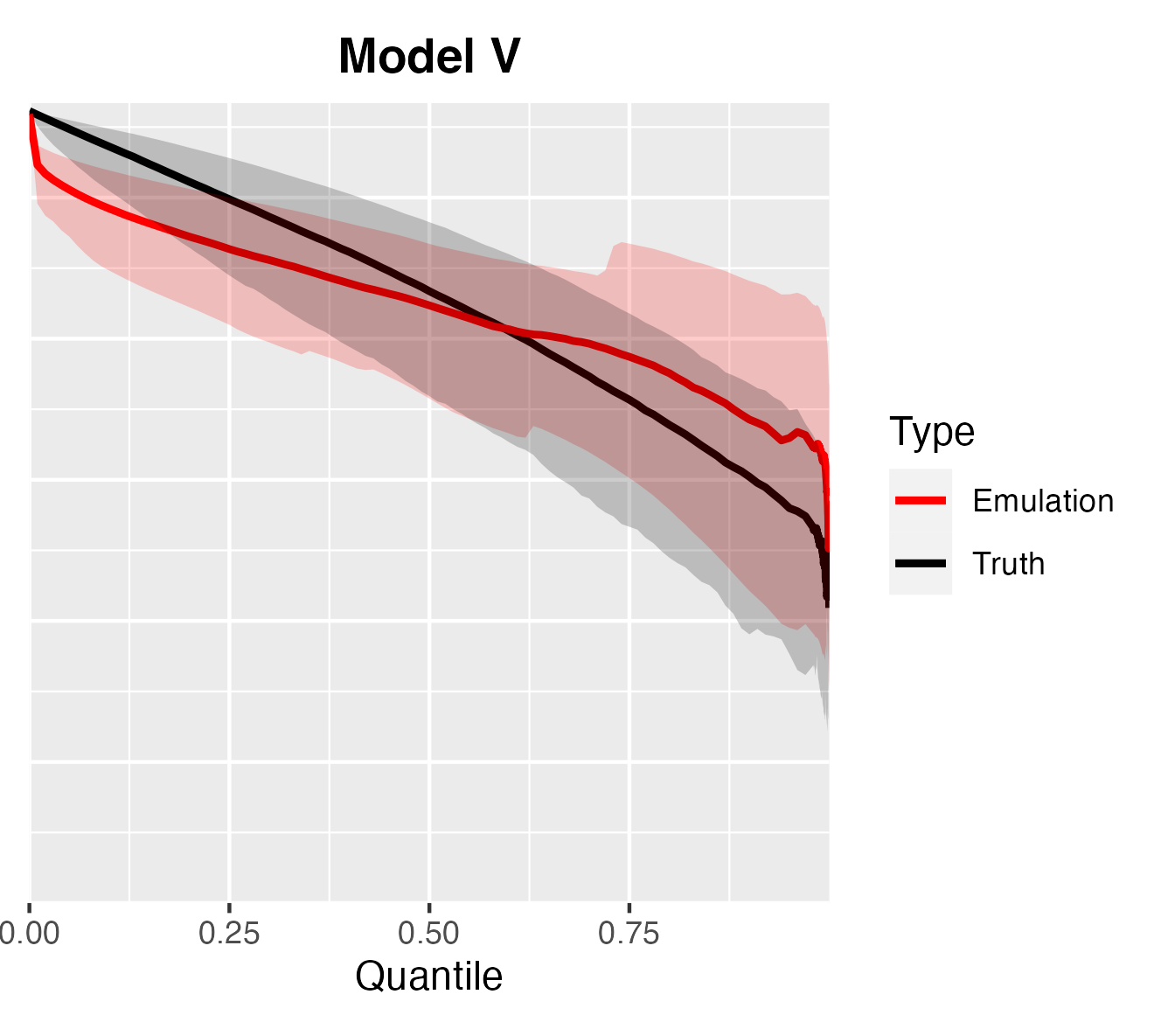}
    \caption{Estimates of $\mathrm{ARE}_\psi(u)$, $\psi=0.05$, for both simulations (black) and XVAE emulations (red) under Models~\ref{modelGP}, \ref{modelAI}, \ref{modelAD} and \ref{modelMaxStable} (left to right). See the right panel of Figure~\ref{fig:chi_ests} of the main paper for $\mathrm{ARE}_\psi(u)$ estimates for Model \ref{modelFlex}.}
    \label{fig:ARE_comps2}
\end{figure}

\begin{figure}
    \centering
    \includegraphics[width=0.4\linewidth]{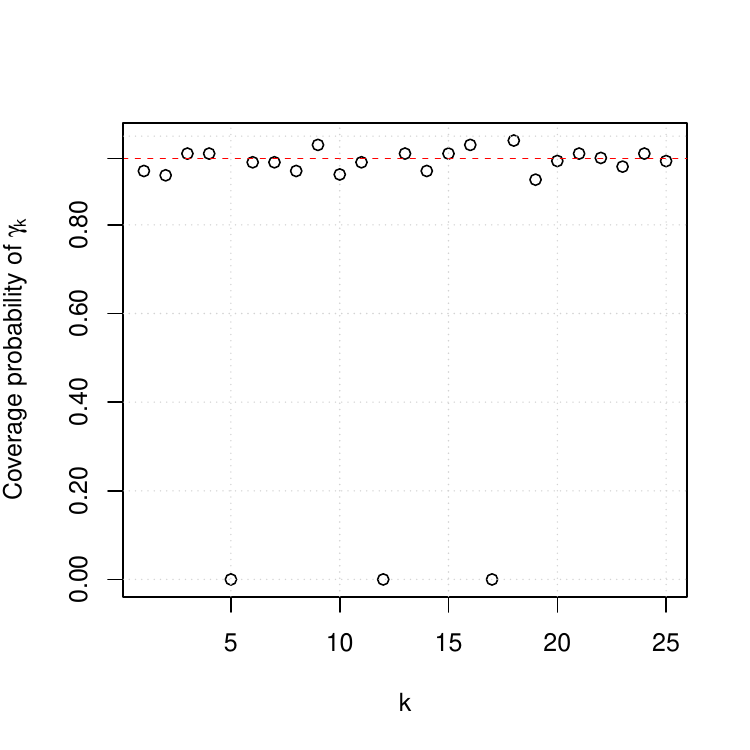}
    \caption{Coverage probabilities for each of the parameters $\gamma_k$, $k=1,\ldots,K=25$, from emulating 100 simulated data sets of Model \ref{modelFlex}, in which $n_s=2,000$ and $n_t=100$. The nominal levels of the credible intervals are $0.95$ (red dashed line). Zero probabilities correspond to $\gamma_k=0$, $k=5,12,17$.}
    \label{fig:coverage}
\end{figure}

\subsection{Nonstationary space-time dependence setting}\label{sec:nonstat_sim}
Here, we simulate data based on the model setting~\ref{modelFlex}, but we additionally impose a single linear time trend to all knots. We generate 100 time points ($n_t = 100$) at the same 2,000 spatial locations ($n_s = 2,000$) as described in Section~\ref{sec:sim_setting} of the main paper. To evaluate the model's ability to capture temporal nonstationarity, we train the XVAE on the true knots and use the trained decoder to generate 1,000 samples to estimate $(\alpha_t, \bgamma_t^{\rm T})^{\rm T}$. Figure~\ref{fig:nonstat_sim} presents the median estimates of $\{\gamma_{kt}: k = 1, \dots, K, t = 1, \dots, n_t\}$, averaged over time (left panel) and space (right panel).

The results show that our method effectively captures temporal nonstationarity, though some temporal stochastic fluctuations appear due to the working independence assumption of the max-id model and the natural variability of the estimator. This highlights an area where a conditional VAE could be particularly useful, as it could allow the encoder and decoder parameters to vary with time and other conditioning variables. This flexibility would introduce temporal change in dependence structure through the time-varying tilting parameters and enable the modeling of more complex, nonlinear temporal trends. On the spatial side, our method continues to emulate the variation in the tilting parameters well, although the estimates near the edges of the spatial domain are slightly less accurate. In comparison, \texttt{hetGP} does not naturally handle non-stationarity over space and time.
\begin{figure}
    \centering
    \includegraphics[width=0.4\linewidth]{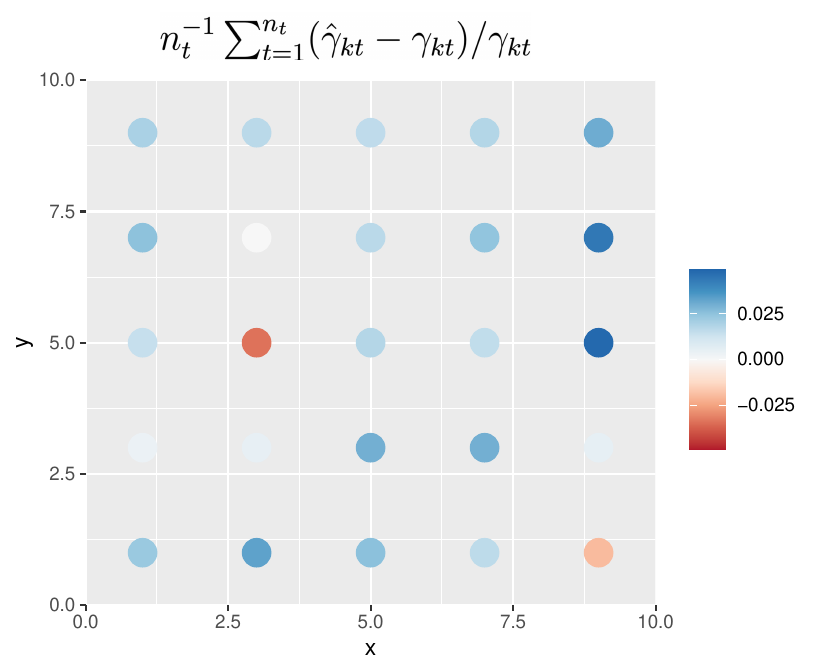}
    \includegraphics[width=0.43\linewidth]{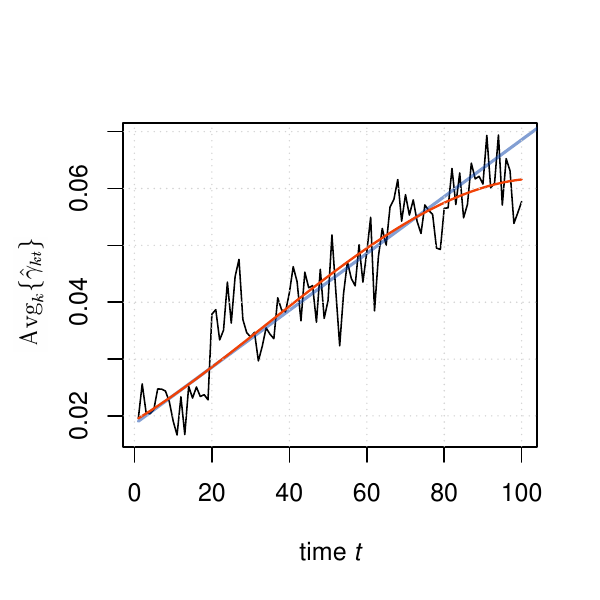}
    \vskip -0.3cm
    \caption{Left: Time-averaged relative differences between the estimated and true tilting parameters at $K=25$ true knots (i.e., $n_t^{-1}\sum_{t=1}^{n_t}(\hat{\gamma}_{kt}-\gamma_{kt})/\gamma_{kt}$, $k=1,\ldots, K$). Right: Estimated tilting parameters averaged over space (i.e., $K^{-1}\sumK\hat{\gamma}_{kt}$, $t=1,\ldots, n_t$), overlaid  with the true trend (blue line) and the best median regression fit (red line).}
    \label{fig:nonstat_sim}
\end{figure}

\subsection{Comparison to extGAN}\label{sec:extGAN_sim}
The extGAN proposed by \citet{boulaguiem2022modeling} uses convolutional neural networks (CNNs) in both its generator and discriminator, constraining the spatial input to a regular grid. Consequently, for a fair comparison of the emulation performance between our XVAE and extGAN, we must use their specific simulations setup, including the same grid size and number of spatial locations. Altering the number of locations would require a complete re-design and tuning of the neural network architecture to accommodate the new dimensions.  

\begin{figure}[!t]
    \centering
    \includegraphics[height=0.3\linewidth]{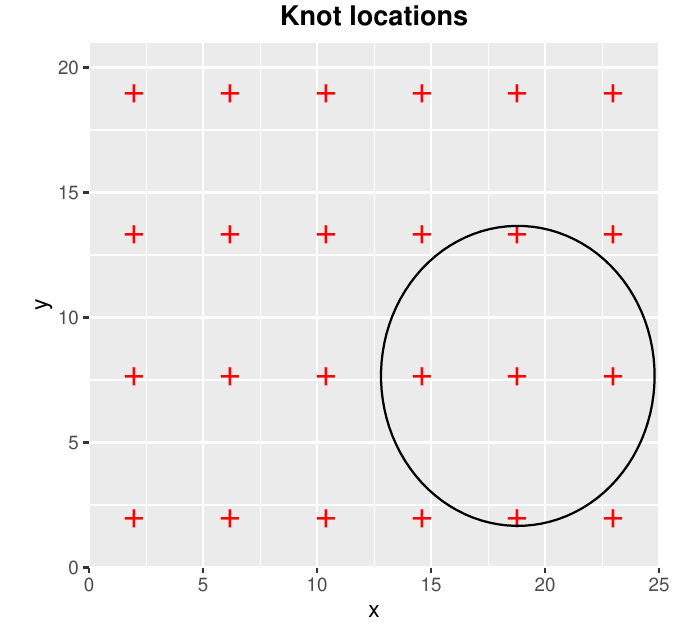}
    \includegraphics[height=0.3\linewidth]{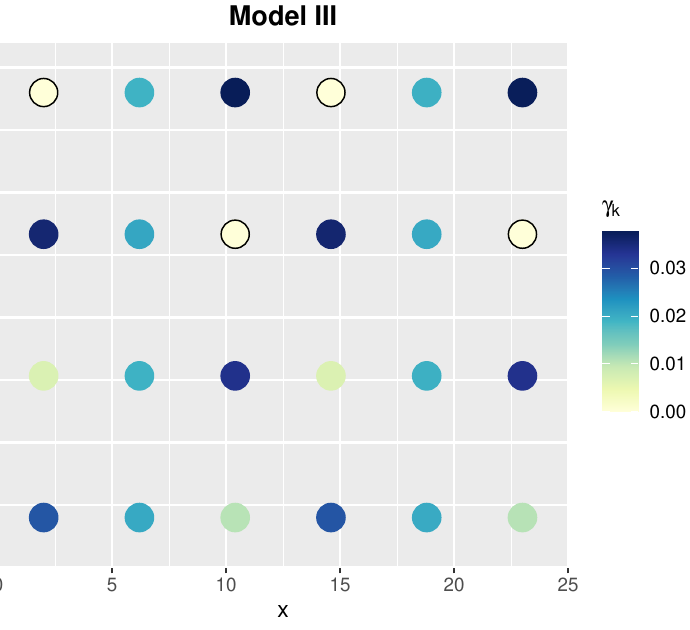}
    \vskip -0.3cm
    \caption{The left panel presents knot locations used for Model~\ref{modelFlex} but a different spatial setting to be consistent with extGAN, and we only show the support of the one Wendland basis function centered at knot in the middle of the domain. The right panels display the $\gamma_k$ values, $k=1,\ldots, K$, used in the expPS variables. The circled knots signify $\gamma_k=0$, which induces local AD.}
    \label{fig:sim_knots_extGAN}
\end{figure}

Here, we simulate from the max-id model setting (i.e., Model~\ref{modelFlex} from Section~\ref{sec:simulation}) with a non-stationary dependence structure. Specifically, we use an $18\times 22$ grid, with $K=24$ evenly-spaced knots, employing compactly supported Wendland basis functions centered at each knot with $r=6$. As in Section~\ref{sec:simulation}, we use a mix of positive and zero $\gamma_k$'s; see Figure~\ref{fig:sim_knots_extGAN} for an illustration of the knot placement and $\{\gamma_k\}$ values. Since extGAN assumes stationary marginal distributions at each grid site,  we only consider time-invariant dependence parameters $\alpha_t\equiv 1/2$ and $\bgamma_t\equiv \bgamma$. Following the simulation setup for  precipitation and temperature applications in \citet{boulaguiem2022modeling}, we simulate $n.t=500$ independent replicates.

The extGAN implementation by \citet{boulaguiem2022modeling} was coded in TensorFlow version 1.0, which is incompatible with Python versions $\geq$ 3.0 and modules such as \texttt{pandas} and \texttt{tensorflow\_probability}. Additionally, it includes inconsistencies with TensorFlow operations in \texttt{Keras} layers, hindering the execution of \texttt{tf.function} compilation. To address these limitations, we translated their GAN implementation to a \texttt{tf.keras.Model} class; this modified code is available in our GitHub repository at \href{https://github.com/likun-stat/XVAE}{https://github.com/likun-stat/XVAE}.

In general, GANs aim to generate diverse images that resemble the overall distribution of the training data rather than replicate any \textit{one specific} image. As outlined in Algorithm~1 of \citet{boulaguiem2022modeling}, the extGAN is indeed trained to generate random images from the empirical copula---the empirical joint distribution of the data after a rank transformation. Figure~\ref{copula-map} illustrates three rank-transformed inputs (from the empirical copula) used to train extGAN (top) and three generated images on the copula scale (bottom), showing that extGAN seeks to capture the copula's overall dependence structure without emulating single realizations. Although GAN inversion or generator overfitting could enable emulation of specific training images, this approach diverges from standard GAN usage. 

\begin{figure}
    \centering
    \includegraphics[width=0.9\linewidth]{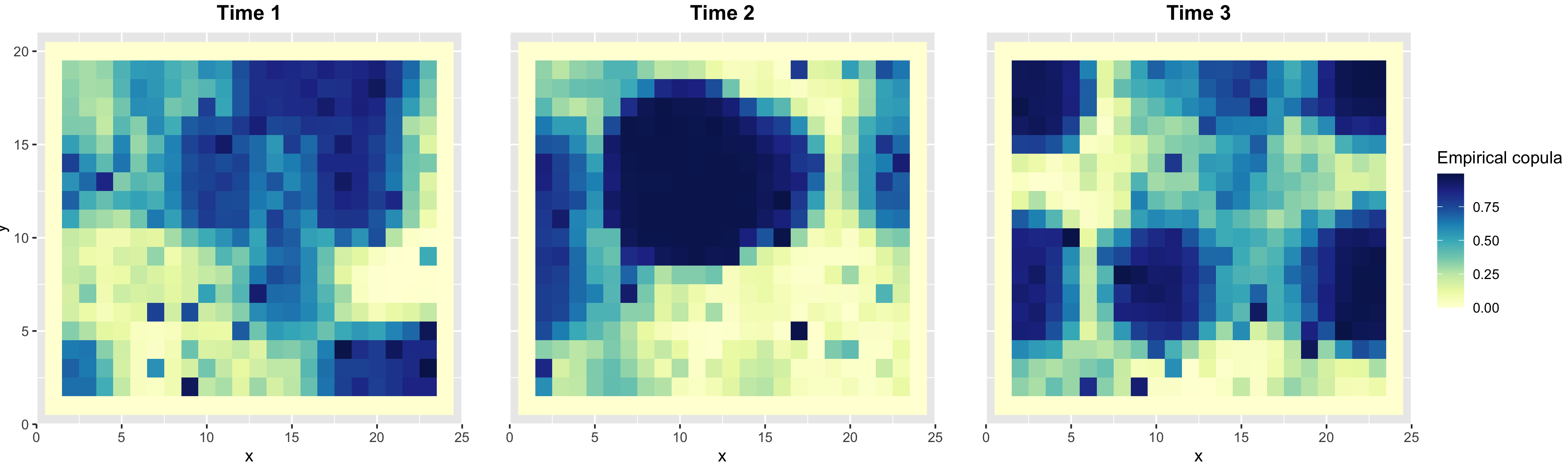}
    \includegraphics[width=0.9\linewidth]{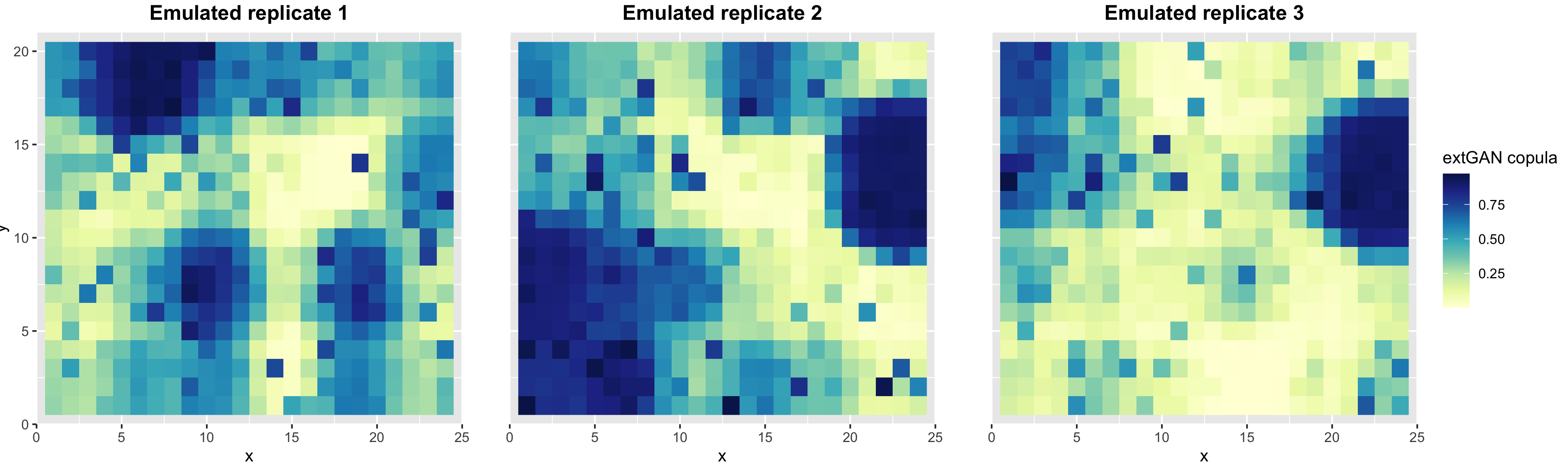}
    \caption{Top panels show the copulas (marginally transformed to standard uniform distributions using rank transformation) from the first three time replicates. Note \citet{boulaguiem2022modeling} padded the copulas with 0's at the periphery of the domain for the application of CNNs. Bottom panels show three copulas generated by extGAN, in which the GAN does not try to replicate a specific image but rather the overall patterns of the spatial distribution.}
    \label{copula-map}
\end{figure}

Also, due to the use of CNNs, extGAN is not able to handle missing locations across space, preventing out-of-sample emulation performances assessment via CRPS. Instead, we compare the overall dependence structures from both emulation methods to the original simulated data. Figure~\ref{fig:extGAN_comp} compares the overall marginal distribution obtained by pooling data across space and time. The emulated copula from extGAN is transformed to the data scale using the analytic marginal distribution function derived in Proposition \ref{prop:marg_distr} with the true parameters. Note that in real data applications, we have to design the marginal models carefully and estimate the model parameters well if we want to use extGAN. The right panel of Figure~\ref{fig:extGAN_comp} shows that, in this case, the marginal data quantiles are quite severely underestimated when using extGAN emulations even though we used the true parameters. This is largely due to the mis-characterization of the dependence structure in the copula. The left panel of Figure~\ref{fig:extGAN_comp2} shows the comparison of MSPE in log scale (see definition in Section~\ref{sec:predict}), confirming that the quality of emulation is a bit higher when a parametric constraint on the copula is imposed, as with our XVAE. Furthermore, we compare the ARE estimates from the simulated data and the emulations. The right panel of Figure~\ref{fig:extGAN_comp2} shows that the length scale of threshold exceedance is slightly overestimated for large quantile levels, although the corresponding uncertainty bands are a bit wider and contain the estimates from the simulated data (i.e., the ``truth'') and the XVAE emulations.

\begin{figure}
    \centering
    \hspace*{-1.7cm}\includegraphics[width=0.4\linewidth]{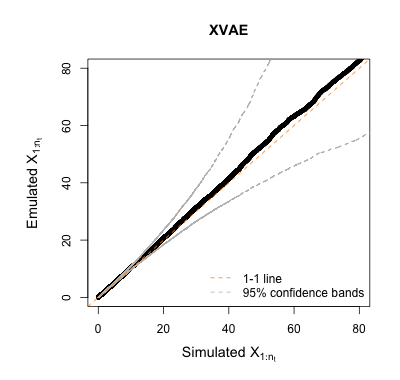}
    \includegraphics[width=0.4\linewidth]{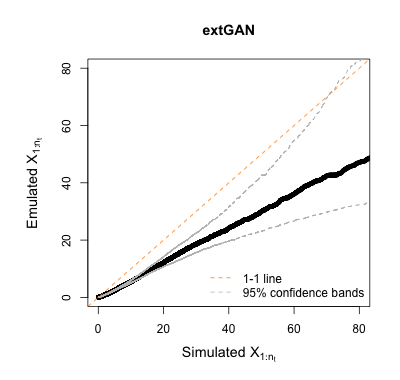}
    \caption{QQ-plots comparing simulated data sets and emulated fields from XVAE (left) and extGAN (right) based on Model~\ref{modelFlex}. Data are pooled across space and time.}
    \label{fig:extGAN_comp}
\end{figure}

For this simulated dataset, extGAN also takes longer to train ($\sim$3 hours) compared to our XVAE ($\sim$ 30 minutes). In addition, in real data applications where the marginal parameters are time-varying, generating arbitrary random images (even on the copula scale) may be not sensible, especially for the purpose of emulation.

\begin{figure}
    \centering
    \includegraphics[width=0.4\linewidth]{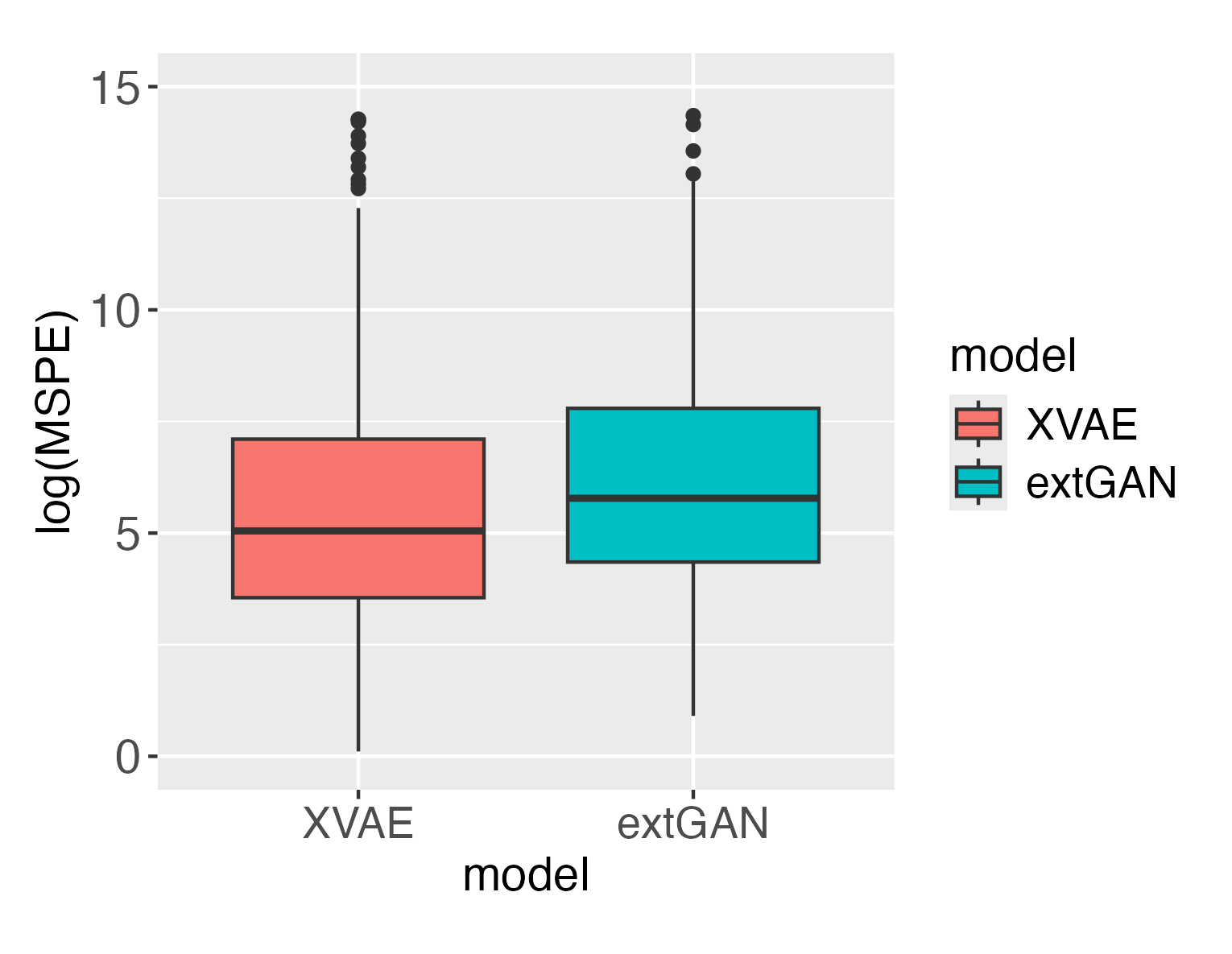}
    \includegraphics[width=0.4\linewidth]{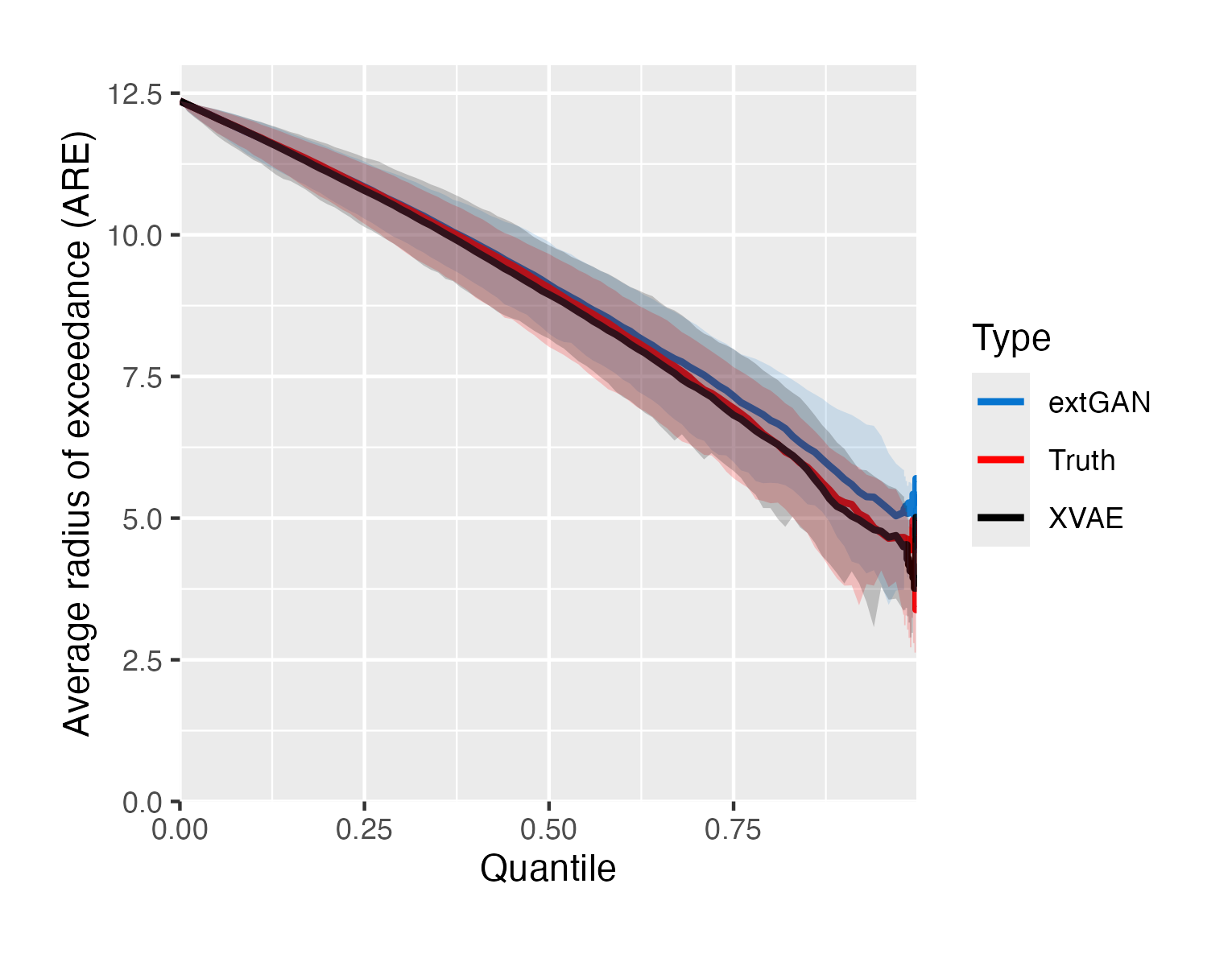}
    \caption{On the left, we show the MSPE values from two emulation approaches on the dataset simulated from Model~\ref{modelFlex}. Lower MSPE values indicate better emulation results. On the right, we show $\mathrm{ARE}_\psi(u)$ with $\psi=1$ based on data replicates (black), XVAE emulated data (red) and extGAN emulated data (blue).}
    \label{fig:extGAN_comp2}
\end{figure}

\section{Red Sea Dataset}
This dataset has previously been analyzed (sometimes partially) by \citet{hazra2021estimating}, \citet{simpson2021conditional}, \citet{simpson2020high}, \citet{oesting2022patterns}, and \citet{sainsbury-dale2022}. The latter three studies focused on a small portion of the Red Sea using the summer months only to eliminate the effects of seasonality. For example, \citet{sainsbury-dale2022} retained a dataset with only 678 spatial locations and 141 replicates. By contrast, \citet{hazra2021estimating} extensively studied weekly data over the entire spatial domain using a Dirichlet process mixture of low-rank spatial Student’s $t$ processes to account for 
spatial dependence. However, their model is AD across the entire domain (i.e., for any pair of locations), limiting its flexibility in capturing extreme behavior.

\subsection{Removing seasonality}\label{appendix:remove_season}
For any site $\bs_j$, we combine daily observations across all days as a vector and denote it by $\bv_j=(v_{j1}, \cdots, v_{jN})^\top$ where $N=11,315$ is the number of days between 1985/01/01 and 2015/12/31. Following \citet{huser2021eva}, we remove the seasonality from the Red Sea SST daily records at a fixed $\bs_j$ via subtracting the overall trend averaged within its neighborhood of radius $r=30$ km, and then we repeat the same procedure for every other location.

More specifically, denote the index set of all location with the neighborhood of $\bs_j$ by $\mathcal{N}_j= \{i: ||\bs_i-\bs_j|| < r, \;i=1,\ldots,n_s \}$.  To get rid of the seasonality in $\bv_j$, we first concatenate all records in the neighborhood $\{\bv_i:i\in \mathcal{N}_j\}$ to get a flattened response vector $\boldsymbol{V}_j$; that is, $\boldsymbol{V}_j=(\bv_{i_1}^\top,\bv_{i_2}^\top, \cdots, \bv_{i_{|\mathcal{N}_j|}}^\top)^\top$ where $\{i_1,\ldots,i_{|\mathcal{N}_j|}\}$ include all elements of $\mathcal{N}_j$. Thus, the length of the vector $\boldsymbol{V}_j$ is $|\mathcal{N}_j|\times N$. Second, we construct the matrix $\boldsymbol{M}=(\boldsymbol{1}_{N},\boldsymbol{t},\boldsymbol{B}_{N \times 12})$, where $\boldsymbol{t}=(1,\ldots,N)^\top$ is used to capture linear time trend and the columns of $\boldsymbol{B}$ are $12$ cyclic cubic spline bases defined over the continuous interval $[0,366]$ evaluated at $1, \ldots, N$ modulo 365 or 366 (i.e., the day in the corresponding year). These basis functions use equidistant knots over of $[0,366]$ that help capturing the monthly-varying features. Then, we vertically stack the matrix $\boldsymbol{M}$ for $|\mathcal{N}_j|$ times to build the design matrix $\boldsymbol{M}_j$. Through simple linear regression of $\boldsymbol{V}_j$ on $\boldsymbol{M}_j$, we get the fitted values $\hat{\boldsymbol{V}}_j=(\hat{\bv}_{i_1}^\top,\hat{\bv}_{i_2}^\top, \cdots, \hat{\bv}_{i_{|\mathcal{N}_j|}}^\top)^\top$. 

To model the residuals $\boldsymbol{V}_j - \hat{\boldsymbol{V}}_j$, we only use an intercept and a time trend which are the first two columns of $\boldsymbol{M}_j$ (denote as $\boldsymbol{M}^{\sigma}_j$). The model for the residuals is
\begin{align*}
    \boldsymbol{V}_j - \hat{\boldsymbol{V}}_j  &\sim N(\boldsymbol{0}, \mathrm{diag}(\bE^2_j)),\\
    \log\bE_j &= \boldsymbol{M}^{\sigma}_j \times (\beta_1,\beta_2)^\top.
\end{align*}
Hence we can estimate parameters $(\beta_1,\beta_2)^\top$ via optimizing the multivariate normal density function, i.e.,
\begin{align*}
    (\hat{\beta}_1,\hat{\beta}_2)^\top = \operatorname*{argmin}_{(\beta_1,\beta_2)^\top} \left\{-\frac{1}{2}\log \boldsymbol{1}^\top \bE^2_j- \frac{1}{2}(\boldsymbol{V}_j- \hat{\boldsymbol{V}}_j)^\top\mathrm{diag}(\bE^{-2}_j)(\boldsymbol{V}_j - \hat{\boldsymbol{V}}_j)\right\}.
\end{align*}

Let $\hat{\bE}_j = \exp\{\boldsymbol{M}^{\sigma}_j \times (\hat{\beta}_1,\hat{\beta}_2)^\top\}\equiv(\hat{\be}_{i_1}^\top,\hat{\be}_{i_2}^\top, \cdots, \hat{\be}_{i_{|\mathcal{N}_j|}}^\top)^\top$. Note that in defining the neighborhood of site $\bs_j$, we also include the $j$th site. By an abuse of notation, we denote the fitted values corresponding to the $j$th site by $\hat{\bv}_j$ and $\hat{\be}_j$, which correspond to the mean trend and residual standard deviations at site $\bs_j$, respectively. Finally, the daily records at $\bs_j$ can be de-trended by calculating 
\begin{equation}\label{eqn:detrend_obs}
    \bv_j^* = \frac{\bv_j - \hat{\bv}_j}{\hat{\be}_j},
\end{equation}
in which the subtraction and division are done on a elementwise basis. We repeat the procedure described above to remove the seasonal variability from all other locations. 

\subsection{Marginal distributions of the monthly maxima}\label{appendix:marg_distr}

After removing seasonality by normalization (see Eq.~\eqref{eqn:detrend_obs}), we extract monthly maxima from $\bv_j^*$ at site $\bs_j$ and denote them as $\bm_j=(m_{j1}, \ldots, m_{jn_t})^\top$, in which $n_t=372$ is the number of months between 1985/01/01 and 2015/12/31 and $j=1\ldots, n_s$. Before applying our proposed model, we need to find a distribution which fits the monthly maxima well so we can transform the data to the Pareto-like distribution shown in Eq.~\eqref{eqn:marg_cdf} of the main paper. Given prior experience in analyzing monthly maxima, we propose two candidate distributions: the generalized extreme value (GEV) distribution and the general non-central $t$ distribution. To choose between them, we choose to perform $\chi^2$ goodness-of-fit tests 
due to its flexibility in choosing the degrees of freedom as well as the size of intervals. 

The $\chi^2$ goodness-of-fit test at a site $\bs_j\in \mathcal{S}$ proceeds as follows. First, we calculate the equidistant cut points within the range of all monthly maxima at $\bs_j$ to get $n_I$ intervals. Second, we count the number of monthly maxima falling within each interval and denote them by $O_i$ $(i=1,\ldots,n_I)$. Third, we fit the GEV and $t$ distributions to the block maxima series at $\bs_j$ to get the parameter estimates. Then the expected frequencies $E_i$ $(i=1,\ldots,n_I)$ is calculated by multiplying the number of monthly maxima at each site (i.e., $n_t$) by the probability increment of the fitted GEV or $t$ distribution in each interval (denoted by $p_i$). Treating the frequencies as a multinomial distribution with $n_t$ trials and $n_I$ categories, we can derive the generalized likelihood-ratio test statistic for the null hypothesis $H_0$ that $(p_1, \cdots, p_{n_I})^\top$ are the true event probabilities. Specifically, under the null hypothesis $H_0$, Wilk's Theorem guarantees
\begin{align*}
    \sum_{i=1}^{n_I} O_i \log(O_i/E_i)\stackrel{d}{\rightarrow} \chi^2_{\nu}\text{ as }n_t\rightarrow\infty,
\end{align*}
in which $\nu=n_I-4$ when $H_0$ corresponds to the GEV model which has three parameters (i.e., location, scale, and shape) and $\nu=n_I-3$ when $H_0$ corresponds to the $t$ model which has two parameters (i.e., non-centrality parameter and degrees of freedom). Since $n_t=372$ in the Red Sea SST data, we can safely assume that the asymptotic distribution is a good approximation of the true distribution under $H_0$, which is then used to calculate the $p$-value to evaluate the goodness-of-fit. 

We repeat the procedure and obtain a $p$-value for each location.  Figure~\ref{fig:Chisq_pval} shows the spatial maps for $p$-values along with the binary maps signifying whether the null hypothesis is accepted or not with significance level $0.05$. In Figure~\ref{fig:B(1)}, the goodness-of-fit tests result in $p$-values greater than $0.05$ at all locations, indicating GEV distribution is a good fit for all monthly maxima time series. For the shaded locations in Figure~\ref{fig:B(4)} and \ref{fig:B(2)}, the \texttt{fitdistr($\cdot$, "t")} function from the \texttt{MASS} package in \texttt{R} failed to converge when optimizing joint $t$ likelihood, and we were not able to obtain parameter estimates of the $t$ distribution at these locations which were needed for the subsequent $\chi^2$ tests. For the locations that have valid fitted $t$ distributions in Figure~\ref{fig:B(4)}, the $p$ values are mostly less than those in Figure~\ref{fig:B(3)}. This indicates that the GEV distribution, the asymptotic distribution for univariate block maxima, is a better choice to describe the marginal distribution of the monthly maxima, as expected.

\begin{figure}
     \centering
     \begin{subfigure}[b]{0.24\textwidth}
         \centering
         \includegraphics[width = 1.8in]{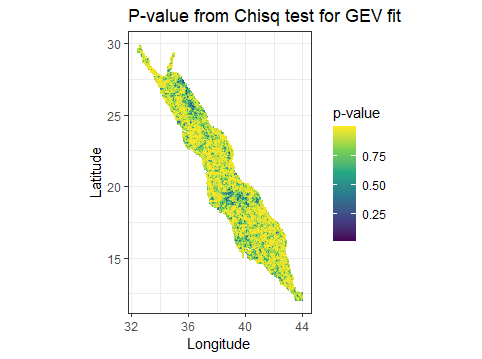}
         \caption{}
         \label{fig:B(3)}
     \end{subfigure}
     \hfill
     \begin{subfigure}[b]{0.24\textwidth}
         \centering
         \includegraphics[width = 1.8in]{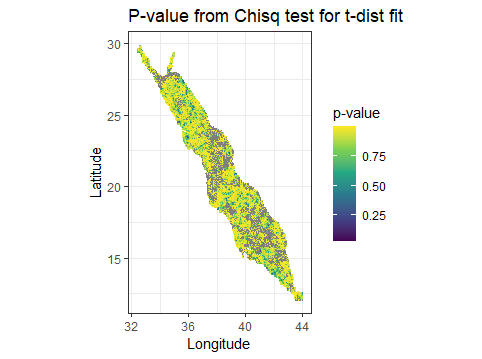}
         \caption{}
         \label{fig:B(4)}
     \end{subfigure}
     \hfill
     \begin{subfigure}[b]{0.24\textwidth}
         \centering
         \includegraphics[width = 1.8in]{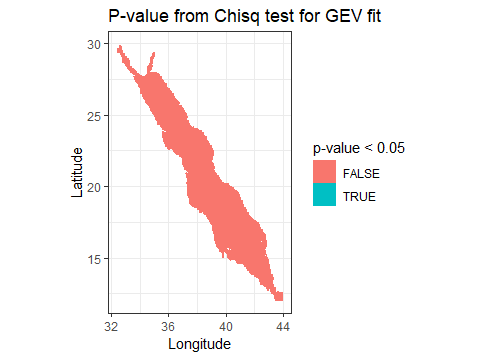}
         \caption{}
         \label{fig:B(1)}
     \end{subfigure}
     \hfill
     \begin{subfigure}[b]{0.24\textwidth}
         \centering
         \includegraphics[width = 1.8in]{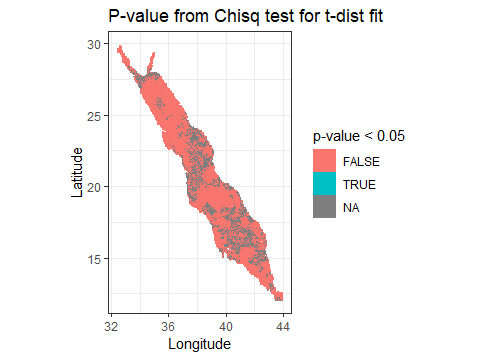}
         \caption{}
         \label{fig:B(2)}
     \end{subfigure}
        \caption{In the left two panels, we show heatmaps of $p$-values from $\chi^2$ goodness-of-fit tests under the GEV model in \subref{fig:B(3)} and the $t$ model in \subref{fig:B(4)}. In the right two panels, we show binary $p$-values maps from $\chi^2$ goodness-of-fit tests under the GEV model in \subref{fig:B(1)} and the $t$ model in \subref{fig:B(2)}.}
        \label{fig:Chisq_pval}
\end{figure}

\subsection{Marginal transformation}\label{appendix:margin_trans}
Before applying our model to monthly maxima, certain transformations need to be done to match our marginals in Section~\ref{sec:ext_dep}. When performing the goodness-of-fit tests, we already obtained the sitewise GEV parameters: $\mu_j$, $\sigma_j$, and $\xi_j$ for $j=1,\ldots,n_s$. Since monotonic transformations of the marginal distributions do not alter the dependence structure of the data input, we define $x_{jt}=F_{jt}^{-1}\{F_\mathrm{GEV}(m_{jt};\mu_j, \sigma_j, \xi_j)\},\; t=1,\ldots, n_t, \;j=1,\ldots, n_s$, in which $F_{jt}$ is the marginal distribution function of $X_t(\bs_j)$ displayed in Eq.~\eqref{eqn:marg_cdf} of the main paper, the function $F_\mathrm{GEV}(\cdot;\mu_j, \sigma_j, \xi_j)$ is the distribution function of $\mathrm{GEV}(\mu_j, \sigma_j, \xi_j)$, and $m_{jt}$ is the monthly maximum at site $\bs_j$ from $t^\mathrm{th}$ month. Further, we have $\bx_t=(x_{1t}, x_{2t}, \cdots, x_{n_st})^\top$, $t=1,\ldots, n_t$, which will be treated as the response in Algorithm \ref{alg:sim}. It should be noted that $F_{jt}$ is defined with the parameters $\alpha_t$, $\bgamma_t$ and $\bW$. Recall that the matrix $\bW$, defined in Eq.~\eqref{eqn:decoder}, contains the basis function evaluations at all locations.  After updating these parameters in each iteration of the stochastic gradient descent algorithm, we need to update the values of $\{x_{jt}: t=1,\ldots, n_t, \;j=1,\ldots, n_s\}$ before continuing the next iteration.



\subsection{Empirical \texorpdfstring{$\chi_h(u)$}{Lg} estimates}\label{appendix:chi_ests}
\begin{figure}[H]
    \centering
    \includegraphics[height=0.29\linewidth]{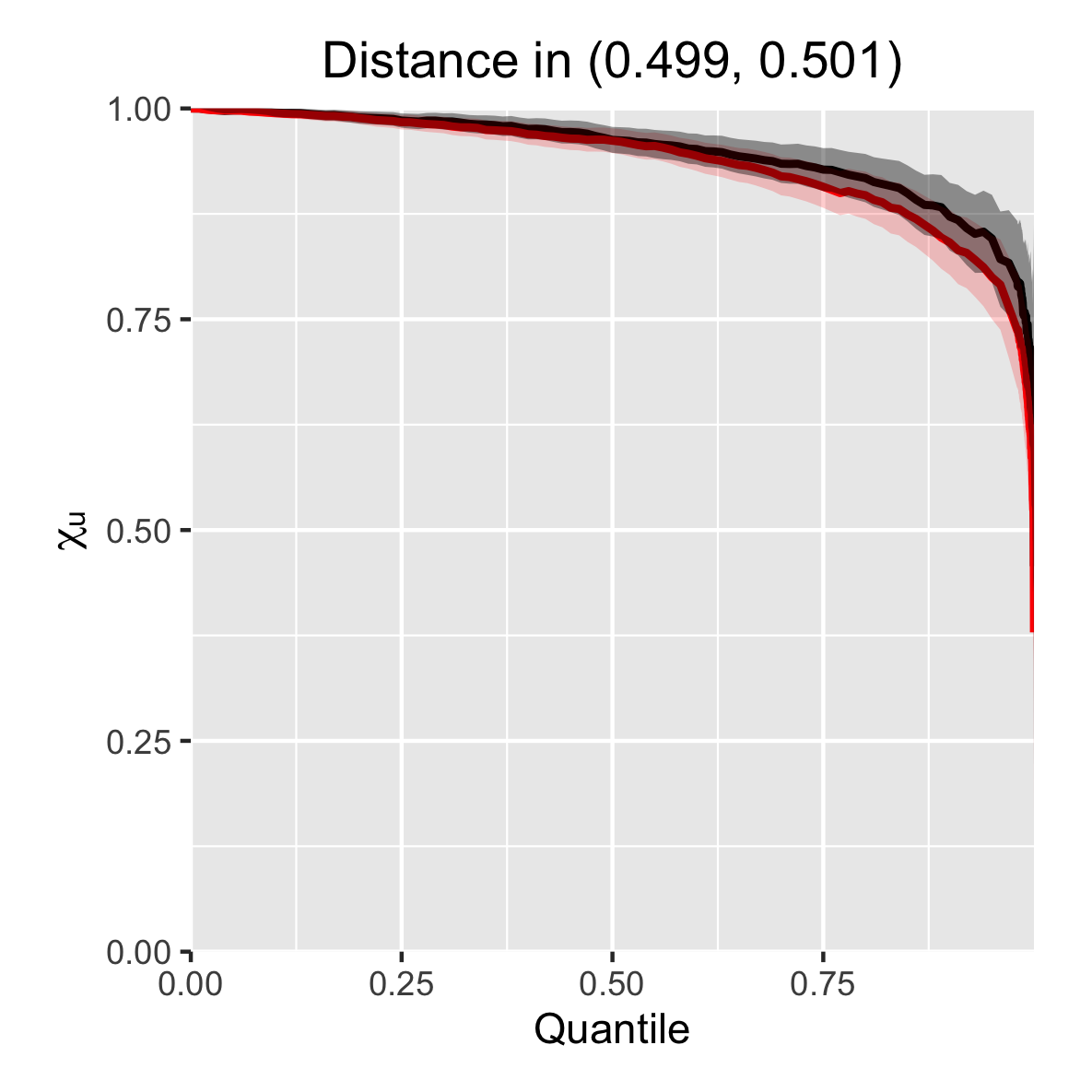}
    \includegraphics[height=0.29\linewidth]{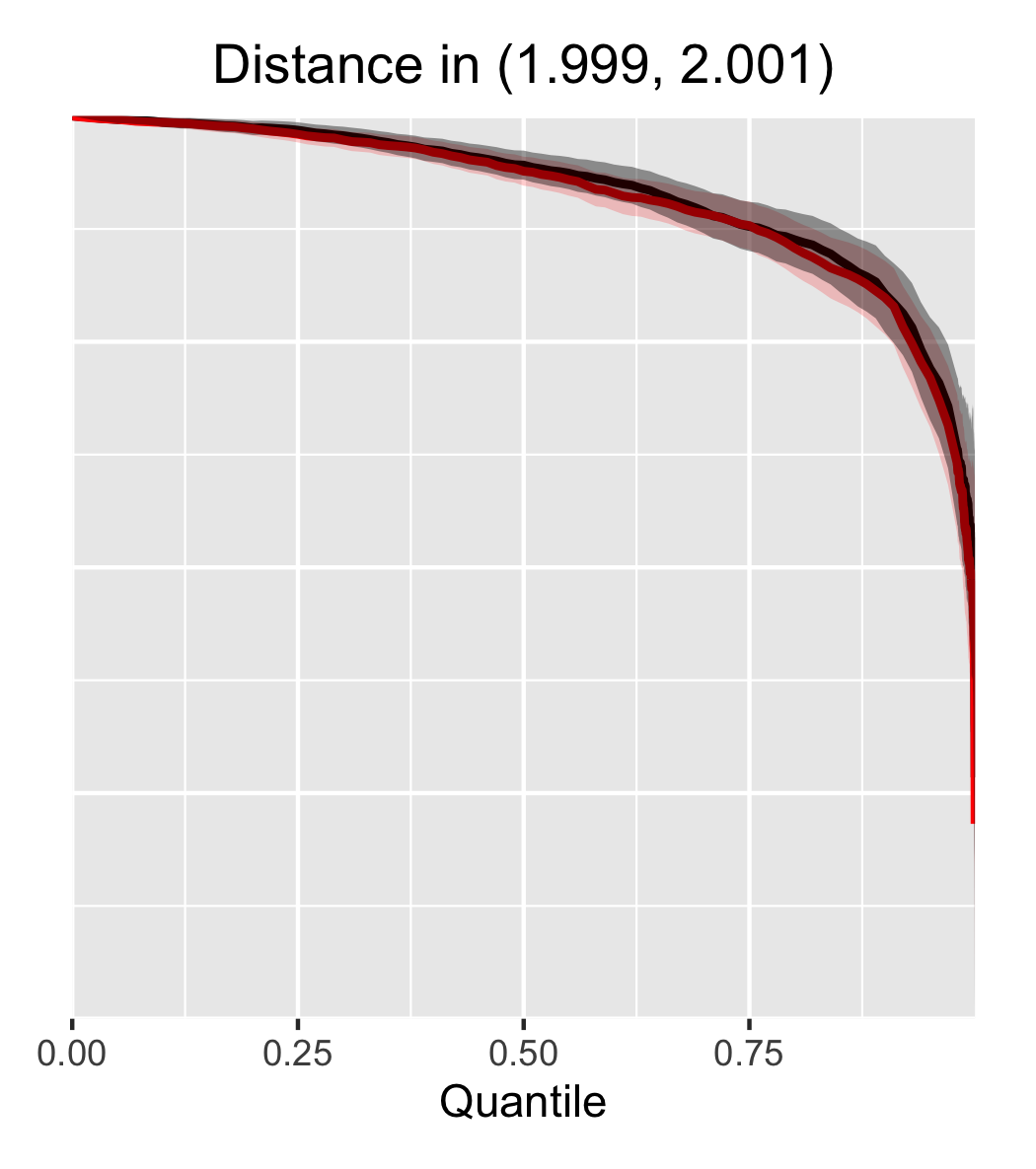}
    \includegraphics[height=0.29\linewidth]{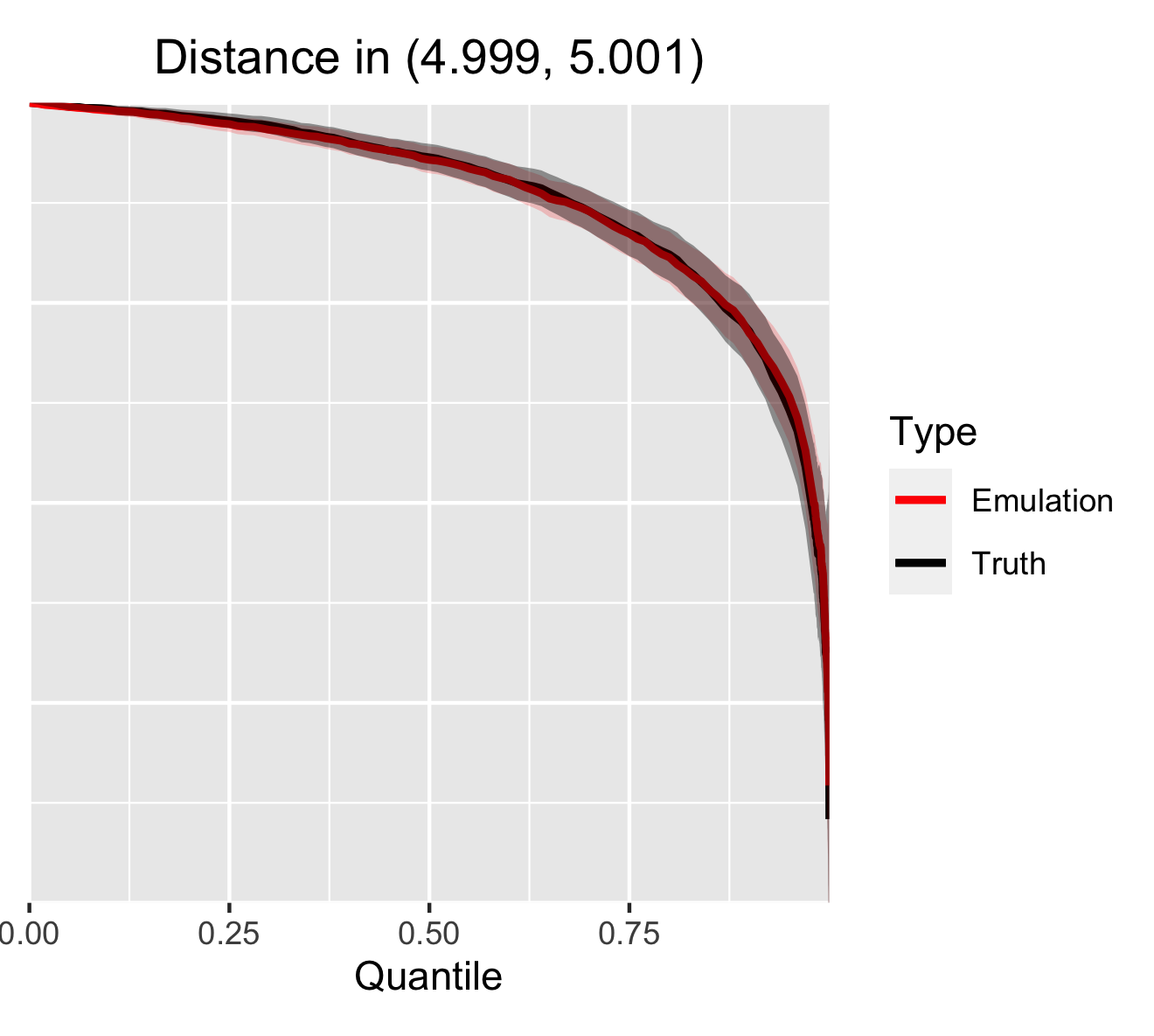}
    \vskip -0.3cm
    \caption{Empirically-estimated $\chi_h(u)$ for $h=0.5,2,5$ ($\approx 50 \text{km, } 200\text{km, } 500\text{km}$) for the Red Sea SST monthly maxima (black) and the XVAE emulations (red).}
    \label{fig:chi_SST}
\end{figure}

\subsection{Additional results}
Figure~\ref{fig:data_app_emulation} shows emulated replicates of the original monthly maxima field for the first and last months (1985/01 and 2015/12, respectively). Here, we convert the emulated values back to the original data scale using the estimated GEV parameters fitted from the previous step. Figure~\ref{fig:data_app_emulation} demonstrates that the XVAE is able to capture the detailed features of the temperature fields and to accurately characterize  spatial dependence, while the QQ-plot shows an almost perfect alignment with the 1-1 line.

\begin{figure}[!t]
    \centering
    {\includegraphics[width=0.29\linewidth, trim={0 1cm 0 0}, clip]{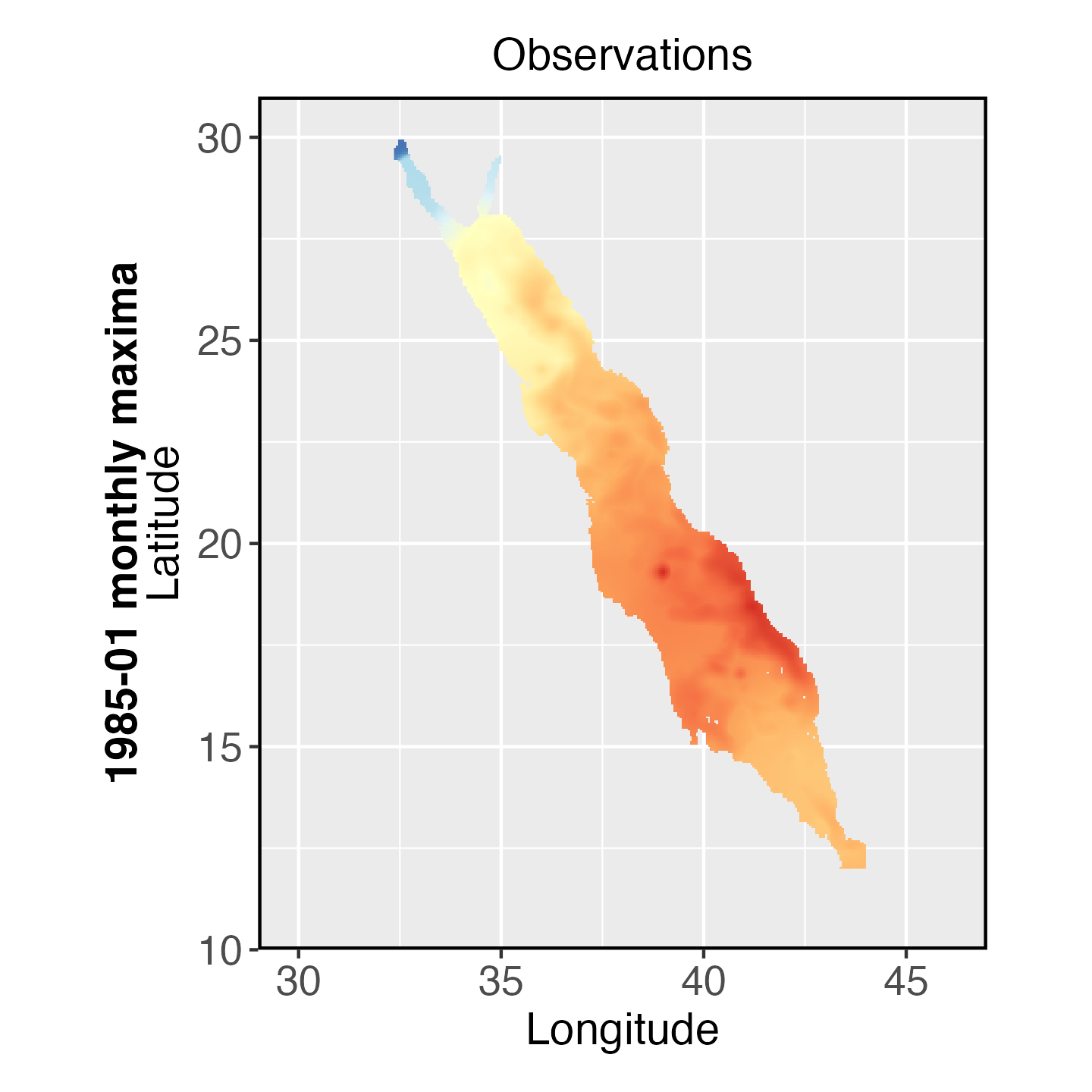}
    \includegraphics[width=0.29\linewidth, trim={0 1cm 0 0}, clip]{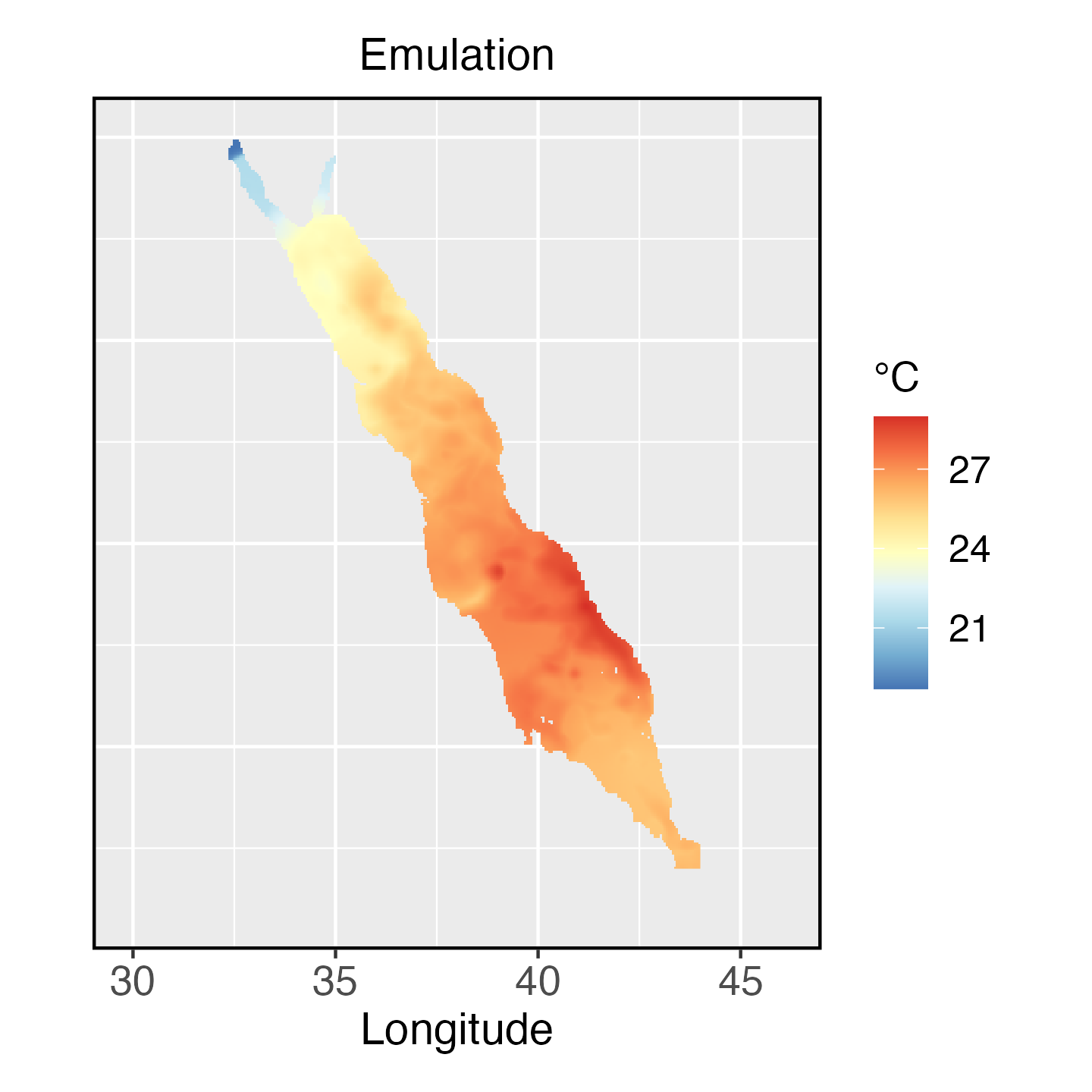}
    \includegraphics[width=0.29483\linewidth, trim={0 1cm 0 0cm}, clip]{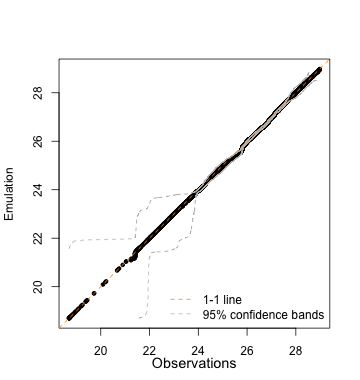}}
    \vskip 0.2cm
    
    {\raisebox{0.25\linewidth}{\includegraphics[width=0.29\linewidth, trim={0 0 0 1cm}, clip]{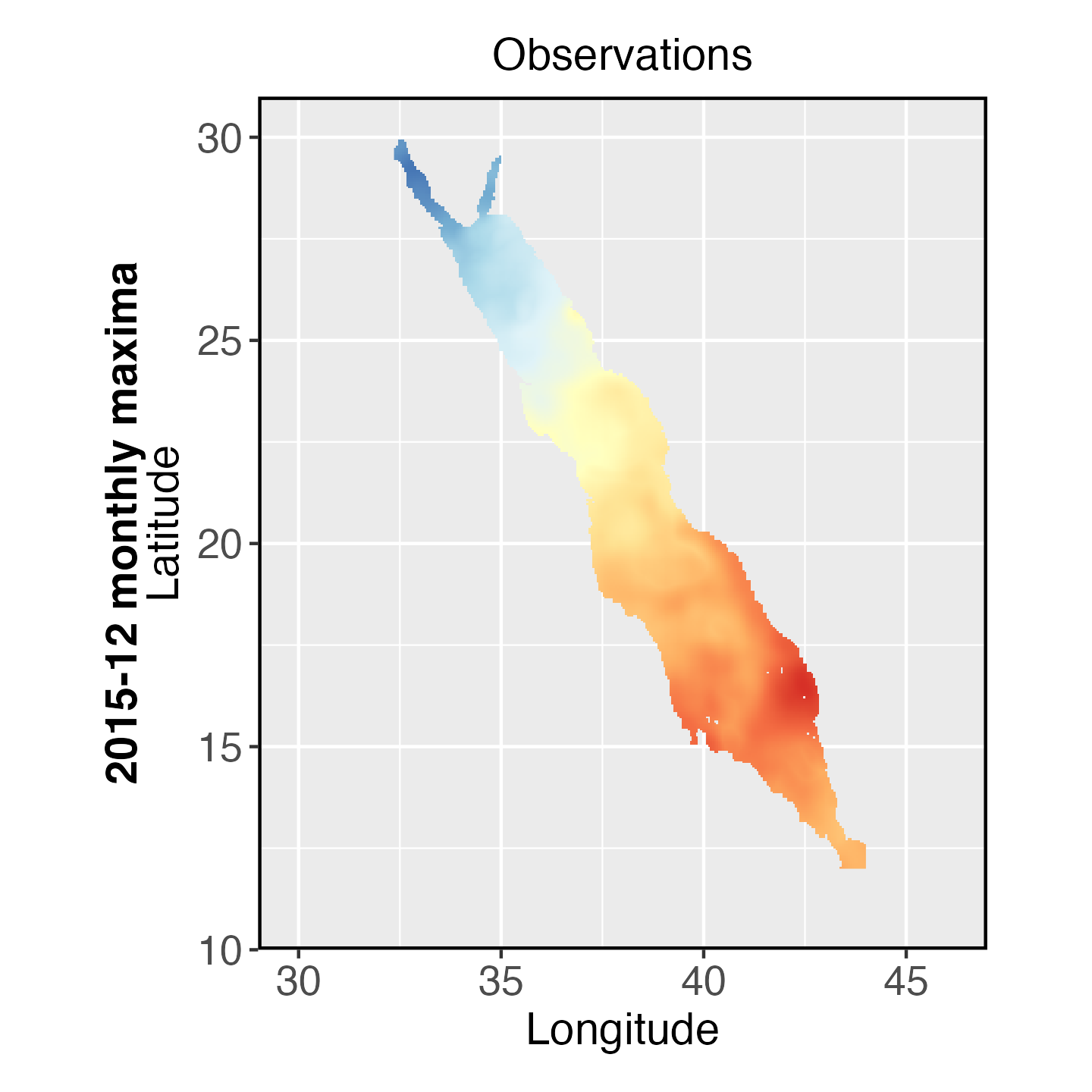}}
    \raisebox{0.25\linewidth}{\includegraphics[width=0.29\linewidth, trim={0 0 0 1cm}, clip]{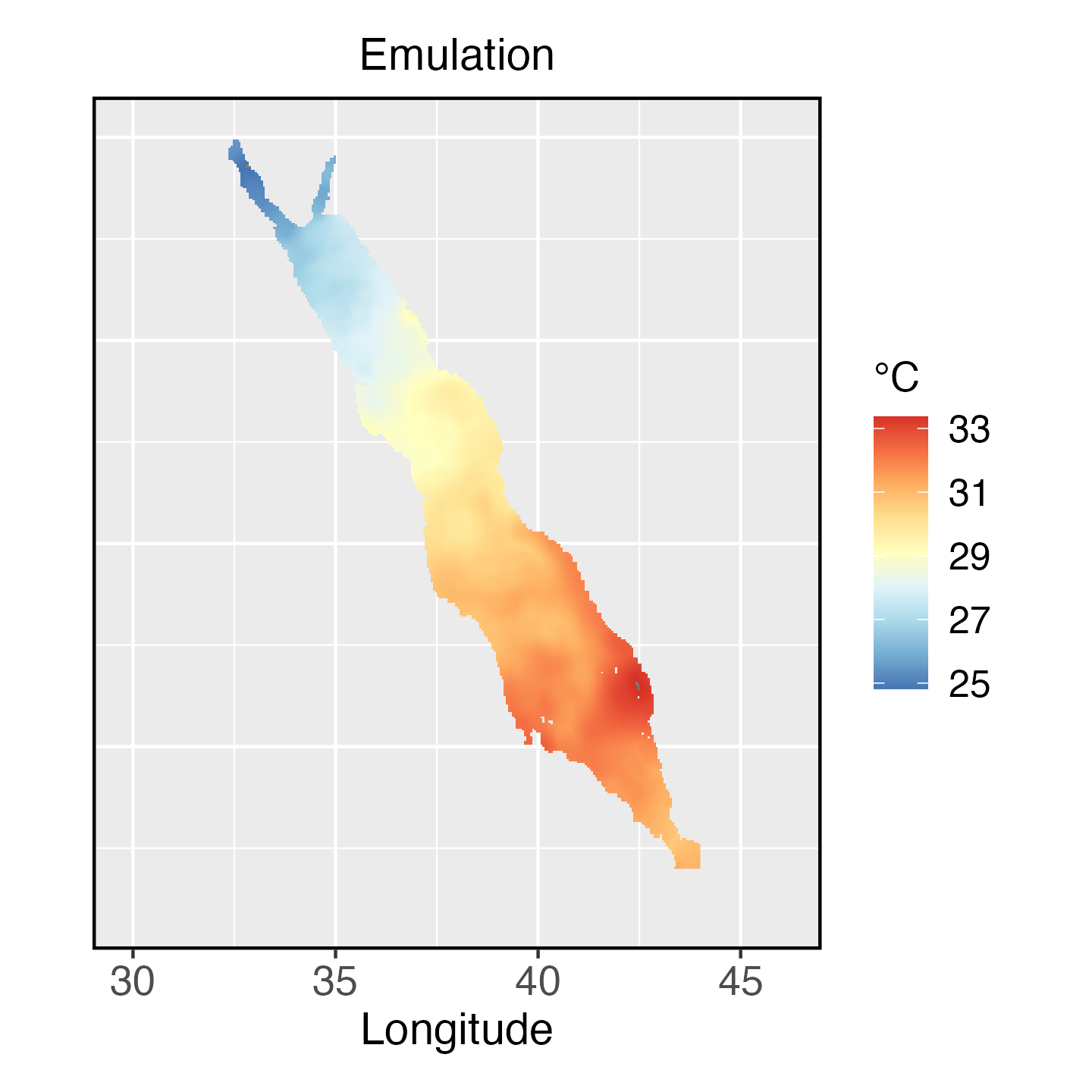}}
    \raisebox{0.25\linewidth}{\includegraphics[width=0.29483\linewidth, trim={0 0 0 2cm}, clip]{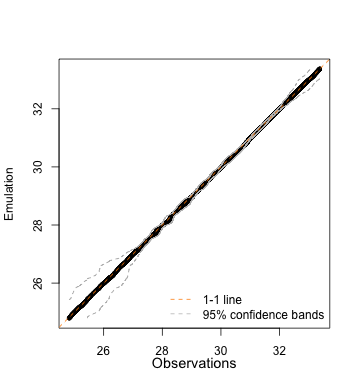}}}
    \vskip -0.27\linewidth
    \caption{Observed (left) and emulated (middle) Red Sea SST monthly maxima, for the 1985/01 (top) and 2015/12 (bottom) months. From the emulation maps and QQ plots (right), we see that the emulated fields from the XVAE match the observations very well.}
    \label{fig:data_app_emulation}
\end{figure}

\end{document}